\newcommand{\Second}{\textup{I}\!\textup{I}}
\newtheorem{theorem}{Theorem}[section]
\newtheorem{assumption}{Assumption}[section]
\newtheorem{definition}{Definition}[section]
\newtheorem{lemma}{Lemma}[section]
\newtheorem{proposition}{Proposition}[section]
\newtheorem{remark}{Remark}[section]
\title{Inferring manifolds using Gaussian processes}
\author{David B Dunson}
\address{David B Dunson\\
Department of Statistical Science\\
Duke University, Durham, NC}
\email{dunson@duke.edu}
\author{Nan Wu}
\address{Nan Wu\\
Department of Mathematical Sciences\\
The University of Texas at Dallas, Richardson, TX}
\email{nan.wu@utdallas.edu}
\begin{document}

\maketitle

\begin{abstract}
It is often of interest to infer lower-dimensional structure underlying complex data.  As a flexible class of non-linear structures, it is common to focus on Riemannian manifolds.  Most existing manifold learning algorithms replace the original data with lower-dimensional coordinates without providing an estimate of the manifold or using the manifold to denoise the original data.  This article proposes a new methodology to address these problems, allowing interpolation of the estimated manifold between the fitted data points.  The proposed approach is motivated by the novel theoretical properties of local covariance matrices constructed from samples near a manifold.  Our results enable us to turn a global manifold reconstruction problem into a local regression problem, allowing for the application of Gaussian processes for probabilistic manifold reconstruction.  In addition to the theory justifying our methodology, we provide simulated and real data examples to illustrate the performance.
\end{abstract}

{\it Keywords:} Data denoising; Dimension reduction; Manifold learning; Manifold reconstruction; Nonparametric regression

\section{Introduction}\label{introduction}

It is common to characterize low-dimensional structures in data using Riemannian manifolds. 
The literature on manifold learning focuses mainly on dimensionality reduction algorithms; some well-known examples include Isomap \citep{tenenbaum2000global}, Laplacian Eigenmaps \citep{belkin2003laplacian}, Locally Linear Embeddings \citep{roweis2000nonlinear}, and Diffusion Maps \citep{coifman2006diffusion}.  The focus of this article is instead on developing a new local regression-based approach to manifold reconstruction, which goes beyond dimensionality reduction to exploit the manifold structure in modeling the original data. 

We provide a concrete illustration through bird vocalization data. Audio files are converted via discrete short-time Fourier transform into spectrograms. In Figure \ref{spectrograms1}, we present five annotated spectrograms of a specific call type of Anthus trivialis (tree pipit) from \cite{ovaskainen2018animal}. {Similar spectrogram data are routinely collected in studying birds, and it is of interest to develop realistic statistical models for such data that can be used to infer variation in vocalizations between individuals and according to geographical location and habitat conditions. With this goal in mind, we propose a method based on characterizing the data as noisy observations around a manifold.}

%The goal of our analysis is to generate realistic vocalization data for new birds of the same species. This is a daunting generative AI problem given the huge dimension of the spectrograms and limited number of labeled examples. We propose solving this problem by leveraging on a novel manifold reconstruction approach to dramatically reduce dimensionality.
\begin{figure}[h!]
\centering
\includegraphics[width=14cm,height=5.5 cm]{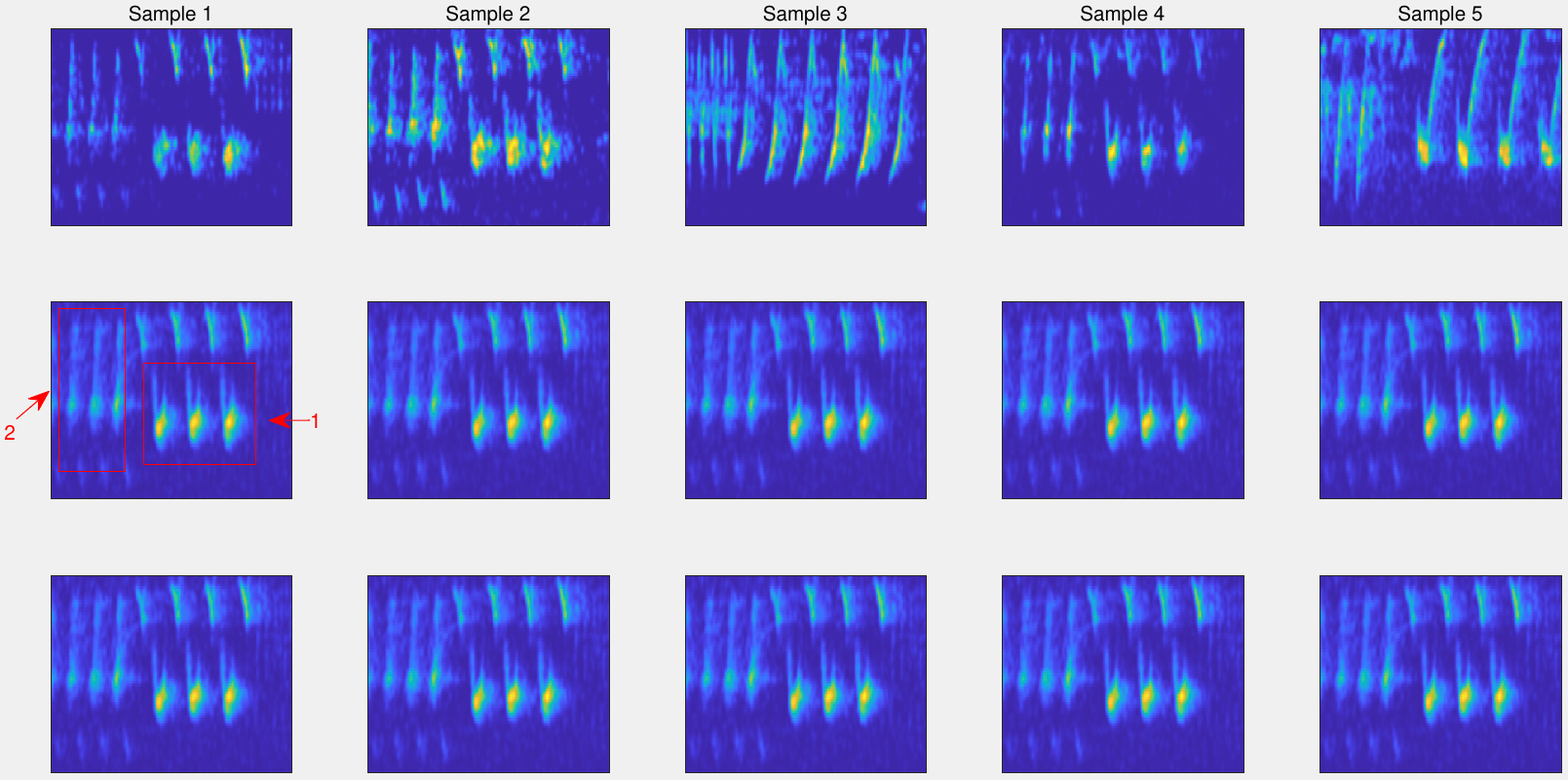}
\caption{Top row: $5$ out of $83$ spectrograms corresponding to a call type of Anthus trivialis. The rows of the $\mathbb{R}^{75 \times 197}$ matrices correspond to frequency (kHz), and the columns correspond to time (ms) and entries measure amplitude of the signal.  
 Samples 2 and 4 are near Sample 1. Second and third rows: $10$ generated samples near Sample 1 using \texttt{MrGap}. Region 2 in the generated sample is similar to Sample 1, while region 1 is similar to Samples 2 and 4.}\label{spectrograms1}
\end{figure}

This article proposes Manifold reconstruction via Gaussian processes (\texttt{MrGap}). Our main contributions are: (1) bias and variance analyses and spectral properties of the local covariance matrix constructed from samples near a manifold; and (2) a novel local regression formulation of manifold reconstruction, enabling application of Gaussian process regression and avoidance of restrictive assumptions on the noise or distribution of the denoised data on the manifold. We apply \texttt{MrGap} to the bird vocalization spectrogram data illustrated in Figure \ref{spectrograms1}, interpolating $10$ points near each sample in the manifold and showing the corresponding spectrograms near sample 1 in Figure \ref{spectrograms1}. The generated samples incorporate the audio information from sample 1 and its neighboring samples.  {Such interpolation provides insights into variation across birds and provides a natural starting point for extensions for inference on covariate effects.} Refer to Section \ref{bird data} for details and results. 

We briefly review the literature relevant to our first main contribution. \citep{singer2012vector} analyzes the bias and variance of the local covariance matrix constructed through a smooth kernel, assuming samples without noise from a closed submanifold.  Focusing on the local covariance constructed from a $0-1$ kernel, \citep{Tyagi2013} consider data having specific distributions on closed manifolds while \citep{wu2018think, wu2018locally} assume samples without noise from a closed submanifold and a compact submanifold with boundary. \citep{kaslovsky2014non} studies the local covariance constructed from $k$ nearest neighbors under heavy restrictions on the embedded submanifold and sampling distribution.  We consider much more realistic sampling conditions, and our theory generalizes \citep{wu2018think} to allow Gaussian noise.

Relevant to our second main contribution, there is a literature on manifold reconstruction. Principal curves provide a fitted value for a one-dimensional manifold \citep{hastie1989principal,kirov2017multiple,mao2016principal}, while principal manifolds extend this to higher dimensions \citep{smola2001regularized,meng2021principal}. Assuming Gaussian noise, \citep{pmlr-v75-fefferman18a, fefferman2019fitting} reconstruct a manifold by approximating its normal bundle using noisy samples. \citep{aamari2019nonasymptotic} and \citep{aizenbud2021non} apply local polynomial fitting to reconstruct a manifold. \citep{aamari2019nonasymptotic} assumes bounded noise in the normal direction, while \citep{aizenbud2021non} consider uniformly distributed noise in a tubular neighborhood.  \citep{puchkin2022structure} assume samples with bounded noise and estimate a projection operator onto the manifold. Assuming uniform samples on the manifold and Gaussian noise with known variance, \citep{yao2023manifold} estimate the orthogonal projection map. 

Most of these approaches are highly restrictive in terms of the distributions of data over the manifold and/or noise. Our proposed local regression approach motivated by novel theoretical findings represents an innovative general solution to the manifold modeling problem, which can avoid these restrictive assumptions {and provide a useful starting point for regression with noisy outcomes on an unknown manifold or manifolds.}
%, as we illustrate concretely with our \texttt{MrGap} methodology. 
%The remainder of the paper is structured as follows. In Section 2.1, we introduce the local covariance matrix and its associated operators. A brief overview of GP regression is presented in Section 2.2. Section 2.3 describes the primary challenges and the motivation of MrGap, while Sections 2.4 and 2.5 describe how MrGap is implemented for denoising and interpolation, respectively.  The selection of parameters is discussed in Section 2.6. A theoretical analysis is provided in Section 3, including bias and variance analyses of the local covariance matrix and introduction of a metric to evaluate performance. Numerical simulations are presented in Section 4.  In Section 5, we apply MrGap to the bird vocalization data. 
We summarize our notations in Table \ref{Table:Notations}. 
\begin{table}[h!]
\caption{Commonly used notations in this paper.}\label{Table:Notations}
\begin{tabular}{|l|l|} 
\hline $Symbol$ & $Meaning$\\ 
\hline\hline 
$M$ &  $d$-dimensional smooth Riemannian manifold \\ 
$\iota$ & An isometric embedding of $M$ in $\mathbb{R}^D$\\
$\{e'_i\}_{i=1}^d$, $\{e_i\}_{i=1}^D$ & The standard orthonormal basis of $\mathbb{R}^d$ and $\mathbb{R}^D$ respectively \\
$P$ & Probability density function on $M$\\
$\{x_i\}_{i=1}^n$ & Points sampled based on $P$ from $M$ \\ 
$\{\eta_i\}_{i=1}^n$ & Noise sampled from a Gaussian distribution in $\mathbb{R}^D$ \\
$\{ y_i\}_{i=1}^n$ & Noisy data points around $\iota(M)$\\ 
$\{\hat{y}_i\}_{i=1}^n$ & Denoised data points corresponding to $\{y_i\}_{i=1}^n$\\ 
$\epsilon$ & The bandwidth of the local covariance matrix\\
$C_{n,\epsilon}(y_k)$ & The local covariance matrix at $y_k$ %constructed from a 0-1 kernel 
with bandwidth $\epsilon$ \\
$J$, $\bar{J}$ & The projections from $\mathbb{R}^D$ to the subspaces $\mathbb{R}^d$ and $\mathbb{R}^{D-d}$ respectively  \\
$\mathcal{P}_{y_k}$, $\mathcal{P}^\bot_{y_k}$ & Two major operators in constructing local regression \\
$\delta$ &  A scale in $\mathbb{R}^D$ to perform $\mathcal{P}_{y_k}$, $\mathcal{P}^\bot_{y_k}$  in the algorithm\\
$G(S_1, S_2)$ & The geometric root mean square error between subsets $S_1$ and $S_2$ in $\mathbb{R}^D$\\
\hline 
\end{tabular}
\end{table}

\section{Manifold reconstruction via Gaussian processes}
\subsection{Setup of the manifold reconstruction problem}
In this section, we propose a method to reconstruct an unknown embedded submanifold of Euclidean space from noisy samples. We first make the following assumption about the samples.
\begin{assumption}\label{manifod with noise}
$M$ is a $d$-dimensional smooth, closed, and connected Riemannian manifold isometrically embedded in $\mathbb{R}^D$ though $\iota: M\to \mathbb{R}^D$. $P$ is a probability density function on $M$, which is smooth, bounded from below by $P_{m}>0$ and bounded from above by $P_{M}$. The observed data follow
$y_i=\iota(x_i)+\eta_i$, with $x_i \sim P$ and $\eta_i \sim \mathcal{N}(0, \sigma^2 I_{D \times D})$ independently for $i=1,\ldots,n$.  
\end{assumption}

The information of $\iota(M)$ is not accessible and we are only given the Euclidean coordinates of $\{y_i\}_{i=1}^n$.  For each $y_k$, our goals are to (i) estimate a fitted value $\hat{y}_k \in \iota(M)$ while controlling $\frac{1}{n} \sum_{i=1}^n \|\iota(x_i)-\hat{y}_i\|^2_{\mathbb{R}^D}$; (ii) interpolate around
$\hat{y}_k$ using a local parameterization of $\iota(M)$. 

{Under Assumption \ref{manifod with noise}, we provide a method to estimate the dimension $d$ in Section \ref{determine the dimension} of the Supplementary Material. The assumption that $M$ is connected is made for simplicity in the statements of the main theorems and is not necessary for \texttt{MrGap} to function.  When $M$ is the disjoint union of  smooth and closed Riemannian manifolds of the same dimension, our algorithm can be applied directly to denoise and interpolate on $\iota(M)$, without the need to test whether $M$ is connected. In contrast, when $M$  is the disjoint union of  smooth and closed Riemannian manifolds of different dimensions, we propose a method to reconstruct $\iota(M)$ by applying \texttt{MrGap} to the samples corresponding to each connected component.  Refer to Section \ref{section:disconnected} of the Supplementary Material for details and numerical simulations.} In Section \ref{subsection: performance on manifold with boundary} of the Supplementary Material, we illustrate the performance of \texttt{MrGap} when $M$ is a compact manifold with boundary.

There are two main ingredients in our methodology: the local covariance matrix and Gaussian process regression. We review these briefly in the following two subsections. 

\subsection{Local covariance matrix}
Let $\{e_i\}_{i=1}^D$ be the standard orthonormal basis of $\mathbb{R}^D$, where $e_i=[0,\cdots, 1, \cdots, 0]^\top$ with $1$ in the $i$-th entry. Similarly, let $\{e'_i\}_{i=1}^d$ be the standard orthonormal basis of $\mathbb{R}^d$.  Letting $\chi(t) =1$ for $t \in [0,1]$ and $\chi(t)=0$ for $t>1$ denote a $0-1$ kernel, we define a local covariance matrix at $y_k$, 
\begin{align}\label{Gaussian local covariance matrix}
C_{n,\epsilon}(y_k)=\frac{1}{n} \sum_{i=1}^n (y_i-y_k)(y_i-y_k)^\top \chi\Big(\frac{\|y_i-y_k\|_{\mathbb{R}^D}}{\epsilon}\Big),
\end{align}
where $\epsilon>0$. Consider the eigen decomposition 
\begin{align}\label{eigendecompPCA}
C_{n,\epsilon}(y_k)  =U_{n,\epsilon}(y_k)\Lambda_{n,\epsilon}(y_k) U_{n,\epsilon}(y_k)^\top,
\end{align}
with $U_{n,\epsilon}(y_k) \in \mathbb{O}(D)$  and $\Lambda_{n,\epsilon}(y_k) \in \mathbb{R}^{D \times D}$ a diagonal matrix of eigenvalues of $C_{n,\epsilon}(y_k)$ in decreasing order,  $e_1^\top\Lambda_{n,\epsilon}(y_k)e_1 \geq e_2^\top\Lambda_{n,\epsilon}(y_k)e_2 \geq \cdots \geq e_D^\top\Lambda_{n,\epsilon}(y_k)e_D$. The column vectors of $U_{n,\epsilon}(y_k)$ are orthonormal eigenvectors in decreasing order of eigenvalues. The eigenvectors form an orthonormal basis of $\mathbb{R}^{D}$.  We define two operators in $\mathbb{R}^D$ associated with $U_{n,\epsilon}(y_k)$  as follows.  

Define projection matrices $J \in \mathbb{R}^{D \times d}$ such that $J_{ij}=1$ when $i=j$ and $J_{ij}=0$ when $i \not= j$ and  $\bar{J} \in \mathbb{R}^{D \times (D-d)}$ such that $\bar{J}_{ij}=1$ when $i=d+j$ and $\bar{J}_{ij}=0$ otherwise, so that  
$J$ and $\bar{J}$ are projections from $\mathbb{R}^D$ to subspaces $\mathbb{R}^d$ and $\mathbb{R}^{D-d}$ respectively.
For any $y \in \mathbb{R}^D$, let $\mathcal{P}_{y_k}(y): \mathbb{R}^D \rightarrow \mathbb{R}^d$ with
\begin{align}\label{regression PYK 0}
\mathcal{P}_{y_k}(y)=J^\top U_{n,\epsilon}(y_k)^\top (y-y_k),
\end{align}
 and let $\mathcal{P}^\bot_{y_k}(y): \mathbb{R}^D \rightarrow \mathbb{R}^{D-d}$ with
\begin{align}\label{regression PbotYK 0}
\mathcal{P}^\bot_{y_k}(y)=\bar{J}^\top U_{n,\epsilon}(y_k)^\top (y-y_k).
\end{align}
The operators $\mathcal{P}_{y_k}$ and $\mathcal{P}^\bot_{y_k}$ depending on $d$ are key components in our methodology.

\subsection{Gaussian process regression }
Suppose $F:\mathbb{R}^d \rightarrow \mathbb{R}^q$ is an unknown regression function with $F=(f_1, \cdots, f_q)^\top$. Letting $z_i\in \mathbb{R}^q$ denote the response vector and $u_i \in \mathbb{R}^d$ the predictor vector, for $i=1,\ldots, N$, we have
\begin{equation}\label{standard GP model}
z_i=F (u_i) +\eta_i,\qquad 
\eta_i \sim \mathcal{N}(0, \sigma^2 I_{q \times q}).
\end{equation}
We assign independent Gaussian process priors to each $f_j$ with mean $0$ and covariance function 
\begin{equation}\label{GP cov}
\mathsf{C}(u, u')=A \exp\Big(-\frac{\|u-u'\|^2_{\mathbb{R}^d}}{\rho}\Big),
\end{equation}
implying $\tilde{z}_j^\top=\{f_j(u_1), \ldots f_j(u_N)\}^\top \sim \mathcal{N}(0, \Sigma_1)$, for $j=1,\ldots,q$, where 
$\Sigma_1 \in \mathbb{R}^{N \times N}$ has $(j,k)$ element $\mathsf{C}(u_j,u_k)$, for $1 \leq j,k \leq N$.  
Let $Z \in \mathbb{R}^{N \times q}$ with $i$th row $z^\top_i$ for $i=1,\ldots,N$ and $j$th column $\tilde{z}_j$ for $j=1,\ldots, q$. Marginalizing over the Gaussian process priors, we obtain the log-likelihood 
\begin{align}\label{marginal likelihood}
\log p(Z| A, \rho, \sigma) = -\mathrm{tr}\{ z^\top (\Sigma_1+\sigma^2 I_{N \times N})^{-1}z\} -q \log\{ \det(\Sigma_1+\sigma^2 I_{N \times N})\}-\frac{q N}{2}\log(2\pi).
\end{align}
To estimate the covariance parameters $A,\rho,\sigma$ via an empirical Bayes approach that protects against overfitting, we maximize this marginal likelihood. Refer to Section \ref{review GPR}  of the Supplementary Material for a detailed discussion and the derivation of \eqref{marginal likelihood}.

\subsection{Main challenges and motivation}\label{Main challenges and the motivation}
Suppose $y$ is a point on the embedded manifold $\iota(M)$. The tangent space of $\iota(M)$ at $y$, $T_y \iota(M)$, is a subspace of $\mathbb{R}^D.$ $y+T_y \iota(M)$ is the affine subspace tangent to $\iota(M)$ at $y$. It is well known that $\iota(M)$ can be locally parametrized as a graph of a function over $y+T_y \iota(M)$. Consider neighborhood $B^{\mathbb{R}^D}_{\delta}(y) \cap \iota(M)$ for some $\delta>0$. Then, there is a function $F(u):O \subset  \mathbb{R}^{d} \rightarrow \mathbb{R}^{D-d}$ with $F(0)=0$ and $\nabla F(0)=0$ such that the graph of $F$ over $O$, after a rotation and a translation in $\mathbb{R}^D$, is a parametrization of $B^{\mathbb{R}^D}_{\delta}(y_k) \cap \iota(M)$. The parametrization can be expressed as $\Phi(u)=y+U {\scriptsize\begin{bmatrix}
u \\
F(u) \\
\end{bmatrix}}: O \rightarrow \mathbb{R}^D$, 
%$$\Phi(u)=y+U {\scriptsize\begin{bmatrix}
%u \\
%F(u) \\
%\end{bmatrix}}$$
with $U \in \mathbb{O}(D)$ an orthogonal matrix mapping from the subspace generated by $e_1, \cdots, e_d$ to $T_y \iota(M)$. Refer to Figure \ref{illustration 1} and the technical description in Proposition \ref{uniform coordinates} in the Supplementary Material. Since $\iota(M)$ is unknown, we cannot recover the tangent spaces of $\iota(M)$ from noisy data. Thus, we do not know the matrix $U$ or the function $F$. The key idea of the paper is to take advantage of the properties of local covariance to construct alternative affine subspaces where $\iota(M)$ can be parametrized locally by the graph of a function. This allows us to conduct manifold denoising and interpolation by locally applying nonparametric regression. 
\begin{figure}[h!]
\centering
\includegraphics[width=0.6 \columnwidth]{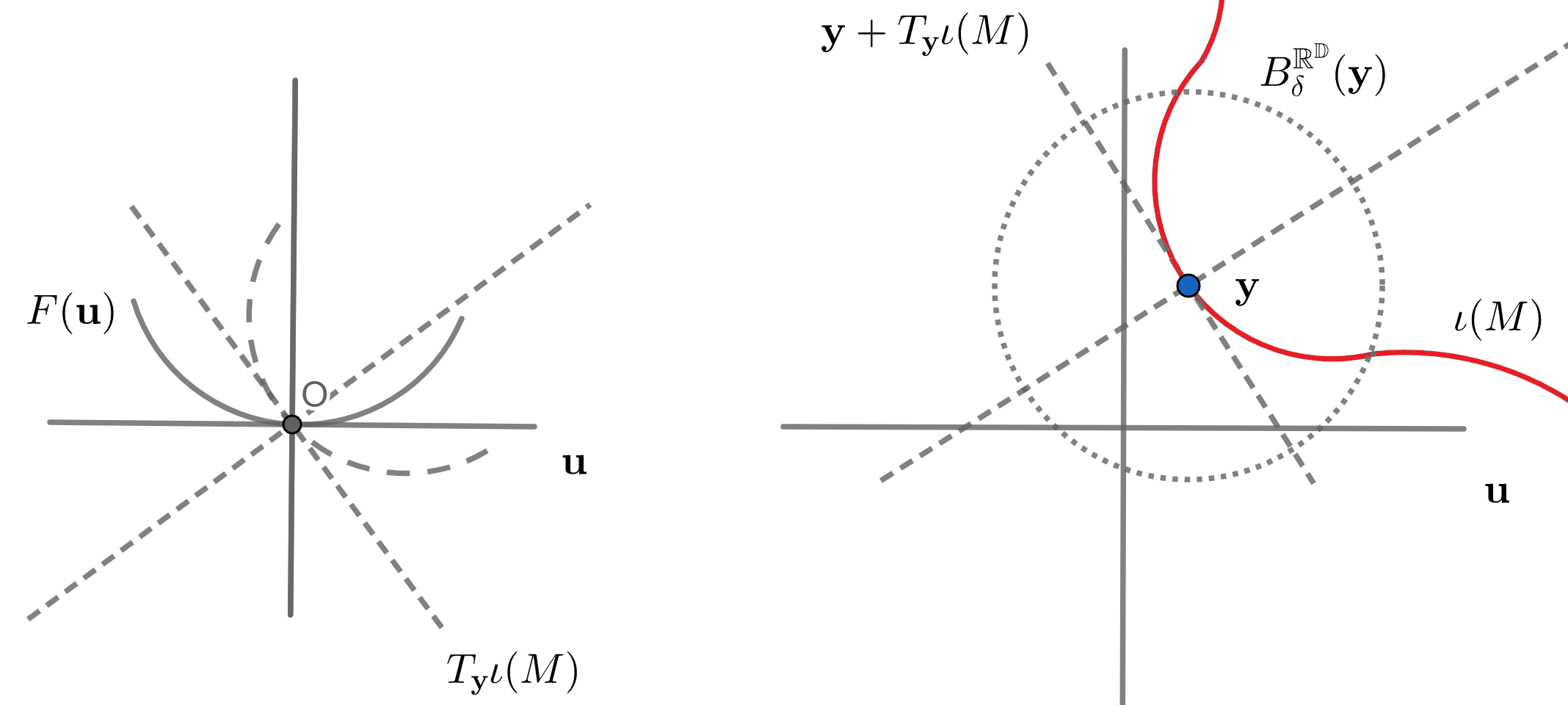}
\caption{Left Panel: The solid lines are the standard coordinates of $\mathbb{R}^D$. The solid curve is the plot of $[u, 
F(u)]^\top$.  The dashed lines show the rotation of the coordinates under $U$ and the plot of $U [u, 
F(u)]^\top$. Right Panel: The solid curve is $\iota(M)$. After a translation by $y$, the graph of $y+U [u, 
F(u)]^\top$  gives a parametrization of $B^{\mathbb{R}^D}_{\delta}(y) \cap \iota(M)$.
}\label{illustration 1}
\end{figure}

\subsection{Denoising phase of MrGap}\label{denoise phase introduction}
Let $\mathcal{H}_k=y_k+\mathcal{T}_k$ be an affine subspace through $y_k$, with $\mathcal{T}_k$ the subspace generated by the first $d$ eigenvectors of $C_{n,\epsilon}(y_k)$ 
  in \eqref{Gaussian local covariance matrix}.  In Section 3, we show that under certain relations between bandwidth $\epsilon$, noise variance $\sigma^2$,  and sample size $n$, with high probability, the neighborhood $B^{\mathbb{R}^D}_{3\delta}(y_k) \cap \iota(M)$ can be parametrized as the graph of a function over $\mathcal{H}_k$ for $\delta>\epsilon$.  Refer to Section \ref{tangent space comparison} in the Supplementary Material for an illustration of $\mathcal{H}_k$ and more discussions. Importantly, the affine subspace $\mathcal{H}_k$ is not  required to accurately approximate the 
affine subspace tangent to $\iota(M)$ at $\iota(x_k)$. However, the parameterization
can be expressed similarly as when the affine subspace is tangent to the manifold:
\begin{align}\label{Section 2: eq local parametrization}
\Phi_k(u)=y_k+U_{n,\epsilon}(y_k){\scriptsize\begin{bmatrix}
u \\
F_k(u)\\
\end{bmatrix}}
\end{align} 
where $F_k(u): O_k \rightarrow \mathbb{R}^{D-d}$ for a connected open set $O_k$ containing $0$ in $\mathbb{R}^d$ and  $U_{n,\epsilon}(y_k)$ is the orthogonal matrix that maps the subspace generated by $e_1, \cdots, e_d$ to $\mathcal{T}_k$ as defined in \eqref{eigendecompPCA}.  
The parameterization $\Phi_k(u)$ is constructed from a rotation under $U_{n,\epsilon}(y_k)$ and a translation under $y_k$ of the graph of $F_k(u)$, and we have 
\begin{align}\label{denoised equation 00}
\hat{y}_k = y_k+U_{n,\epsilon}(y_k){\scriptsize\begin{bmatrix}
0 \\
F_k(0)\\
\end{bmatrix}}
\end{align}
is a point on $\iota(M)$. Hence, if we can estimate $F_k(0)$, then \eqref{denoised equation 00} provides a fitted value for $y_k$.

We rely on Gaussian process regression to infer $F_k$ and hence $F_k(0)$. The operations  $\mathcal{P}_{y_k}$  and $\mathcal{P}^\bot_{y_k}$ in \eqref{regression PYK 0} and \eqref{regression PbotYK 0} are compositions of
(a) a translation under $-y_k$,
(b) a rotation under $U_{n,\epsilon}(y_k)^\top$, and
(c) projections onto $\mathbb{R}^d$ and $\mathbb{R}^{D-d}$, respectively. 
Since $\Phi_k(u)$ is a parametrization of $B^{\mathbb{R}^D}_{3\delta}(y_k) \cap \iota(M)$ and there is noise in the data, we use points in a smaller neighborhood of $y_k$ to estimate $F_k(0)$: $B^{\mathbb{R}^D}_{\delta}(y_k) \setminus y_k \cap \{y_1 , \ldots, y_n\}=\{y_{k,1} , \ldots, y_{k,N_k}\}$. Refer to Section \ref{Theoretical analysis main article} for details.  Then, $w_{k,j}=\mathcal{P}_{y_k}(y_{k,j})$ are the inputs for $F_k$ and $z_{k,j}=\mathcal{P}^\bot_{y_k}(y_{k,j})$  are the response variables for $j=1,\ldots,N_k$. Refer to Figure \ref{illustration 2}.

\begin{figure}[ht]
\centering
\includegraphics[width=0.7 \columnwidth]{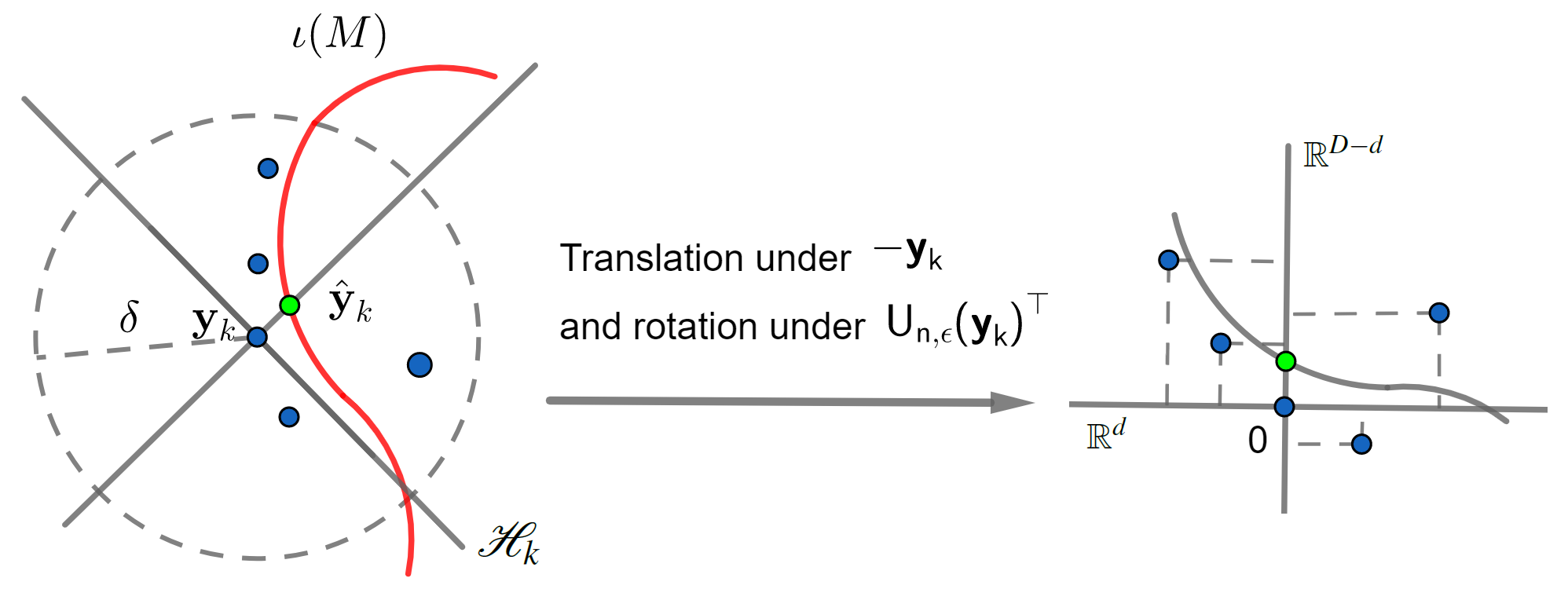}
\caption{Left Panel: The solid curve is $\iota(M)$.  The solid lines are affine subspace $\mathcal{H}_k$ and affine normal subspace of $\mathcal{H}_k$. The blue points are noisy data points in the $\delta$ neighborhood of $y_k$.  Right Panel: The solid lines are coordinates in $\mathbb{R}^D$ and the graph of function $F_k$. After (a) a translation under $-y_k$ and (b) a rotation under $U_{n,\epsilon}(y_k)^\top$, blue points in the left panel (the noisy data points) are mapped to blue points in the right panel. In particular, $y_k$ is mapped to the origin.  If we project the blue points except $0 \in \mathbb{R}^D$ in the right panel onto $\mathbb{R}^d$ and $\mathbb{R}^{D-d}$ which are the last operations (c) in $\mathcal{P}_{y_k}$  and $\mathcal{P}^\bot_{y_k}$ , then we have inputs $w_{k,j}$ and response variables $z_{k,j}$ for $F_k$ respectively. The green point in the right panel is $[0, F_k(0)]^\top$. We apply \eqref{denoised equation 00} to get the denoised output $\hat{y}_k$ indicated by the green point in the left panel.
}\label{illustration 2}
\end{figure}

Hence, the denoising phase of \texttt{MrGap} consists of the following steps.  Inputs include data $\{y_i\}_{i=1}^n$, a bandwidth $\epsilon$ of the local covariance matrix $C_{n,\epsilon}(y_k)$, and a scale $\delta$ of the maps $\mathcal{P}_{y_k}$  and $\mathcal{P}^\bot_{y_k}$. The output is the denoised data $\{ \hat{y}_1, \cdots, \hat{y}_n \}$.  In Step 1, for each data point $y_k$, we flag nearby points $\{y_{k,1} , \cdots, y_{k,N_k}\}$ in $B^{\mathbb{R}^D}_{\delta}(y_k) \setminus y_k$. We construct the predictors and responses for a regression function $F_k$ as described above. In Step 2, we calculate the covariance matrix between $0$ and $\{w_{k,j}\}_{j=1}^{N_k}$. 
In Step 3, we apply Gaussian process regression to estimate $F_k(0)$ and denoise the data, using the covariance matrix in Step 2, while estimating the tuning parameters. Let $\mathcal{A}_k$ be an affine subspace defined as 
\begin{equation}\label{affine space of posterior}
\mathcal{A}_k=y_k+V_k,
\end{equation} 
where $V_k$ is the $D-d$ dimensional subspace generated by the last $D-d$ column vectors of $U_{n,\epsilon}(y_k)$. The predictive distribution of the denoised $y_k$  is Gaussian in $\mathcal{A}_k$ centered at $\hat{y}_k$. We iteratively apply steps 1 to 3 to refine the results. Figure \ref{illustration 2} illustrates the first 3 steps, and Step 4 is discussed in more detail in Section \ref{Theoretical analysis main article}.  Detailed steps are summarized in Algorithm \ref{MrGap1}.

\begin{algorithm}[h!]
\SetAlgoLined
For each $y_k$, construct $C_{n,\epsilon}(y_k)$ as in \eqref{Gaussian local covariance matrix}. 
Let $\{y_{k,1} , \cdots, y_{k,N_k}\}$ denote samples in $B^{\mathbb{R}^D}_{\delta}(y_k) \setminus y_k$, 
 $w_{k,j}=\mathcal{P}_{y_k}(y_{k,j})$ and $z_{k,j}=\mathcal{P}^\bot_{y_k}(y_{k,j})$ for $j=1, \cdots, N_k$, with  $\mathcal{P}_{y_k}(y)$ and $\mathcal{P}^\bot_{y_k}(y)$ defined in \eqref{regression PYK 0} and \eqref{regression PbotYK 0}.  The $k$th regression has responses $z_{k,j}$ and predictors $w_{k,j}$.

For each $y_k$, construct  covariance $\Sigma_k= {\scriptsize\begin{bmatrix}
\Sigma_{k,1} & \Sigma_{k,2}  \\
\Sigma_{k,3} & \Sigma_{k,4}
\end{bmatrix}} \in \mathbb{R}^{(N_k+1) \times (N_k+1)}$ over $\{w_{k,1}, \cdots, w_{k,N_k}, 0 \}$ induced by $\mathsf{C}$ in \eqref{GP cov}, with $\Sigma_{k,1} \in \mathbb{R}^{N_k \times N_k}$.  
Denote $Z_k \in \mathbb{R}^{N_k \times (D-d)}$ with $i$th row $z_{k,i}^\top$.

For the $k$th local regression, calculate the log marginal likelihood $ \log  p_k (Z_k| A, \rho, \sigma)$ using \eqref{marginal likelihood} with $Z=Z_k$, $\Sigma_1 = \Sigma_{k,1}$ and $N=N_k$. Estimate $A,\rho,\sigma$ by maximizing the sum of log marginal likelihoods $L=\sum_{k=1}^n \log  p_k (Z_k| A, \rho, \sigma)$.
%the log marginal likelihood is calculated using 
%\eqref{marginal likelihood} for $z=z_k$, $\Sigma_1 = \Sigma_{k,1}$ and $N=N_k$.
%We choose $A$, $\rho$ and $\sigma$ to maximize the sum of these log marginal likelihoods over $k=1,\ldots,n$,
%\begin{align}\label{joint marginal likelihood}
%\sum_{k=1}^n \log  p_k (z_k| A, \rho, \sigma) = -&  \sum_{k=1}^n \Big( \mathrm{tr}(z_k^\top (\Sigma_{k,1}
%+\sigma^2 I_{N_k \times N_k})^{-1}z_k) \\
%& - (D-d)  \log(\det(\Sigma_{k,1}+\sigma^2 I_{N_k \times N_k}))-\frac{q N_k}{2}\log(2\pi)  \Big). \nonumber 
%\end{align}
The predictive distribution of the denoised output of $y_k$  is Gaussian in $\mathcal{A}_k$ with mean
\begin{align} \label{denoised yk}
\hat{y}_k=y_k+U_{n,\epsilon}(y_k){\scriptsize\begin{bmatrix}
0 \\
\{\Sigma_{k,3}(\Sigma_{k,1}+{\sigma}^2 I_{N_k \times N_k})^{-1}Z_k\}^\top \\
\end{bmatrix}},
\end{align}
and covariance $\big\{ \Sigma_{k,4}-\Sigma_{k,3} (\Sigma_{k,1}+\sigma^2 I_{N_k \times N_k})^{-1}\Sigma_{k,2} \big\}  I_{(D-d) \times (D-d)}$,
where $U_{n,\epsilon}(y_k)$ and $\mathcal{A}_k$ are defined in \eqref{eigendecompPCA} and \eqref{affine space of posterior},  respectively. 

Repeat Steps 1-3 using the denoised data in Step 3 as the input in Step 1. Stop iterating when the change in $\sigma$ is below a small tolerance and output the final denoised data. 
\caption{MrGap denoising steps to produce estimates of $\hat{y}_1, \cdots, \hat{y}_n$ from $y_1,\ldots,y_n$.}\label{MrGap1}
\end{algorithm}

\subsection{Interpolation phase of the MrGap algoirthm }

Interpolation produces additional points on $\iota(M)$.  In the presence of non-negligible noise, Algorithm \ref{MrGap1} needs to be iterated until convergence; refer to Section \ref{Discussion of the main algorithm}. For illustration purposes, here we assume that the noise is small.  Consequently, we are allowed to use the noisy data $\{y_i\}_{i=1}^n$ directly as inputs in the interpolation phase and adopt the same notation as in the previous section for clarity. Recall that \eqref{Section 2: eq local parametrization} is a local parametrization of $\iota(M)$. A straightforward approach to interpolation is to generate new inputs $\{\tilde{u}_{k,1},\cdots, \tilde{u}_{k, K}\}$ randomly within domain $O_k$ and then apply 
\begin{align}\label{denoised equation 11}
y_k+U_{n,\epsilon}(y_k){\scriptsize\begin{bmatrix}
\tilde{u}_{k,j} \\
F_k(\tilde{u}_{k,j}) \\
\end{bmatrix}}
\end{align}
to produce the $j$th point on $\iota(M)$ around the denoised point $\hat{y}_k$, for $j=1,\ldots,K$. 

There are two issues involved. First, we need to ensure that the inputs $\{\tilde{u}_{k,i}\}_{i=1}^K$ are distributed within the domain $O_k$. To achieve this, we use the predictors $\{w_{k,j}\}_{j=1}^{N_k}$ to estimate a ball in $O_k$ and then generate $\{\tilde{u}_{k,i}\}_{i=1}^K$ uniformly in that ball. The second issue is that the interpolated points around $\hat{y}_k$ and around $\hat{y}_{k'}$ may have noticable gaps in the intersection region of their local neighborhoods due to statistical errors in the estimation of $F_k$ and $F_{k'}$. To address this issue, we perform iterative interpolation around each $\hat{y}_k$ for $k$ from $1$ to $n$.   Specifically, we identify all interpolated points around every $\hat{y}_{k'}$ with $k' < k$, falling within $B^{\mathbb{R}^D}_{\delta}(y_k)$, denoted as $\tilde{M}_{k-1}=\{\tilde{y}_{k,j}\}_{j=1}^{L_k}$. In fitting the regression for $F_k$, we augment the predictors $\{w_{k,j}\}_{j=1}^{N_k}$ with $\tilde{w}_{k,j}=\mathcal{P}_{y_k}(\tilde{y}_{k,j})$ for $j=1 ,\ldots, L_k$ and the responses $\{z_{k,j}\}_{j=1}^{N_k}$ with $\tilde{z}_{k,j}=\mathcal{P}^\bot _{y_k}(\tilde{y}_{k,j})$  for  $j=1 ,\ldots, L_k$. Then, by applying \eqref{denoised equation 11} we produce smoother interpolations in the intersection regions. Refer to Figure \ref{illustration 3 interpolation phase} for an illustration. The interpolation steps are summarized in Algorithm \ref{MrGap2} in Section \ref{Interpolation phase} of the Supplementary Material.  In Section \ref{numerical example curve} of the Supplementary Material, we demonstrate that results of interpolation are robust to permutations of the order of $\{\hat{y}_i\}_{i=1}^n$. 

\begin{figure}[h!]
\centering
\includegraphics[width=0.7 \columnwidth]{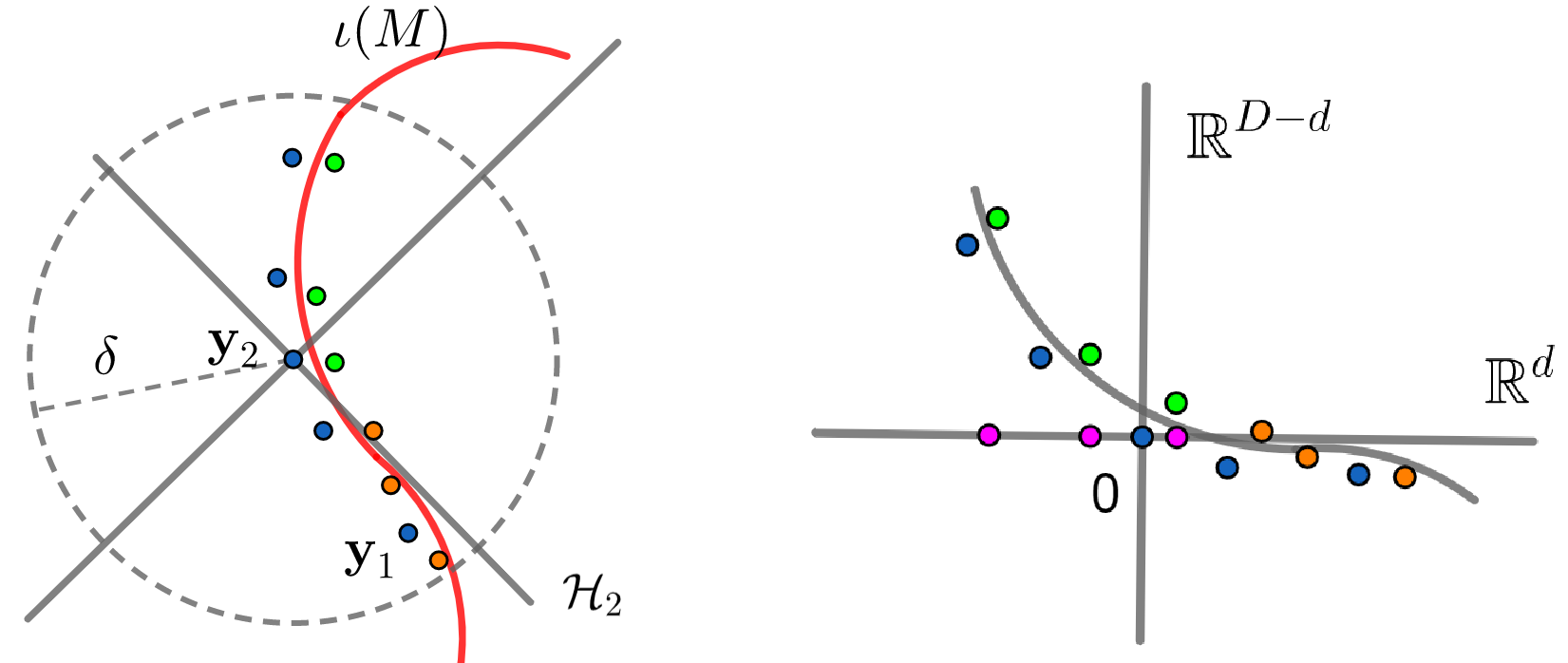}
\caption{We focus our discussion on the point $\textbf{y}_2$.  Left Panel: The blues points are $B^{\mathbb{R}^D}_{\delta}(\textbf{y}_2)\cap\{y_i\}_{i=1}^n$. Suppose we interpolated $K$ points around $\textbf{y}_1$ on $\iota(M)$ whose union is $\tilde{M}_{1}$.  We find  $\tilde{M}_{1} \cap B^{\mathbb{R}^D}_{\delta}(\textbf{y}_2)$ (orange points). Right Panel: The solid curve is the graph of the function $F_2$. After a translation under $-\textbf{y}_2$ and a rotation under $U_{n,\epsilon}(\textbf{y}_2)^\top$, the blue points and the orange points in the left panel are mapped to the blue points and orange points in the right panel. The purple points $\{\tilde{\textbf{u}}_{2,j}\}_{j=1}^K$ are generated in the domain $O_2$ of $F_2$ in $\mathbb{R}^d$ for the prediction. The blue points except $0$ and the orange points are projected onto  $\mathbb{R}^d$ and $\mathbb{R}^{D-d}$ to be the predictors and the response variables for the predictions whose coordinates are $\{[\tilde{\textbf{u}}_{2, j}, F_2(\tilde{\textbf{u}}_{2, j})]^\top\}_{j=1}^K$ (green points in the right panel). The green points in the right panel are mapped back to green points in the left panel through \eqref{denoised equation 11}  which are the interpolations around $\textbf{y}_2$.
}\label{illustration 3 interpolation phase}
\end{figure}

\subsection{Selection of parameters $\epsilon$ and $\delta$}
Let $L$ denote the sum of log marginal likelihood in Step 3 of Algorithm \ref{MrGap1}. We propose estimating $\epsilon$ and $\delta$ by maximizing $L$ over all $\epsilon \leq \delta$ in the first round of Algorithm.  $L$ has an analytic expression in terms of the covariance parameters  $A, \rho$, and $\sigma$ for a fixed pair $(\epsilon,\delta)$, which allows us to apply gradient descent to the covariance parameters along with a grid search on $(\epsilon,\delta)$ to maximize $L$. To improve efficiency, a lower bound $b$ on $\epsilon$ is imposed such that each $C_{n, b}(y_k)$ has at least $d$ positive eigenvalues for all $k$. Refer to Section \ref{bandwidth selection example} of the Supplementary Material for an example.

%Appropriate choices of $\epsilon$ and $\delta$ depend on geometric properties of $\iota(M)$ and the distribution of the data. Since $\iota(M)$ is unknown, we estimate $\epsilon$ and $\delta$, with $\epsilon \leq \delta$, by maximizing the sum of log marginal likelihoods in Step 3 of Algorithm \ref{MrGap1}. This is accomplished with a grid search. To improve efficiency, a lower bound $b$ on $\epsilon$ is imposed such that each $C_{n, b}(y_k)$ has at least $d$ positive eigenvalues for all $k$. Refer to Section \ref{bandwidth selection example} of the Supplementary Material for an example.

\section{Theoretical analysis}\label{Theoretical analysis main article}
\subsection{Geometric preliminaries}
%Suppose $M$ is a $d$-dimensional smooth, closed and connected Riemannian manifold isometrically embedded in $\mathbb{R}^D$ through $\iota$. 
Suppose $x \in M$. Let $T_x M$ be the tangent space of $M$ at $x$. Let $T_{\iota(x)}\iota(M)$ be the tangent space of $\iota(M)$ at $\iota(x)$ which is a subspace of $\mathbb{R}^D$. Let $T_{\iota(x)}\iota(M)^{\bot}$ be the normal subspace of $T_{\iota(x)}\iota(M)$ in $\mathbb{R}^D$.  Recall the definition of a chart for $\iota(M)$. For any $x \in M$, we can find an open set $U_x \subset \mathbb{R}^D$ with $\iota(x) \in U_x$.  $V_x$ is an open topological ball in $\mathbb{R}^d$. A chart of $\iota(M)$ over $U_x \cap \iota(M)$ is a map $\Phi_x: V_x \rightarrow U_x \cap \iota(M)$ such that $\Phi_x$ is a diffeomorphism, a smooth bijective map with a smooth inverse. 
%Refer to Section \ref{Basic concepts} of the Supplementary Material for a review of diffeomorphisms between subsets in Euclidean space and the charts of the embedded submanifold $\iota(M)$. 
The reach $\tau_{\iota(M)}$ of $\iota(M)$ is the largest number such that any point in $\mathbb{R}^D$ at distance less than the reach from $\iota(M)$ has a unique nearest point on $\iota(M)$ \citep{federer1959curvature}. Refer to Section \ref{Basic concepts} of the Supplementary Material for detailed definitions.

\subsection{Local covariance matrix on manifold with noise}
$C_{n,\epsilon}(y_k)$ from \eqref{eigendecompPCA} is invariant under the translation of $\iota(M)$ in $\mathbb{R}^D$.  Hence, to simplify the notation, we make the following assumption for the results in this subsection.
\begin{assumption}\label{assumption trans and rot}
For any fixed $x_k \in M$, we first translate $\iota(M)$ so that  $\iota(x_k)=0 \in \mathbb{R}^D$. Then, we apply an orthogonal transformation in  $\mathbb{R}^D$ so that $\{e_i\}_{i=1}^d$ form a basis of $T_{\iota(x_k)}\iota(M)$.
\end{assumption}

In the following theorem, we provide bias and variance analyses of $C_{n,\epsilon}(y_k)$. The proofs are in Sections \ref{proof of bias analysis PCA } and  \ref{proof of variance analysis PCA } of the Supplementary Material. 

\begin{theorem}\label{local PCA  variance analysis}
Under Assumptions \ref{manifod with noise}-\ref{assumption trans and rot},  suppose $\epsilon$ is small enough depending on $d$, $D$, scalar curvature of $M$ and second fundamental form of $\iota(M)$. For $\beta>1$, if $\sigma \leq \min \big\{\frac{1}{\sqrt{-4(d+5)\log \epsilon}}, \frac{1}{\sqrt{12\log(2n)}} \big\} \epsilon^\beta$, then for all $x_k$, with probability greater than $1-\frac{1}{n^2}$, we have $\|\eta_k\| _{\mathbb{R}^D}\leq \epsilon^\beta$ and
\begin{align}
C_{n,\epsilon}(y_k)
=\epsilon^{d+2} \bigg\{ \frac{|S^{d-1}|{P}(x_k)}{d(d+2)} 
{\scriptsize 
\begin{bmatrix}
I_{d \times d} & 0 \\
0& 0  \\
\end{bmatrix}}+\mathcal{E}\bigg\}, \nonumber 
\end{align} 
where 
\begin{align}
\mathcal{E}={\scriptsize \begin{bmatrix}
O\{\epsilon^{\min(\beta-1,2)}+\sqrt{\frac{\log n}{n \epsilon^d}}\} & O\{\epsilon^{2\min(\beta-1,1)}+\sqrt{\frac{\log n}{n \epsilon^{d-2\min(\beta-1,1)}}}\} \\
O\{\epsilon^{2\min(\beta-1,1)}+\sqrt{\frac{\log n}{n \epsilon^{d-2\min(\beta-1,1)}}}\} & O\{\epsilon^{2\min(\beta-1,1)}+\sqrt{\frac{\log n}{n \epsilon^{d-2\min(\beta-1,1)}}}\} 
\end{bmatrix}}. \nonumber 
\end{align}
The top left block of $\mathcal{E}$ is a $d$ by $d$ matrix. The constant factors in $\mathcal{E}$ depend on $d$, the $C^2$ norm of $P$, the second fundamental form of $\iota(M)$ and its derivative, and the Ricci curvature of $M$.
\end{theorem}

The following theorem characterizes the orthonormal eigenvector matrix of $C_{n,\epsilon}(y_k)$. The proof relies on applying the Davis-Kahan theorem \citep{davis1970rotation,yu2015useful} to Theorem \ref{local PCA  variance analysis}. The proof of the theorem is given in Section \ref{proof of variance analysis PCA } of the Supplementary Material.

\begin{theorem}\label{local PCA  spectral behavior}
Under Assumptions \ref{manifod with noise}-\ref{assumption trans and rot}, for $\beta>1$,  suppose $\epsilon$ is small enough depending on $\beta$, $d$, $D$, $P_m$, $C^2$ norm of $P$, the second fundamental form of $\iota(M)$ and its derivative and the Ricci curvature of $M$.  Suppose $n$ is large enough so $\epsilon^{-d-2\min(\beta-1,1)} \leq \frac{n}{\log n}$. If  $\sigma \leq \min \big\{\frac{1}{\sqrt{-4(d+5)\log \epsilon}}, \frac{1}{\sqrt{12\log(2n)}} \big\} \epsilon^\beta$,  then for all $x_k$, with probability greater than $1-\frac{1}{n^2}$, we have $\|\eta_k\| _{\mathbb{R}^D}\leq \epsilon^\beta$, and
%\begin{align}
$U_{n,\epsilon}(y_k)={\scriptsize \begin{bmatrix}
X_1 & 0 \\
0& X_2  \\
\end{bmatrix}}+O\{\epsilon^{2\min(\beta-1,1)}\}$,
%\end{align}
where $X_1 \in \mathbb{O}(d)$,  $X_2 \in \mathbb{O}(D-d)$.  $O\{\epsilon^{2\min(\beta-1,1)}\}$ is a $D$ by $D$ matrix whose entries are of order $O\{\epsilon^{2\min(\beta-1,1)}\}$, where constant factors depend on $d$, $D$, $P_m$, $C^2$ norm of $P$, the second fundamental form of $\iota(M)$ and its derivative and the Ricci curvature of $M$.
\end{theorem}

We discuss Theorem \ref{local PCA  spectral behavior} from the following two aspects. 

\textbf{Interpretation of the conclusion.} We explain the dominating and error terms in the expansion of $U_{n,\epsilon}(y_k)$. By Assumption \ref{assumption trans and rot},  ${\scriptsize\begin{bmatrix}
X_1 \\
0  \\
\end{bmatrix}}$ forms an orthonormal basis of $T_{\iota(x_k)}\iota(M)$ and  ${\scriptsize\begin{bmatrix}
0 \\
X_2 \\
\end{bmatrix}}$ forms an orthonormal basis of $T_{\iota(x)}\iota(M)^\bot$.  We  use the first $d$ column vectors of $U_{n,\epsilon}(y_k)$ to construct a map $\mathcal{P}_{y_k}(y)$ from $\mathbb{R}^D$ to $\mathbb{R}^d$ defined in \eqref{regression PYK 0}. In the proof of Theorem \ref{Construction of a chart main theorem}, we allow  the first $d$ column vectors of $U_{n,\epsilon}(y_k)$ to deviate from ${\scriptsize \begin{bmatrix}
X_1 \\
0  \\
\end{bmatrix}}$ by an order $1$ term, while $\mathcal{P}_{y_k}(y)$ on $\iota(M)$ still remains a local diffeomorphism. Therefore, it suffices that $U_{n,\epsilon}(y_k)-{\scriptsize\begin{bmatrix}
X_1 & 0 \\
0& X_2  \\
\end{bmatrix}}$   is of order $O\{\epsilon^{2\min(\beta-1,1)}\}$.

\textbf{Conditions on $n$, $\epsilon$ and $\sigma$.} For any $\beta>1$, if we expect error between  $U_{n,\epsilon}(y_k)$ and ${\scriptsize\begin{bmatrix}
X_1 & 0 \\
0& X_2  \\
\end{bmatrix}}$ to be $O\{\epsilon^{2\min(\beta-1,1)}\}$, we require $\epsilon$ to be smaller than a constant that depends on the factors in the theorem.  Dependence of $\epsilon$ on $\beta$ can be expressed as $\epsilon \leq C^{\frac{1}{\min(\beta-1,1)}}$ based on the proof of Theorem  \ref{local PCA  spectral behavior}, where $C$ is a constant. Therefore, $\epsilon$ is independent of $\beta$ if $\beta \geq 2$.  Since Theorem  \ref{local PCA  spectral behavior} depends on Theorem  \ref{local PCA  variance analysis}, a lower bound on $n$ depending on $\epsilon$ is needed. We impose $\epsilon^{-d-2\min(\beta-1,1)} \leq \frac{n}{\log n}$ to balance  bias and variance errors in Theorem  \ref{local PCA  variance analysis}.  Finally, we can achieve the desired error bound of $O\{\epsilon^{2\min(\beta-1,1)}\}$ whenever the standard deviation of the noise $\sigma$ is of order $O(\epsilon^\beta)$, up to a square root logarithmic factor. 

\subsection{Construction of a chart from noisy samples}
In the following theorem, we show that, with high probability, the map $\mathcal{P}_{y_k}(y)$ in \eqref{regression PYK 0} is a diffeomorphism from $B^{\mathbb{R}^D}_{3\delta}(y_k) \cap \iota(M)$ onto its image for some $\delta>0$. Consequently, its inverse defines a chart of $\iota(M)$.  The proof of the theorem, in Section \ref{proofs of charts}  of the Supplementary Material, relies on Theorem \ref{local PCA  spectral behavior} and an analysis of perturbations of tangent spaces of $\iota(M)$ near $y_k$. 

\begin{theorem}\label{Construction of a chart main theorem}
Under Assumption \ref{manifod with noise},  for $\beta>1$,  suppose $\epsilon$ is small enough depending on $\beta$, $d$, $D$, $P_m$, $C^2$ norm of $P$, the second fundamental form of $\iota(M)$ and its derivative and the Ricci curvature of $M$.  Suppose $n$ is large enough such that $\epsilon^{-d-2\min(\beta-1,1)} \leq \frac{n}{\log n}$. If  $\sigma \leq \min \big\{\frac{1}{\sqrt{-4(d+5)\log \epsilon}}, \frac{1}{\sqrt{12\log(2n)}} \big\} \epsilon^\beta$,  then for all $x_k$ and any $\delta$ such that $\epsilon^\beta <\delta<\frac{\tau_{\iota(M)}}{16}$,  with probability greater than $1-\frac{1}{n^2}$, we have the following facts:

(1) If $y_i \in B^{\mathbb{R}^D}_{\delta}(y_k)$, then $\iota(x_i) \in B^{\mathbb{R}^D}_{3\delta}(y_k)$.

(2) The map $\mathcal{P}_{y_k}(y)$ defined in \eqref{regression PYK 0} is a diffeomorphism from $B^{\mathbb{R}^D}_{3\delta}(y_k) \cap \iota(M)$ onto its image $O_k \subset \mathbb{R}^d$. $O_k$ is homeomorphic to $B^{\mathbb{R}^d}_{1}(0)$.

(3) There is a point $u_{k,0}$ in $O_k$ such that $0 \in B^{\mathbb{R}^d}_{R_k}(u_{k,0}) \subset O_k \subset B^{\mathbb{R}^d}_{3\delta}(0)$. Let $A= 1-\Omega \epsilon^{2\min(\beta-1,1)}$ and $B=\frac{8 \delta +2 \epsilon^\beta}{\tau_{\iota(M)}}$, where $\Omega$ is a constant depending on $d$, $D$, $P_m$, $C^2$ norm of $P$, the second fundamental form of $\iota(M)$ and its derivative and the Ricci curvature of $M$. Then $$R_k \geq (\sqrt{A^2-A^2B}- \sqrt{B-A^2B}) (3\delta - \epsilon^\beta).$$

%\begin{enumerate}[(1)]
%\item
%If $y_i \in B^{\mathbb{R}^D}_{\delta}(y_k)$, then $\iota(x_i) \in B^{\mathbb{R}^D}_{3\delta}(y_k)$.
%\item
%The map $\mathcal{P}_{y_k}(y)$ defined in \eqref{regression PYK 0} is a diffeomorphism from $B^{\mathbb{R}^D}_{3\delta}(y_k) \cap \iota(M)$ onto its image $O_k \subset \mathbb{R}^d$. $O_k$ is homeomorphic to $B^{\mathbb{R}^d}_{1}(0)$.
%\item  There is a point $u_{k,0}$ in $O_k$ such that $0 \in B^{\mathbb{R}^d}_{R_k}(u_{k,0}) \subset O_k \subset B^{\mathbb{R}^d}_{3\delta}(0)$. Let $A= 1-\Omega \epsilon^{2\min(\beta-1,1)}$ and $B=\frac{8 \delta +2 \epsilon^\beta}{\tau_{\iota(M)}}$, where $\Omega$ is a constant depending on $d$, $D$, $P_m$, $C^2$ norm of $P$, the second fundamental form of $\iota(M)$ and its derivative and the Ricci curvature of $M$. Then $$R_k \geq (\sqrt{A^2-A^2B}- \sqrt{B-A^2B}) (3\delta - \epsilon^\beta).$$
%\end{enumerate}
\end{theorem}
We discuss some of the conditions in the above theorem.

\textbf{Discussion of the magnitude of $\sigma$.} {In the above theorem, we require the condition $$\sigma \leq \min \big\{\frac{1}{\sqrt{-4(d+5)\log \epsilon}}, \frac{1}{\sqrt{12\log(2n)}} \big\} \epsilon^\beta,$$ which allows for any $\beta>1$. This relation implies that $\sigma$ is not small; in fact, it can be close to the order $\epsilon$ up to a square root logarithmic factor. The reason why $\sigma$ is not allowed to be too large compared to $\epsilon$ is due to the use of the $0-1$ kernel in the construction of the local covariance matrix. A large $\sigma$ can result in too few points within the  $\epsilon$ neighborhood of $y_k$, causing the local covariance matrix at $y_k$ to have a rank less than $d$. In such a case, the map $\mathcal{P}_{y_k}(y)$ cannot be constructed. One possible improvement to the condition on $\sigma$ would be to apply a different kernel, such as a squared exponential kernel. This would require analyzing the spectral behavior of the local covariance matrix with a more general kernel, constructed from noisy samples around a manifold, analogous to the result in Theorem \ref{local PCA  variance analysis}. Moreover, the relationship between $\sigma$ and $\epsilon$, along with condition $\epsilon^\beta <\frac{\tau_{\iota(M)}}{16}$, ensures that $y_k$ is close to $\iota(x_k)$ relative to the reach, with high probability. We illustrate in Section \ref{numerical example curve} of the Supplementary Material that we may not reconstruct $\iota(M)$ accurately if $y_k$ is too far from $\iota(x_k)$ compared to the reach. }

\textbf{The relation between $\sigma$ and the shape of the set $O_k$.}  Depending on $\sigma$, the last part of the theorem describes how close to round $O_k$ is. Consider the case of no noise. Equivalently, we can take $\beta \rightarrow \infty$. By the discussion after Theorem \ref{local PCA  spectral behavior}, $\epsilon$ is independent of $\beta$ if $\beta \geq 2$.  By applying an approximation for small $\epsilon$ and $\delta$, $R_k \geq 3(1-\frac{4 \delta }{\tau_{\iota(M)}}-\Omega'\epsilon) \delta$  for some constant $\Omega'$ depending on $\tau_{\iota(M)}$ and $\Omega$. Hence, $O_k$ is close to round in the sense that $O_k$ is contained in the round ball of radius $3\delta$ and contains a round ball of radius close to $3\delta$. In general, when $\sigma$ is small,  we can choose $\beta$ to be large so that $R_k$ is closer to $3\delta$.  
%Consequently, when the samples are less noisy, the set $O_k$ becomes more regular in being always contained in $B^{\mathbb{R}^d}_{3\delta}(0)$ and containing a round ball with radius closer to $3\delta$. 

The implications of Theorem \ref{Construction of a chart main theorem} are explored in the following two subsections. In Section \ref{Setup of the regression functions}, under the proposed relations between $\epsilon$, $\delta$, $\sigma$, and $n$ in the theorem, we connect the manifold reconstruction problem to a series of local regressions. Section \ref{Discussion of the main algorithm} addresses the impact of Theorem \ref{Construction of a chart main theorem} on the evolution of the geometry of $O_k$ throughout the iterations of Algorithm \ref{MrGap1}. We demonstrate that as the domain becomes more regular through iterations, interpolation improves.

\subsection{Setup of the regression functions}\label{Setup of the regression functions}
Suppose $\epsilon$, $\delta$, $\sigma$ and $n$ satisfy the conditions in Theorem \ref{Construction of a chart main theorem}. Then, all the following statements hold with probability greater than $1-\frac{1}{n^2}$.  

Statement (2) of Theorem \ref{Construction of a chart main theorem} shows that for any $y_k$, $\mathcal{P}_{y_k}(y)$ is a diffeomorphism on $B^{\mathbb{R}^D}_{3\delta}(y_k) \cap \iota(M)$. Then, the inverse of $\mathcal{P}_{y_k}(y)$ denoted as $\Phi_k: O_k \rightarrow \mathbb{R}^D$ with $O_k \subset \mathbb{R}^d$ is a chart of $\iota(M)$. As illustrated in Figure \ref{illustration 2}, since $\mathcal{P}_{y_k}(y)$ is constructed from a projection and is a diffeomorphism, 
\begin{align}\label{chart Gk}
\Phi_k(u)=y_k+U_{n,\epsilon}(y_k){\scriptsize\begin{bmatrix}
u \\
F_k(u) \\
\end{bmatrix}},
\end{align}
where $F_k(u): O_k \rightarrow \mathbb{R}^{D-d}$ is a smooth function. Since $\iota(M)$ is unknown, $F_k$ and the domain $O_k$ are unknown. Moreover, statement (3) in Theorem \ref{Construction of a chart main theorem} confirms $0 \in O_k$ which is not straightforward as $y_k$ is not necessarily on $\iota(M)$. Thus, \eqref{regression PYK 0} implies that $\mathcal{P}_{y_k}(y_k)=0$, suggesting that $\Phi_k(0)$ serves as a denoised output for $y_k$. 

The reconstruction of $B^{\mathbb{R}^D}_{3\delta}(y_k) \cap \iota(M)$ is based on the prediction of $F_k(u)$ for $u \in O_k$. We find $B^{\mathbb{R}^D}_{\delta}(y_k) \cap \{y_1, \cdots, y_n\}=\{y_{k,1}, \cdots, y_{k, N_k}\}$ with $ y_{k, i}=\iota(x_{k,i})+\eta_{k,i}$.  Statement (1) in Theorem \ref{Construction of a chart main theorem} justifies that the clean data $\{\iota(x_{k,i})\}_{i=1}^{N_k}$ are in the domain $B^{\mathbb{R}^D}_{3\delta}(y_k) \cap \iota(M)$ on which $\mathcal{P}_{y_k}$ is a diffeomorphism. By \eqref{regression PYK 0}, \eqref{regression PbotYK 0} and \eqref{chart Gk}, the following equations hold:
\begin{align*}
& \mathcal{P}_{y_k}\{\iota(x_{k,i})\}=\mathcal{P}_{y_k}\{\Phi_k(u_{k,i})\}=u_{k,i} \in O_k, \\
& \mathcal{P}^\bot_{y_k}\{\iota(x_{k,i})\}=\mathcal{P}^\bot_{y_k}\{\Phi_k(u_{k,i})\}=F_k(u_{k,i}).
\end{align*}
Therefore, we apply $\{y_{k,1}, \ldots, y_{k, N_k}\}$ to construct the predictors and responses for $F_k$. Specifically, let $\eta_{k,i,1}=J^\top U_{n,\epsilon}(y_k)^\top \eta_{k,i} \in \mathbb{R}^d$ and $\eta_{k,i,2}=\bar{J}^\top U_{n,\epsilon}(y_k)^\top \eta_{k,i}  \in \mathbb{R}^{D-d}$.  By \eqref{regression PYK 0} and \eqref{regression PbotYK 0}
\begin{align}
& \mathcal{P}_{y_k}(y_{k,i})=\mathcal{P}_{y_k}\{\iota(x_{k,i})+\eta_{k,i}\} =\mathcal{P}_{y_k}\{\iota(x_{k,i})\}+ J^\top U_{n,\epsilon}(y_k)^\top \eta_{k,i} =u_{k,i}+ \eta_{k,i,1}, \label{decomposition of noise 1}\\  
& \mathcal{P}^\bot_{y_k}(y_{k,i})=\mathcal{P}^\bot_{y_k}\{ \iota(x_{k,i})+\eta_{k,i}\}=\mathcal{P}^\bot_{y_k}\{\iota(x_{k,i})\}+ \bar{J}^\top U_{n,\epsilon}(y_k)^\top \eta_{k,i} =F_k(u_{k,i})+\eta_{k,i,2}. \label{decomposition of noise 2}
\end{align}
Since $U_{n,\epsilon}(y_k)^\top \eta_{k,i}  \sim \mathcal{N}(0, \sigma^2 I_{D \times D})$,
from properties of the normal distribution, 
$\eta_{k,i,1} \sim \mathcal{N}(0, \sigma^2 I_{d \times d})$ and $\eta_{k,i,2} \sim \mathcal{N}(0, \sigma^2 I_{(D-d) \times (D-d)})$.

We summarize the above observations through the errors-in-variables regression model.  Suppose $F=(f_1,\ldots,f_{q}) : O \subset \mathbb{R}^d \rightarrow \mathbb{R}^{q}$, with $q=D-d$, is an unknown regression function, where $O$ is an unknown open subset in  $\mathbb{R}^d$. We observe the labeled training data $(w_i, z_i)_{i=1}^N$, where
\begin{eqnarray}
w_i=u_i+\eta_{i,1},\quad 
z_i=F(u_i)+\eta_{i,2},\quad 
\eta_{i,1} \sim \mathcal{N}(0, \sigma^2 I_{d \times d}), \quad 
\eta_{i,2} \sim \mathcal{N}(0, \sigma^2 I_{q \times q}),
\label{errors in variables model 1}
\end{eqnarray}
and $\{u_i\}_{i=1}^N \subset O$. We apply Gaussian process regression to predict $F(u)$ for $u \in O$.

\subsection{Discussion of interpolation}\label{Discussion of the main algorithm}
%In this subsection, through the results in Theorem \ref{Construction of a chart main theorem}, we provide a discussion of the interpolation phase in MrGap algorithm. 
For any $i>1$, denote the denoised outputs of the $(i-1)$th iteration of Steps 1-3 of Algorithm \ref{MrGap1} as $\{y^{(i-1)}_1 , \cdots, y^{(i-1)}_n\}$. 
We assume 
\begin{align}\label{assumption convergence}
y^{(i-1)}_k=\iota(x^{(i-1)}_k)+\eta^{(i-1)}_k\ \mbox{for $x^{(i-1)}_k \in M$ and $\eta^{(i-1)}_k \sim \mathcal{N}(0, {\sigma^{(i-1)}}^2 I_{D \times D})$},
\end{align}
where $\sigma^{(i-1)}$ decreases as $i$ increases. In the $i$th iteration, for each $y^{(i-1)}_k$, there is a chart $\Phi^{(i-1)}_k$ and regression function $F^{(i-1)}_k$ associated with unknown domain $O^{(i-1)}_k$. 
Suppose Algorithm \ref{MrGap1} stops at the $\texttt{I}$th round.  We use the outputs $\{y^{(\texttt{I}-1)}_1 , \cdots, y^{(\texttt{I}-1)}_n\}$ as inputs to interpolate the points on $\iota(M)$. In the $\texttt{I}$th iteration, if we generate $K$ points in each domain $O^{(\texttt{I}-1)}_k$, then we can interpolate $K$ points on $\iota(M)$ through reconstruction of $F^{(\texttt{I}-1)}_k$. Since $O^{(\texttt{I}-1)}_k$ is unknown, we outline a method to generate samples in $O^{(\texttt{I}-1)}_k$ with an explanation by the results in Theorem \ref{Construction of a chart main theorem}.

For each $y^{(\texttt{I}-1)}_k$, we construct $C^{(\texttt{I}-1)}_{n,\epsilon}(y^{(\texttt{I}-1)}_k)$ through $\{y^{(\texttt{I}-1)}_1 , \cdots, y^{(\texttt{I}-1)}_n\}$ as in \eqref{Gaussian local covariance matrix}.  We find an orthonormal eigenvector matrix $U^{(\texttt{I}-1)}_{n,\epsilon}(y^{(i-1)}_k)$ of $C^{(\texttt{I}-1)}_{n,\epsilon}(y^{(\texttt{I}-1)}_k)$ to construct $\mathcal{P}_{y^{(\texttt{I}-1)}_k}(y)$ and $\mathcal{P}^\bot_{y^{(\texttt{I}-1)}_k}(y)$. Suppose $\{y^{(\texttt{I}-1)}_j\}_{j=1}^n \cap B^{\mathbb{R}^D}_{\delta}(y^{(\texttt{I}-1)}_k)=\{y^{(\texttt{I}-1)}_{k,1} , \cdots, y^{(\texttt{I}-1)}_{k,N^{(\texttt{I}-1)}_k}\}$. Let $w^{(\texttt{I}-1)}_{k,j}=\mathcal{P}_{y^{(\texttt{I}-1)}_k}(y^{(\texttt{I}-1)}_{k,j})$ and $z^{(\texttt{I}-1)}_{k,j}=\mathcal{P}^\bot_{y^{(\texttt{I}-1)}_k}(y^{(\texttt{I}-1)}_{k,j})$. We estimate a small round ball contained in $O^{(\texttt{I}-1)}_k$ using $\{w^{(\texttt{I}-1)}_{k,1}, \cdots, w^{(\texttt{I}-1)}_{k,N^{(\texttt{I}-1)}_k}\}$ and sample $K$ points, $\{\tilde{u}_{k,1}, \cdots, \tilde{u}_{k, K}\}$, uniformly at random from the ball.  The center of the ball is 
%the mean $\mathcal{U}_k$ of $\{w^{(\texttt{I}-1)}_{k,1}, \cdots, w^{(\texttt{I}-1)}_{k,N^{(\texttt{I}-1)}_k}\}$. By \eqref{I th round predictors and rvs},  
$$\mathcal{U}_k=\frac{1}{N^{(\texttt{I}-1)}_k} \sum_{j=1}^{N^{(\texttt{I}-1)}_k} w^{(\texttt{I}-1)}_{k,j}= \frac{1}{N^{(\texttt{I}-1)}_k} \sum_{j=1}^{N^{(\texttt{I}-1)}_k}u^{(\texttt{I}-1)}_{k,j}+\frac{1}{N^{(\texttt{I}-1)}_k} \sum_{j=1}^{N^{(\texttt{I}-1)}_k} \eta^{(\texttt{I}-1)}_{k,j,1} \in \mathbb{R}^d,$$
where $u^{(\texttt{I}-1)}_{k,j} \in O^{(\texttt{I}-1)}_k$ and  $\eta^{(\texttt{I}-1)}_{k,j,1} \sim \mathcal{N}(0, {\sigma^{(\texttt{I}-1)}}^2 I_{d \times d})$, $j=1, \cdots, N^{(\texttt{I}-1)}_k$. We set radius $m_k-s_k$, with 
$m_k$ the mean and $s_k$ the standard deviation of  $\{\|w^{(\texttt{I}-1)}_{k,1}-\mathcal{U}_k\|_{\mathbb{R}^d}, \cdots, \|w^{(\texttt{I}-1)}_{k,N^{(\texttt{I}-1)}_k}-\mathcal{U}_k\|_{\mathbb{R}^d}\}$. 
%In order to construct a ball contained in $O^{(\texttt{I}-1)}_k$, we choose the radius to be $m_k-s_k$.  

The method above to construct a small round ball in $O^{(\texttt{I}-1)}_k$ relates to the evolution of the shapes of the sets $\{O_k, O^{(1)}_k, \cdots, O^{(\texttt{I}-1)}_k\}$ through the iteration process. If $O^{(\texttt{I}-1)}_k \subset \mathbb{R}^d$ is close to spherical, our algorithm produces a ball contained in $O^{(\texttt{I}-1)}_k$.
By \eqref{assumption convergence} and statement (3) in Theorem \ref{Construction of a chart main theorem}, all sets $\{O_k, O^{(1)}_k, \cdots, O^{(\texttt{I}-1)}_k\}$ are contained in $B^{\mathbb{R}^d}_{3\delta}(0)$. Since $\sigma^{(i-1)}$ decreases as $i$ increases, compared to $\{O_k, O^{(1)}_k, \cdots, O^{(\texttt{I}-2)}_k\}$, $O^{(\texttt{I}-1)}_k$ is close to spherical, as it contains a larger radius ball closer to $3\delta$. In Figure \ref{Comparison Ok O'k},  we illustrate sets $O_k$ and $O^{(\texttt{I}-1)}_k$.   Hence, we perform the interpolation after sufficient iterations of Steps 1-3 in Algorithm \ref{MrGap1} to ensure that the generated $K$ samples $\{\tilde{u}_{k,i}\}_{i=1}^K$ are within the domain. Refer to Section \ref{Interpolation phase} of the Supplementary Material for explicit expressions of interpolations constructed through $\{\tilde{u}_{k,i}\}_{i=1}^K$ and estimated parameters.

\begin{figure}[h!]
\centering
{
\includegraphics[width=8cm,height=3.6cm]{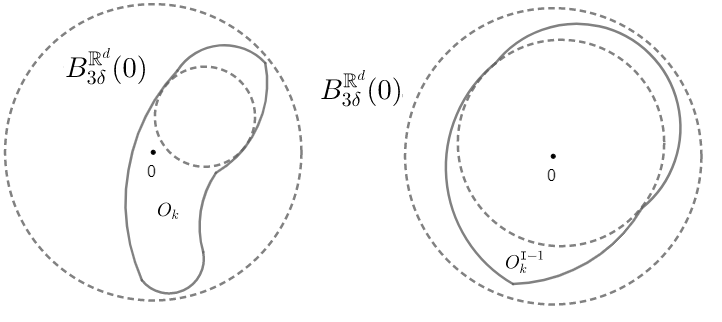}
}
\caption{Both $O_k$ and $O^{(\texttt{I}-1)}_k$ are contained in $B^{\mathbb{R}^d}_{3\delta}(0)$. Left: An illustration of  $O_k \subset \mathbb{R}^d$ where $O_k$ contains a small round ball.  Right: An illustration of  $O^{(\texttt{I}-1)}_k \subset \mathbb{R}^d$ where $O^{(\texttt{I}-1)}_k$ contains a larger round ball of radius close to $3\delta$.  }\label{Comparison Ok O'k}
\end{figure}

\subsection{Performance evaluation}
We evaluate the performance of \texttt{MrGap} in manifold reconstruction, assessing deviation of the interpolated points from the embedded submanifold $\iota(M)$.  For any $y \in \mathbb{R}^D$, the distance from $y$ to $S \subset \mathbb{R}^D$ is $\texttt{dist}(y,S)=\inf_{y' \in S} \|y-y'\|_{\mathbb{R}^D}$.
Given $\mathcal{Y}=\{y_1, \cdots, y_n\} \subset \mathbb{R}^D$,  the geometric root mean square error from $\mathcal{Y}$ to $S$ is $G(\mathcal{Y}, S)=\sqrt{\frac{1}{n} \sum_{i=1}^n \texttt{dist}(y_i,S)^2},$
with $G(\mathcal{Y}, S)=0$ if and only if $\mathcal{Y}$ is in the closure of $S$. When $S$ is compact, $G(\mathcal{Y}, S)=0$ if and only if $\mathcal{Y} \subset S$. In the following theorem, we show that when $\mathcal{Y}_{true}$ is sampled in $\iota(M)$ based on a density function with a positive lower bound, as the sample size of $\mathcal{Y}_{true}$ goes to infinity, $G(\mathcal{Y}, \mathcal{Y}_{true}) \to G\{\mathcal{Y}, \iota(M)\}$ almost surely. We also provide the convergence rate. The proof is given in Section \ref{Appendix: proof of GRMSE estimation}  of the Supplementary Material.

\begin{theorem}\label{GRMSE estimation}
Suppose $\mathcal{Y}=\{y_1, \cdots, y_n\} \subset \mathbb{R}^D$. $\{x_1, \cdots, x_m\}$ are samples on $M$ based on a $C^{1}$ p.d.f $\mathrm{q}$ on $M$ such that $\mathrm{q}>\mathrm{q}_{\min}>0$. Let $\mathcal{Y}_{true}=\{\iota(x_1), \cdots, \iota(x_m)\}$. Suppose $C$ is a constant depending on $d$, $D$, $C^{1}$ norm of $\mathrm{q}$, the curvature of $M$ and the second fundamental form of $\iota(M)$, and $r = \big[\max\{ \frac{2}{\mathrm{q}_{\min}}, (\frac{2C}{\mathrm{q}_{\min}})^2\}\frac{\log m}{m}\big]^{1/d}$, then with probability greater than $1-\frac{1}{m^2}$,
%\begin{enumerate}[(1)]
%\item 

(1) $0 \leq \texttt{dist}(y_i, \mathcal{Y}_{true}) -  \texttt{dist}\{y_i, \iota(M)\}  \leq r$, 
%\item 

(2) $G(\mathcal{Y}, \mathcal{Y}_{true}) ^2-2r G\{\mathcal{Y}, \iota(M)\}-r^2 \leq G\{\mathcal{Y}, \iota(M)\}^2 \leq G(\mathcal{Y}, \mathcal{Y}_{true})^2$.
%\end{enumerate}
\end{theorem}
Convergence in the above theorem follows from the fact that $r \rightarrow 0$ as the sample size $m \rightarrow 0$. Moreover, statement (2) implies that if $m$ is large enough so that $r<\frac{1}{10}G\{\mathcal{Y}, \iota(M)\}$, then
\begin{align}\label{error GRMSE on r}
G\{\mathcal{Y}, \iota(M)\} \leq  G(\mathcal{Y}, \mathcal{Y}_{true}) \leq G\{\mathcal{Y}, \iota(M)\} +1.3r.
\end{align}
%Even when a parameterization of $\iota(M)$ is provided, 
Computing $\texttt{dist}\{y,\iota(M)\}$ is generally not computationally feasible. Therefore, we use samples $\mathcal{Y}_{true}$ generated in $\iota(M)$ to estimate $G\{\mathcal{Y}, \iota(M)\}$. The limits given in \eqref{error GRMSE on r} offer guidance on the sample sizes required for an accurate estimation. Furthermore, $\mathrm{q}_{\min} \leq \frac{1}{Vol(M)}$, with equality holding when $\mathrm{q}$ is uniform. Thus, for a fixed sample size $m$, $r$ is minimized when $\mathrm{q}$ is uniform. This suggests that the most efficient way to estimate
$G\{\mathcal{Y}, \iota(M)\}$ is to sample $ \mathcal{Y}_{true}$ uniformly.

\section{Numerical Simulation}\label{Numerics}
\subsection{Cassini Oval  in $\mathbb{R}^3$}\label{Cassini Oval main article}
Let $r(\theta)=\big[\cos(2\theta)+\big\{\cos(2\theta)^2+0.2\big\}^{1/2}\big]^{1/2}$. A Cassini Oval $\iota(M)$ in $\mathbb{R}^3$ is parametrized by 
\begin{align}\label{para co}
X(\theta)=r(\theta)\cos(\theta), \quad\quad  Y(\theta)= r(\theta)\sin(\theta),  \quad\quad Z(\theta)=0.3\sin(\theta+\pi), 
\end{align}
where $\theta \in [0, 2\pi)$. We obtain non-uniform samples $\{X(\theta_i), Y(\theta_i), Z(\theta_i)\}_{i=1}^{102}$ on $\iota(M)$ from samples $\{\theta_i\}_{i=1}^{102}$ on $[0, 2\pi)$. Suppose $\eta_{i} \sim \mathcal{N}(0, {\sigma}^2 I_{3 \times 3}),$ with $\sigma=0.04$, $y_i=\{X(\theta_i), Y(\theta_i), Z(\theta_i)\}^\top+\eta_i$, $i=1, \cdots, 102$, and $\mathcal{Y}=\{y_i\}_{i=1}^{102}$. 
%The root mean square error between $\mathcal{Y}$ and  $\{(X(\theta_i), Y(\theta_i), Z(\theta_i))^\top\}_{i=1}^{102}$ is $0.069$. 
We uniformly sample $\{\phi_i\}_{i=1}^{10^5}$ on $ [0, 2\pi)$ letting $\mathcal{Y}_{true}=\{X(\phi_i), Y(\phi_i), Z(\phi_i)\}_{i=1}^{10^5}$.  For any sample points $\mathcal{S}$,  $G\{\mathcal{S}, \iota(M)\}$ is approximated by $G(\mathcal{S},\mathcal{Y}_{true})$. Without denoising, we obtain $G(\mathcal{Y},\mathcal{Y}_{true})=0.059$.  

Using the methods in Sections \ref{determine the dimension} and \ref{bandwidth selection example} of the Supplementary Material, we estimate dimension $d=1$ and the scale parameters $\epsilon=0.3$ and $\delta=0.6$  in \texttt{MrGap}. We iterate Steps 1-3 of Algorithm \ref{MrGap1} twice. The estimated covariance parameters  in the first round are  $A^{(0)}=0.014$, $\rho^{(0)}=0.2$ and $\sigma^{(0)}=\sqrt{0.002}$, while the values in the last round are  $A^{(1)}=0.048$, $\rho^{(1)}=0.3$ and $\sigma^{(1)}=\sqrt{2 \times 10^{-5}}$.  The denoised outputs are $\mathcal{X}_1=\{\hat{y}_i\}_{i=1}^{102}$, with $G(\mathcal{X}_1,\mathcal{Y}_{true})=0.0224$. In the interpolation phase, we construct $102$ charts and interpolate $K=20$ points in each chart. 
The outputs are $\mathcal{X}_2=\{\tilde{y}_i\}_{i=1}^{2040}$ with  $G(\mathcal{X}_2,\mathcal{Y}_{true})=0.0216$.   A comparison of $\mathcal{Y}$, $\mathcal{X}_1$, and $\mathcal{X}_2$ with $\mathcal{Y}_{true}$ is provided in Figure \ref{cassini}. 

We compare the performance of \texttt{MrGap} with principal graphs \citep{mao2016principal}, which are a generalization of principal curves \citep{hastie1989principal}. Principal graphs involve a number of nearest neighbors $nn$ and tuning parameters $\sigma_p$, $\gamma_p$ and $\lambda_p$.  We choose $nn=5$, $\sigma_p=0.01$, $\gamma_p=0.5$ and $\lambda_p=1$ based on suggestions in their codes. We iterate their algorithm for $20$ times to acquire $102$ denoised data points $\mathcal{X}_p$, obtaining $G(\mathcal{X}_p,\mathcal{Y}_{true})=0.0322$. The principal graphs algorithm also generates a curve which fits $\mathcal{Y}$.  One of the primary challenges in manifold reconstruction lies in addressing the non-smoothness problem during interpolations. In the principal graph method, non-smoothness is evident, while \texttt{MrGap}  effectively eliminates this issue. Refer to Figure \ref{cassini}.
\begin{figure}
\begin{subfigure}
\centering
\includegraphics[width=15cm,height=3cm]{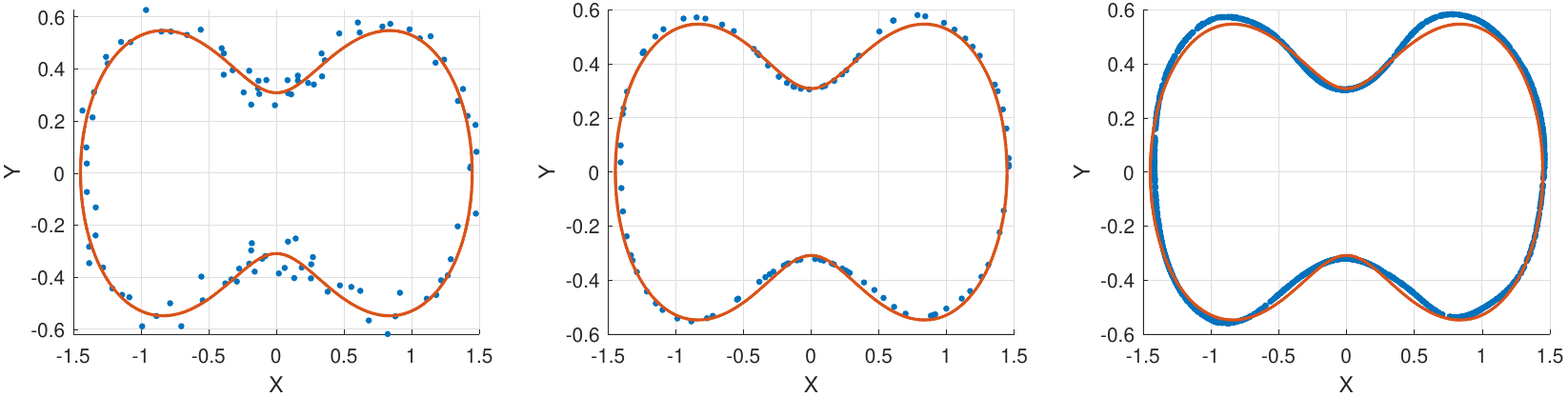}
\end{subfigure}
\begin{subfigure}
\centering
\includegraphics[width=15cm,height=3cm]{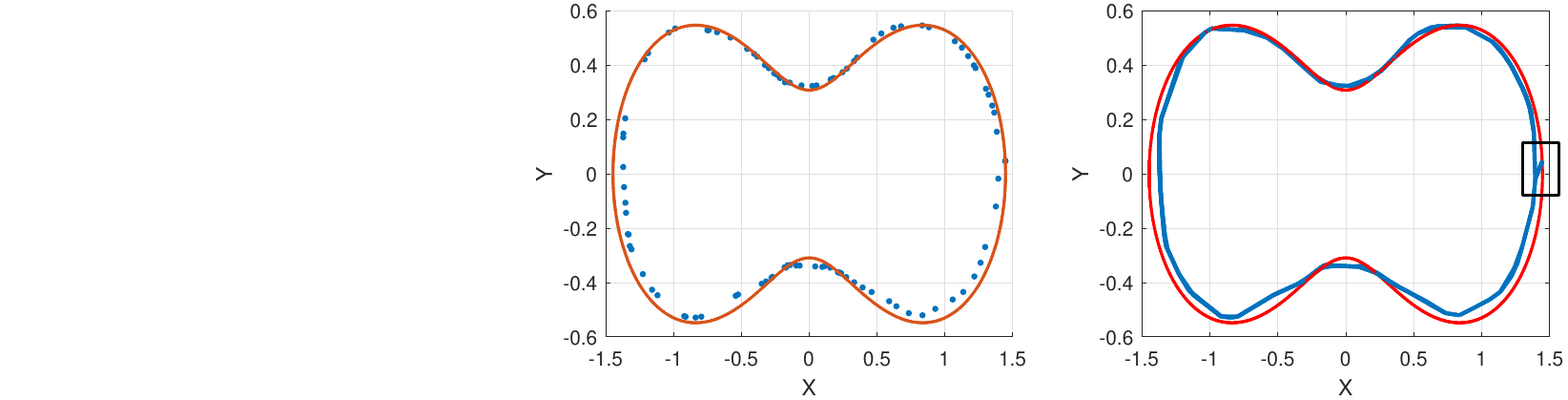}
\end{subfigure}
\caption{Red curves represent $\mathcal{Y}_{true}$ consisting of $10^5$ points on $\iota(M)$. Top row: left panel shows data $\mathcal{Y}$ (blue) and $\mathcal{Y}_{true}$ ($G(\mathcal{Y},\mathcal{Y}_{true})=0.059$). Middle panel shows denoised samples $\mathcal{X}_1$  (blue) by \texttt{MrGap}  and $\mathcal{Y}_{true}$ ($G(\mathcal{X}_1,\mathcal{Y}_{true})=0.0224$).  Right panel shows interpolations $\mathcal{X}_2$  (blue) by \texttt{MrGap} and $\mathcal{Y}_{true}$ ($G(\mathcal{X}_2,\mathcal{Y}_{true})=0.0216$). Bottom row: middle panel shows denoised samples $\mathcal{X}_p$ by principal graphs (blue) and $\mathcal{Y}_{true}$ ($G(\mathcal{X}_p,\mathcal{Y}_{true})=0.0322$). Right panel shows principal graphs fit (blue) and $\mathcal{Y}_{true}$; fit is not smooth at the indicated position. }\label{cassini}
\end{figure}
%\begin{figure}
%\centering
%\includegraphics[width=1 \columnwidth]{cassini.pdf}
%\caption{The first  and second row show $XY$ plots and $YZ$ plots respectively. Red curves represent $\mathcal{Y}_{true}$ consisting of $10^5$ points on $\iota(M)$.  Left column: panels show $\mathcal{Y}$ (blue) and $\mathcal{Y}_{true}$ ($G(\mathcal{Y},\mathcal{Y}_{true})=0.059$). Middle column: panels show denoised samples $\mathcal{X}_1$  (blue) by \texttt{MrGap} and $\mathcal{Y}_{true}$ ($G(\mathcal{X}_1,\mathcal{Y}_{true})=0.0224$).
%Right column: panels show interpolations $\mathcal{X}_2$  (blue) by \texttt{MrGap} and $\mathcal{Y}_{true}$ ($G(\mathcal{X}_2,\mathcal{Y}_{true})=0.0216$). }\label{cassini}
%\end{figure}
\subsection{Torus in $\mathbb{R}^3$}
We consider a torus $\iota(M)$ in $\mathbb{R}^3$ parametrized by $u, v \in [0, 2\pi)$,
$$X(u,v) =\{2+0.8\cos(u)\}\cos(v), \hspace{3mm} Y(u,v) =\{2+0.8\cos(u)\}\sin(v),\hspace{3mm}
Z(u,v)=0.8\sin(u).$$
%
%$X(u,v) =\{2+0.8\cos(u)\}\cos(v),$
%$Y(u,v) =\{2+0.8\cos(u)\}\sin(v),$
%$Z(u,v)=0.8\sin(u).$
We randomly sample $558$ points $\{\iota(x_i)\}_{i=1}^{558}$ from the uniform density on $\iota(M)$.  Let $\eta_{i} \sim \mathcal{N}(0, {\sigma}^2 I_{3 \times 3}),$ with $\sigma=0.12$, $y_i=\iota(x_i)+\eta_i$ for $i=1, \ldots, 558$, and $\mathcal{Y}=\{y_i\}_{i=1}^{558}$. We randomly sample $3.2 \times 10^5$ points from the uniform density on $ \iota(M)$ to form $\mathcal{Y}_{true}$.  
%In this subsection, for any set of sample points $\mathcal{S}$, $G(\mathcal{S}, \iota(M))$ is approximated by $G(\mathcal{S},\mathcal{Y}_{true})$. 
We calculate the error from $\mathcal{Y}$ to $\mathcal{Y}_{true}$ which is $G(\mathcal{Y},\mathcal{Y}_{true})=0.1238$. 
We plot $\mathcal{Y}$ and $\mathcal{Y}_{true}$ in Figure \ref{toruscomparison}.  

Using the methods in Sections \ref{determine the dimension} and \ref{bandwidth selection example} of the Supplementary Material, we estimate dimension $d=2$ and the scale parameters $\epsilon=0.8$ and $\delta=1$  . We iterate Steps 1-3 of Algorithm \ref{MrGap1} twice. The estimated covariance parameters are $A^{(0)}=0.06$, $\rho^{(0)}=0.2$ and $\sigma^{(0)}=\sqrt{0.03}$ in the first round, and $A^{(1)}=0.2$, $\rho^{(1)}=1.1$ and $\sigma^{(1)}=\sqrt{0.007}$ in the last round.  The denoised outputs are $\mathcal{X}_1=\{\hat{y}_i\}_{i=1}^{558}$ with $G(\mathcal{X}_1,\mathcal{Y}_{true})=0.0544$.  We choose $K=20$ in the interpolation phase, yielding the output $\mathcal{X}_2=\{\tilde{y}_i\}_{i=1}^{11160}$ with $G(\mathcal{X}_2,\mathcal{Y}_{true})=0.0605$.  We plot $\mathcal{X}_1$, $\mathcal{X}_2$  in Figure \ref{toruscomparison}.  

We compare with the method of \citep{aizenbud2021non}, which assumes data are uniformly distributed in a $\sigma_{a}$ neighborhood of $\iota(M)$ in $\mathbb{R}^D$. We estimate $\sigma_{a}=1.19$ and denoised data $\mathcal{X}_{a}$, which have $G(\mathcal{X}_{a},\mathcal{Y}_{true})=0.2196$. Although they do not provide an algorithm for interpolation, we modify their approach to produce interpolated points by uniformly generating $20$ points in the $\frac{\sigma_{a}}{2}$ neighborhood of each denoised point in $\mathcal{X}_{a}$ to obtain $\mathcal{Y}_{a}$, and then re-apply their algorithm with the same parameters to obtain new denoised data $\mathcal{X}'_{a}$ that have $G(\mathcal{X}_{a}',\mathcal{Y}_{true})=0.2424$. A comparison of $\mathcal{X}_{a}$ and $\mathcal{X}_{a}'$ with $\mathcal{Y}_{true}$ in Figure \ref{toruscomparison} indicates the denoised points and interpolations lying on a thinner torus than the true one.
\begin{figure}[htb!]
\centering
\includegraphics[width=15cm,height=7cm]{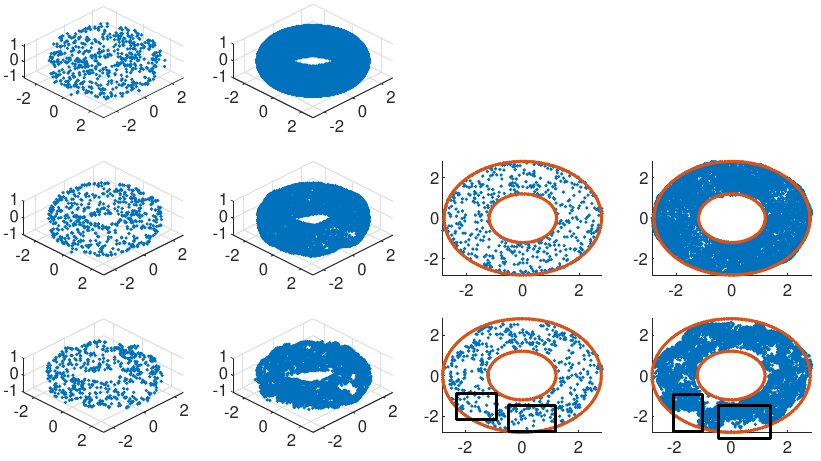}
\caption{Top row: the first panel shows $\mathcal{Y}$ containing $558$ points around torus $\iota(M)$. The second panel shows $\mathcal{Y}_{true}$ containing $3.2 \times 10^5$ points on $\iota(M)$ with 
$G(\mathcal{Y},\mathcal{Y}_{true})=0.1238$. Middle row: the first panel shows denoised outputs $\mathcal{X}_1$ from \texttt{MrGap} with $G(\mathcal{X}_1,\mathcal{Y}_{true})=0.0544$. The second panel shows 11260 interpolated fits $\mathcal{X}_2$ from \texttt{MrGap} with $G(\mathcal{X}_2,\mathcal{Y}_{true})=0.0605$. In the last two panels, blue points are XY projections of $\mathcal{X}_1$ and $\mathcal{X}_2$. Red circles are the intersection of the XY plane and the torus. 
Bottom row: first panel shows denoised outputs $\mathcal{X}_{a}$ via \citep{aizenbud2021non} with $G(\mathcal{X}_{a},\mathcal{Y}_{true})=0.2196$. The second panel shows 11260 interpolated fits $\mathcal{X}'_{a}$ by their method with $G(\mathcal{X}'_{a},\mathcal{Y}_{true})=0.2424$. In the last two panels, blue points are XY projections of $\mathcal{X}_{a}$ and $\mathcal{X}'_{a}$. Since the clean data are uniformly distributed on $\iota(M)$, if the reconstructed torus matches the size of $\iota(M)$,  the XY projections of denoised outputs and interpolation should lie within the annulus formed by the intersection of the XY plane and $\iota(M)$, with no significant gaps. The holes in the XY projections of $\mathcal{X}_{a}$ and $\mathcal{X}'_{a}$ near the annulus boundary shown in the figure suggest that the reconstructed torus is thinner than $\iota(M)$. }\label{toruscomparison}
\end{figure}

\subsection{Additional simulations}
 In Section \ref{boundary and noise}, we illustrate the performance of \texttt{MrGap} on a real projective space, a $3$ dimensional closed manifold embedded in $\mathbb{R}^{10}$. This section also shows the performances of \texttt{MrGap} under different assumptions about the manifold and data, specifically when the manifold has a boundary or the noise is not Gaussian.
%\begin{figure}[htb!]
%\centering
%\includegraphics[width=1 \columnwidth]{torus_comparison.pdf}
%\caption{First row: the first panel shows $\mathcal{Y}$ containing $558$ points around torus $\iota(M)$. The second panel shows $\mathcal{Y}_{true}$ containing $3.2 \times 10^5$ points on $\iota(M)$ with 
% $G(\mathcal{Y},\mathcal{Y}_{true})=0.1238$.
%Second row: the first panel shows 558 denoised outputs $\mathcal{X}_1$ from \texttt{MrGap} with $G(\mathcal{X}_1,\mathcal{Y}_{true})=0.0544$. The second panel shows 11260 interpolated fits $\mathcal{X}_2$ from \texttt{MrGap} with $G(\mathcal{X}_2,\mathcal{Y}_{true})=0.0605$. In the third and fourth panels, blue points are $\mathcal{X}_1$ and $\mathcal{X}_2$. Red circles are the intersection of the XY plane and the torus. Both the XY projections of the denoised outputs and the interpolation lie within the annulus bounded by the intersection of the XY plane and $\iota(M)$ (represented by the two red circles), without significant gaps. 
%%The reconstructed torus is of the same size as $\iota(M)$.
%}\label{toruscomparison}
%\end{figure}

\section{Modeling of bird vocalization data}\label{bird data}

Recall the bird vocalization data introduced in Section \ref{introduction}.  Data consist of $83$ matrices of dimension $75 \times 197$ corresponding to a type of call of Anthus trivialis; these matrices are submatrices of the spectrograms. Since we want to remove the impact of volume of the vocalizations, we normalize the data after vectorizing to obtain the data in $x_i \in \mathbb{R}^{14775}$, for $i=1,\ldots,83$. Following standard practice for extremely high-dimensional data, we project noisy data onto the first $n-1=82$ principal components to obtain the data $\mathcal{Y}=\{y_i\}_{i=1}^{83} \subset \mathbb{R}^{82}$.

{These are representative of data that are routinely collected in many biodiversity studies. Standard practice uses the spectrogram data to infer which species is vocalizing in each audio segment based on deep neural network classifiers \citep{OvaskainenDunson2025}. Our collaborators are also interested in studying variation in vocalizations across individuals of the same species, both randomly and in relationship to spatial location and environmental covariates. Our proposed local Gaussian process regression framework provides a natural starting point for such analyses. By interpolating between different bird spectrograms of the same species we obtain insight into variation across individuals. By using local GP regressions to characterize a low dimensional bird vocalization manifold, we obtain a statistically efficient modeling approach that can naturally include sample covariates and/or spatial location. Here, we avoid the complexities of including such information and focus on showing that \texttt{MrGap} performs well in capturing the bird vocalization manifold from a small number of noisy samples.}

To increase the difficulty, we add Gaussian noise $\eta_{i} \sim \mathcal{N}(0, {\sigma}^2 I_{14775 \times 14775}),$ with $\sigma=0.005$, to $x_i$. We evaluated the eigenvalues of the local covariance matrix constructed through $\mathcal{Y}$ as in Section \ref{determine the dimension} of the Supplementary Material, producing an estimate of $d=1$. Using the method in Section \ref{bandwidth selection example}, we estimate the scale parameters $\epsilon=1.27$ and $\delta=1.27$. The estimated covariance parameters in the first round of Algorithm \ref{MrGap1} are $A^{(0)}=0.005$, $\rho^{(0)}=0.2$ and $\sigma^{(0)}=0.009$. We apply steps 1-3 of Algorithm \ref{MrGap1} for only one round, as subsequent iterations do not significantly affect the estimate of $\sigma$. We choose $K=10$ in the interpolation phase and generate $830$ samples $\{\tilde{y}_{i}\}_{i=1}^{830}$ in the underlying manifold, representing new synthetic Anthus trivialis vocalizations in a space of 82-dimensional principal components. We map these samples back to the $\mathbb{R}^{14775}$ space of the vectorized submatrices of the spectrogram, reconstructing the spectrograms from these. 

In Figure \ref{spectrograms2}, we present $10$ such synthetic bird vocalization spectrograms generated by our methodology.
To make the problem more challenging, we added noise to the original data. Nonetheless, we obtain excellent performance, with the spectrograms less noisy compared to the training data, successfully incorporating audio information from nearby denoised spectrograms that we are interpolating around. Our denoised spectrograms closely resemble those obtained in applying \texttt{MrGap} to the original data without noise added; some examples are shown in
Figure \ref{spectrograms1}. 
 \begin{figure}[h!]
\centering
\includegraphics[width=14cm,height=5.5cm]{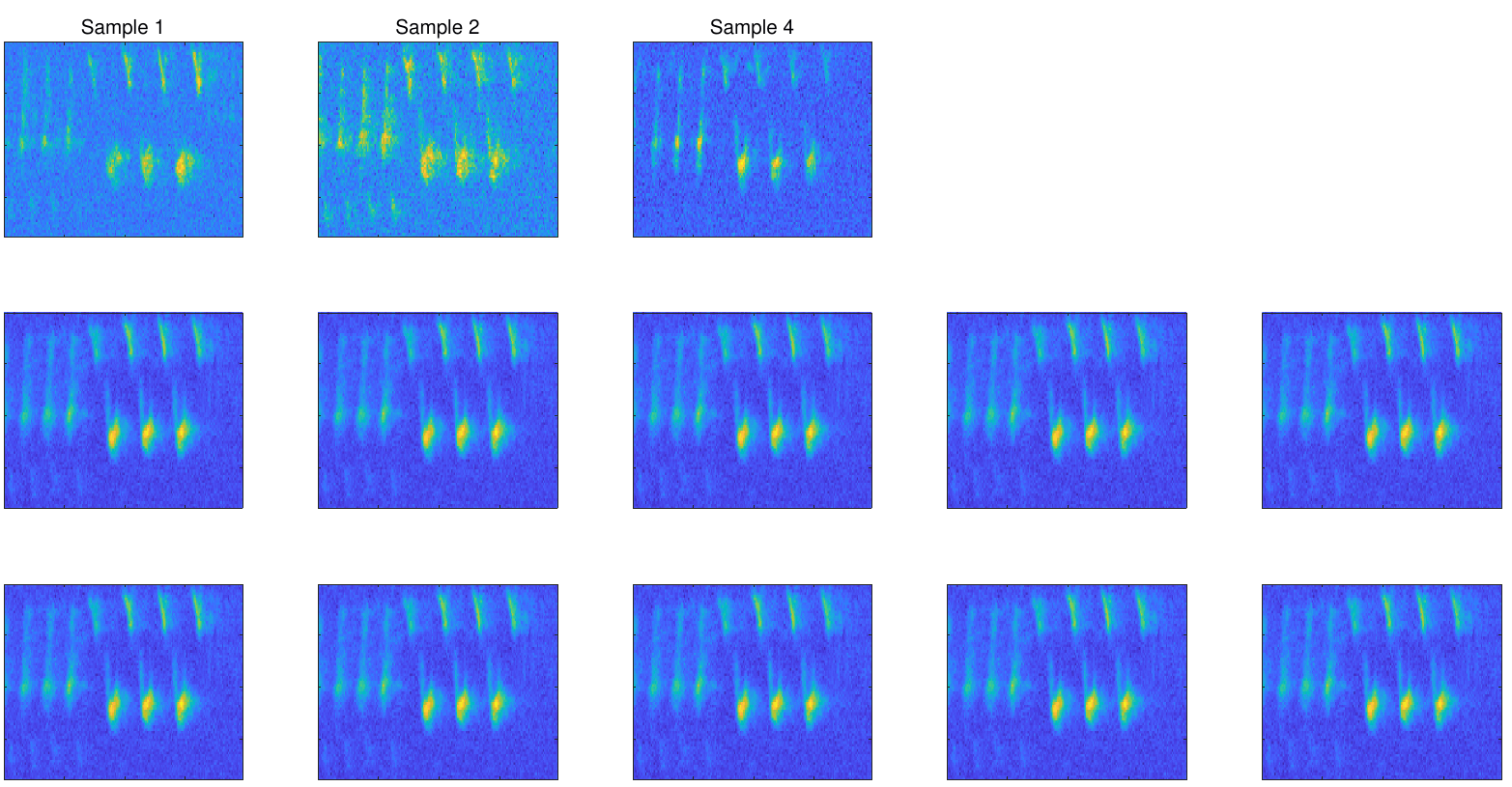}
\caption{Top row: The data spectrograms with noise added. They correspond to Sample 1, 2 and 4, respectively, in Figure \ref{spectrograms1}.  Second and third rows: $10$ synthetic bird vocalizations generated close to Sample 1 using \texttt{MrGap}.}
\label{spectrograms2}
\end{figure}

We apply the method in \citep{aizenbud2021non} to the noisy data $\mathcal{Y}=\{\textbf{y}_i\}_{i=1}^{83} \subset \mathbb{R}^{82}$,  following the process described in Section \ref{Cassini Oval main article}, which estimates $\sigma_{a}=1.427$ and produces $83$ denoised samples. We then generate $10$  samples uniformly in the $\frac{\sigma_{a}}{2}$ neighborhood of each denoised point. We obtain $830$ clean new samples $\{\tilde{\textbf{y}}^{as}_i\}_{i=1}^{830}$ on the 1 dimensional manifold by reapplying the algorithm with the same parameters. We map these samples back to the $\mathbb{R}^{14775}$ space of the vectorized submatrices of the spectrogram, reconstructing $830$ new spectrograms from these.  In Figure \ref{spectrograms3}, we present $10$ generated spectrograms around Sample 1. Comparing these to the given noisy Sample 1  and the nearby Samples 2 and 4 in Figure \ref{spectrograms2} (corresponding to clean Samples 1, 2 and 4 in Figure \ref{spectrograms1}),  it is evident that important audio information is missing in the indicated regions. 

\begin{figure}[h!]
\centering
\includegraphics[width=14cm,height=3.8cm]{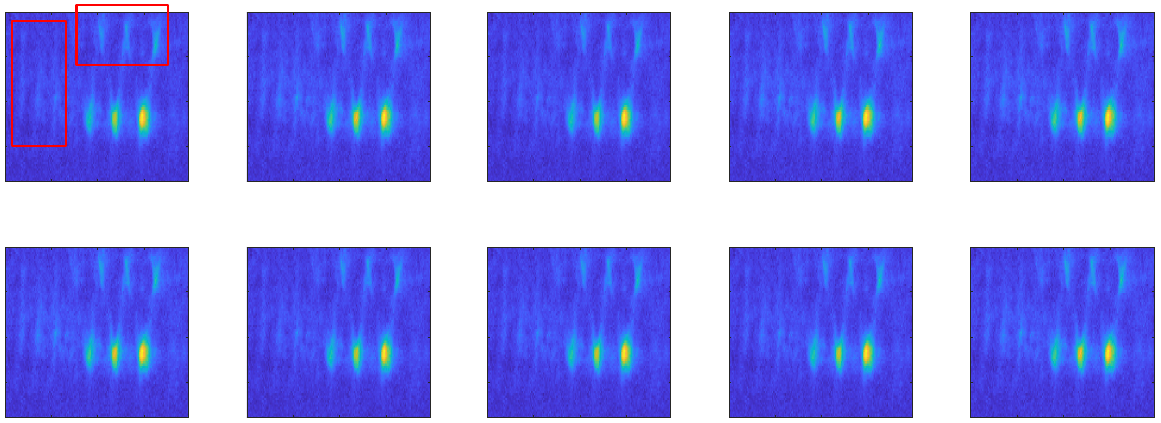}
\caption{First and second rows: $10$ generated samples around Sample 1 by the method in \citep{aizenbud2021non}. Audio information is missing at the indicated regions in the spectrograms. }\label{spectrograms3}
\end{figure}

\section{Discussion}

{This article proposed a local regression modeling framework for inferring manifold structure underlying noisy data, leveraging on novel theory on properties of local covariance matrices. While adding to the literature on manifold reconstruction, the fact that we obtain a statistical modeling framework is highly appealing in providing a starting point for many interesting extensions. For example, motivated by ecology applications we are currently pursuing extensions that include sample covariates and spatio-temporal dependence within the local Gaussian process regressions. Such models can allow intricate features of bird vocalizations to change depending on context.}

There are many other interesting future directions. First, we can consider broader classes of covariance functions in the local Gaussian process regressions, including locally adaptive approaches accommodating complex manifolds that have varying smoothness. For example, we may consider a Mat{\'e}rn covariance instead of the squared exponential and can allow the local bandwidth parameters in the covariance functions to vary. Potentially, this can be done in a spatially smooth manner so that local neighborhoods that are close together in the ambient space have bandwidth parameters that tend to be closer together {\em a priori}.  Another key direction is to obtain a theoretical justification in terms of uncertainty quantification. In this article, we have focused on using the local  regressions for point estimation, but the posterior distributions clearly contain substantial information about uncertainty. 

\section{Supplementary Material and Codes}
The Supplementary Material include the proofs of the theorems and additional examples. The MATLAB implementation of the method and the data sets in Sections 4 and 5 are available on \url{https://github.com/wunan3/Manifold-reconstruction-via-Gaussian-processes}.

\section{Acknowledgement}
The authors acknowledge the support of the European Research Council (ERC) under the European Union’s Horizon 2020 research and innovation program (grant agreement No 856506) and the National Science Foundation (IIS-2426762).

\appendix
\section{Review of Gaussian process regression }\label{review GPR}
Suppose $F:\mathbb{R}^d \rightarrow \mathbb{R}^q$ is an unknown regression function with $F=(f_1, \cdots, f_q)^\top$. Letting ${z}_i\in \mathbb{R}^q$ denote the response vector and ${u}_i \in \mathbb{R}^d$ denote the predictor vector, for $i=1,\ldots, N$, we have
\begin{equation}\label{standard GP model appendix}
{z}_i=F ({u}_i) +\eta_i,\qquad 
\eta_i \sim \mathcal{N}(0, \sigma^2 I_{q \times q}).
\end{equation}
We assign independent Gaussian process priors to each $f_j$ with mean $0$ and covariance function 
\begin{equation}\label{GP cov appendix}
\mathsf{C}({u}, {u}')=A \exp\Big(-\frac{\|{u}-{u}'\|^2_{\mathbb{R}^d}}{\rho}\Big).
\end{equation}
%We refer the readers to \citep{alvarez2012kernels} for a discussion of the covariance structure in a more general setup when the component functions of $F$ are correlated.
Denote ${f}'_j \in \mathbb{R}^N$ as the discretization of $f_j$ over $\{{u}_i\}_{i=1}^N$ so that ${f}'_j=\{f_j({u}_1), \ldots f_j({u}_N)\}^\top$ for $j=1,\ldots,q$.  A Gaussian process prior for $f_j$ implies $p({f}'_j|{u}_1,{u}_2, \cdots, {u}_N)=\mathcal{N}(0, \Sigma_1)$, where $\Sigma_1 \in \mathbb{R}^{N \times N}$ is the covariance matrix induced from $\mathsf{C}$, with element $(j,k)$ of $\Sigma_1$ corresponding to $\mathsf{C}({u}_j,{u}_k)$, for $1 \leq j,k \leq N$.  Prior distribution $\mathcal{N}(0, \Sigma_1)$ can be combined with information in the likelihood function under model \eqref{standard GP model appendix} to obtain the posterior distribution, which will be used as a basis for inference.

Suppose that we want to predict $F$ at $\{{u}_i\}_{i=N+1}^{N+m}$. Define ${f}^*_j \in\mathbb{R}^m$ as ${f}^*_j=\{f_j({u}_{N+1}),\ldots, f_j({u}_{N+m})\}^\top$ for $j=1, \ldots, q$.  Denote ${F} \in\mathbb{R}^{m \times q}$ with $j$th column ${f}^*_j$.  Under a Gaussian process prior for $f_j$, the joint distribution of ${f}'_j$ and ${f}^*_j$ is  $p({f}'_j,{f}^*_j)=\mathcal{N}(0, \Sigma),$ 
where %$\Sigma$ is an $(N+m) \times (N+m)$ covariance matrix that can be expressed as 
$
\Sigma= {\scriptsize\begin{bmatrix}
\Sigma_1 & \Sigma_2  \\
\Sigma_3 & \Sigma_4
\end{bmatrix}}\,, \nonumber 
$
with $\Sigma_2 \in \mathbb{R}^{N \times m}$, $\Sigma_3 \in \mathbb{R}^{m \times N}$, and $\Sigma_4 \in \mathbb{R}^{m \times m}$ induced from covariance function $\mathsf{C}$. 

Denote ${Z} \in \mathbb{R}^{N \times q}$ with $i$th row ${z}^\top_i$ for $i=1,\ldots,N$. Denote $\tilde{{z}}_j$ for $j=1,\ldots,q$ as the $j$th column of ${Z}$ consisting of the responses of $f_j$ over $\{{u}_i\}_{i=1}^N$.  Under model \eqref{standard GP model appendix} and a Gaussian process prior, we have 
%\begin{equation}
$p(\tilde{{z}}_j, {f}^*_j)=\mathcal{N}(0, \tilde{\Sigma}), $
%\end{equation}
where
\begin{align*}
\tilde{\Sigma}=\Sigma+{\scriptsize\begin{bmatrix}
\sigma^2 I_{N \times N} & 0  \\
0 & 0 \\
\end{bmatrix}}= {\scriptsize\begin{bmatrix}
\Sigma_1 +\sigma^2 I_{N \times N} & \Sigma_2  \\
\Sigma_3 & \Sigma_4 \\
\end{bmatrix}}. \nonumber 
\end{align*}
For $j=1, \ldots, q$, the predictive distribution of $ {f}^*_j $ is 
\begin{align}\label{GP posterior distribution 0}
p( {f}^*_j |\tilde{{z}}_j)=\mathcal{N}\{\Sigma_3(\Sigma_1+\sigma^2 I_{N \times N})^{-1}\tilde{{z}}_j, \Sigma_4-\Sigma_3 (\Sigma_1+\sigma^2 I_{N \times N})^{-1}\Sigma_2\}.
\end{align}
Then, prediction for ${F}$ can be expressed using the means of the above distributions:
\begin{align*}
\Sigma_3(\Sigma_1+\sigma^2 I_{N \times N})^{-1}{Z}.
\end{align*}

In the special case $m=1$, so we want to predict $F$ at only one point ${u}_{N+1}$, we have ${F}^\top= F({u}_{N+1}) \in \mathbb{R}^{q}$. By \eqref {GP posterior distribution 0}, the predictive distribution of $ {F}^\top$ is
\begin{align*}
p( {F}^\top |{Z})=\mathcal{N}[\big\{\Sigma_3(\Sigma_1+\sigma^2 I_{N \times N})^{-1}{Z}\big\}^\top, \big\{\Sigma_4-\Sigma_3 (\Sigma_1+\sigma^2 I_{N \times N})^{-1}\Sigma_2\big\} I_{q \times q}].
\end{align*}

For each $\tilde{{z}}_j$, the covariance parameters $A$, $\rho$ and $\sigma$ can be estimated by maximizing the natural log of the marginal likelihood, obtained by marginalizing over the Gaussian process prior, 
\begin{align}
\log p_j(\tilde{{z}}_j| A, \rho, \sigma) = -\tilde{{z}}_j^\top (\Sigma_1+\sigma^2 I_{N \times N})^{-1}\tilde{{z}}_j- \log\{\det(\Sigma_1+\sigma^2 I_{N \times N})\}-\frac{N}{2}\log(2\pi). \nonumber
\end{align}
To estimate the parameters $A$, $\rho$ and $\sigma$, we maximize $\log \big\{\prod_{j=1}^q  p_j(\tilde{{z}}_j| A, \rho, \sigma)\big\}$, 
\begin{align*}
\log p({Z}| A, \rho, \sigma) = & \log \big\{\Pi_{j=1}^q  p_j(\tilde{{z}}_j| A, \rho, \sigma)\big\}= \sum_{j=1}^q\log   p_j(\tilde{{z}}_j| A, \rho, \sigma) \nonumber \\
= &  -\sum_{j=1}^q\tilde{{z}}_j^\top (\Sigma_1+\sigma^2 I_{N \times N})^{-1}\tilde{{z}}_j -q \log\{\det(\Sigma_1+\sigma^2 I_{N \times N})\}-\frac{q N}{2}\log(2\pi) \nonumber \\
= & -\mathrm{tr}\{{Z}^\top (\Sigma_1+\sigma^2 I_{N \times N})^{-1}{Z}\}-q \log\{\det(\Sigma_1+\sigma^2 I_{N \times N})\}-\frac{q N}{2}\log(2\pi).
\end{align*}
This empirical Bayes approach for parameter estimation protects against over-fitting. 

\section{Some basic concepts about smooth embedded manifolds}\label{Basic concepts}
\subsection{Diffeomorphism and chart}
Suppose $U \subset \mathbb{R}^p$, $V \subset \mathbb{R}^q$ where $U$ and $V$ are not necessarily open. Suppose $\Phi: U \rightarrow V$ is continuous. Then $\Phi$ is a smooth map whenever it can be extended locally to a smooth map from an open set in $\mathbb{R}^p$ to $\mathbb{R}^q$. More precisely, we have the following definition.
\begin{definition}
Suppose $U \subset \mathbb{R}^p$, $V \subset \mathbb{R}^q$. $\Phi: U \rightarrow V $ is a smooth map if for any $u \in U$ there is an open subset $O_{{u}}$ of $\mathbb{R}^p$  containing ${u}$ and  a smooth map $\tilde{\Phi}:O_{{u}} \rightarrow  \mathbb{R}^q$ such that $\Phi=\tilde{\Phi}$ on $O_{{u}}$.
\end{definition} 
Suppose $\Phi: U \rightarrow V $ is bijective. Then $\Phi^{-1}$ exists. We say that $\Phi$ is a diffeomorphism if both $\Phi$ and $\Phi^{-1}$ are smooth.  

Suppose $M$ is a $d$-dimensional smooth, closed and connected Riemannian manifold isometrically embedded in $\mathbb{R}^D$ through $\iota$. We define a chart of the embedded submanifold $\iota(M)$. 
\begin{definition}
A map $\Phi$ is a chart (local parametrization) of $\iota(M)$, if there is an open subset $U$ of $\mathbb{R}^d$ and $V$ of $\mathbb{R}^D$ such that $\Phi: U \rightarrow V \cap \iota(M)$ is a diffeomorphism.
\end{definition} 

We refer the reader to \citep{munkres2018analysis} for the above definitions. The definitions of diffeomorphisms and charts can be generalized to the abstract manifold $M$. Since our proofs of the main results focus on $\iota(M)$, we discuss only the relevant definitions for $\iota(M)$. We refer the reader to \citep{lee2012smooth} for more general definitions.

\subsection{The reach of an embedded submanifold}
We recall the definition of the reach of an embedded submanifold in Euclidean space \citep{federer1959curvature}, which is an important concept in manifold reconstruction. 
\begin{definition}
For any point ${y} \in \mathbb{R}^D$, the distance between ${y}$ and $\iota(M)$ is $dist\{{y}, \iota(M)\}=\inf_{{y}' \in \iota(M)} \|{y}-{y}'\|_{\mathbb{R}^D}$. The reach of $\iota(M)$ is the supremum of all $\ell$ such that if ${y} \in \mathbb{R}^D$ and $dist\{{y},\iota(M)\} <\ell$ then there is a unique ${y}' \in \iota(M)$ with $dist\{{y},\iota(M)\}=\|{y}-{y}'\|_{\mathbb{R}^D}$ We denote the reach of $\iota(M)$ as $\tau_{\iota(M)}$.
\end{definition}
The reach $\tau_{\iota(M)}$ of $\iota(M)$ is the largest number such that any point in $\mathbb{R}^D$ at distance less than the reach of $\iota(M)$ has a unique nearest point in $\iota(M)$. The reaches of some special embedded submanifolds can be explicitly calculated. For example,  the reach of a $d$-dimensional sphere in $\mathbb{R}^D$ is the radius of the sphere. 

Suppose $M$ is equipped with a Riemannian metric $g$ and isometrically embedded in $\mathbb{R}^D$ via $\iota$. The geometric properties of $M$, and hence those of $\iota(M)$, which depend only on the metric $g$ are called intrinsic properties. Examples include the volume, diameter, and curvature tensor of $M$. In contrast, the geometric properties of $\iota(M)$ that also depend on the embedding $\iota$ are known as the extrinsic properties of $\iota(M)$. These include the second fundamental form and the reach of $\iota(M)$. We illustrate the dependence of the reach on $\iota$ through the following example: Consider $S^1$ as the unit circle in $\mathbb{R}^2$ and let $a \in S^1$.  Define $\iota_1(M)=S^1\setminus a \subset \mathbb{R}^2 $ and $\iota_2(M)= (0, 2\pi) \times \{0\}$, an interval on the x-axis in $\mathbb{R}^2$. Both $\iota_1(M)$ and $\iota_2(M)$ are different isometric embeddings of the same manifold $M$, but they have different reaches. 

A relevant topological result on the reach of $\iota(M)$ suggests that if there exists a point ${y} \in \mathbb{R}^D$ and a radius $\xi$ such that $B^{\mathbb{R}^D}_{\xi}({y}) \cap \iota(M)$ has more than one connected component, then $\tau_{\iota(M)}<\xi$. Figure \ref{reachillustration} illustrates this concept. We summarize this result in the following proposition.
\begin{figure}[h!]
\centering
{
\includegraphics[width=0.5 \columnwidth]{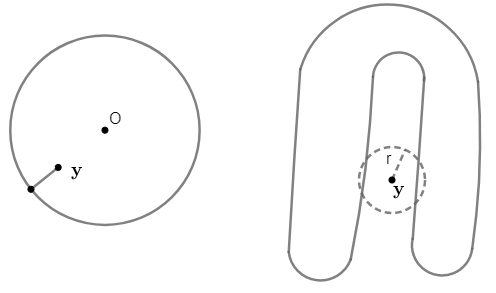}
}
\caption{Left panel: The reach of a circle in the plane is the radius. For the center of the circle, the nearest point on the circle is the whole circle. For any point ${y}$ whose distance to the circle is less than the radius, there is a unique nearest point on the circle.  Right panel: A closed curve in the plane with small reach. We can find a point ${y}$ and a small ball of radius $r$ centered at ${y}$. The intersection between the ball and the curve has two connected components. Hence, by the consequence of Proposition \ref{reach topology}, the reach of the curve is smaller than $r$. }\label{reachillustration}
\end{figure}

\begin{proposition}[Proposition 1 \citep{boissonnat2019reach}]\label{reach topology}
Suppose $0<\xi<\tau_{\iota(M)}$. For any  ${y} \in \mathbb{R}^D$, if $B^{\mathbb{R}^D}_{\xi}({y}) \cap \iota(M) \not= \emptyset$, then  $B^{\mathbb{R}^D}_{\xi}({y}) \cap \iota(M)$ is an open subset of $\iota(M)$ that is homeomorphic to $B^{\mathbb{R}^d}_{1}(0)$.
\end{proposition}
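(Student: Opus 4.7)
My plan is to combine a deformation-retraction argument based on the nearest-point projection $\pi_M$ with an explicit parametrization via orthogonal projection to a tangent space. Openness of the intersection in $\iota(M)$ is immediate because $B^{\mathbb{R}^D}_{\xi}(\textbf{y})$ is open in $\mathbb{R}^D$, so the substantive claim is the homeomorphism with $B^{\mathbb{R}^d}_1(0)$.

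Since the intersection is nonempty, $\mathrm{dist}(\textbf{y}, \iota(M)) < \xi < \tau_{\iota(M)}$, so the unique nearest point $\textbf{q}_0 := \pi_M(\textbf{y})$ is well-defined. For any $\textbf{q} \in B^{\mathbb{R}^D}_{\xi}(\textbf{y}) \cap \iota(M)$, every point on the straight segment $\gamma_{\textbf{q}}(t) := (1-t)\textbf{y} + t\textbf{q}$ lies within distance $(1-t)\|\textbf{q}-\textbf{y}\| < (1-t)\xi < \tau_{\iota(M)}$ of $\textbf{q} \in \iota(M)$, so $\pi_M$ is continuously defined along the entire segment. Setting $H(t,\textbf{q}) := \pi_M(\gamma_{\textbf{q}}(t))$, a triangle inequality gives $\|H(t,\textbf{q}) - \textbf{y}\| \le \|\pi_M(\gamma_{\textbf{q}}(t)) - \gamma_{\textbf{q}}(t)\| + \|\gamma_{\textbf{q}}(t) - \textbf{y}\| < (1-t)\xi + t\xi = \xi$, so $H$ is a continuous homotopy from the constant map $\textbf{q}_0$ to the identity, with image in $B^{\mathbb{R}^D}_{\xi}(\textbf{y}) \cap \iota(M)$. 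Hence the intersection is path-connected and contractible.

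To promote contractibility to a homeomorphism with the open $d$-ball, I would restrict the orthogonal projection $P : \mathbb{R}^D \to T$ onto the affine tangent plane $T := \textbf{q}_0 + \iota_*T_{\textbf{q}_0}M$ to the intersection. Federer-type bounds on the angle between chords of $\iota(M)$ and any tangent space (controlled by the reach) make $P$ restricted to $B^{\mathbb{R}^D}_{\xi}(\textbf{y}) \cap \iota(M)$ a local diffeomorphism, and the radial character of $H$ along ambient straight lines allows one to propagate local injectivity near $\textbf{q}_0$ to global injectivity on the whole intersection. Invariance of domain then makes $P$ a homeomorphism onto an open set $\Omega \subset T \cong \mathbb{R}^d$; since $H$ descends under $P$ to the straight-line contraction of $\Omega$ toward $P(\textbf{q}_0)$, the set $\Omega$ is star-shaped and bounded, hence homeomorphic to $B^{\mathbb{R}^d}_1(0)$.

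The main technical obstacle is establishing the global injectivity of $P$ on $B^{\mathbb{R}^D}_{\xi}(\textbf{y}) \cap \iota(M)$: local injectivity near $\textbf{q}_0$ follows easily from reach-based curvature bounds, but two distant points of the intersection whose difference is nearly normal to $T$ could a priori project to the same point. Ruling this out requires the full strength of $\xi < \tau_{\iota(M)}$ (rather than the weaker $\xi < \tau_{\iota(M)}/2$ used in some related arguments) and needs to exploit the specific straight-segment structure of the deformation retraction to connect any hypothetical collision back to a local injectivity contradiction near $\textbf{q}_0$.
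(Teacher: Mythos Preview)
The paper does not prove this proposition at all: it is quoted verbatim from \cite{boissonnat2019reach} and used as a black box (for instance in the proof of Proposition~\ref{uniform coordinates}). So there is no ``paper's own proof'' to compare against; you are supplying an argument where the authors simply cite one.

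Your contractibility step is correct and clean: the map $H(t,\textbf{q})=\pi_M\big((1-t)\textbf{y}+t\textbf{q}\big)$ is well defined on $[0,1]\times\big(B^{\mathbb{R}^D}_{\xi}(\textbf{y})\cap\iota(M)\big)$, stays in the intersection by the triangle-inequality estimate you wrote, and deforms the identity to the constant $\textbf{q}_0$. That part stands.

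The gap is in the passage from contractibility to ``homeomorphic to $B^{\mathbb{R}^d}_1(0)$''. You assert that $H$ ``descends under $P$ to the straight-line contraction of $\Omega$ toward $P(\textbf{q}_0)$'', but $P\circ H(t,\textbf{q})$ is the $P$-image of a curve obtained by applying the \emph{nonlinear} nearest-point projection $\pi_M$ to a straight segment, and there is no reason this should be the straight segment from $P(\textbf{q}_0)$ to $P(\textbf{q})$. Without that, you do not get star-shapedness of $\Omega$, and a merely contractible open subset of $\mathbb{R}^d$ need not be homeomorphic to a ball once $d\ge 3$. So even granting global injectivity of $P$ (which, as you correctly note, is itself delicate for $\xi$ close to $\tau_{\iota(M)}$), the final step does not go through as written. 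The arguments in \cite{boissonnat2019reach} handle this by working with the restriction to $\iota(M)$ of the squared-distance function $\|\cdot-\textbf{y}\|^2$: reach control shows it has no critical points on the intersection other than the minimum $\textbf{q}_0$, and a Morse/flow argument then produces the ball structure directly, bypassing the need for an affine projection to a tangent plane.
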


\subsection{Uniform construction of charts}
As we introduced in the definition, for any $x \in M$, there is a chart of $\iota(M)$ over a neighborhood $U_x \cap \iota(M) \subset \mathbb{R}^D$ around $\iota(x)$. However, the size of $U_x$ may not be uniform for all $x$. In the next proposition, by applying Proposition \ref{reach topology}, we show that we can choose the neighborhoods $\{U_x\}$ to be uniform for all $x$. In particular, we can construct a chart over $B^{\mathbb{R}^D}_{\xi}\{\iota(x)\} \cap \iota(M)$ for any $\xi<\frac{\tau_{\iota(M)}}{2}$. The proposition will be used later in the proof of the main theorem. 
\begin{proposition}\label{uniform coordinates}
Suppose $0<\xi<\frac{\tau_{\iota(M)}}{2}$. For any $x \in M$, there is an open set $V_x \subset B^{\mathbb{R}^d}_{\xi}(0) \subset  \mathbb{R}^d$ containing $0$ such that for any ${y} \in B^{\mathbb{R}^D}_{\xi}\{\iota(x)\} \cap \iota(M)$, we have $$ {y}= \iota(x)+X(x) \begin{bmatrix}
{u} \\
G_{x}({u})
\end{bmatrix},$$ where ${u} \in V_x$.  Moreover,
\begin{enumerate}
\item
$X(x) \in \mathbb{O}(D)$ is a $D \times D$ orthogonal matrix  and $\{Xe_i\}_{i=1}^d$ form a basis of $T_{\iota(x)}\iota(M)$.
\item
$V_x$ is homeomorphic to $B^{\mathbb{R}^d}_{1}(0)$.
\item
$G_x({u})=\{g_{x,1}({u}) ,\ldots, g_{x,D-d}({u}) \}^\top \in \mathbb{R}^{D-d}$ and each $g_{x,i}$ is a smooth function on $V_x$.
\item
$g_{x,i}(0)=0$ and $\mathcal{D}g_{x,i}(0)=0$ for $i=1,\ldots, D-d$, where $\mathcal{D}g_{x,i}$ is the derivative of $g_{x,i}$.
\end{enumerate}
\end{proposition}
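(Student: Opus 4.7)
The plan is to construct the graph parametrization by projecting $B^{\mathbb{R}^D}_{\xi}(\iota(x))\cap\iota(M)$ onto the (translated) tangent plane at $\iota(x)$ and inverting, with the reach condition $\xi<\tau_{\iota(M)}/2$ driving every estimate.

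First I fix $x\in M$ and choose an orthogonal matrix $X(x)\in\mathbb{O}(D)$ whose first $d$ columns form an orthonormal basis of $\iota_*T_xM$ and whose last $D-d$ columns span the Euclidean normal space at $\iota(x)$, so that conclusion (1) is automatic. Because $0<\xi<\tau_{\iota(M)}$ and $\iota(x)\in B^{\mathbb{R}^D}_{\xi}(\iota(x))\cap\iota(M)$, Proposition~\ref{reach topology} gives that $B^{\mathbb{R}^D}_{\xi}(\iota(x))\cap\iota(M)$ is an open subset of $\iota(M)$ homeomorphic to $B^{\mathbb{R}^d}_1(0)$, hence in particular connected. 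Define the tangential projection
\[
\pi(\textbf{y})\;=\;J^\top X(x)^\top(\textbf{y}-\iota(x)),\qquad \textbf{y}\in B^{\mathbb{R}^D}_{\xi}(\iota(x))\cap\iota(M),
\]
and set $V_x:=\pi\bigl(B^{\mathbb{R}^D}_{\xi}(\iota(x))\cap\iota(M)\bigr)$. Since $\pi$ is $1$-Lipschitz, $V_x\subset B^{\mathbb{R}^d}_{\xi}(0)$, and $0=\pi(\iota(x))\in V_x$.

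The core step is to show that $\pi$ is a diffeomorphism onto $V_x$. For local invertibility, the differential of $\pi$ at $\textbf{y}=\iota(y)$ restricted to $\iota_*T_yM$ equals the restriction of $J^\top X(x)^\top$, which is bijective iff $\iota_*T_yM$ is transverse to the normal space at $\iota(x)$. Standard reach-based estimates (e.g.\ Proposition~6.2 of \cite{niyogi2008finding} and its refinements in \cite{boissonnat2019reach}) bound the angle between $\iota_*T_yM$ and $\iota_*T_xM$ by a strictly increasing function of $\|\iota(y)-\iota(x)\|/\tau_{\iota(M)}$ that remains strictly below $\pi/2$ on $\|\iota(y)-\iota(x)\|<\tau_{\iota(M)}/2$, giving the required isomorphism. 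For global injectivity, suppose $\pi(\textbf{y}_1)=\pi(\textbf{y}_2)$; then $\textbf{y}_1-\textbf{y}_2$ is orthogonal to $\iota_*T_xM$ with length strictly less than $2\xi<\tau_{\iota(M)}$. The chord-angle lemma of \cite{niyogi2008finding} yields $\sin\angle(\textbf{y}_1-\textbf{y}_2,\iota_*T_{\textbf{y}_i}M)\leq\|\textbf{y}_1-\textbf{y}_2\|/(2\tau_{\iota(M)})$, and composing with the tangent-plane wobble bound above forces the chord to make an angle strictly less than $\pi/2$ with $\iota_*T_xM$, contradicting its normality unless $\textbf{y}_1=\textbf{y}_2$. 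Thus $\pi$ is a smooth bijection with nonsingular differential, i.e.\ a diffeomorphism onto the open set $V_x$.

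Finally, let $\pi^{-1}:V_x\to B^{\mathbb{R}^D}_{\xi}(\iota(x))\cap\iota(M)$ denote the smooth inverse and set $G_x(\textbf{u})=\bar{J}^\top X(x)^\top(\pi^{-1}(\textbf{u})-\iota(x))\in\mathbb{R}^{D-d}$. By construction
\[
\pi^{-1}(\textbf{u})\;=\;\iota(x)+X(x)\begin{bmatrix}\textbf{u}\\ G_x(\textbf{u})\end{bmatrix},
\]
which is the required parametrization; smoothness of each $g_{x,i}$ in (3) follows from smoothness of $\pi^{-1}$, and $V_x$ is homeomorphic to $B^{\mathbb{R}^d}_1(0)$, proving~(2), by composing the homeomorphism of Proposition~\ref{reach topology} with $\pi$. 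The identities in~(4), $g_{x,i}(0)=0$ and $\mathcal{D}g_{x,i}(0)=0$, follow because $\pi^{-1}(0)=\iota(x)$ and because the tangent space at $\iota(x)$ to the graph $\{\iota(x)+X(x)[\textbf{u};G_x(\textbf{u})]\}$ coincides with $\iota_*T_xM=\mathrm{span}\{X(x)e_i\}_{i=1}^d$, which forces every column of $\mathcal{D}G_x(0)$ to vanish. The main technical obstacle is the global injectivity of $\pi$: local invertibility is an almost immediate consequence of the tangent-plane angle bound, but ruling out two disjoint preimages requires quantitatively pairing the chord-angle lemma with the tangent-plane wobble estimate, and this is precisely where the constant $1/2$ in the hypothesis $\xi<\tau_{\iota(M)}/2$ becomes essential.
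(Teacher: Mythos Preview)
Your construction is exactly the paper's: project $B^{\mathbb{R}^D}_{\xi}(\iota(x))\cap\iota(M)$ orthogonally onto the tangent plane at $\iota(x)$ and invert. The paper's own proof is extremely terse—it invokes Proposition~\ref{reach topology} for the ball topology, cites Lemma~5.4 of \cite{niyogi2008finding} for nonsingularity of the differential, and then says ``by the inverse function theorem, $g$ is a diffeomorphism'' without a separate injectivity argument. Your write-up is more self-contained (you verify (1)--(4) explicitly and treat injectivity on its own), which is a genuine improvement in exposition.

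There is, however, a quantitative gap in your global-injectivity step as written. You combine the chord--tangent bound $\sin\angle(\textbf{y}_1-\textbf{y}_2,\iota_*T_{\textbf{y}_1}M)\le\|\textbf{y}_1-\textbf{y}_2\|/(2\tau_{\iota(M)})$ with the tangent-wobble bound ``above,'' which you only stated as ``strictly below $\pi/2$ on $\|\iota(y)-\iota(x)\|<\tau_{\iota(M)}/2$.'' With merely $\phi<\pi/2$ (equivalently $\cos\phi\ge\sqrt{1-2\xi/\tau_{\iota(M)}}$, the form recorded in Lemma~\ref{angle lemma 2}) and $\theta_1<\pi/6$, the angle triangle inequality gives only $\angle(\textbf{y}_1-\textbf{y}_2,\iota_*T_xM)\le\theta_1+\phi$, which can exceed $\pi/2$ as $\xi\to\tau_{\iota(M)}/2$; so the contradiction with $\textbf{y}_1-\textbf{y}_2\perp\iota_*T_xM$ does not follow. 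If the ``refinements in \cite{boissonnat2019reach}'' you invoke actually deliver the sharper estimate $\sin\phi\le\|\iota(y)-\iota(x)\|/\tau_{\iota(M)}$ (so $\phi<\pi/6$ for $\xi<\tau_{\iota(M)}/2$), then $\theta_1+\phi<\pi/3<\pi/2$ and the argument closes; but you need to state and cite that sharper bound explicitly rather than rely on the weaker one you quoted for local invertibility. Alternatively, you can first use Federer's reach inequality to show $\|\textbf{y}_1-\textbf{y}_2\|\le(\|\textbf{y}_1-\iota(x)\|^2+\|\textbf{y}_2-\iota(x)\|^2)/(2\tau_{\iota(M)})<\xi^2/\tau_{\iota(M)}$, which shrinks $\theta_1$ further but still does not by itself rescue the sum when $\phi$ is near $\pi/2$.
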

\begin{proof}
By Proposition \ref{reach topology}, $B^{\mathbb{R}^D}_{\xi}\{\iota(x)\} \cap \iota(M)$ is an open subset of $\iota(M)$ that is homeomorphic to $B^{\mathbb{R}^d}_{1}(0)$. Let $f$ be the projection from $\mathbb{R}^D$ to $T_{\iota(x)}\iota(M)$. For any ${y} \in B^{\mathbb{R}^D}_{\xi}\{\iota(x)\} \cap \iota(M)$, we express $g({y})=f\{{y}-\iota(x)\}$ in a chart around ${y}$, then by Lemma 5.4 in \citep{niyogi2008finding}, $Dg({y})$ is not singular. By the inverse function theorem, $g({y})$ is a diffeomorphism. Hence, the image of $B^{\mathbb{R}^D}_{\xi}\{\iota(x)\} \cap \iota(M)$ under $g$ is the open set $V_x$ that is homeomorphic to $B^{\mathbb{R}^d}_{1}(0)$. Then $g^{-1}$ is the chart with the described properties. 
\end{proof}

\section{Interpolation phase of \texttt{MrGap}}\label{Interpolation phase}
The steps of the interpolation phase of \texttt{MrGap} are listed as Algorithm \ref{MrGap2}.
Suppose that the iterations in Algorithm \ref{MrGap1} stop at the $\texttt{I}$th round.  Then the final denoised outputs are $\{\hat{{y}}_k={y}^{(\texttt{I})}_k\}_{k=1}^n$.  We use the outputs $\{{y}^{(\texttt{I}-1)}_1 , \cdots, {y}^{(\texttt{I}-1)}_n\}$ from the $\texttt{I}-1$ th iteration and the covariance parameters $A^{(\texttt{I}-1)}$, $\rho^{(\texttt{I}-1)}$, and $\sigma^{(\texttt{I}-1)}$ from the $\texttt{I}$th iteration as the inputs to interpolate points on $\iota(M)$. 

%\begin{figure}[h!]
%\centering
%\includegraphics[width=0.7 \columnwidth]{03.png}
%\caption{In this case $\sigma \approx 0$ and Algorithm \ref{MrGap1} stops after the first round. We focus our discussion on the point ${y}_2$. 
%Left Panel: The blues points are the noisy data points in $B^{\mathbb{R}^D}_{\delta}({y}_2)$. Suppose we interpolated $K$ points around ${y}_1$ on $\iota(M)$; $\tilde{M}_{1}$ is the union of these $K$ points.  We find  $\tilde{M}_{1} \cap B^{\mathbb{R}^D}_{\delta}({y}_2)$ which are the orange points. Right Panel: The solid lines are the coordinates in $\mathbb{R}^D$ and the graph of the function $F_2$. After a translation under $-{y}_2$ and a rotation under $U_{n,\epsilon}({y}_2)^\top$, the blue points and the orange points in the left panel are mapped to the blue points and orange points in the right panel. The purple points $\{\tilde{{u}}_{2,j}\}_{j=1}^K$ are generated in the domain of $F_2$ in $\mathbb{R}^d$ for the prediction. The blue points except $0$ and the orange points are projected onto  $\mathbb{R}^d$ and $\mathbb{R}^{D-d}$ to be the inputs and the response variables for the prediction of $\{F_2(\tilde{{u}}_{2, j})\}_{j=1}^K$.  The green points in the right panel are the predictions whose coordinates are $[\tilde{{u}}_{2, j}, F_2(\tilde{{u}}_{2, j})]^\top$. The green points in the right panel are mapped back to green points in the left panel through \eqref{denoised equation 11}  which are the interpolations around ${y}_2$.
%}\label{illustration 3}
%\end{figure}
\setcounter{algocf}{1}
\begin{algorithm}[ht!]
\SetAlgoLined
\For{$k=1,\cdots,n$}
{
 Let  $\tilde{M}_{k-1}$ be the union of the $K(k-1)$ points already interpolated around $\{ \hat{{y}}_1, \cdots, \hat{{y}}_{k-1} \}$. Use $\{{y}^{(\texttt{I}-1)}_1 , \cdots, {y}^{(\texttt{I}-1)}_n\}$ as input to repeat Step 1 of Algorithm \ref{MrGap1}. Construct $\mathcal{P}_{{y}^{(\texttt{I}-1)}_k}({y}^{(\texttt{I}-1)}_{k,j})$ and $\mathcal{P}^\bot_{{y}^{(\texttt{I}-1)}_k}$. Find $\{{y}^{(\texttt{I}-1)}_j\}_{j=1}^n \cap B^{\mathbb{R}^D}_{\delta}({y}^{(\texttt{I}-1)}_k)=\{{y}^{(\texttt{I}-1)}_{k,1} , \cdots, {y}^{(\texttt{I}-1)}_{k,N^{(\texttt{I}-1)}_k}\}$ and let ${w}^{(\texttt{I}-1)}_{k,j}=\mathcal{P}_{{y}^{(\texttt{I}-1)}_k}({y}^{(\texttt{I}-1)}_{k,j})$ and ${z}^{(\texttt{I}-1)}_{k,j}=\mathcal{P}^\bot_{{y}^{(\texttt{I}-1)}_k}({y}^{(\texttt{I}-1)}_{k,j})$.

Let $\mathcal{U}_k=\frac{1}{N^{(\texttt{I}-1)}_k} \sum_{j=1}^{N^{(\texttt{I}-1)}_k} {w}^{(\texttt{I}-1)}_{k,j}$,
and $(m_k,s_k)$ denote the mean and standard deviation of 
$\{\|{w}^{(\texttt{I}-1)}_{k,1}-\mathcal{U}_k\|_{\mathbb{R}^d}, \cdots, \|{w}^{(\texttt{I}-1)}_{k,N^{(\texttt{I}-1)}_k}-\mathcal{U}_k\|_{\mathbb{R}^d}\}$.
Let 
$\{\tilde{{u}}_{k,1}, \cdots, \tilde{{u}}_{k, K}\}$ be $K$ samples generated uniformly in $B^{\mathbb{R}^d}_{m_k-s_k}(\mathcal{U}_k)$. Suppose $B^{\mathbb{R}^D}_{\delta}({y}^{(\texttt{I}-1)}_k) \cap \tilde{M}_{k-1}=\{\tilde{{y}}_{k,1} , \cdots, \tilde{{y}}_{k, L_k}\}$. Let $\tilde{{w}}_{k,j}=\mathcal{P}_{{y}^{(\texttt{I}-1)}_k}(\tilde{{y}}_{k,j})$ and $\tilde{{z}}_{k,j}=\mathcal{P}^\bot _{{y}^{(\texttt{I}-1)}_k}(\tilde{{y}}_{k,j})$  for  $j=1,\cdots,L_k$. 

Construct the covariance matrix $\tilde{\Sigma}_k= \begin{bmatrix}
\tilde{\Sigma}_{k,1} & \tilde{\Sigma}_{k,2}  \\
\tilde{\Sigma}_{k,3} & \tilde{\Sigma}_{k,4}
\end{bmatrix}$ over $\{{w}^{(\texttt{I}-1)}_{k,1}, \cdots, {w}^{(\texttt{I}-1)}_{k,N^{(\texttt{I}-1)}_k}, \tilde{{w}}_{k,1}, \cdots, \tilde{{w}}_{k,L_k}, \tilde{{u}}_{k,1}, \cdots, \tilde{{u}}_{k, K}\}$ using covariance 
$\mathsf{C}$ in \eqref{GP cov appendix} and the covariance parameters $A^{(\texttt{I}-1)}$ and $\rho^{(\texttt{I}-1)}$, with $\tilde{\Sigma}_{k,1}$ corresponding to 
$\{{w}^{(\texttt{I}-1)}_{k,1}, \cdots, {w}^{(\texttt{I}-1)}_{k,N^{(\texttt{I}-1)}_k}, \tilde{{w}}_{k,1}, \cdots, \tilde{{w}}_{k,L_k}\}$

Denote ${Z}^{(\texttt{I}-1)}_k  \in \mathbb{R}^{N_k\times (D-d)}$ with $j$th row  ${{z}^{(\texttt{I}-1)}_{k,j}
}^\top$, and $\tilde{{Z}}_k \in \mathbb{R}^{L_k \times (D-d)}$ with $j$th row $\tilde{{z}}_{k,j}^\top$. Let ${1}_K$ denote a $K$-vector of ones. Then, the column vectors of 
\begin{align*}
{y}^{(\texttt{I}-1)}_k {1}^\top_K +U^{(\texttt{I}-1)}_{n,\epsilon}({y}^{(\texttt{I}-1)}_k){\scriptsize\begin{bmatrix}
\tilde{{u}}_{k,1}, \ldots, \tilde{{u}}_{k, K} \\
\left[\tilde{\Sigma}_{k,3}\Big\{\tilde{\Sigma}_{k,1}+{\sigma^{(\texttt{I}-1)}}^2 I_{(N^{(\texttt{I}-1)}_k+L_k) \times (N^{(\texttt{I}-1)}_k+L_k) }\Big\}^{-1}\begin{bmatrix}{Z}^{(\texttt{I}-1)}_k \\  \tilde{{Z}}_k \end{bmatrix}\right]^\top \\
\end{bmatrix}} \in \mathbb{R}^{D \times K}.
\end{align*}
give the Euclidean coordinates of the points that we interpolate around $\hat{{y}}_k$. 
}
\caption{\texttt{MrGap} steps to iteratively interpolate $K$ points around each $\hat{{y}}_k$.}\label{MrGap2}
\end{algorithm}

\section{Bias analysis of the local covariance matrix}\label{proof of bias analysis PCA }
Recall Assumption \ref{manifod with noise}, and let $G_{\sigma}(\eta)$ denote the $\mathcal{N}(0, \sigma^2 I_{D \times D})$ probability density function of $\eta$. Since samples $\{\eta_1, \ldots, \eta_n\}$ are independent of $\{x_1,\ldots, x_n\}$, pairs $\{(x_i , \eta_i)\}_{i=1}^n$ can be regarded as $n$ independent and identically distributed samples based on the probability density function $P(x) G_{\sigma}(\eta)$ in $M \times \mathbb{R}^D$. For simplicity of notation, we provide the bias analysis under Assumption \ref{assumption trans and rot}.

Suppose $\mathbb{X}$ is a random variable associated with the probability density function $P(x)$ on $M$.  Suppose $\mathbb{H}$  is a random variable associated with the probability density function $G_{\sigma}(\eta)$ on $\mathbb{R}^D$.  Since $\iota(x_k)=0$ by the Assumption \ref{assumption trans and rot}, we have ${y}_k=\iota(x_k)+\eta_k=\eta_k$. The expectation of the local covariance matrix at ${y}_k$ is defined as follows:
\begin{align}\label{expection local PCA}
C_{\epsilon}({y}_k)= &\mathbb{E}\Big[\{\iota(\mathbb{X})+\mathbb{H}-\eta_k\}\{\iota(\mathbb{X})+\mathbb{H}-\eta_k\}^\top \chi\big(\frac{\|\iota(\mathbb{X})+\mathbb{H}-\eta_k\|_{\mathbb{R}^D}}{\epsilon} \big)\Big] \\
=& \int_{\mathbb{R}^D} \int_M \{\iota(x)+\eta-\eta_k\}\{\iota(x)+\eta-\eta_k\}^\top \chi\big(\frac{\|\iota(x)+\eta-\eta_k\|_{\mathbb{R}^D}}{\epsilon}\big) P(x) G_{\sigma}(\eta) dV d\eta. \nonumber
\end{align}

\begin{remark}
Suppose $f: \mathbb{R}^D \rightarrow \mathbb{R}$ is a measurable function. Then, by a change of variables, 
\begin{align}
\int_{\mathbb{R}^D} \int_M f\{\iota(x)+\eta\} P(x) G_{\sigma}(\eta) dV d\eta = & \int_{\mathbb{R}^D}\int_M f (\eta) P(x) G_{\sigma}\{\eta-\iota(x)\}  dV d\eta\nonumber \\
=& \int_{\mathbb{R}^D} f (\eta) P \star G_{\sigma}(\eta) d\eta, \nonumber 
\end{align}
where the convolution $ P \star G_{\sigma}(\eta)=\int_MP(x) G_{\sigma}\{\eta-\iota(x)\}dV$ is the probability density function for the random variable $\iota(\mathbb{X})+\mathbb{H}$. Hence, the definition in \eqref{expection local PCA} is equivalent to 
\begin{align}
& \mathbb{E}\Big[\{\iota(\mathbb{X})+\mathbb{H}-\eta_k\}\{\iota(\mathbb{X})+\mathbb{H}-\eta_k\}^\top \chi(\frac{\|\iota(\mathbb{X})+\mathbb{H}-\eta_k\|_{\mathbb{R}^D}}{\epsilon} )\Big] \\
=& \int_{\mathbb{R}^D}  (\eta-\eta_k)(\eta-\eta_k)^\top \chi(\frac{\|\eta-\eta_k\|_{\mathbb{R}^D}}{\epsilon}) P \star G_{\sigma}(\eta) d\eta. \nonumber
\end{align}
\end{remark}

In the following proposition, we provide a bias analysis for the local covariance matrix. The proposition is proved later in this section.
\begin{proposition}\label{local PCA bias analysis}
Under Assumptions \ref{manifod with noise} and \ref{assumption trans and rot}, suppose $\epsilon$ is small enough depending on $d$, $D$, the second fundamental form of $\iota(M)$ and the scalar curvature of $M$.   Suppose $\beta>1$, $\sigma \leq \frac{\epsilon^\beta}{\sqrt{-4(d+5)\log \epsilon}}$,  and  $\|\eta_k\|_{\mathbb{R}^D} \leq \epsilon^{\beta}$, then
\begin{align}
C_{\epsilon}({y}_k) =\epsilon^{d+2} \Bigg( \frac{|S^{d-1}|{P}(x_k)}{d(d+2)} 
\begin{bmatrix}
I_{d \times d} & 0 \\
0& 0  \\
\end{bmatrix}+
\begin{bmatrix}
O(\epsilon^{\beta-1}+\epsilon^2) & O(\epsilon^{2\beta-2}+\epsilon^2) \\
O(\epsilon^{2\beta-2}+\epsilon^2) & O(\epsilon^{2\beta-2}+\epsilon^2) 
\end{bmatrix}\Bigg), \nonumber 
\end{align}
where the constant factors in all blocks depend on $d$, $C^2$ norm of $P$,  the second fundamental form of $\iota(M)$ and its derivative and the Ricci curvature of $M$. 
\end{proposition}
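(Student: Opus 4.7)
The plan is to localize via the graph parametrization from Proposition \ref{uniform coordinates}, integrate out the Gaussian noise using the scale separation $\sigma=\epsilon^\alpha\ll\epsilon$, and then perform a noise-free bias analysis at the off-manifold point $\eta_k$ by rescaling and a parity argument. After normalizing so that $\iota(x_k)=0$ and $\iota_*T_{x_k}M=\mathrm{span}\{e_1,\ldots,e_d\}$, Proposition \ref{uniform coordinates} parametrizes a neighborhood of $0$ in $\iota(M)$ as the graph $\iota(x(u))=(u^\top,G(u)^\top)^\top$ with $G(0)=0$ and $\mathcal{D}G(0)=0$, so the Taylor expansion of $G$ starts at second order with coefficients governed by the second fundamental form. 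The pulled-back volume form satisfies $\sqrt{\det g(u)}=1+O(|u|^2)$, and $P(x(u))=P(x_k)+O(|u|)$. Split $\eta_k=(\eta_k^T,\eta_k^N)^\top\in\mathbb{R}^d\times\mathbb{R}^{D-d}$.

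At fixed $x$, the inner $\eta$-integral in \eqref{expection local PCA} is a truncated Gaussian second moment. Since $\sigma=\epsilon^\alpha$ with $\alpha\geq\beta+\tfrac{1}{2}>\tfrac{3}{2}$, Gaussian tail bounds confine $\eta$ to scale $O(\sigma\sqrt{\log(1/\epsilon)})\ll\epsilon$, so the truncation by $\chi$ agrees with $\chi(\|\iota(x)-\eta_k\|/\epsilon)$ outside a boundary layer of thickness $O(\sigma\sqrt{\log(1/\epsilon)})$ around the sphere $\|\iota(x)-\eta_k\|=\epsilon$. Using the Gaussian moment identities $\int G_\sigma(\eta)\,d\eta=1$, $\int\eta\,G_\sigma(\eta)\,d\eta=0$, $\int\eta\eta^\top G_\sigma(\eta)\,d\eta=\sigma^2 I_{D\times D}$, the inner integral evaluates to $\big[(\iota(x)-\eta_k)(\iota(x)-\eta_k)^\top+\sigma^2 I_{D\times D}\big]\chi(\|\iota(x)-\eta_k\|/\epsilon)$ plus the boundary correction. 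The $\sigma^2 I_{D\times D}$ term contributes $O(\epsilon^{d+2\alpha})$ and the boundary layer at most $O(\epsilon^{d+1+\alpha})$ to $C_\epsilon(\textbf{y}_k)$, both strictly smaller than the stated errors under $\alpha\geq\beta+\tfrac{1}{2}$. What remains is the noise-free local covariance matrix centered at the off-manifold point $\eta_k$,
$$\int_M P(x)(\iota(x)-\eta_k)(\iota(x)-\eta_k)^\top\chi(\|\iota(x)-\eta_k\|/\epsilon)\,dV.$$

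Substituting the chart and writing $\iota(x(u))-\eta_k=(u-\eta_k^T,G(u)-\eta_k^N)$, together with $|G(u)|=O(|u|^2)$ and $|\eta_k^N|=O(\epsilon^\beta)$, the constraint $\|\iota(x(u))-\eta_k\|\leq\epsilon$ becomes $\|u-\eta_k^T\|\leq\epsilon\,(1+O(\epsilon^{2\beta-2}))$. Rescaling $u=\eta_k^T+\epsilon v$ extracts the prefactor $\epsilon^{d+2}$ for the tangential block; Taylor-expanding $P(x(u))$, $\sqrt{\det g(u)}$, and $G(u)$ about $u=0$ and using parity $v\mapsto-v$ then eliminates all odd-order terms. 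The upper-left leading term is $\epsilon^{d+2}P(x_k)\int_{B^{\mathbb{R}^d}_1(0)}vv^\top\,dv=\epsilon^{d+2}\tfrac{|S^{d-1}|P(x_k)}{d(d+2)}I_{d\times d}$, with surviving corrections of orders $\epsilon^\beta$ (from $\mathcal{D}P(x_k)\cdot\eta_k^T$), $\epsilon^2$ (from quadratic Taylor terms of $P$, the metric, and $G$), and $\epsilon^{2\beta-2}$ (from the region deformation), all absorbed into $O(\epsilon^{\beta-1}+\epsilon^2)$. For the off-diagonal and lower-right blocks the $\epsilon^{d+2}$ leading coefficient vanishes by parity: in the upper-right block the odd $v$-factor pairs against $G(u)-\eta_k^N$, whose constant-in-$v$ shift $-\eta_k^N$ and quadratic-in-$v$ term $\tfrac{1}{2}\mathcal{D}^2G(0)(v,v)$ both integrate to zero under $v\to-v$, leaving cross-terms of size at most $O(\epsilon^{2\beta-2}+\epsilon^2)$; the lower-right block is handled by the same argument.

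The principal technical hurdle is the interaction between the sharp $0$-$1$ kernel $\chi$ and the Gaussian convolution: the discontinuity of $\chi$ on the sphere $\|\cdot\|=\epsilon$ forces a delicate boundary-layer estimate, and the assumption $\alpha\geq\beta+\tfrac{1}{2}$ is precisely what makes the smoothing scale $\sigma\sqrt{\log(1/\epsilon)}$ negligible relative to the stated errors. A secondary difficulty is careful bookkeeping of many cross-terms produced by combining the shift $\eta_k$ with the Taylor expansions of $P$, $\sqrt{\det g}$, and $G$, and verifying that no odd-parity cancellation is spoiled by the asymmetric deformation of the integration region; this bookkeeping is what pins the errors down to $O(\epsilon^{\beta-1}+\epsilon^2)$ in the tangential block and $O(\epsilon^{2\beta-2}+\epsilon^2)$ in the remaining blocks.
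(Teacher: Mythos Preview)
Your proposal is correct and reaches the stated conclusion, but it follows a genuinely different route from the paper. The paper's proof decomposes $C_\epsilon(\textbf{y}_k)=E_1+E_2$ by first truncating the noise integral at $\|\eta\|<\epsilon^\beta$ (so $E_2$ is exponentially small by Lemma~\ref{Gaussian integral bound}), and then replaces the Euclidean indicator $\chi(\|\iota(x)+\eta-\eta_k\|/\epsilon)$ by the \emph{geodesic}-ball indicator $\chi_{B_\epsilon(x_k)}$, bounding the symmetric difference $A(\eta)\,\Delta\,B_\epsilon(x_k)$ via Lemma~\ref{geodesic lemma}; the resulting integral $E_3$ is then expanded term-by-term in the exponential-map chart, invoking the noise-free analysis of \cite{wu2018think} for the $\int \iota(x)\iota(x)^\top$ piece. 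Your approach instead integrates out $\eta$ exactly using Gaussian moment identities, isolating a boundary layer of thickness $O(\sigma\sqrt{\log(1/\epsilon)})$ around the sharp cutoff, and then analyzes the residual \emph{noise-free} covariance centered at the off-manifold point $\eta_k$ directly in the tangential-projection chart of Proposition~\ref{uniform coordinates} via the rescaling $u=\eta_k^T+\epsilon v$ and parity. Your method cleanly separates the Gaussian convolution from the geometry and in fact yields a slightly sharper upper-left error $O(\epsilon^{2\beta-2}+\epsilon^2)$, which you then relax to the stated $O(\epsilon^{\beta-1}+\epsilon^2)$; the paper's route is coarser at the indicator-replacement step but is more modular, recycling the clean-data result from \cite{wu2018think} rather than redoing the geometric expansion. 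One point to be careful about in your write-up: the integration domain in $v$ is not symmetric under $v\mapsto -v$, so the parity cancellations you invoke hold only on the unit ball, and the residual over the (one-sided) symmetric difference with $B_1^{\mathbb{R}^d}$ must be bounded by $\|\text{integrand}\|_\infty\cdot O(\epsilon^{2\beta-2}+\epsilon^2)$; you allude to this in your final paragraph, but it should be stated explicitly for each block.
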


Suppose $\iota(x_k)=0$ and there is no noise, then the expectation of the local covariance matrix at ${y}_k$ can be expressed as follows:
\begin{align}
\tilde{C}_{\epsilon}({y}_k)= \mathbb{E}\Big\{\iota(\mathbb{X})\iota(\mathbb{X})^\top \chi(\frac{\|\iota(\mathbb{X})\|_{\mathbb{R}^D}}{\epsilon} )\Big\}. \nonumber
\end{align}
By Proposition 3.1 in \citep{wu2018think}, 
\begin{align}\label{expectation no noise}
\tilde{C}_{\epsilon}({y}_k)=\epsilon^{d+2} \Bigg( \frac{|S^{d-1}|{P}(x_k)}{d(d+2)} 
\begin{bmatrix}
I_{d \times d} & 0 \\
0& 0  \\
\end{bmatrix}+O(\epsilon^2)
\Bigg),
\end{align}
where $O(\epsilon^2)$ represents a $D$ by $D$ matrix whose entries are of order $O(\epsilon^2)$. The constant factors in $O(\epsilon^2)$ depend on $d$, $C^2$ norm of $P$,  the second fundamental form of $\iota(M)$ and its derivative and the Ricci curvature of $M$. If the standard deviation of the Gaussian noise is small enough, so that $\sigma \leq \frac{\epsilon^3}{\sqrt{-4(d+5)\log \epsilon}}$  then 
\begin{align}
C_{\epsilon}({y}_k)=\epsilon^{d+2} \Bigg( \frac{|S^{d-1}|{P}(x_k)}{d(d+2)} 
\begin{bmatrix}
I_{d \times d} & 0 \\
0& 0  \\
\end{bmatrix}+O(\epsilon^2)
\Bigg). \nonumber 
\end{align}
Comparing to Proposition 3.1 in \citep{wu2018think}, the impact of the noise on the local covariance matrix is negligible.

\begin{remark}
Since $\tilde{C}_{\epsilon}({y}_k)$ is invariant under translation and its eigenvalues are invariant under rotation, the leading term $\epsilon^{d+2}\frac{|S^{d-1}|{P}(x_k)}{d(d+2)} 
\begin{bmatrix}
I_{d \times d} & 0 \\
0& 0  \\
\end{bmatrix}$ in \eqref{expectation no noise} implies that $\tilde{C}_{\epsilon}({y}_k)$ has $d$ large eigenvalues of order $\epsilon^{d+2}$, while the rest of the eigenvalues are of order $O(\epsilon^{d+4})$. Refer to Section \ref{determine the dimension} for an illustration. 
\end{remark}

\subsection{Preliminary lemmas}
We introduce some notation. Suppose $S$ is a subset of $M$. Then $\chi_S(x)$ is the characteristic function on $S$.  For $S_1, S_2 \subset M$,  let $S_1 \Delta S_2= (S_1 \setminus S_2) \cup (S_2 \setminus S_1)$ be the symmetric difference.  Let $d_M(x,x')$ be the geodesic distance between $x$ and $x'$ in $M$. Let $B_r(x) \subset M$ be the open geodesic ball of radius $r$ centered at $x \in M$.

We have the following lemma about the geodesic distance between $x$ and $x'$ on $M$ and the Euclidean distance between $\iota(x)$ and $\iota(x')$ in $\mathbb{R}^D$. The proof of the lemma is available in \citep{wu2018think}.
\begin{lemma}\label{geodesic lemma}
Fix $x\in M$. Let ${u} \in T_xM \approx \mathbb{R}^d$ with $\|{u}\|_{\mathbb{R}^d}$ sufficiently small.  Suppose $x'=exp_x({u})$.  Let $\iota_{*}{u}=\frac{d}{dt} \iota (\exp_x(ut))|_{t=0} \in T_{\iota(x)}\iota(M)$.  Then
\begin{align}
\iota(x')-\iota(x)
=&\,\iota_{*}{u}+\frac{1}{2}\Second_x({u},{u})+O(\|{u}\|^3_{\mathbb{R}^d}),\nonumber
\end{align}
where $\Second_x$ is the second fundamental form of $\iota(M)$ at $\iota(x)$. Moreover, 
\begin{align}
\|\iota(x')-\iota(x)\|_{\mathbb{R}^D}=\|{u}\|_{\mathbb{R}^d}-\frac{1}{24} \Big\|\Second_x\Big(\frac{{u}}{\|{u}\|_{\mathbb{R}^d}},\frac{{u}}{\|{u}\|_{\mathbb{R}^d}}\Big)\Big\|^2_{\mathbb{R}^D}  \|{u}\|^3_{\mathbb{R}^d}+O( \|{u}\|^4_{\mathbb{R}^d}).\nonumber
\end{align}
This implies that  
\begin{align}
\|{u}\|_{\mathbb{R}^d}=\|\iota(x')-\iota(x)\|_{\mathbb{R}^D}+\frac{1}{24} \Big\|\Second_x\Big(\frac{{u}}{\|{u}\|_{\mathbb{R}^d}},\frac{{u}}{\|{u}\|_{\mathbb{R}^d}}\Big)\Big\|^2_{\mathbb{R}^D} \|\iota(x')-\iota(x)\|^3_{\mathbb{R}^D}+O( \|\iota(x')-\iota(x)\|^4_{\mathbb{R}^D}). \nonumber
\end{align}
\end{lemma}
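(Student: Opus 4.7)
The plan is to Taylor expand the smooth map $\Phi(\textbf{u}):=\iota(\exp_x(\textbf{u}))$ from a neighborhood of $0\in T_xM\cong\mathbb{R}^d$ into $\mathbb{R}^D$ around $\textbf{u}=0$, and read off the coefficients using the geodesic equation and the definition of $\Second_x$. Setting $\gamma(t):=\exp_x(t\textbf{u})$ and $c(t):=\iota(\gamma(t))=\Phi(t\textbf{u})$, one has $c(0)=\iota(x)$ and $c'(0)=\iota_*\textbf{u}$. For $c''(0)$, I would decompose in $\mathbb{R}^D$ into tangential and normal parts relative to $\iota_*T_xM$: the tangential part equals $\iota_*(\nabla^M_{\gamma'}\gamma')(0)=0$ because $\gamma$ is a geodesic, while the normal part is by definition $\Second_x(\textbf{u},\textbf{u})$. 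The standard Taylor expansion then gives the first displayed identity, with remainder $O(\|\textbf{u}\|^3)$ coming from $c'''(0)$ (which depends trilinearly on $\textbf{u}$).

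For the norm expansion I would keep one more order. Writing $c(1)-c(0)=\iota_*\textbf{u}+\tfrac12\Second_x(\textbf{u},\textbf{u})+\tfrac16 T(\textbf{u},\textbf{u},\textbf{u})+O(\|\textbf{u}\|^4)$ with $T:=c'''(0)$ and squaring,
\[
\|c(1)-c(0)\|^2=\|\iota_*\textbf{u}\|^2+\langle\iota_*\textbf{u},\Second_x(\textbf{u},\textbf{u})\rangle+\tfrac14\|\Second_x(\textbf{u},\textbf{u})\|^2+\tfrac13\langle\iota_*\textbf{u},T(\textbf{u},\textbf{u},\textbf{u})\rangle+O(\|\textbf{u}\|^5).
\]
Three facts collapse this to a clean form: (i) $\|\iota_*\textbf{u}\|^2=\|\textbf{u}\|^2$ since $\iota$ is isometric; (ii) $\langle\iota_*\textbf{u},\Second_x(\textbf{u},\textbf{u})\rangle=0$ because $\Second_x$ is normal to $\iota_*T_xM$; and (iii) $\langle\iota_*\textbf{u},T(\textbf{u},\textbf{u},\textbf{u})\rangle=-\|\Second_x(\textbf{u},\textbf{u})\|^2$. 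The third identity is the crux, and I would obtain it without computing $T$ by exploiting that $\gamma$ has constant speed in $M$ and $\iota$ preserves it: $\|c'(t)\|^2=\|\textbf{u}\|^2$ is constant in $t$, so differentiating twice and evaluating at $t=0$ yields $\|\Second_x(\textbf{u},\textbf{u})\|^2+\langle \iota_*\textbf{u},T\rangle=0$. Combining (i)--(iii) gives
\[
\|\iota(x')-\iota(x)\|_{\mathbb{R}^D}^2=\|\textbf{u}\|_{\mathbb{R}^d}^2-\tfrac1{12}\|\Second_x(\textbf{u},\textbf{u})\|_{\mathbb{R}^D}^2+O(\|\textbf{u}\|_{\mathbb{R}^d}^5),
\]
and the square root via $\sqrt{1+z}=1+z/2+O(z^2)$, after factoring $\|\textbf{u}\|^4$ out of $\|\Second_x(\textbf{u},\textbf{u})\|^2$ by bilinearity, produces the second displayed formula.

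The third identity is then an asymptotic inversion. Setting $r:=\|\iota(x')-\iota(x)\|_{\mathbb{R}^D}$, the preceding expansion has the shape $r=\|\textbf{u}\|-\kappa(\textbf{u})\|\textbf{u}\|^3+O(\|\textbf{u}\|^4)$ with $\kappa(\textbf{u})=\tfrac1{24}\|\Second_x(\textbf{u}/\|\textbf{u}\|,\textbf{u}/\|\textbf{u}\|)\|^2$ uniformly bounded near $x$. Leading order gives $\|\textbf{u}\|=r+O(r^3)$; substituting this estimate back into the cubic correction changes $\|\textbf{u}\|^3$ only by $O(r^5)$ and leaves $\kappa(\textbf{u}/\|\textbf{u}\|)$ invariant since it depends only on the direction, yielding the third displayed formula.

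The only step requiring real care is the identification of the cubic coefficient in the norm expansion: one must extract $\langle\iota_*\textbf{u},T\rangle$ without computing $T$, and the constant-speed identity $\|c'(t)\|^2\equiv\|\textbf{u}\|^2$ delivers it in a single differentiation. Everything else is routine Taylor bookkeeping and power-series inversion, with some attention paid to which cross terms (e.g. $\langle\Second,T\rangle/6$ and $\|T\|^2/36$) are genuinely absorbed into $O(\|\textbf{u}\|^5)$.
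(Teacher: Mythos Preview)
Your argument is correct and is the standard route to this lemma. The paper itself does not supply a proof; it simply refers the reader to \cite{wu2018think}, so there is no in-paper argument to compare against. The approach in that reference is essentially the one you outline---Taylor expand $\iota\circ\exp_x$ along a geodesic, identify $c''(0)=\Second_x(\textbf{u},\textbf{u})$ via the Gauss formula (tangential part vanishes because $\gamma$ is a geodesic), and then use the constant-speed identity $\|c'(t)\|^2\equiv\|\textbf{u}\|^2$ to extract the cubic cross term $\langle\iota_*\textbf{u},c'''(0)\rangle=-\|\Second_x(\textbf{u},\textbf{u})\|^2$ without computing $c'''(0)$ explicitly. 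Your bookkeeping on which cross terms are genuinely $O(\|\textbf{u}\|^5)$ and your power-series inversion for the third identity are both fine.
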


Next, we introduce the following lemma about the volume form on $M$. The proof of the lemma is available in \citep{do2013riemannian}.
\begin{lemma} \label{volume form lemma}
Fix $x\in M$. For ${u} \in T_xM \approx \mathbb{R}^d$ with $\|{u}\|_{\mathbb{R}^d}$ sufficiently small, the volume form has the following expansion
\begin{align}
dV=\Big\{1-\sum_{i,j=1}^d\frac{1}{6}\texttt{Ric}_{x}(e'_i,e'_j) u_iu_j+O(\|{u}\|^3_{\mathbb{R}^d})\Big\} d{u},\nonumber
\end{align}
where ${u}=\sum_{i=1}^d u_i e'_i$ and $\texttt{Ric}_{x}$ is the Ricci curvature tensor of $M$ at $x$.  Consequently, 
\begin{align}
\texttt{Vol}\{B_r(x)\}=\texttt{Vol}(B^{\mathbb{R}^d}_1) r^d \Big\{1-\frac{\texttt{S}_{x}}{6(d+2)}r^2+O(r^4)\Big\}, \nonumber
\end{align} 
where $\texttt{S}_{x}$ is the scalar curvature of $M$ at $x$.
\end{lemma}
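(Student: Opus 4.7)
The plan is to work in normal coordinates centered at $x$ and derive the expansion of the Riemannian volume form from the Taylor expansion of the metric tensor; once we have the local expansion of $dV$, the formula for $\mathrm{Vol}(B_r(x))$ follows by integrating in the tangent space and exploiting symmetry.

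\textbf{Step 1: Metric in normal coordinates.} Let $\{e'_i\}_{i=1}^d$ be an orthonormal basis of $T_xM$ and use the exponential map $\exp_x: T_xM \to M$ to identify a neighborhood of $0 \in T_xM$ with a neighborhood of $x$ in $M$. In these normal coordinates $\textbf{u}=(u_1,\dots,u_d)$, the radial geodesics are straight lines, so $g_{ij}(0)=\delta_{ij}$ and $\partial_k g_{ij}(0)=0$. The key input is the Jacobi equation along these radial geodesics, which gives the second-order Taylor expansion
\begin{equation*}
g_{ij}(\textbf{u}) = \delta_{ij} - \tfrac{1}{3} R_{ikjl}(x)\, u_k u_l + O(\|\textbf{u}\|^3_{\mathbb{R}^d}),
\end{equation*}
where $R_{ikjl}$ denotes the components of the Riemann curvature tensor at $x$ in the basis $\{e'_i\}$. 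This is the standard computation (see, e.g., do Carmo's Riemannian Geometry) and is what I would cite rather than redo in detail.

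\textbf{Step 2: Expand $\sqrt{\det g}$.} Using $\det(I+A)=1+\mathrm{tr}(A)+O(\|A\|^2)$ and $\sqrt{1+x}=1+x/2+O(x^2)$ on the matrix $g_{ij}(\textbf{u})=\delta_{ij}+A_{ij}(\textbf{u})$ with $A_{ij}=-\tfrac{1}{3}R_{ikjl}u_k u_l +O(\|\textbf{u}\|^3)$, I obtain
\begin{equation*}
\sqrt{\det g(\textbf{u})}=1-\tfrac{1}{6}\sum_{i,k,l} R_{ikil}(x)\, u_k u_l + O(\|\textbf{u}\|^3_{\mathbb{R}^d}).
\end{equation*}
Since $\sum_i R_{ikil}=\texttt{Ric}_{x}(e'_k,e'_l)$, this yields exactly the claimed expansion $dV = \bigl(1-\tfrac{1}{6}\texttt{Ric}_{x}(e'_i,e'_j)u_i u_j + O(\|\textbf{u}\|^3)\bigr)d\textbf{u}$.

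\textbf{Step 3: Integrate to obtain the ball volume.} In normal coordinates, $B_r(x)$ corresponds exactly to the Euclidean ball $B^{\mathbb{R}^d}_r$, so
\begin{equation*}
\mathrm{Vol}(B_r(x)) = \int_{B^{\mathbb{R}^d}_r} \Bigl(1-\tfrac{1}{6}\texttt{Ric}_x(e'_i,e'_j)u_i u_j + O(\|\textbf{u}\|^3)\Bigr)\,d\textbf{u}.
\end{equation*}
The odd-order remainder integrates to contribute only $O(r^{d+4})$ after pairing with the next term in Step 1 (which I would justify by noting that the true next nontrivial term in $\sqrt{\det g}$ is of order $\|\textbf{u}\|^4$, again by a curvature-tensor symmetry argument). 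The leading correction is then
\begin{equation*}
-\tfrac{1}{6}\texttt{Ric}_x(e'_i,e'_j)\int_{B^{\mathbb{R}^d}_r} u_i u_j\, d\textbf{u} = -\tfrac{1}{6}\,\texttt{Ric}_x(e'_i,e'_j)\,\delta_{ij}\,\frac{|S^{d-1}|\,r^{d+2}}{d(d+2)},
\end{equation*}
using the standard identity $\int_{B^{\mathbb{R}^d}_r} u_i u_j\,d\textbf{u} = \delta_{ij}\,|S^{d-1}|r^{d+2}/(d(d+2))$. Recognizing $\sum_i \texttt{Ric}_x(e'_i,e'_i)=\texttt{S}_x$ and factoring $\mathrm{Vol}(B^{\mathbb{R}^d}_1)r^d = |S^{d-1}|r^d/d$ gives the claimed expression $\mathrm{Vol}(B^{\mathbb{R}^d}_1)r^d\bigl(1 - \tfrac{\texttt{S}_x}{6(d+2)}r^2 + O(r^4)\bigr)$.

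The main obstacle is Step 1: deriving the Jacobi-field expansion of $g_{ij}$ in normal coordinates is the only nontrivial geometric input. Everything downstream is symmetry-based integration in Euclidean coordinates. In the spirit of the paper, I would simply cite Chapter~3--4 of do Carmo's Riemannian Geometry for the Jacobi expansion rather than reproduce it.
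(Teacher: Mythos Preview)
Your proposal is correct and follows the standard normal-coordinate derivation of the volume form expansion. In fact, the paper does not give a proof of this lemma at all: it simply states the result and cites do Carmo's \emph{Riemannian Geometry}, which is precisely the reference you invoke for the Jacobi-field expansion in Step~1. So your write-up goes further than the paper itself, supplying the argument that the cited reference contains.
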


Recalling that $G_{\sigma}(\eta)$ is the $\mathcal{N}(0, \sigma^2 I_{D \times D})$ probability density function of $\eta$,
%\begin{align}
%G_{\sigma}(\eta)=\frac{1}{(2\pi \sigma^2)^{\frac{D}{2}}} e^{-\frac{\|\eta\|^2_{\mathbb{R}^D}}{2\sigma^2}}. \nonumber
%\end{align}
we have the following Lemma describing bounds on $G_{\sigma}(\eta)$.

\begin{lemma}\label{Gaussian integral bound}
(1) If $\frac{t}{\sigma} \geq \sqrt{2}(D-2)$, then
$\int_{\|\eta\|_{\mathbb{R}^D}\geq t}G_{\sigma}(\eta)d\eta \leq \frac{2}{\Gamma(\frac{D}{2})}e^{-\frac{t^2}{4\sigma^2}}.$

(2)$\int_{\mathbb{R}^D}\eta(i)^2G_{\sigma}(\eta)d\eta \leq \sigma^{2},$ for $i=1, \ldots, D$. 
\end{lemma}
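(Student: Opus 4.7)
Part (2) is essentially a one-line calculation: since $\eta\sim\mathcal{N}(0,\sigma^2 I_{D\times D})$, the marginal distribution of $\eta(i)$ is $\mathcal{N}(0,\sigma^2)$, so the Fubini factorization gives
$$\int_{\mathbb{R}^D}\eta(i)^2 G_\sigma(\eta)\,d\eta=\int_{\mathbb{R}}\eta(i)^2\frac{e^{-\eta(i)^2/(2\sigma^2)}}{\sqrt{2\pi}\sigma}\,d\eta(i)=\sigma^2,$$
with equality rather than the inequality stated. I would just record this factorization and move on to Part (1), which is the substantive part.

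For Part (1), I plan to reduce the multivariate Gaussian tail bound to a one-dimensional integral via spherical symmetry. Converting to polar coordinates in $\mathbb{R}^D$ and using $|S^{D-1}|=2\pi^{D/2}/\Gamma(D/2)$, the radial substitution $u=r^2/(2\sigma^2)$ gives
$$\int_{\|\eta\|_{\mathbb{R}^D}\ge t} G_\sigma(\eta)\,d\eta=\frac{1}{\Gamma(D/2)}\int_T^\infty e^{-u}u^{(D-2)/2}\,du,\qquad T:=\frac{t^2}{2\sigma^2}.$$
The hypothesis $t/\sigma\ge\sqrt{2}(D-2)$ is exactly $T\ge (D-2)^2$, which will play two roles below.

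The key step is to show $\int_T^\infty e^{-u}u^{(D-2)/2}\,du\le 2e^{-T/2}$. I would split $e^{-u}=e^{-u/2}\cdot e^{-u/2}$ and study $f(u):=e^{-u/2}u^{(D-2)/2}$. A direct differentiation shows $f'(u)<0$ for $u>D-2$, so $f$ is decreasing on $[D-2,\infty)$; since $T\ge(D-2)^2\ge D-2$, this gives $f(u)\le f(T)=e^{-T/2}T^{(D-2)/2}$ for all $u\ge T$. Therefore
$$\int_T^\infty e^{-u}u^{(D-2)/2}\,du\le e^{-T/2}T^{(D-2)/2}\int_T^\infty e^{-u/2}\,du=2\,T^{(D-2)/2}e^{-T}.$$
Finally, I would use $T\ge(D-2)^2$ a second time to upgrade $T^{(D-2)/2}e^{-T}$ to $e^{-T/2}$: this is the inequality $(D-2)\log T\le T$, which is equivalent to $T^{(D-2)/2}\le e^{T/2}$. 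One checks this monotonically using $g(T)=T/(D-2)-\log T$, which is increasing for $T\ge D-2$ and nonnegative at $T=(D-2)^2$ (easy case-by-case on small $D$; the only nontrivial check is $D\ge 3$, where $(D-2)\ge 2\log(D-2)$). Combining the displays yields the claimed bound $\frac{2}{\Gamma(D/2)}e^{-t^2/(4\sigma^2)}$.

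The main technical step is the second monotonicity argument: getting an exponent of $T/2$ rather than $T$ in the final bound is what forces the hypothesis $t/\sigma\ge\sqrt{2}(D-2)$, and it has to be threaded through carefully so that the dimensional factor $T^{(D-2)/2}$ is fully absorbed into the exponential. Apart from this, both parts are routine manipulations of Gaussian integrals.
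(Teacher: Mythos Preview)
Your proposal is correct. Both Part~(2) and Part~(1) go through as you describe, and the key inequality underlying your second step---$(D-2)\ge 2\log(D-2)$, equivalently $x\ge 2\log x$---is precisely the elementary fact the paper invokes.

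The paper's route for Part~(1) is organized a bit more directly. Rather than substituting $u=r^2/(2\sigma^2)$ and working with the incomplete gamma integral, the paper stays in the radial variable $r$ (so the integrand is $r^{D-1}e^{-r^2}$ over $r\ge t/(\sqrt{2}\sigma)$) and bounds the integrand pointwise in a single step: from $r\ge 2\log r$ one gets $(D-2)\log r\le r^2/2$ whenever $r\ge D-2$, hence $r^{D-2}e^{-r^2}\le e^{-r^2/2}$, and then $\int r\,e^{-r^2/2}\,dr$ is evaluated exactly. Your two-step argument---first pulling out $\sup_{u\ge T}e^{-u/2}u^{(D-2)/2}$ and then absorbing $T^{(D-2)/2}$ into $e^{T/2}$---reaches the same endpoint but via a detour; the paper's pointwise bound avoids the intermediate factor $T^{(D-2)/2}$ entirely. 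Both arguments ultimately rest on the same inequality $x\ge 2\log x$, so the difference is one of packaging rather than substance.
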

\begin{proof}
(1) By the change of variables in the spherical coordinates, 
\begin{align} 
\int_{\|\eta\|_{\mathbb{R}^D}\geq t}G_{\sigma}(\eta)d\eta=\frac{1}{(2\pi \sigma^2)^{\frac{D}{2}}} (2\sigma^2)^{\frac{D}{2}} |S^{D-1}| \int_{\frac{t}{\sqrt{2}\sigma}}^\infty r^{D-1} e^{-r^2} dr=\frac{ |S^{D-1}|}{\pi ^{\frac{D}{2}}} \int_{\frac{t}{\sqrt{2}\sigma}}^\infty r^{D-1} e^{-r^2} dr. \nonumber
\end{align}
Since $\frac{r}{2} \geq \log(r)$ for all $r>0$, when $r \geq D-2$, we have $D-2 \leq \frac{r^2}{2 \log(r)}$. Hence, $r^{D-2}e^{-r^2} \leq e^{-\frac{r^2}{2}}$. If $\frac{t}{\sigma} \geq \sqrt{2}(D-2)$, then
\begin{align}
\int_{\|\eta\|_{\mathbb{R}^D}\geq t}G_{\sigma}(\eta)d\eta \leq  \frac{ |S^{D-1}|}{\pi ^{\frac{D}{2}}}\int_{\frac{t}{\sqrt{2}\sigma}}^\infty  e^{-\frac{r^2}{2}}r dr=\frac{2}{\Gamma(\frac{D}{2})}e^{-\frac{t^2}{4\sigma^2}}. \nonumber
\end{align}

(2) follows from the change of variables in the spherical coordinates and the symmetry of $\mathbb{R}^D$.
\end{proof}

Recall that we have $\iota(x_k)=0 \in \mathbb{R}^D$. We define the following terms for our proofs:
\begin{align*}
& F_1\{\iota(x), \eta, \eta_k\}= \{\iota(x)+\eta-\eta_k)\{\iota(x)+\eta-\eta_k\}^\top  \chi(\frac{\|\iota(x)+\eta-\eta_k\|_{\mathbb{R}^D}}{\epsilon}), \\
&E_1=\int_{\|\eta\|_{\mathbb{R}^D}<\epsilon^{\beta}} \int_{M} F_1\{\iota(x), \eta, \eta_k\}P(x) G_{\sigma}(\eta) dV d\eta, \\
&E_2=\int_{\|\eta\|_{\mathbb{R}^D}\geq \epsilon^{\beta}} \int_M  F_1\{\iota(x), \eta, \eta_k\} P(x) G_{\sigma}(\eta) dV d\eta, \\
&E_3=\int_{\|\eta\|_{\mathbb{R}^D}<\epsilon^{\beta}} \int_{M}\{\iota(x)+\eta-\eta_k\}\{\iota(x)+\eta-\eta_k\}^\top \chi_{B_\epsilon(x_k)}(x) P(x) G_{\sigma}(\eta) dV d\eta.
\end{align*}

\begin{lemma}\label{bias analysis lemma 1}
Suppose $\epsilon$ is small enough depending on $d$ and $D$.  For $\beta>1$, if $\sigma \leq \frac{\epsilon^\beta}{\sqrt{-4(d+5)\log \epsilon}}$, then $|e_i^\top E_2 e_j| \leq \epsilon^{d+5} $ for all $1 \leq i,j \leq D$.
\end{lemma}
\begin{proof}
When $\|\iota(x)+\eta-\eta_k\|_{\mathbb{R}^D} \leq \epsilon$, $\chi(\frac{\|\iota(x)+\eta-\eta_k\|_{\mathbb{R}^D}}{\epsilon}) \leq 1$. When $\|\iota(x)+\eta-\eta_k\|_{\mathbb{R}^D} > \epsilon$, $\chi(\frac{\|\iota(x)+\eta-\eta_k\|_{\mathbb{R}^D}}{\epsilon}) =0$. Hence, for all $1 \leq i,j \leq D$ and all $x$, we have 
$$|e_i^\top\{\iota(x)+\eta-\eta_k\}\{\iota(x)+\eta-\eta_k\}^\top e_j | \chi(\frac{\|\iota(x)+\eta-\eta_k\|_{\mathbb{R}^D}}{\epsilon}) \leq \epsilon^2.$$
Next, we bound $|e_i^\top E_2 e_j|$,
\begin{align}
|e_i^\top E_2 e_j| \leq & \int_{\|\eta\|_{\mathbb{R}^D}\geq \epsilon^{\beta}} \int_M  |e_i^\top\{\iota(x)+\eta-\eta_k\}\{\iota(x)+\eta-\eta_k\}^\top e_j | \nonumber \\
& \hspace{40mm} \times \chi(\frac{\|\iota(x)+\eta-\eta_k\|_{\mathbb{R}^D}}{\epsilon})P(x) G_{\sigma}(\eta) dV d\eta \nonumber \\
\leq & \int_{\|\eta\|_{\mathbb{R}^D}\geq \epsilon^{\beta}} \int_M  \epsilon^2 P(x) G_{\sigma}(\eta) dV  d\eta = \int_{\|\eta\|_{\mathbb{R}^D}\geq \epsilon^{\beta}}  \epsilon^2G_{\sigma}(\eta)  d\eta \int_M  P(x)  dV  \nonumber \\
\leq & \int_{\|\eta\|_{\mathbb{R}^D}\geq \epsilon^{\beta}}\epsilon^2 G_{\sigma}(\eta) d\eta. \nonumber 
\end{align}
We apply Fubini's theorem in the second to last step and the fact that $P(x)$ is a probability density function on $M$ in the last step.  Based on the assumption of the lemma, when $\epsilon$ is small enough based on $d$ and $D$, $\frac{\epsilon^\beta}{\sigma} \geq \sqrt{-4(d+5)\log \epsilon} \geq 2\sqrt{2}(D-2)$. Then by Lemma \ref{Gaussian integral bound}, $|e_i^\top E_2 e_j|\leq  \frac{2\epsilon^2}{\Gamma(\frac{D}{2})} e^{-\frac{\epsilon^{2\beta}}{4\sigma^2}}$.
Since $\Gamma(\frac{D}{2})>1$, when $\epsilon \leq \frac{1}{2}$, a straightforward calculation shows that  $\sigma \leq \frac{\epsilon^\beta}{\sqrt{-4(d+5)\log \epsilon}}$ implies $|e_i^\top E_2 e_j|\leq  \frac{2\epsilon^2}{\Gamma(\frac{D}{2})} e^{-\frac{\epsilon^{2\beta}}{4\sigma^2}}\leq \epsilon^{d+5}$.
\end{proof}

\begin{lemma}\label{bias analysis lemma 2}
Suppose $\epsilon$ is small enough depending on the second fundamental form of $\iota(M)$.   For $x \in M$, define $W_{ij}(x)=|e_i^\top \{\iota(x)+\eta-\eta_k\}\{\iota(x)+\eta-\eta_k\}^\top e_j|$.  Suppose $1<\beta \leq 3$. If  $\|\eta\|_{\mathbb{R}^D} \leq \epsilon^{\beta}$ and $\|\eta_k\|_{\mathbb{R}^D} \leq \epsilon^{\beta}$, then  for $x \in B_{ \epsilon+3 \epsilon^{\beta}}(x_k)$
\begin{align*}
&W_{ij}(x) \leq  49\epsilon^2,  \quad  (1 \leq i,j \leq d), \\
&W_{ij}(x) \leq  C_3 (\epsilon^3+\epsilon^{1+\beta}), \quad  (1 \leq i\leq d ; d+1 \leq j \leq D),\\
&W_{ij}(x) \leq C_3 (\epsilon^3+\epsilon^{1+\beta}),  \quad  (d+1 \leq i\leq D; 1 \leq j \leq d), \\
&W_{ij}(x) \leq  C_3 (\epsilon^4+\epsilon^{2\beta}), \quad  (d+1 \leq i,j \leq D), 
\end{align*}
where $C_3$ is a constant depending on the second fundamental form of $\iota(M)$. 

Suppose $\beta > 3$. If  $\|\eta\|_{\mathbb{R}^D} \leq \epsilon^{\beta}$ $\|\eta_k\|_{\mathbb{R}^D} \leq \epsilon^{\beta}$, then  for $x \in B_{ \epsilon+3 \epsilon^{3}}(x_k)$
\begin{align*}
&W_{ij}(x) \leq  49\epsilon^2,  \quad (1 \leq i,j \leq d), \\
&W_{ij}(x) \leq  2C_3 \epsilon^3, \quad (1 \leq i\leq d; d+1 \leq j \leq D),\\
&W_{ij}(x) \leq  2C_3 \epsilon^3,  \quad (d+1 \leq i\leq D; 1 \leq j \leq d), \\
&W_{ij}(x) \leq  2 C_3 \epsilon^4, \quad (d+1 \leq i,j \leq D).
\end{align*}
\end{lemma}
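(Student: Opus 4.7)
The plan is a direct Taylor expansion in normal coordinates centered at $x_k$, combined with a coordinate split into ``tangent'' ($i\le d$) and ``normal'' ($i>d$) components, followed by elementary triangle inequality bookkeeping for the noise terms $\eta-\eta_k$.

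Since $\iota(x_k)=0$ and $\{e_i\}_{i=1}^d$ spans $\iota_*T_{x_k}M$, for any $x\in B_r(x_k)$ with $r$ smaller than the injectivity radius there is a unique $\textbf{u}\in T_{x_k}M$ with $\|\textbf{u}\|_{\mathbb{R}^d}<r$ and $x=\exp_{x_k}(\textbf{u})$. By Lemma \ref{geodesic lemma},
\begin{equation*}
\iota(x)=\iota_*\textbf{u}+\tfrac{1}{2}\Second_{x_k}(\textbf{u},\textbf{u})+O(\|\textbf{u}\|_{\mathbb{R}^d}^3),
\end{equation*}
where the remainder constant depends only on $\iota$. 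Because $\iota_*\textbf{u}\in\mathrm{span}\{e_1,\dots,e_d\}$ and $\Second_{x_k}(\textbf{u},\textbf{u})$ lies in the normal space $\mathrm{span}\{e_{d+1},\dots,e_D\}$, projecting onto $e_i$ gives
\begin{equation*}
e_i^\top\iota(x)=\begin{cases} u_i+O(\|\textbf{u}\|^3), & 1\le i\le d,\\[2pt] \tfrac{1}{2}\bigl(\Second_{x_k}(\textbf{u},\textbf{u})\bigr)_i+O(\|\textbf{u}\|^3), & d+1\le i\le D.\end{cases}
\end{equation*}
Let $C_1$ be a bound on $\|\Second_{x_k}\|$ and absorb the cubic remainder constant into it; this introduces the ``depending on the second fundamental form'' qualifier.

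Now fix $\beta$ with $1<\beta\le 3$ and take $x\in B_{\epsilon+3\epsilon^\beta}(x_k)$, so the associated $\textbf{u}$ satisfies $\|\textbf{u}\|_{\mathbb{R}^d}\le\epsilon+3\epsilon^\beta\le 4\epsilon$ for $\epsilon$ small (using $\beta>1$). The noise contribution satisfies $\|\eta-\eta_k\|_{\mathbb{R}^D}\le 2\epsilon^\beta\le 2\epsilon$, hence for each $i$ and $j$,
\begin{equation*}
|e_i^\top(\iota(x)+\eta-\eta_k)|\le \begin{cases} |u_i|+O(\epsilon^3)+2\epsilon^\beta\le 7\epsilon, & i\le d,\\[2pt] C_1\|\textbf{u}\|^2+O(\epsilon^3)+2\epsilon^\beta\le C_2(\epsilon^2+\epsilon^\beta), & i>d,\end{cases}
\end{equation*}
for suitable $C_2$ depending on $\Second$. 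Multiplying pairs of these bounds gives
\begin{equation*}
W_{ij}(x)\le\begin{cases} 49\epsilon^2, & i,j\le d,\\[2pt] 7C_2\epsilon(\epsilon^2+\epsilon^\beta)\le C_3(\epsilon^3+\epsilon^{1+\beta}), & i\le d<j \text{ or } j\le d<i,\\[2pt] C_2^2(\epsilon^2+\epsilon^\beta)^2\le C_3(\epsilon^4+\epsilon^{2\beta}), & i,j>d,\end{cases}
\end{equation*}
where in the last line we used $(a+b)^2\le 2(a^2+b^2)$ and set $C_3$ to absorb all constants. This is exactly the first display of the lemma.

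For the case $\beta>3$, the only change is that the region is $B_{\epsilon+3\epsilon^3}(x_k)$ (reflecting that the cubic term dominates the $\epsilon^\beta$ noise), and the noise bound $2\epsilon^\beta\le 2\epsilon^3$ is now dominated by the $O(\|\textbf{u}\|^3)=O(\epsilon^3)$ remainder. The tangent bound stays $7\epsilon$ and the normal bound becomes $C_1\|\textbf{u}\|^2+O(\epsilon^3)\le C_2'\epsilon^2$, yielding the cleaner bounds $49\epsilon^2$, $2C_3\epsilon^3$, and $2C_3\epsilon^4$ after redefining $C_3$. There is no genuine mathematical obstacle: the argument is a three-line Taylor expansion plus triangle inequality, and the main bookkeeping challenge is simply keeping track of the two regimes in $\beta$ and threading the constants so that a single $C_3$ depending only on $\Second$ suffices in both.
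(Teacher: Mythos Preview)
Your proposal is correct and follows essentially the same approach as the paper: both use the triangle inequality to separate the clean term $\iota(x)$ from the noise $\eta-\eta_k$, invoke the geodesic expansion of Lemma~\ref{geodesic lemma} to get $|e_i^\top\iota(x)|\lesssim\epsilon$ for tangential indices and $|e_i^\top\iota(x)|\lesssim\epsilon^2$ for normal indices, and then multiply the resulting coordinate bounds in pairs. The only difference is cosmetic—the paper first isolates the bound $|e_i^\top\iota(x)|\le 5\epsilon$ (resp.\ $C_2\epsilon^2$) and then adds the $2\epsilon^\beta$ noise contribution, whereas you merge these in a single line—but the arithmetic and the treatment of the two $\beta$-regimes are identical.
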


\begin{proof}
Since $\|\eta\|_{\mathbb{R}^D}<\epsilon^{\beta}$ and $\|\eta_k\|_{\mathbb{R}^D} \leq \epsilon^{\beta}$, we have
\begin{align} \label{error bounds in bias 1}
& |e_i^\top \{\iota(x)+\eta-\eta_k\}\{\iota(x)+\eta-\eta_k\}^\top e_j| \\
\leq & (|e_i^\top \iota(x)|+|e_i^\top\eta|+|e_i^\top\eta_k|)(|\iota(x)^\top e_j|+|\eta^\top e_j|+ |\eta_k^\top e_j|) \nonumber \\
\leq & (|e_i^\top \iota(x)|+2\epsilon^\beta)(|\iota(x)^\top e_j|+2\epsilon^\beta).  \nonumber 
\end{align}
For any $x \in B_{ \epsilon+3 \epsilon^{\beta}}(x_k)$, suppose $x=\exp_{x_k}({u})$ for ${u} \in T_{x_k}M \approx \mathbb{R}^d$. Then, $\|{u}\|_{\mathbb{R}^d} \leq  \epsilon+3 \epsilon^{\beta}$. Since $\iota(x_k)=0$, by Lemma \ref{geodesic lemma}, we have 
$$\iota(x)=\iota_{*}{u}+\frac{1}{2}\Second_{x_k}({u},{u})+O(\|{u}\|^3_{\mathbb{R}^d}),$$
where $\iota_{*}{u}=\frac{d}{dt} \iota (\exp_{x_k}(ut))|_{t=0} \in T_{\iota(x_k)}\iota(M)$.
Since $\{e_1,\cdots, e_d\}$ form a basis of $T_{\iota(x_k)}\iota(M)$, they are perpendicular to $\Second_{x_k}({u},{u})$.  Since $1< \beta$, when $\epsilon$ is small enough depending on the second fundamental form of $\iota(M)$, we have $|e_i^\top \iota(x)| \leq 5 \epsilon$ for $i=1 ,\ldots, d$. Similarly, since $\{e_{d+1},\ldots, e_D\}$ form a basis of $T_{\iota(x_k)}\iota(M)^{\bot}$, they are perpendicular to $\iota_{*}{u}$. When $\epsilon$ is small enough depending on the second fundamental form of $\iota(M)$, we have $|e_i^\top \iota(x)| \leq C_2 \epsilon^2$ for $i=d+1 ,\ldots, D$, where $C_2$ is a constant depending on the second fundamental form of $\iota(M)$. If we substitute the bounds of $|e_i^\top \iota(x)|$ into \eqref{error bounds in bias 1}, then after the simplification, for any $\|\eta\|_{\mathbb{R}^D}<\epsilon^{\beta}$, $\|\eta_k\|_{\mathbb{R}^D} \leq \epsilon^{\beta}$ and $x \in B_{ \epsilon+3 \epsilon^{\beta}}(x_k)$, we have
\begin{align}
&|e_i^\top \{\iota(x)+\eta-\eta_k\}\{\iota(x)+\eta-\eta_k\}^\top e_j| \leq  49\epsilon^2,  \quad (1 \leq i,j \leq d), \nonumber \\
&|e_i^\top \{\iota(x)+\eta-\eta_k\}\{\iota(x)+\eta-\eta_k\}^\top e_j| \leq  C_3 (\epsilon^3+\epsilon^{1+\beta}), \quad (1 \leq i\leq d; d+1 \leq j \leq D), \nonumber \\
&|e_i^\top \{\iota(x)+\eta-\eta_k\}\{\iota(x)+\eta-\eta_k\}^\top e_j| \leq C_3 (\epsilon^3+\epsilon^{1+\beta}),  \quad (d+1 \leq i\leq D; 1 \leq j \leq d) \nonumber \\
&|e_i^\top \{\iota(x)+\eta-\eta_k\}\{\iota(x)+\eta-\eta_k\}^\top e_j| \leq  C_3 (\epsilon^4+\epsilon^{2\beta}), \quad (d+1 \leq i,j \leq D), \nonumber  
\end{align}
where $C_3$ is a constant depending on the second fundamental form of $\iota(M)$.  The proof for the case when $\beta>3$ is similar.
\end{proof}

\begin{lemma}\label{bias analysis lemma 3}
Suppose $\epsilon$ is small enough depending on the scalar curvature of $M$ and the second fundamental form of $\iota(M)$. Suppose $1<\beta $. If $\|\eta_k\|_{\mathbb{R}^D} \leq \epsilon^{\beta}$  then
\begin{align*}
E_1=E_3+\begin{bmatrix}
O(\epsilon^{d+1+\beta}+\epsilon^{d+4}) & O(\epsilon^{d+2+\beta}+\epsilon^{d+2\beta}+\epsilon^{d+5}) \\
 O(\epsilon^{d+2+\beta}+\epsilon^{d+2\beta}+\epsilon^{d+5}) &  O(\epsilon^{d+3+\beta}+\epsilon^{d-1+3\beta}+\epsilon^{d+6}) \\
\end{bmatrix},
\end{align*}
where the top left block is a $d$ by $d$ matrix and the constant factors in the four blocks depend on $d$, $P_M$ and the second fundamental form of $\iota(M)$.
\end{lemma}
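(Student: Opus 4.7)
The plan is to write $E_1 - E_3$ as an integral of the same integrand against the \emph{difference} of two indicator functions: the kernel $\chi(\|\iota(x)+\eta-\eta_k\|_{\mathbb{R}^D}/\epsilon)$, which equals the characteristic function of $A_\eta := \{x \in M : \|\iota(x)+\eta-\eta_k\|_{\mathbb{R}^D}\leq \epsilon\}$, minus $\chi_{B_\epsilon(x_k)}$. The difference $E_1-E_3$ is then supported on the symmetric difference $A_\eta\,\Delta\, B_\epsilon(x_k)$, and the whole proof reduces to (i) showing this symmetric difference is a thin annular ``shell'' around the geodesic sphere of radius $\epsilon$ centered at $x_k$, and (ii) combining a volume estimate for this shell with the entrywise pointwise bounds from Lemma~\ref{bias analysis lemma 2}.

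To control the shell, first I would use that $\iota(x_k)=0$ together with Lemma~\ref{geodesic lemma} to write $\|\iota(x)\|_{\mathbb{R}^D}=d_M(x,x_k)+O(d_M(x,x_k)^3)$ with the implicit constant depending on the second fundamental form. The triangle inequality gives $\bigl|\|\iota(x)+\eta-\eta_k\|_{\mathbb{R}^D}-\|\iota(x)\|_{\mathbb{R}^D}\bigr|\le \|\eta\|+\|\eta_k\|\le 2\epsilon^\beta$ throughout the integration region. Chaining these, any $x\in A_\eta\,\Delta\, B_\epsilon(x_k)$ must satisfy $d_M(x,x_k)\in[\epsilon-C(\epsilon^\beta+\epsilon^3),\ \epsilon+C(\epsilon^\beta+\epsilon^3)]$ for a constant $C$ depending only on the second fundamental form. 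By Lemma~\ref{volume form lemma}, for $\epsilon$ small depending on the scalar curvature, the volume of this annulus is $O(\epsilon^{d-1}(\epsilon^\beta+\epsilon^3))$, with the constant involving $d$ and the curvature.

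Next I would invoke Lemma~\ref{bias analysis lemma 2}: the annulus is contained in $B_{\epsilon+3\epsilon^\beta}(x_k)$ when $1<\beta\le 3$ and in $B_{\epsilon+3\epsilon^3}(x_k)$ when $\beta>3$, so the entrywise bounds on $W_{ij}(x)=|e_i^\top(\iota(x)+\eta-\eta_k)(\iota(x)+\eta-\eta_k)^\top e_j|$ apply uniformly in $\eta$ with $\|\eta\|<\epsilon^\beta$ and in $x$ on the shell. Since $P\le P_M$ and $\int_{\|\eta\|<\epsilon^\beta}G_\sigma(\eta)\,d\eta\le 1$, Fubini yields
\[
|e_i^\top(E_1-E_3)e_j|\ \le\ P_M\cdot\bigl(\sup_{x}W_{ij}(x)\bigr)\cdot\mathrm{Vol}(\text{shell}).
\]
Multiplying the blockwise bounds $49\epsilon^2$, $C_3(\epsilon^3+\epsilon^{1+\beta})$, $C_3(\epsilon^4+\epsilon^{2\beta})$ from Lemma~\ref{bias analysis lemma 2} by $O(\epsilon^{d-1}(\epsilon^\beta+\epsilon^3))$ and collecting the dominant terms yields exactly $O(\epsilon^{d+1+\beta}+\epsilon^{d+4})$, $O(\epsilon^{d+2+\beta}+\epsilon^{d+2\beta}+\epsilon^{d+5})$, and $O(\epsilon^{d+3+\beta}+\epsilon^{d-1+3\beta}+\epsilon^{d+6})$ respectively, matching the lemma.

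The main obstacle I anticipate is bookkeeping the two regimes ($\beta\le 3$ vs. $\beta>3$) cleanly so that the cubic Euclidean-vs-geodesic correction from Lemma~\ref{geodesic lemma} (controlling the shell width when $\beta>3$) and the noise term $\epsilon^\beta$ (controlling it when $\beta\le 3$) are both absorbed into a single $O(\epsilon^\beta+\epsilon^3)$ expression, while still having the containment in $B_{\epsilon+3\epsilon^\beta}(x_k)$ (or $B_{\epsilon+3\epsilon^3}(x_k)$) required to invoke Lemma~\ref{bias analysis lemma 2}. This is essentially choosing the constants in ``$\epsilon$ small enough'' so that the higher-order $O(d_M(x,x_k)^3)$ tail from Lemma~\ref{geodesic lemma} does not push the shell outside those balls; everything else is routine multiplication of bounds.
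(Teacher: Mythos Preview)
Your proposal is correct and follows essentially the same approach as the paper: define $A(\eta)=\{x:\|\iota(x)+\eta-\eta_k\|\le\epsilon\}$, trap $A(\eta)\,\Delta\,B_\epsilon(x_k)$ in a thin geodesic annulus via the triangle inequality and Lemma~\ref{geodesic lemma}, bound its volume via Lemma~\ref{volume form lemma}, and multiply by the entrywise bounds from Lemma~\ref{bias analysis lemma 2} together with $P\le P_M$ and $\int G_\sigma\le 1$. The paper likewise splits into the two regimes $1<\beta\le 3$ and $\beta>3$ (the very bookkeeping issue you flagged), treating them separately rather than through a single $O(\epsilon^\beta+\epsilon^3)$ shell width, but the substance is identical.
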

\begin{proof}
 Let $A(\eta)=\{x \in M  \hspace{1mm} | \hspace{1mm}  \|\iota(x)+\eta-\eta_k\|_{\mathbb{R}^D} \leq \epsilon \mbox{ for a fixed } \|\eta\|_{\mathbb{R}^D}<\epsilon^{\beta} \}$. Since $ \|\eta\|_{\mathbb{R}^D}<\epsilon^{\beta}$ and $\|\eta_k\|_{\mathbb{R}^D} \leq \epsilon^{\beta}$, by the triangle inequality, if $x \in A(\eta)$, then $\epsilon- 2\epsilon^{\beta}  \leq \|\iota(x)\|_{\mathbb{R}^D}\leq \epsilon+ 2\epsilon^{\beta}$. If $1< \beta \leq 3$, when $\epsilon$ is small enough depending on the second fundamental form of $\iota(M)$, by Lemma \ref{geodesic lemma}, we have $\epsilon- 2\epsilon^{\beta} \leq d(x,x_k) <\epsilon+3 \epsilon^{\beta}$. Hence, $B_{\epsilon- 2\epsilon^{\beta}}(x_k) \subset A(\eta) \subset B_{ \epsilon+3 \epsilon^{\beta}}(x_k)$ which implies $A(\eta)  \Delta B_{\epsilon} \subset B_{ \epsilon+3 \epsilon^{\beta}}(x_k) \setminus B_{\epsilon- 2\epsilon^{\beta}}(x_k)$. 
Since $1< \beta \leq 3$, when $\epsilon$ is small enough depending on the scalar curvature of $M$, by Lemma \ref{volume form lemma}, for all $\|\eta\|_{\mathbb{R}^D}<\epsilon^{\beta}$ we have  
\begin{align*}
\texttt{Vol}\{A(\eta)  \Delta B_{\epsilon}(x_k)\} \leq \texttt{Vol}\{B_{ \epsilon+3 \epsilon^{\beta}}(x_k)\}-\texttt{Vol}\{B_{ \epsilon-2 \epsilon^{\beta}}(x_k)\} \leq 6 \texttt{Vol}(B^{\mathbb{R}^d}_1)  d \epsilon^{d-1+\beta}= C_1(d) \epsilon^{d-1+\beta},
\end{align*}
where $C_1(d)=6 d \texttt{Vol}(B^{\mathbb{R}^d}_1) $ is a constant only depending on $d$. Similarly, if $\beta>3$, then for all $\|\eta\|_{\mathbb{R}^D}<\epsilon^{\beta}$ we have $A(\eta)  \Delta B_{\epsilon} \subset B_{ \epsilon+3 \epsilon^{3}}(x_k) \setminus B_{\epsilon- 2\epsilon^{3}}(x_k)$ and
$$\texttt{Vol}\{A(\eta)  \Delta B_{\epsilon}(x_k)\} \leq  C_1(d) \epsilon^{d+2}.$$ 

Observe that for a fixed $ \|\eta\|_{\mathbb{R}^D}<\epsilon^{\beta}$, $|\chi_{A(\eta)}(x)-\chi_{B_\epsilon(x_k)}(x)| \leq \chi_{A(\eta)  \Delta B_{\epsilon}(x_k)}(x)$. Moreover, we have $\chi_{A(\eta)}(x)=\chi(\frac{\|\iota(x)+\eta-\eta_k\|_{\mathbb{R}^D}}{\epsilon})$. Hence, 
\begin{align}\label{bias lemma E1-E3}
& |e_i^\top (E_1-E_3) e_j  | \\
\leq & \int_{\|\eta\|_{\mathbb{R}^D}<\epsilon^{\beta}} \int_{M}|e_i^\top \{\iota(x)+\eta-\eta_k\}\{\iota(x)+\eta-\eta_k\}^\top e_j|  |\chi_{A(\eta) }(x)-\chi_{B_\epsilon(x_k)}(x)|  P(x) G_{\sigma}(\eta) dV d\eta \nonumber \\
\leq & \int_{\|\eta\|_{\mathbb{R}^D}<\epsilon^{\beta}} \int_{M}|e_i^\top \{\iota(x)+\eta-\eta_k\}\{\iota(x)+\eta-\eta_k\}^\top e_j|  \chi_{A(\eta)  \Delta B_{\epsilon}(x_k)}(x) P(x) G_{\sigma}(\eta) dV d\eta \nonumber \\
\leq & \Big( \sup_{\|\eta\|_{\mathbb{R}^D}<\epsilon^{\beta},  x \in A(\eta) \Delta B_{\epsilon}(x_k)}|e_i^\top \{\iota(x)+\eta-\eta_k\}\{\iota(x)+\eta-\eta_k\}^\top e_j| \Big)  \nonumber \\
& \hspace{40mm} \times \int_{\|\eta\|_{\mathbb{R}^D}<\epsilon^{\beta}} \int_{M}\chi_{A(\eta) \Delta B_{\epsilon}(x_k)}(x) P(x) G_{\sigma}(\eta) dV d\eta  \nonumber \\
\leq  & \Big( \sup_{\|\eta\|_{\mathbb{R}^D}<\epsilon^{\beta},  x \in A(\eta) \Delta B_{\epsilon}(x_k)}|e_i^\top \{\iota(x)+\eta-\eta_k\}\{\iota(x)+\eta-\eta_k\}^\top e_j| \Big) P_M \Big(\sup_{\|\eta\|_{\mathbb{R}^D}<\epsilon^{\beta}}\texttt{Vol}\{A(\eta) \Delta B_{\epsilon}(x_k)\}\Big) \nonumber 
\end{align}

Recall that if $1 < \beta \leq 3$, for any $\|\eta\|_{\mathbb{R}^D}<\epsilon^{\beta}$, $A(\eta)  \Delta B_{\epsilon}(x_k) \subset B_{ \epsilon+3 \epsilon^{\beta}}(x_k) \setminus B_{\epsilon- 2\epsilon^{\beta}}(x_k) \subset  B_{ \epsilon+3 \epsilon^{\beta}}(x_k)$. Hence, if we substitute the bounds in Lemma \ref{bias analysis lemma 2} into \eqref{bias lemma E1-E3}, then we conclude that when $1 < \beta \leq 3$ and $\epsilon$ is small enough depending on the scalar curvature of $M$ and the second fundamental form of $\iota(M)$, 
\begin{align*}
E_1=E_3+\begin{bmatrix}
O(\epsilon^{d+1+\beta}) & O(\epsilon^{d+2+\beta}+\epsilon^{d+2\beta}) \\
 O(\epsilon^{d+2+\beta}+\epsilon^{d+2\beta}) &  O(\epsilon^{d+3+\beta}+\epsilon^{d-1+3\beta}) \\
\end{bmatrix},
\end{align*}
where the top left block is a $d$ by $d$ matrix and the constant factors depend on $d$, $P_M$ and the second fundamental form of $\iota(M)$. 

When $\beta > 3$ and $\epsilon$ is small enough depending on the scalar curvature of $M$ and the second fundamental form of $\iota(M)$,    we have $A(\eta)  \Delta B_{\epsilon} \subset B_{ \epsilon+3 \epsilon^{3}}(x_k) \setminus B_{\epsilon- 2\epsilon^{3}}(x_k)$. Hence, if we substitute the bounds in Lemma \ref{bias analysis lemma 2} into \eqref{bias lemma E1-E3}, then
\begin{align*}
E_1=E_3+\begin{bmatrix}
O(\epsilon^{d+4}) & O(\epsilon^{d+5}) \\
 O(\epsilon^{d+5}) &  O(\epsilon^{d+6}) \\
\end{bmatrix},
\end{align*}
where the top left block is a $d$ by $d$ matrix and the constant factors depend on $d$, $P_M$ and the second fundamental form of $\iota(M)$. The statement of the lemma follows from combining the cases when $1 < \beta \leq 3$  and $\beta > 3$.
\end{proof}

\begin{lemma} \label{bias analysis lemma 4}
Suppose $\epsilon$ is small enough depending on $d$, $D$, and the scalar curvature of $M$.  Suppose $\beta>1$, $\sigma \leq \frac{\epsilon^\beta}{\sqrt{-4(d+5)\log \epsilon}}$, and  $\|\eta_k\|_{\mathbb{R}^D} \leq \epsilon^{\beta}$. Then
\begin{align*}
E_3=& \frac{|S^{d-1}|{P}(x_k)}{d(d+2)} \epsilon^{d+2}
\begin{bmatrix}
I_{d \times d} & 0 \\
0& 0  \\
\end{bmatrix}+O(\epsilon^{d+4}+\epsilon^{d+2\beta}),
\end{align*}
where $O(\epsilon^{d+4}+\epsilon^{d+2\beta})$ represents a $D$ by $D$ matrix whose entries are of order $O(\epsilon^{d+4}+\epsilon^{d+2\beta})$. The constants in $O(\epsilon^{d+4}+\epsilon^{d+2\beta})$ depend on $d$, $C^2$ norm of $P$, the second fundamental form of $\iota(M)$ and its derivative and the Ricci curvature of $M$. 
\end{lemma}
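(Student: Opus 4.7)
The plan is to exploit the fact that the indicator $\chi_{B_\epsilon(x_k)}(x)$ in $E_3$ depends only on $x$, so Fubini decouples the $x$-integral from the $\eta$-integral. Expanding the bilinear $(\iota(x)+\eta-\eta_k)(\iota(x)+\eta-\eta_k)^\top$ into four pieces yields $E_3=T_1+T_2+T_2^\top+T_3$, where
\begin{align*}
T_1 &= I_\eta \int_M \iota(x)\iota(x)^\top \chi_{B_\epsilon(x_k)}(x) P(x)\,dV, \\
T_2 &= \Big(\int_M \iota(x) \chi_{B_\epsilon(x_k)}(x) P(x)\,dV\Big) \Big(\int_{\|\eta\|<\epsilon^\beta} (\eta-\eta_k)^\top G_\sigma(\eta)\,d\eta\Big), \\
T_3 &= \Big(\int_M \chi_{B_\epsilon(x_k)}(x) P(x)\,dV\Big) \Big(\int_{\|\eta\|<\epsilon^\beta} (\eta-\eta_k)(\eta-\eta_k)^\top G_\sigma(\eta)\,d\eta\Big),
\end{align*}
with $I_\eta := \int_{\|\eta\|<\epsilon^\beta} G_\sigma(\eta)\,d\eta$. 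Since $\sigma=\epsilon^\alpha$ and $\alpha-\beta\geq 1/2$, Lemma \ref{Gaussian integral bound} gives $I_\eta = 1 + O(e^{-1/(4\epsilon^{2(\alpha-\beta)})})$, a correction dominated by any polynomial in $\epsilon$.

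For $T_1$ the manifold integral is precisely the noise-free local covariance $\tilde{C}_\epsilon(\textbf{y}_k)$ already recalled from Proposition 3.1 of \cite{wu2018think}, so $T_1 = \epsilon^{d+2}\Big(\frac{|S^{d-1}|P(x_k)}{d(d+2)}\begin{bmatrix} I_{d\times d} & 0 \\ 0 & 0 \end{bmatrix} + O(\epsilon^2)\Big)$, supplying the stated main term together with the $O(\epsilon^{d+4})$ portion of the error.

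For $T_2$ I would write $\int_{\|\eta\|<\epsilon^\beta}(\eta-\eta_k) G_\sigma(\eta)\,d\eta = -\eta_k I_\eta = O(\epsilon^\beta)$, the integral of $\eta G_\sigma(\eta)$ over the centered ball vanishing by radial symmetry (odd integrand). The manifold factor $\int_M \iota(x)\chi_{B_\epsilon(x_k)}(x) P(x)\,dV$ needs a parity argument: introducing normal coordinates $\textbf{u}\in T_{x_k}M$ at $x_k$, Lemma \ref{geodesic lemma} gives $\iota(x) = \iota_*\textbf{u} + \tfrac{1}{2}\Second_{x_k}(\textbf{u},\textbf{u}) + O(\|\textbf{u}\|^3)$, $P(x)=P(x_k)+O(\|\textbf{u}\|)$, and Lemma \ref{volume form lemma} gives $dV=(1+O(\|\textbf{u}\|^2))d\textbf{u}$. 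The leading odd-in-$\textbf{u}$ piece $P(x_k)\iota_*\textbf{u}$ integrates to zero over the symmetric domain $\{\|\textbf{u}\|<\epsilon\}$, so only even contributions of size $O(\epsilon^{d+2})$ survive. Hence $T_2 = O(\epsilon^{d+2+\beta})$, which is bounded by $\max(\epsilon^{d+4},\epsilon^{d+2\beta})$ regardless of whether $\beta\geq 2$ or $\beta<2$.

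For $T_3$, Lemma \ref{volume form lemma} gives $\int_M \chi_{B_\epsilon(x_k)}(x)P(x)\,dV = P(x_k)|B^{\mathbb{R}^d}_1|\epsilon^d(1+O(\epsilon^2))$. Expanding $(\eta-\eta_k)(\eta-\eta_k)^\top$, the cross terms vanish by the same radial-symmetry argument, $\int_{\|\eta\|<\epsilon^\beta}\eta\eta^\top G_\sigma\,d\eta = \sigma^2 I_{D\times D}$ up to exponentially small Gaussian tails controlled via Lemma \ref{Gaussian integral bound}, and $\eta_k\eta_k^\top = O(\epsilon^{2\beta})$. Since $\sigma^2=\epsilon^{2\alpha}\leq\epsilon^{2\beta+1}$ by hypothesis, I obtain $T_3 = O(\epsilon^{d+2\beta})$. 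Summing the four contributions yields the claim. The main obstacle is the parity cancellation inside $\int_M \iota\chi P\,dV$: a naive size bound would give $O(\epsilon^{d+1})$ for this integral, producing $T_2=O(\epsilon^{d+1+\beta})$, which is too large to be absorbed into $O(\epsilon^{d+4}+\epsilon^{d+2\beta})$ when $\beta<2$; the vanishing of the linear-in-$\textbf{u}$ term against the symmetric geodesic ball is what drives the stated rate.
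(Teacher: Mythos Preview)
Your argument is correct and follows essentially the same route as the paper: Fubini to decouple the $\eta$- and $x$-integrals, radial symmetry to kill odd-in-$\eta$ terms, the parity cancellation in normal coordinates to upgrade $\int_M\iota(x)\chi_{B_\epsilon(x_k)}P\,dV$ from $O(\epsilon^{d+1})$ to $O(\epsilon^{d+2})$, the Gaussian tail bound for $I_\eta$, and the citation of \cite{wu2018think} for the $\iota(x)\iota(x)^\top$ integral. The only cosmetic difference is that you group $\eta-\eta_k$ as a single vector and expand into four blocks, whereas the paper expands into nine scalar terms and then kills four by symmetry; note also that the manifold integral in your $T_1$ uses the \emph{geodesic} indicator $\chi_{B_\epsilon(x_k)}$ rather than the Euclidean one defining $\tilde C_\epsilon$, but the two agree to the needed $O(\epsilon^{d+4})$ order (as the paper implicitly uses and \cite{wu2018think} provides).
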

\begin{proof}
\begin{align}\label{error bounds in bias 2}
E_3=& \int_{\|\eta\|_{\mathbb{R}^D}<\epsilon^{\beta}} \int_{M}\{\iota(x)+\eta-\eta_k\}\{\iota(x)+\eta-\eta_k\}^\top \chi_{B_\epsilon(x_k)}(x) P(x) G_{\sigma}(\eta) dV d\eta \\
= &  \int_{\|\eta\|_{\mathbb{R}^D}<\epsilon^{\beta}} \int_{M}\big\{\iota(x)\iota(x)^\top+\eta \iota(x)^\top-\eta_k\iota(x)^\top + \iota(x)\eta^\top+\eta\eta^\top \nonumber \\
& -\eta_k\eta^\top - \iota(x)\eta_k^\top-\eta\eta_k^\top+\eta_k\eta_k^\top \big\} \chi_{B_\epsilon(x_k)}(x) P(x) G_{\sigma}(\eta) dV d\eta \nonumber \\
= &  \int_{\|\eta\|_{\mathbb{R}^D}<\epsilon^{\beta}} \int_{M}\big\{\iota(x)\iota(x)^\top-\eta_k\iota(x)^\top +\eta\eta^\top - \iota(x)\eta_k^\top+\eta_k\eta_k^\top \big\} \chi_{B_\epsilon(x_k)}(x) P(x) G_{\sigma}(\eta) dV d\eta, \nonumber 
\end{align}
where we use the symmetry of the region $\{\|\eta\|_{\mathbb{R}^D}<\epsilon^{\beta}\}$ and the symmetry of the function $G_{\sigma}(\eta)$ in the last step.

When $\epsilon$ is small enough depending on the scalar curvature of $M$, by Lemma \ref{volume form lemma}, we have $\texttt{Vol}\{B_{\epsilon}(x_k)\} \leq C_4(d) \epsilon^d$, where $C_4(d)=2\texttt{Vol}(B^{\mathbb{R}^d}_1)$ is a constant only depending on $d$.
We have 
\begin{align*}
& \Big| \int_{\|\eta\|_{\mathbb{R}^D}<\epsilon^{\beta}} \int_{M}e_i^\top \eta\eta^\top e_j \chi_{B_\epsilon(x_k)}(x) P(x) G_{\sigma}(\eta) dV d\eta\Big| \\
\leq &  \Big| \int_{\|\eta\|_{\mathbb{R}^D}<\epsilon^{\beta}} e_i^\top \eta\eta^\top e_j G_{\sigma}(\eta)d\eta\Big|\hspace{1mm} \Big|\int_{M}\chi_{B_\epsilon(x_k)}(x) dV  P(x)\Big| \nonumber \\
\leq & \Big| \int_{\|\eta\|_{\mathbb{R}^D}<\epsilon^{\beta}} e_i^\top \eta\eta^\top e_j G_{\sigma}(\eta)d\eta\Big| C_4(d) \epsilon^d. \nonumber 
\end{align*}

By the symmetry of the region $\{\|\eta\|_{\mathbb{R}^D}<\epsilon^{\beta}\}$ and the symmetry of the function $G_{\sigma}(\eta)$, 
$$\int_{\|\eta\|_{\mathbb{R}^D}<\epsilon^{\beta}} e_i^\top \eta\eta^\top e_j G_{\sigma}(\eta)d\eta=0, \quad (i \not=j).$$   
By Lemma \ref{Gaussian integral bound}, 
$$\Big| \int_{\|\eta\|_{\mathbb{R}^D}<\epsilon^{\beta}} e_i^\top \eta\eta^\top e_j G_{\sigma}(\eta)d\eta\Big| \leq \sigma^2 \leq \epsilon^{2\beta}. \quad (i=j).$$   
In conclusion
\begin{align}\label{bias lemma 3 eq 1}
\int_{\|\eta\|_{\mathbb{R}^D}<\epsilon^{\beta}} \int_{M} \eta\eta^\top \chi_{B_\epsilon(x_k)}(x) P(x) G_{\sigma}(\eta) dV d\eta =O(\epsilon^{d+2\beta}) I_{D \times D},
\end{align}
where the constant in $O(\epsilon^{d+2\beta})$ depends on $d$. 
 
 Similarly,  for $1 \leq i,j \leq D$,
 \begin{align}\label{bias lemma 3 eq 2}
 \int_{\|\eta\|_{\mathbb{R}^D}<\epsilon^{\beta}} \int_{M} e_i^\top \eta_k\eta_k^\top e_j \chi_{B_\epsilon(x_k)}(x) P(x) G_{\sigma}(\eta) dV d\eta=O(\epsilon^{d+2\beta}),
 \end{align} 
 where the constant in $O(\epsilon^{d+2\beta})$ depends on $d$. 
 
 Next, we have
\begin{align*}
 & \Big|\int_{\|\eta\|_{\mathbb{R}^D}<\epsilon^{\beta}} \int_{M} e_i^\top \iota(x)\eta_k^\top e_j \chi_{B_\epsilon(x_k)}(x) P(x) G_{\sigma}(\eta) dV d\eta \Big| \\
\leq &  \Big|\int_{M} e_i^\top\iota(x)\chi_{B_\epsilon(x_k)}(x) P(x) dV \Big|\hspace{1mm}\Big|\int_{\|\eta\|_{\mathbb{R}^D}<\epsilon^{\beta}} G_{\sigma}(\eta) d\eta\Big|\hspace{1mm} \Big|\eta_k^\top e_j\Big| \nonumber \\ 
\leq & \epsilon^{\beta}\Big|\int_{M} e_i^\top\iota(x)\chi_{B_\epsilon(x_k)}(x) P(x) dV \Big|. \nonumber
\end{align*}

Based on the proof of Lemma SI.5 in \citep{wu2018think}, $|\int_{M} e_i^\top\iota(x)\chi_{B_\epsilon(x_k)}(x) P(x) dV |=O(\epsilon^{d+2})$ for $i=1, \cdots, d$, where the constant in $O(\epsilon^{d+2})$ depends on $d$ and the $C^{(1)}$ norm of $P$. $|\int_{M} e_i^\top\iota(x)\chi_{B_\epsilon(x_k)}(x) P(x) dV |=O(\epsilon^{d+2})$ for $i=d+1, \cdots, D$, where the constant in $O(\epsilon^{d+2})$ depends on $d$, $P_M$ and the second fundamental form of $\iota(M)$. In conclusion, for $1 \leq i,j \leq D$,
 \begin{align}\label{bias lemma 3 eq 3}
\int_{\|\eta\|_{\mathbb{R}^D}<\epsilon^{\beta}} \int_{M} e_i^\top \iota(x)\eta_k^\top e_j \chi_{B_\epsilon(x_k)}(x) P(x) G_{\sigma}(\eta) dV d\eta =O(\epsilon^{d+2+\beta}),
\end{align}
where  the constant in $O(\epsilon^{d+2})$ depends on $d$, the $C^{(1)}$ norm of $P$ and the second fundamental form of $\iota(M)$.
Next, observe that 
\begin{align}\label{bias lemma 3 eq 4}
& \int_{\|\eta\|_{\mathbb{R}^D}<\epsilon^{\beta}} \int_{M} \eta_k  \iota(x)^\top \chi_{B_\epsilon(x_k)}(x) P(x) G_{\sigma}(\eta) dV d\eta \\
=& \Big\{\int_{\|\eta\|_{\mathbb{R}^D}<\epsilon^{\beta}} \int_{M}  \iota(x)\eta_k^\top \chi_{B_\epsilon(x_k)}(x) P(x) G_{\sigma}(\eta) dV d\eta\Big\}^\top. \nonumber
\end{align}

At last, 
\begin{align*}
& \int_{\|\eta\|_{\mathbb{R}^D}<\epsilon^{\beta}} \int_{M} \iota(x)\iota(x)^\top \chi_{B_\epsilon(x_k)}(x) P(x) G_{\sigma}(\eta) dV d\eta \\
=&\int_{M} \iota(x)\iota(x)^\top \chi_{B_\epsilon(x_k)}(x) P(x) dV  \int_{\|\eta\|_{\mathbb{R}^D}<\epsilon^{\beta}} G_{\sigma}(\eta) d\eta  \nonumber \\
=&\int_{M} \iota(x)\iota(x)^\top \chi_{B_\epsilon(x_k)}(x) P(x) dV \big(1- \int_{\|\eta\|_{\mathbb{R}^D}\geq \epsilon^{\beta}} G_{\sigma}(\eta) d\eta \big). \nonumber 
\end{align*}
Based on the proof of Proposition 3.1 in \citep{wu2018think},
\begin{align*}
\int_{M} \iota(x)\iota(x)^\top \chi_{B_\epsilon(x_k)}(x) P(x) dV=\frac{|S^{d-1}|{P}(x_k)}{d(d+2)} \epsilon^{d+2}
\begin{bmatrix}
I_{d \times d} & 0 \\
0& 0  \\
\end{bmatrix}+
\begin{bmatrix}
O(\epsilon^{d+4}) & O(\epsilon^{d+4}) \\
O(\epsilon^{d+4}) & O(\epsilon^{d+4}) 
\end{bmatrix},
\end{align*}
where the constant in $O(\epsilon^{d+4}) $ depends on $d$, $C^2$ norm of $P$, the second fundamental form of $\iota(M)$ and its derivative and the Ricci curvature of $M$. 

Based on the assumption of the lemma, when $\epsilon$ is small enough based on $d$ and $D$, $\frac{\epsilon^\beta}{\sigma} \geq \sqrt{-4(d+5)\log \epsilon} \geq 2\sqrt{2}(D-2)$. Then by Lemma \ref{Gaussian integral bound},  $0<\int_{\|\eta\|_{\mathbb{R}^D}\geq \epsilon^{\beta}} G_{\sigma}(\eta) d\eta \leq \frac{2}{\Gamma(\frac{D}{2})} e^{-\frac{\epsilon^{2\beta}}{4\sigma^2}}$.
Since $\Gamma(\frac{D}{2})>1$, when $\epsilon \leq \frac{1}{2}$, a straightforward calculation shows that  $\sigma \leq \frac{\epsilon^\beta}{\sqrt{-4(d+5)\log \epsilon}}$ implies $0<\int_{\|\eta\|_{\mathbb{R}^D}\geq \epsilon^{\beta}} G_{\sigma}(\eta) d\eta \leq \epsilon^{2}$.

In conclusion,
\begin{align} \label{bias lemma 3 eq 5}
& \int_{\|\eta\|_{\mathbb{R}^D}<\epsilon^{\beta}} \int_{M} \iota(x)\iota(x)^\top \chi_{B_\epsilon(x_k)}(x) P(x) G_{\sigma}(\eta) dV d\eta \\
=& \frac{|S^{d-1}|{P}(x_k)}{d(d+2)} \epsilon^{d+2}
\begin{bmatrix}
I_{d \times d} & 0 \\
0& 0  \\
\end{bmatrix}+
\begin{bmatrix}
O(\epsilon^{d+4}) & O(\epsilon^{d+4}) \\
O(\epsilon^{d+4}) & O(\epsilon^{d+4}) 
\end{bmatrix}, \nonumber 
\end{align}
where the constant in $O(\epsilon^{d+4}) $ depends on $d$, $C^2$ norm of $P$, the second fundamental form of $\iota(M)$ and its derivative and the Ricci curvature of $M$. 
If we substitute \eqref{bias lemma 3 eq 1}, \eqref{bias lemma 3 eq 2}, \eqref{bias lemma 3 eq 3}, \eqref{bias lemma 3 eq 4} and \eqref{bias lemma 3 eq 5} into \eqref{error bounds in bias 2}, then we have
\begin{align*}
E_3=& \frac{|S^{d-1}|{P}(x_k)}{d(d+2)} \epsilon^{d+2}
\begin{bmatrix}
I_{d \times d} & 0 \\
0& 0  \\
\end{bmatrix}+O(\epsilon^{d+4}+\epsilon^{d+2+\beta}+\epsilon^{d+2\beta}),
\end{align*}
where $O(\epsilon^{d+4}+\epsilon^{d+2+\beta}+\epsilon^{d+2\beta})$ represents a $D$ by $D$ matrix whose entries are of order $O(\epsilon^{d+4}+\epsilon^{d+2+\beta}+\epsilon^{d+2\beta})$. The constants in $O(\epsilon^{d+4}+\epsilon^{d+2+\beta}+\epsilon^{d+2\beta})$ depend on $d$, $C^2$ norm of $P$, the second fundamental form of $\iota(M)$ and its derivative and the Ricci curvature of $M$.
Observe that if $\beta > 2$, then $\epsilon^{d+4}+\epsilon^{d+2+\beta}+\epsilon^{d+2\beta}$ is bounded by $3\epsilon^{d+4}$. If $\beta \leq 2$, then $\epsilon^{d+4}+\epsilon^{d+2+\beta}+\epsilon^{d+2\beta}$ is bounded by $3\epsilon^{d+2\beta}$. The statement of the lemma follows.
\end{proof}

\subsection{Proof of Proposition \ref{local PCA bias analysis}}

Observe that 
$$ \mathbb{E}[\{\iota(\mathbb{X})+\mathbb{H}-\eta_k\}\{\iota(\mathbb{X})+\mathbb{H}-\eta_k\}^\top \chi(\frac{\|\iota(\mathbb{X})+\mathbb{H}-\eta_k\|_{\mathbb{R}^D}}{\epsilon} )]=E_1+E_2=E_3+(E_1-E_3)+E_2.$$
By combining Lemmas \ref{bias analysis lemma 1}, \ref{bias analysis lemma 3} and \ref{bias analysis lemma 4}, we have 
\begin{align}
C_{\epsilon}({y}_k) =& \epsilon^{d+2}\frac{|S^{d-1}|{P}(x_k)}{d(d+2)} 
\begin{bmatrix}
I_{d \times d} & 0 \\
0& 0  \\
\end{bmatrix} \nonumber \\
& + \begin{bmatrix}
O(\epsilon^{d+1+\beta}+\epsilon^{d+2\beta}+\epsilon^{d+4}) & O(\epsilon^{d+2+\beta}+\epsilon^{d+2\beta}+\epsilon^{d+4}) \\
 O(\epsilon^{d+2+\beta}+\epsilon^{d+2\beta}+\epsilon^{d+4}) &  O(\epsilon^{d+3+\beta}+\epsilon^{d-1+3\beta}+\epsilon^{d+2\beta}+\epsilon^{d+4}) \\
\end{bmatrix}, \nonumber 
\end{align} 
where the constant factors in all the blocks depend on $d$, $C^2$ norm of $P$,  the second fundamental form of $\iota(M)$ and its derivative and the Ricci curvature of $M$. Since $1<\beta$, we have $\epsilon^{d+2\beta} <\epsilon^{d+1+\beta}$, $\epsilon^{d+3+\beta}<\epsilon^{d+4}$ and $\epsilon^{d-1+3\beta}<\epsilon^{d+2\beta}$. If $\beta > 2$, then $\epsilon^{d+4}+\epsilon^{d+2+\beta}+\epsilon^{d+2\beta}$ is bounded by $3\epsilon^{d+4}$. If $\beta \leq 2$, then $\epsilon^{d+4}+\epsilon^{d+2+\beta}+\epsilon^{d+2\beta}$ is bounded by $3\epsilon^{d+2\beta}$. 
Hence
\begin{align}
C_{\epsilon}({y}_k) =& \epsilon^{d+2}\frac{|S^{d-1}|{P}(x_k)}{d(d+2)} 
\begin{bmatrix}
I_{d \times d} & 0 \\
0& 0  \\
\end{bmatrix}+ \begin{bmatrix}
O(\epsilon^{d+1+\beta}+\epsilon^{d+4}) & O(\epsilon^{d+2\beta}+\epsilon^{d+4}) \\
 O(\epsilon^{d+2\beta}+\epsilon^{d+4}) &  O(\epsilon^{d+2\beta}+\epsilon^{d+4}) \\
\end{bmatrix}. \nonumber 
\end{align} 
%where the constant factors in all the blocks depend on $d$, $C^2$ norm of $P$,  the second fundamental form of $\iota(M)$ and its derivative and the Ricci curvature of $M$.

\section{Proof of Theorem \ref{local PCA  variance analysis} and Theorem \ref{local PCA  spectral behavior} }\label{proof of variance analysis PCA }
\subsection{Preliminary lemmas}
Suppose $U, V \in \mathbb{R}^{m \times n}$. Recall that the Hadamard product of $U$ and $V$ is defined as $(U \circ V)_{ij}=U_{ij}V_{ij}$ for $1\leq i \leq m$ and $1 \leq j \leq n$. Then $U^{(2)}=U \circ U$ is the Hadamard square of $U$. We start by considering the second moment of the local covariance matrix: 
$$\mathbb{E}\Big(\big[\{\iota(\mathbb{X})+\mathbb{H}-\eta_k\}\{\iota(\mathbb{X})+\mathbb{H}-\eta_k\}^\top\big]^{(2)} \chi\Big\{\frac{\|\iota(\mathbb{X})+\mathbb{H}-\eta_k\|_{\mathbb{R}^D}}{\epsilon} \Big\}\Big)$$

Next, we define the following matrices $E_4$ and $E_5$ to simplify our discussion:
\begin{align*}
& F_2 \{\iota(x), \eta, \eta_k\}= \big[\{\iota(x)+\eta-\eta_k\}\{\iota(x)+\eta-\eta_k\}^\top\big]^{(2)}  \chi\Big\{\frac{\|\iota(x)+\eta-\eta_k\|_{\mathbb{R}^D}}{\epsilon}\Big\}, \\
&E_4=\int_{\|\eta\|_{\mathbb{R}^D}<\epsilon^{\beta}} \int_{M} F_2\{\iota(x), \eta, \eta_k\}P(x) G_{\sigma}(\eta) dV d\eta, \\
&E_5=\int_{\|\eta\|_{\mathbb{R}^D}\geq \epsilon^{\beta}} \int_M  F_2\{\iota(x), \eta, \eta_k\} P(x) G_{\sigma}(\eta) dV d\eta. 
%&E_3=\int_{\|\eta\|_{\mathbb{R}^D}<\epsilon^{\beta}} \int_{M}(\iota(x)+\eta-\eta_k)(\iota(x)+\eta-\eta_k)^\top \chi_{B_\epsilon(x_k)}(x) P(x) G_{\sigma}(\eta) dV d\eta
\end{align*}

Note that for $1 \leq i,j \leq D$,
\begin{align*}
e_i^\top E_5 e_j=\int_{\|\eta\|_{\mathbb{R}^D} \geq \epsilon^{\beta}} \int_{M}& \big[e_i^\top \{\iota(x)+\eta-\eta_k\}\big]^2\big[\{\iota(x)+\eta-\eta_k\}^\top e_j\big]^2  \\
 & \times \chi\Big\{\frac{\|\iota(x)+\eta-\eta_k\|_{\mathbb{R}^D}}{\epsilon}\Big\}P(x) G_{\sigma}(\eta) dV d\eta. \nonumber
\end{align*}
When $\|\iota(x)+\eta-\eta_k\|_{\mathbb{R}^D} \leq \epsilon$, $\chi\{\frac{\|\iota(x)+\eta-\eta_k\|_{\mathbb{R}^D}}{\epsilon}\} \leq 1$. When $\|\iota(x)+\eta-\eta_k\|_{\mathbb{R}^D} > \epsilon$, $\chi\{\frac{\|\iota(x)+\eta-\eta_k\|_{\mathbb{R}^D}}{\epsilon}\} =0$. Hence, for all $1 \leq i,j \leq D$ and all $x$, we have 
$$\big[e_i^\top \{\iota(x)+\eta-\eta_k\}\big]^2\big[\{\iota(x)+\eta-\eta_k\}^\top e_j\big]^2 \chi\Big\{\frac{\|\iota(x)+\eta-\eta_k\|_{\mathbb{R}^D}}{\epsilon}\Big\} \leq \epsilon^4.$$
By applying the same argument as Lemma \ref{bias analysis lemma 1}, we have the following lemma about $E_5$.

\begin{lemma}\label{Variance analysis lemma 1}
Suppose $\epsilon$ is small enough depending on $d$ and $D$.  For $\beta>1$, if $\sigma \leq \frac{\epsilon^\beta}{\sqrt{-4(d+5)\log \epsilon}}$, then $|e_i^\top E_5 e_j| \leq \epsilon^{d+8}$ for al $1 \leq i,j \leq D$.
\end{lemma}

Now we are ready to bound the second moment of the local covariance matrix.

\begin{lemma}\label{Variance analysis lemma 2}
Suppose $\epsilon$ is small enough depending on $d$, $D$, the scalar curvature of $M$ and the second fundamental form of $\iota(M)$. Suppose $\beta>1$, $\sigma \leq \frac{\epsilon^\beta}{\sqrt{-4(d+5)\log \epsilon}}$, and $\|\eta_k\|_{\mathbb{R}^D} \leq \epsilon^{\beta}$,  then
\begin{align*}
& \mathbb{E}\Big(\big[\{\iota(\mathbb{X})+\mathbb{H}-\eta_k\}\{\iota(\mathbb{X})+\mathbb{H}-\eta_k\}^\top\big]^{(2)} \chi\Big\{\frac{\|\iota(\mathbb{X})+\mathbb{H}-\eta_k\|_{\mathbb{R}^D}}{\epsilon} \Big\}\Big] \\
=&\begin{bmatrix}
O(\epsilon^{d+4}) & O(\epsilon^{d+6}+\epsilon^{d+2+2\beta}) \\
 O(\epsilon^{d+6}+\epsilon^{d+2+2\beta}) &  O(\epsilon^{d+8}+\epsilon^{d+4\beta}) \nonumber \\
\end{bmatrix},
\end{align*}
where the top left block is a $d$ by $d$ matrix and the constant factors in all blocks depend on $d$, $P_M$ and the second fundamental form of $\iota(M)$.
\end{lemma}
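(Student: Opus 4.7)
The plan is to mirror the decomposition used for the bias analysis: split the expectation as $E_4 + E_5$, dispose of the tail term $E_5$ using Lemma \ref{Variance analysis lemma 1}, and concentrate the work on $E_4$. For $E_5$, the entries are already bounded by $\epsilon^{d+8}$, which is comfortably absorbed into every block of the claimed order ($\epsilon^{d+4}$ on the top-left, $\epsilon^{d+6}+\epsilon^{d+2+2\beta}$ on the off-diagonal, $\epsilon^{d+8}+\epsilon^{d+4\beta}$ on the bottom-right), so that piece contributes nothing essential.

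The substantive step is the entrywise bound on $E_4$. The key observation is that for any vector $\textbf{v}\in\mathbb{R}^D$, the $(i,j)$ entry of $(\textbf{v}\textbf{v}^\top)^{(2)}$ is $v_i^2 v_j^2 = (e_i^\top\textbf{v}\textbf{v}^\top e_j)^2$, so I can just square the pointwise bounds from Lemma \ref{bias analysis lemma 2} applied to $\textbf{v}=\iota(x)+\eta-\eta_k$. On the region $\|\eta\|_{\mathbb{R}^D} < \epsilon^\beta$ and $\|\eta_k\|_{\mathbb{R}^D} \leq \epsilon^\beta$, the indicator $\chi$ confines $x$ to the set $A(\eta)$ introduced in the proof of Lemma \ref{bias analysis lemma 3}, which lies inside $B_{\epsilon+3\epsilon^{\min(\beta,3)}}(x_k)$. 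Squaring the block bounds of Lemma \ref{bias analysis lemma 2} gives, for $1<\beta\le 3$, $W_{ij}(x)^2 \le 49^2\epsilon^4$ on the upper-left $d\times d$ block, $W_{ij}(x)^2 \le 2C_3^2(\epsilon^6+\epsilon^{2+2\beta})$ on the off-diagonal blocks, and $W_{ij}(x)^2 \le 2C_3^2(\epsilon^8+\epsilon^{4\beta})$ on the bottom-right block; the $\beta>3$ case yields the same or smaller orders.

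To finish, I would plug these uniform pointwise bounds into the integral defining $E_4$, pull them out of the $x$-integral, and exploit $\int_{\mathbb{R}^D} G_\sigma(\eta)\,d\eta\leq 1$, $P(x)\le P_M$, together with the volume estimate $\mathrm{Vol}\big(B_{\epsilon+3\epsilon^{\min(\beta,3)}}(x_k)\big) \leq C_4(d)\epsilon^d$ from Lemma \ref{volume form lemma} (valid for $\epsilon$ small enough relative to the scalar curvature of $M$). Each block then picks up exactly one extra factor $\epsilon^d$, giving the three claimed orders $O(\epsilon^{d+4})$, $O(\epsilon^{d+6}+\epsilon^{d+2+2\beta})$, $O(\epsilon^{d+8}+\epsilon^{d+4\beta})$, with constants depending only on $d$, $P_M$ and the second fundamental form of $\iota(M)$ (the dependence on $D$ and the scalar curvature enters through the smallness requirement on $\epsilon$).

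I do not anticipate any serious obstacle: the argument is a direct parallel of Lemma \ref{bias analysis lemma 3}'s proof but applied to the Hadamard-squared integrand, and crucially no cancellation between the various terms $\iota(x)\iota(x)^\top$, $\eta\eta^\top$, $\eta_k\eta_k^\top$, $\iota(x)\eta_k^\top$, etc.\ is needed -- we only need a crude uniform bound on each entry of the rank-one matrix before squaring, which is exactly what Lemma \ref{bias analysis lemma 2} supplies. The only minor bookkeeping point will be to verify that the mixed term $\epsilon^{d+2+2\beta}$ in the off-diagonal block and $\epsilon^{d+4\beta}$ in the bottom-right block correctly interpolate the $\beta\leq 3$ and $\beta>3$ regimes, which follows by noting that when $\beta>3$ these terms become smaller than the $\epsilon^{d+6}$ and $\epsilon^{d+8}$ terms respectively and so drop out.
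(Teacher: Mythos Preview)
Your proposal is correct and follows essentially the same route as the paper's own proof: decompose as $E_4+E_5$, kill $E_5$ via Lemma~\ref{Variance analysis lemma 1}, and for $E_4$ use the containment $A(\eta)\subset B_{\epsilon+3\epsilon^{\min(\beta,3)}}(x_k)$ together with the volume bound, $P\le P_M$, $\int G_\sigma\le 1$, and the squared entrywise bounds from Lemma~\ref{bias analysis lemma 2}. The only cosmetic difference is that the paper names the squared integrand $Q_{ij}(x,\eta,\eta_k)$ and uses a separate constant $C_5(d)=4^d\texttt{Vol}(B^{\mathbb{R}^d}_1)$ for the enlarged ball rather than $C_4(d)$, but the substance is identical.
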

\begin{proof} Recall that $B_r(x)$ is a geodesic ball of radius $r$ centered at $x \in M$. Let $A(\eta)=\{x \in M  \hspace{1mm} | \hspace{1mm}  \|\iota(x)+\eta-\eta_k\|_{\mathbb{R}^D} \leq \epsilon \mbox{ for a fixed } \|\eta\|_{\mathbb{R}^D}<\epsilon^{\beta} \}$. Since $ \|\eta\|_{\mathbb{R}^D}<\epsilon^{\beta}$ and $\|\eta_k\|_{\mathbb{R}^D} \leq \epsilon^{\beta}$, by the triangle inequality, if $x \in A(\eta)$,  then $\epsilon- 2\epsilon^{\beta}  \leq \|\iota(x)\|_{\mathbb{R}^D}\leq \epsilon+ 2\epsilon^{\beta}$. If $1< \beta \leq 3$, when $\epsilon$ is small enough depending on the second fundamental form of $\iota(M)$, by Lemma \ref{geodesic lemma}, we have $d(x,x_k) <\epsilon+3 \epsilon^{\beta}$. Hence, $ A(\eta) \subset B_{ \epsilon+3 \epsilon^{\beta}}(x_k)$. Since $1< \beta \leq 3$, when $\epsilon$ is small enough depending on the scalar curvature of $M$, by Lemma \ref{volume form lemma}, for all $ \|\eta\|_{\mathbb{R}^D}<\epsilon^{\beta}$, we have 
\begin{align*}
\texttt{Vol}\{A(\eta)\} \leq \texttt{Vol}\{B_{ \epsilon+3 \epsilon^{\beta}}(x_k)\} \leq 4^d \texttt{Vol}(B^{\mathbb{R}^d}_1)   \epsilon^{d}=C_5(d) \epsilon^d,
\end{align*}
where $C_5(d)=4^d \texttt{Vol}(B^{\mathbb{R}^d}_1)$ is a constant depending on $d$.  Similarly, if  $ \beta > 3$ and $\epsilon$ is small enough depending on the second fundamental form of $\iota(M)$ and the scalar curvature of $M$, then we have $ A(\eta) \subset B_{ \epsilon+3 \epsilon^{3}}(x_k)$ and $\texttt{Vol}\{A(\eta)\} \leq C_5(d) \epsilon^d$.

Observe that for a fixed $\eta$,  we have $\chi_{A(\eta)}(x)=\chi\{\frac{\|\iota(x)+\eta-\eta_k\|_{\mathbb{R}^D}}{\epsilon}\}$. 

Let $Q_{ij}(x, \eta, \eta_k)=\big[e_i^\top \{\iota(x)+\eta-\eta_k\}\big]^2\big[\{\iota(x)+\eta-\eta_k\}^\top e_j\big]^2$, then
\begin{align}\label{Variance Lemma E4}
|e_i^\top E_4 e_j|=& \Big|\int_{\|\eta\|_{\mathbb{R}^D}<\epsilon^{\beta}} \int_{M} e_i^\top F_2\{\iota(x), \eta, \eta_k\} e_j P(x) G_{\sigma}(\eta) dV d\eta\Big| \\
= & \int_{\|\eta\|_{\mathbb{R}^D}<\epsilon^{\beta}} \int_{M}Q_{ij}(x, \eta, \eta_k)  \chi_{A(\eta)}(x) P(x) G_{\sigma}(\eta) dV d\eta \nonumber \\
\leq & \sup_{\|\eta\|_{\mathbb{R}^D}<\epsilon^{\beta}, x \in A(\eta)}Q_{ij}(x, \eta, \eta_k) \int_{\|\eta\|_{\mathbb{R}^D}<\epsilon^{\beta}} \int_{M}\chi_{A(\eta)}(x) P(x) G_{\sigma}(\eta) dV d\eta  \nonumber \\
\leq  & \sup_{\|\eta\|_{\mathbb{R}^D}<\epsilon^{\beta}, x \in A(\eta)}Q_{ij}(x, \eta, \eta_k) P_M \sup_{\|\eta\|_{\mathbb{R}^D}<\epsilon^{\beta}}\texttt{Vol}\{A(\eta)\} \nonumber \\
\leq & \sup_{\|\eta\|_{\mathbb{R}^D}<\epsilon^{\beta}, x \in A(\eta)}Q_{ij}(x, \eta, \eta_k) P_M  C_5(d) \epsilon^{d}. \nonumber 
\end{align}

If $1< \beta \leq 3$ and $\|\eta\|_{\mathbb{R}^D}<\epsilon^{\beta}$, then $x \in A(\eta) \subset B_{ \epsilon+3 \epsilon^{\beta}}(x_k)$. By Lemma \ref{bias analysis lemma 2}, when $\epsilon$ is small enough depending on the second fundamental form of $\iota(M)$, for any $\|\eta\|_{\mathbb{R}^D}<\epsilon^{\beta}$, $\|\eta_k\|_{\mathbb{R}^D} \leq \epsilon^{\beta}$ and $x \in B_{ \epsilon+3 \epsilon^{\beta}}(x_k)$, we have
\begin{align*}
&Q_{ij}(x, \eta, \eta_k) \leq  (49)^2\epsilon^4,  \quad (1 \leq i,j \leq d), \\
&Q_{ij}(x, \eta, \eta_k) \leq  C_3^2 (\epsilon^3+\epsilon^{1+\beta})^2 \leq 2 C_3^2 (\epsilon^6+\epsilon^{2+2\beta}), \quad (1 \leq i\leq d; d+1 \leq j \leq D),  \\
&Q_{ij}(x, \eta, \eta_k) \leq C_3^2 (\epsilon^3+\epsilon^{1+\beta})^2 \leq 2 C_3^2 (\epsilon^6+\epsilon^{2+2\beta}),  \quad (d+1 \leq i\leq D; 1 \leq j \leq d), \\
&Q_{ij}(x, \eta, \eta_k) \leq  C_3^2 (\epsilon^4+\epsilon^{2\beta})^2 \leq  2 C_3^2 (\epsilon^8+\epsilon^{4\beta}), \quad (d+1 \leq i,j \leq D),
\end{align*}
where $C_3$ is a constant depending on the second fundamental form of $\iota(M)$.  Therefore, we can substitute the above bounds into \eqref{Variance Lemma E4}.  We conclude that when $\epsilon$ is small enough depending on the scalar curvature of $M$ and the second fundamental form of $\iota(M)$, 
\begin{align*}
E_4=\begin{bmatrix}
O(\epsilon^{d+4}) & O(\epsilon^{d+6}+\epsilon^{d+2+2\beta}) \\
 O(\epsilon^{d+6}+\epsilon^{d+2+2\beta}) &  O(\epsilon^{d+8}+\epsilon^{d+4\beta}) \\
\end{bmatrix},
\end{align*}
where the top left block is a $d$ by $d$ matrix and the constant factors depend on $d$, $P_M$ and the second fundamental form of $\iota(M)$.

If $\beta > 3$ and $\|\eta\|_{\mathbb{R}^D}<\epsilon^{\beta}$, then $x \in A(\eta) \subset B_{ \epsilon+3 \epsilon^{3}}(x_k)$. By Lemma \ref{bias analysis lemma 2}, when $\epsilon$ is small enough depending on the second fundamental form of $\iota(M)$, for any $\|\eta\|_{\mathbb{R}^D}<\epsilon^{\beta}$, $\|\eta_k\|_{\mathbb{R}^D} \leq \epsilon^{\beta}$ and $x \in B_{ \epsilon+3 \epsilon^{\beta}}(x_k)$, we have
\begin{align*}
&Q_{ij}(x, \eta, \eta_k) \leq  (49)^2\epsilon^4,  \quad (1 \leq i,j \leq d), \\
&Q_{ij}(x, \eta, \eta_k) \leq  4 C_3^2 \epsilon^6, \quad (1 \leq i\leq d; d+1 \leq j \leq D),  \\
&Q_{ij}(x, \eta, \eta_k) \leq 4 C_3^2 \epsilon^6 ,  \quad (d+1 \leq i\leq D; 1 \leq j \leq d), \\
&Q_{ij}(x, \eta, \eta_k) \leq  4C_3^2 \epsilon^8, \quad (d+1 \leq i,j \leq D),
\end{align*}
where $C_3$ is a constant depending on the second fundamental form of $\iota(M)$.  Therefore, we can substitute the above bounds into \eqref{Variance Lemma E4}.  We conclude that when $\epsilon$ is small enough depending on the scalar curvature of $M$ and the second fundamental form of $\iota(M)$, 
\begin{align*}
E_4=\begin{bmatrix}
O(\epsilon^{d+4}) & O(\epsilon^{d+6}) \\
 O(\epsilon^{d+6}) &  O(\epsilon^{d+8}) \\
\end{bmatrix},
\end{align*}
where the top left block is a $d$ by $d$ matrix and the constant factors depend on $d$, $P_M$ and the second fundamental form of $\iota(M)$.
Since 
$$\mathbb{E}\Big(\big[\{\iota(\mathbb{X})+\mathbb{H}-\eta_k\}\{\iota(\mathbb{X})+\mathbb{H}-\eta_k\}^\top\big]^{(2)} \chi\Big\{\frac{\|\iota(\mathbb{X})+\mathbb{H}-\eta_k\|_{\mathbb{R}^D}}{\epsilon} \Big\}\Big)=E_4+E_5,$$
this lemma follows from combining the cases when  $1< \beta \leq 3$ and $\beta > 3$ with Lemma \ref{Variance analysis lemma 1}.
\end{proof}

\subsection{Proof of Theorem \ref{local PCA  variance analysis}}
When $\epsilon$ is small enough based on $d$ and $D$, $\frac{\epsilon^\beta}{\sigma} \geq \sqrt{-4(d+5)\log \epsilon} \geq 2\sqrt{2}(D-2)$. Thus, we are allowed to apply Lemma \ref{Gaussian integral bound} to bound $\text{pr}(\|\eta_i\|_{\mathbb{R}^D} \geq \epsilon^{\beta})$, For a fixed $i$, $\text{pr}(\|\eta_i\|_{\mathbb{R}^D} \geq \epsilon^{\beta})  \leq  \frac{2}{\Gamma(\frac{D}{2})} e^{-\frac{\epsilon^{2\beta}}{4\sigma^2}} \leq 2e^{-\frac{\epsilon^{2\beta}}{4\sigma^2}}$, where we use $ \frac{2}{\Gamma(\frac{D}{2})} \leq 2$ in the last step.

If $\sigma \leq \frac{\epsilon^\beta}{\sqrt{12\log(2n)}}$,  then by a straightforward calculation, we have 
\begin{align*}
\text{pr}(\|\eta_i\|_{\mathbb{R}^D} \geq \epsilon^{\beta}) \leq 2e^{-\frac{\epsilon^{2\beta}}{4\sigma^2}} \leq \frac{1}{4n^3}.
\end{align*}
Since $\{\eta_i\}_{i=1}^n$ are i.i.d. samples, by a straightforward union bound, we have 
\begin{align*}
\text{pr}(\|\eta_i\|_{\mathbb{R}^D} \leq \epsilon^{\beta} | i=1, \cdots, n)  \geq  1-\frac{1}{4n^2}.
\end{align*}

Suppose $\|\eta_k\|_{\mathbb{R}^D} \leq \epsilon^{\beta}$.  Since ${y}_k=\iota(x_k)+\eta_k$ with $\iota(x_k)=0$,  for each $a,b=1,\cdots,D$, we denote
\begin{align}
F_{a,b,i,k}=& e_a^\top ({y}_i-{y}_k)({y}_i-{y}_k)^\top e_b \chi\Big(\frac{\|{y}_i-{y}_k\|_{\mathbb{R}^D}}{\epsilon}\Big)\nonumber \\
=& e_a^\top \{\iota(x_i)+\eta_i-\eta_k\}\{\iota(x_i)+\eta_i-\eta_k\}^\top e_b \chi\Big\{\frac{\|\iota(x_i)+\eta_i-\eta_k\|_{\mathbb{R}^D}}{\epsilon}\Big\}\nonumber
\end{align}
$\{F_{a,b,i,k}\}$ for $i=1, \ldots, n$ and $i \not= k$ are independent and identically distributed realizations of 
$$F_{a,b,k}=e_a^\top \{\iota(\mathbb{X})+\mathbb{H}-\eta_k\}\{\iota(\mathbb{X})+\mathbb{H}-\eta_k\}^\top e_b \chi\Big\{\frac{\|\iota(\mathbb{X})+\mathbb{H}-\eta_k\|_{\mathbb{R}^D}}{\epsilon}\Big\}.$$

If $\|\iota(x)+\eta-\eta_k\|_{\mathbb{R}^D} \leq \epsilon$, then $\chi\{\frac{\|\iota(x)+\eta-\eta_k\|_{\mathbb{R}^D}}{\epsilon}\}=1$, $|e_a^\top \{\iota(x)+\eta-\eta_k\}| \leq \epsilon$  and $|\{\iota(x)+\eta-\eta_k\}^\top e_b| \leq \epsilon$. Hence, $|F_{a,b,k}|$ is uniformly bounded by 
$c=\epsilon^2,$ 
for all $1 \leq a,b \leq D$.

By Proposition \ref{local PCA bias analysis}, when $\epsilon$ is small enough depending on $d$, $D$, the second fundamental form of $\iota(M)$ and the scalar curvature of $M$,
\begin{align*}
& \mathbb{E}\Big[\{\iota(\mathbb{X})+\mathbb{H}-\eta_k\}\{\iota(\mathbb{X})+\mathbb{H}-\eta_k\}^\top \chi\Big\{\frac{\|\iota(\mathbb{X})+\mathbb{H}-\eta_k\|_{\mathbb{R}^D}}{\epsilon} \Big\}\Big] \\
=&\epsilon^{d+2} \Bigg( \frac{|S^{d-1}|{P}(x_k)}{d(d+2)} 
\begin{bmatrix}
I_{d \times d} & 0 \\
0& 0  \\
\end{bmatrix}+
\begin{bmatrix}
O(\epsilon^{\beta-1}+\epsilon^2) & O(\epsilon^{2\beta-2}+\epsilon^2) \\
O(\epsilon^{2\beta-2}+\epsilon^2) & O(\epsilon^{2\beta-2}+\epsilon^2) 
\end{bmatrix}\Bigg), \nonumber 
\end{align*}
where the constant factors in the four blocks depend on $d$,  $C^2$ norm of $P$,  the second fundamental form of $\iota(M)$ and its derivative and the Ricci curvature of $M$. Let 
\begin{align*}
\Omega=& \mathbb{E}\Big(\big[\{\iota(\mathbb{X})+\mathbb{H}-\eta_k\}\{\iota(\mathbb{X})+\mathbb{H}-\eta_k\}^\top\big]^{(2)} \chi\Big\{\frac{\|\iota(\mathbb{X})+\mathbb{H}-\eta_k\|_{\mathbb{R}^D}}{\epsilon} \Big\}\Big) \\
& - \Big( \mathbb{E}\Big[\{\iota(\mathbb{X})+\mathbb{H}-\eta_k\}\{\iota(\mathbb{X})+\mathbb{H}-\eta_k\}^\top \chi\Big\{\frac{\|\iota(\mathbb{X})+\mathbb{H}-\eta_k\|_{\mathbb{R}^D}}{\epsilon} \Big\}\Big] \Big)^{(2)}. \nonumber
\end{align*}
The variance of the random variable $F_{a,b,k}$ is $e_a^\top \Omega e_b$. By combining Proposition \ref{local PCA bias analysis} and Lemma \ref{Variance analysis lemma 2}, 
\begin{align}
& e_a^\top \Omega e_b \leq C_6 \epsilon^{d+4}  \quad (1 \leq a, b \leq d). \nonumber
\end{align}
If $d<a \leq D$ or $d<b \leq D$,
\begin{align}
& e_a^\top \Omega e_b \leq C_6 (\epsilon^{d+6}+\epsilon^{d+2+2\beta}).\nonumber
\end{align}
$C_6$ is a constant depending on $d$,  $C^2$ norm of $P$, the second fundamental form of $\iota(M)$ and its derivative and the Ricci curvature of $M$. 

Observe that
\begin{equation}
e_a^\top C_{n,\epsilon}({y}_k) e_b =\frac{1}{n}\sum_{i=1}^n F_{a,b,i,k}=\frac{n-1}{n} \Big(\frac{1}{n-1}\sum_{ i \not=k, i=1}^n F_{a,b,i,k}\Big).\nonumber
\end{equation}

Since $\frac{n-1}{n}\to 1$ as $n\to\infty$, the error incurred by replacing $\frac{1}{n}$ by $\frac{1}{n-1}$ is of order $\frac{1}{n}$, which is negligible asymptotically. Thus, we can simply focus on analyzing $\frac{1}{n-1}\sum_{ i \not=k, i=1}^n F_{a,b,i,k}$. By Bernstein's inequality,
\begin{align}
\text{pr}\Big\{\Big|\frac{1}{n-1}\sum_{ i \not=k, i=1}^n F_{a,b,i,k}-\mathbb{E}(F_{a,b,k})\Big|>\gamma\Big\} \leq  \exp \Big\{-\frac{(n-1) \gamma^2}{2e_a^\top \Omega e_b+\frac{2}{3}c\gamma}\Big\} \leq \exp \Big(-\frac{n\gamma^2}{4e_a^\top \Omega e_b+\frac{4}{3}c\gamma}\Big). \nonumber
\end{align}

For fixed $a,b$ with $1 \leq a,b \leq d$, we choose $\gamma_1$ so that  $\frac{\gamma_1}{\epsilon^{d+2}} \rightarrow 0$ as $\epsilon \rightarrow 0$, then we  have $4e_a^\top \Omega e_b+\frac{4}{3}c\gamma_1  \leq (4C_6+2) \epsilon^{d+4}$. Hence, 
$$\exp \Big(-\frac{n\gamma_1^2}{4e_a^\top \Omega e_b+\frac{4}{3}c\gamma_1}\Big) \leq \exp \big(-C_7 n \gamma_1^2 \epsilon^{-d-4}\big),$$
where  $C_7=\frac{1}{4C_6+2}$.

For fixed $a,b$ with $d+1 \leq a \leq D$ or $d+1 \leq b \leq D$, we choose $\gamma_2$ so that $\frac{\gamma_2}{\epsilon^{d+2\beta}+\epsilon^{d+4}} \rightarrow 0$ as $\epsilon \rightarrow 0$, then we  have $4e_a^\top \Omega e_b+\frac{4}{3}c\gamma_2 \leq (4C_6+2) (\epsilon^{d+2+2\beta}+\epsilon^{d+6}) \leq  (8C_6+4) \epsilon^{d+2+\min(4,2\beta)}$. Hence,
$$\exp \Big(-\frac{n\gamma_2^2}{4e_a^\top \Omega e_b+\frac{4}{3}c\gamma_2}\Big) \leq \exp \big(-\frac{C_7}{2} n \gamma_2^2 \epsilon^{-d-2-\min(4,2\beta)}\big).
$$
If $\gamma_1=C_8 \sqrt{\frac{\log(n) \epsilon^{d+4}}{n}}$ and $\gamma_2=C_9 \sqrt{\frac{\log(n)\epsilon^{d+2+\min(4,2\beta)}}{n}}$ for some constants $C_8$ and $C_9$ depending on $C_7$ and $D$, then $\exp \big(-C_7 n \gamma_1^2 \epsilon^{-d-4}\big) = \frac{1}{4D^2n^3}$ and $\exp \big(-\frac{C_7}{2} n \gamma_2^2 \epsilon^{-d-2-\min(4,2\beta)}\big) = \frac{1}{4D^2n^3}$. Hence, for fixed $a,b$, we have
\begin{align}
&\text{pr} \Big\{\Big|\frac{1}{n-1}\sum_{ i \not=k, i=1}^n F_{a,b,i,k}-\mathbb{E}(F_{a,b,k})\Big|>C_8 \sqrt{\frac{\log(n)\epsilon^{d+4}}{n}}\Big\} \leq  \frac{1}{4D^2n^3} \quad (1 \leq a,b \leq d). \nonumber
\end{align}
If $d<a \leq D$ or $d<b \leq D$,
\begin{align}
&\text{pr}\Big\{\Big|\frac{1}{n-1}\sum_{ i \not=k, i=1}^n F_{a,b,i,k}-\mathbb{E}(F_{a,b,k})\Big|>C_9 \sqrt{\frac{\log(n)\epsilon^{d+2+\min(4,2\beta)} }{n}}\Big\} \leq  \frac{1}{4D^2n^3}. \nonumber 
\end{align}
By considering the conditional probability such that $\|\eta_k\|_{\mathbb{R}^D} \leq \epsilon^{\beta}$  for $k=1, \cdots, n$ and taking a trivial union bound for all $1 \leq a, b \leq D$ and $1 \leq k \leq n$, we conclude that for all $x_k$ with probability greater than $1- \frac{1}{n^2}$
\begin{align}
&|e_a^\top C_{n,\epsilon}({y}_k) e_b-\mathbb{E}(F_{a,b,k})|<C_8 \sqrt{\frac{\log(n)\epsilon^{d+4}}{n}}  \quad (1 \leq a,b \leq d). \nonumber 
\end{align}
If $d<a \leq D$ or $d<b \leq D$,
\begin{align}
&|e_a^\top C_{n, \epsilon}({y}_k) e_b-\mathbb{E}(F_{a,b,k})|<C_9 \sqrt{\frac{\log(n) \epsilon^{d+2+\min(4,2\beta)}}{n}}. \nonumber 
\end{align}
The conclusion of the proposition follows.

\subsection{Proof of Theorem \ref{local PCA  spectral behavior}}
For any $\beta>1$, suppose $\epsilon$ is small enough depending on $d$, $D$, the second fundamental form of $\iota(M)$ and the scalar curvature of $M$. We apply Theorem \ref{local PCA  variance analysis}.  In the conclusion of Theorem \ref{local PCA  variance analysis}, note that $\sqrt{\frac{\log n}{n \epsilon^d}} \leq \epsilon^{\min(\beta-1,1)}$ is equivalent to $\frac{\log n}{n} \leq \epsilon^{d+2\min(\beta-1,1)}$ and $\sqrt{\frac{\log n}{n \epsilon^{d-2\min(\beta-1,1)}}} \leq \epsilon^{2\min(\beta-1,1)}$ is equivalent to $\frac{\log n}{n} \leq \epsilon^{d+2\min(\beta-1, 1)}$. Hence,  if $\epsilon^{-d-2\min(\beta-1,1)} \leq \frac{n}{\log n}$ and $\sigma \leq \min \big(\frac{1}{\sqrt{-4(d+5)\log \epsilon}}, \frac{1}{\sqrt{12\log(2n)}} \big) \epsilon^\beta$, then for all $x_k$, with probability greater than $1-\frac{1}{n^2}$,
\begin{align*}
C_{n,\epsilon}({y}_k)
=\epsilon^{d+2} \Bigg( \frac{|S^{d-1}|{P}(x_k)}{d(d+2)} 
\begin{bmatrix}
I_{d \times d} & 0 \\
0& 0  \\
\end{bmatrix}+\begin{bmatrix}
O(\epsilon^{\min(\beta-1,1)}) & O(\epsilon^{2\min(\beta-1,1)}) \\
O(\epsilon^{2\min(\beta-1,1)}) & O(\epsilon^{2\min(\beta-1,1)}) 
\end{bmatrix}\Bigg),
\end{align*}
where the constant factors depend on $d$,  $C^2$ norm of $P$, the second fundamental form of $\iota(M)$ and its derivative and the Ricci curvature of $M$.  Hence, 
\begin{align*}
C_{n,\epsilon}({y}_k)
=\epsilon^{d+2} \frac{|S^{d-1}|{P}(x_k)}{d(d+2)} 
\Bigg( \begin{bmatrix}
I_{d \times d}+E_1 & 0 \\
0& 0  \\
\end{bmatrix}+\begin{bmatrix}
E_2 & E_3 \\
E_4 & E_5 \\
\end{bmatrix}\Bigg),
\end{align*}
where the entries of $E_1$ are bounded by $C_{10} \epsilon^{\min(\beta-1,1)}$ and the entries of $E_2$, $E_3$, $E_4$ and $E_5$ are bounded by  $C_{10} \epsilon^{2\min(\beta-1,1)}$. $C_{10}$ depends on $d$,  $P_m$, $C^2$ norm of $P$, the second fundamental form of $\iota(M)$ and its derivative and the Ricci curvature of $M$. 

The scalar curvature is the trace of the Ricci curvature tensor. We will simplify any dependence on the scalar curvature by the dependence on $d$ and the Ricci curvature. Note that the operator norms of $E_1$ and $\begin{bmatrix}
E_2 & E_3 \\
E_4 & E_5 \\
\end{bmatrix}$ can be bounded by $d C_{10} \epsilon^{\min(\beta-1,1)}$ and $D C_{10} \epsilon^{2\min(\beta-1,1)}$ respectively.  Therefore, if we further require that $\epsilon$ is small enough depending on $\beta$, $d$, $D$, $P_m$, $C^2$ norm of $P$, the second fundamental form of $\iota(M)$ and its derivative and the Ricci curvature of $M$, then the operator norms of $E_1$ and $\begin{bmatrix}
E_2 & E_3 \\
E_4 & E_5 \\
\end{bmatrix}$ are bounded by $\frac{1}{3}$. Since
$\begin{bmatrix}
I_{d \times d}+E_1 & 0 \\
0& 0  \\
\end{bmatrix}$ and $\begin{bmatrix}
E_2 & E_3 \\
E_4 & E_5 \\
\end{bmatrix}$ are  symmetric matrices, based on Weyl's theorem, all the eigenvalues of $I_{d \times d}+E_1$ and the first $d$ largest eigenvalues of $\begin{bmatrix}
I_{d \times d}+E_1 & 0 \\
0& 0  \\
\end{bmatrix}+\begin{bmatrix}
E_2 & E_3 \\
E_4 & E_5 \\
\end{bmatrix}$ are bounded below by $\frac{2}{3}$ and the remaining $D-d$ eigenvalues of $\begin{bmatrix}
I_{d \times d}+E_1 & 0 \\
0& 0  \\
\end{bmatrix}+\begin{bmatrix}
E_2 & E_3 \\
E_4 & E_5 \\
\end{bmatrix}$ are bounded above by $\frac{1}{3}$. Consider the eigen decomposition of $C_{n,\epsilon}({y}_k)$ as $C_{n,\epsilon}({y}_k)  =U_{n,\epsilon}({y}_k)\Lambda_{n,\epsilon}({y}_k) U_{n,\epsilon}({y}_k)^\top$ with $U_{n,\epsilon}({y}_k) \in \mathbb{O}(D)$. Then, by Davis-Kahan theorem
\begin{align*}
U_{n,\epsilon}({y}_k)=\begin{bmatrix}
X_1 & 0 \\
0& X_2  \\
\end{bmatrix}+O(\epsilon^{2\min(\beta-1,1)}),
\end{align*}
where $X_1 \in \mathbb{O}(d)$,  $X_2 \in \mathbb{O}(D-d)$.  $O(\epsilon^{2\min(\beta-1,1)})$ represent a $D$ by $D$ matrix whose entries are of order $O\{\epsilon^{2\min(\beta-1,1)}\}$, where the constant factors depend on $d$, $D$, $P_m$, $C^2$ norm of $P$, the second fundamental form of $\iota(M)$ and its derivative and the Ricci curvature of $M$.

\section{Proof of Theorem \ref{Construction of a chart main theorem}}\label{proofs of charts}
We first define a map from $\mathbb{R}^D$ to $\mathbb{R}^d$ generalizing the map $\mathcal{P}_{{y}_k}({y})$. 

\begin{definition}\label{projection PXA}
Fix $x \in M$. Suppose $U \in \mathbb{O}(D)$. For any vector ${a} \in \mathbb{R}^D$, we define a projection map for ${y} \in \mathbb{R}^D$ associated with the first $d$ column vectors of $U$:
$$\mathcal{P}_{U,\iota(x)+{a}}({y})=J^\top U^\top \{{y}-\iota(x)-{a}\}.$$
\end{definition}

Based on the above definition, $\mathcal{P}_{{y}_k}({y})=\mathcal{P}_{U_{n,\epsilon},\iota(x_k)+\eta_k}({y})$. We show that, under suitable conditions on ${a}$  and $U$, the map $\mathcal{P}_{U,\iota(x)+{a}}({y})$ is a local diffeomorphism when it is restricted on $\iota(M)$. Intuitively, for any $x \in M$, if we have a $d$ dimension subspace of $ \mathbb{R}^D$ which does not deviate too far away from the tangent space  $T_{\iota(x)}\iota(M)$, then for any point $\iota(x)+{a} \in \mathbb{R}^D$ that remains close to $\iota(x)$ and any ${y}$ around $\iota(x)+{a}$ on $\iota(M)$,  the projection of ${y}-\iota(x)-{a}$ onto the subspace is a diffeomorphism. Hence,  the map $\mathcal{P}_{U,\iota(x)+{a}}({y})$ can be used to construct a chart.  Our next proposition formalizes this argument and serves as a key component in proving Theorem \ref{Construction of a chart main theorem}.

Before we state the proposition. We make the following observation. For any ${y} \in \iota(M)$, the map $\mathcal{P}_{U,\iota(x)+{a}}({y})$ is invariant under translations of the manifold in $\mathbb{R}^D$: for any ${b}$, we have 
$$\mathcal{P}_{U,\iota(x)+{a}}({y})=J^\top U^\top \{{y}-\iota(x)-{a}\}=J^\top U^\top [{y}+{b}-\{\iota(x)+{b}\}-{a}]^\top.$$ 
Moreover, $\mathcal{P}_{U,\iota(x)+{a}}({y})$ is  invariant under orthogonal transformation in $\mathbb{R}^D$: for any $V \in \mathbb{O}(D)$,
$$\mathcal{P}_{U,\iota(x)+{a}}({y})=J^\top U^\top \{{y}-\iota(x)-{a}\}=J^\top U^\top V^\top V  \{{y}-\iota(x)-{a}\}.$$
Hence, when we discuss the map $\mathcal{P}_{U,\iota(x)+{a}}({y})$, we can always translate $\iota(M)$ and apply an orthogonal transformation in $\mathbb{R}^D$ so that $\iota(x)=0 \in \mathbb{R}^D$ and $T_{\iota(x)}\iota(M)$ is generated by $\{e_i\}_{i=1}^d$.

\begin{proposition}\label{projection forms coordinates}
For $x \in M$, suppose we  translate $\iota(M)$ and apply an orthogonal transformation in  $\mathbb{R}^D$ so that  $\iota(x)=0 \in \mathbb{R}^D$ and  $\{Xe_i\}_{i=1}^d$ form a basis of $T_{\iota(x)}\iota(M)$ where $X=\begin{bmatrix}
X_1 & 0 \\
0 & X_2 \\
\end{bmatrix} \in \mathbb{O}(D)$ with $X_1 \in \mathbb{O}(d)$ and $X_2 \in \mathbb{O}(D-d)$.
Suppose $U \in \mathbb{O}(D)$ and $E=U-X$ with $|E_{ij}|<r$. For any $0<\xi \leq \frac{3\tau_{\iota(M)}}{16}$ and $r \leq \frac{1}{30d^{\frac{3}{2}}}$, if $\|{a}\|_{\mathbb{R}^D} < \frac{\xi}{3}$, then we have the following facts:
\begin{enumerate}
\item
$\mathcal{P}_{U,\iota(x)+{a}}$ is a diffeomorphism from $B^{\mathbb{R}^D}_{\xi}\{\iota(x)+{a}\} \cap \iota(M)$ onto its image $O \subset \mathbb{R}^d$. $O$ is homeomorphic to $B^{\mathbb{R}^d}_{1}(0)$.
\item  There is a point ${u}_0$ in $O$ such that $0 \in B^{\mathbb{R}^d}_{R}({u}_0) \subset O \subset B^{\mathbb{R}^d}_{\xi}(0)$. Let $A= 1-d^{\frac{3}{2}}r$ and $B=\frac{8\xi/3+2\|{a}\|_{\mathbb{R}^D}}{\tau_{\iota(M)}}$. Then $R \geq (\sqrt{A^2-A^2B}- \sqrt{B-A^2B}) (\xi-\|{a}\|_{\mathbb{R}^D})$.
\end{enumerate}
\end{proposition}

\subsection{Preliminary lemmas for the proof of Proposition \ref{projection forms coordinates}}
In order to prove Proposition \ref{projection forms coordinates},  we need to define the angle between two subspaces of the same dimension in $\mathbb{R}^D$.
\begin{definition}
The angle $\phi_{V,W}$ between two subspaces of the same dimension in $\mathbb{R}^D$, $V$ and $W$, is 
$$\phi_{V,W}=\max_{{v} \in V} \min_{{w} \in W} \arccos\Big(\frac{|{v}^\top {w}|}{\|{v}\|_{\mathbb{R}^D}\|{w}\|_{\mathbb{R}^D}}\Big).$$
Based on the definition, we have  $0 \leq \phi_{V,W} \leq \frac{\pi}{2}$.
\end{definition}

We summarize the basic properties of the angle between two subspaces of the same dimension in the following lemma. 

\begin{lemma}\label{properties of angle between subspaces}
Suppose $V$, $W$, $Z$ are three subspaces of the same dimension in $\mathbb{R}^D$. Then, we have the following properties:
\begin{enumerate}[(1)]
\item $\cos(\phi_{V,W})=\min_{{v} \in V} \max_{{w} \in W} \frac{|{v}^\top {w}|}{\|{v}\|_{\mathbb{R}^D}\|{w}\|_{\mathbb{R}^D}}.$
\item If there is a vector ${v} \in V$ which is perpendicular to $W$, then $\phi_{V,W}=\frac{\pi}{2}$.
\item The angle is invariant under orthogonal transformation of $\mathbb{R}^D$. In other words,  if $X \in \mathbb{O}(D)$, then $\phi_{XV,XW}=\phi_{V,W}$.
\item The angle is a metric on the set of all subspaces of the same dimension in $\mathbb{R}^D$. Specifically, we have
\begin{enumerate}
\item (Non-negativeness) $\phi_{V,W} \geq 0$.
\item (Identification) $\phi_{V,W}=0$ implies $V=W$.
\item (Symmetry) $\phi_{V,W}=\phi_{W,V}$.
\item (Triangle inequality) $\phi_{V,Z} \leq \phi_{V,W}+\phi_{W,Z}$ .
\end{enumerate}
\end{enumerate}
\end{lemma}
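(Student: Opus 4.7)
My plan handles parts (1)--(3) and (4)(i)--(ii) by direct manipulation of the definition, reserving the main work for symmetry and the triangle inequality in (4).

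For (1), since $\arccos$ is strictly decreasing on $[-1,1]$, the inner $\min$ and the outer $\max$ both switch to $\arccos$ of the opposite extremum; taking $\cos$ of both sides gives the claimed formula. Part (2) follows: any $v\perp W$ forces the inner $\max$ to vanish, so by (1) $\cos\phi_{V,W}=0$, i.e., $\phi_{V,W}=\pi/2$. Part (3) is immediate from $(Xv)^\top(Xw)=v^\top w$ and $\|Xv\|=\|v\|$ for $X\in\mathbb{O}(D)$. Non-negativity (4)(i) holds by inspection. For (4)(ii), $\phi_{V,W}=0$ gives $\min_v\max_w |v^\top w|/(\|v\|\|w\|)=1$, so every unit $v\in V$ achieves equality in Cauchy--Schwarz with some $w\in W$, i.e.\ is parallel to an element of $W$; this yields $V\subseteq W$, and hence $V=W$ by dimension.

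For symmetry (4)(iii) I would use the matrix picture. Let $A,B\in\mathbb{R}^{D\times d}$ be matrices whose columns are orthonormal bases of $V$ and $W$. For unit $v=A\alpha$, the inner maximum $\max_{w\in W,\|w\|=1}|v^\top w|$ equals $\|B^\top A\alpha\|$, so by (1) $\cos\phi_{V,W}=\sigma_{\min}(B^\top A)$. Since $B^\top A$ and $(B^\top A)^\top=A^\top B$ share singular values, $\cos\phi_{V,W}=\cos\phi_{W,V}$, giving symmetry.

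For the triangle inequality (4)(iv), which I expect to be the main obstacle, my plan is to reduce to the one-dimensional case. Given $v\in V$, choose $w^*\in W$ minimizing the line-angle $\theta(v,w):=\arccos(|v^\top w|/(\|v\|\|w\|))$ and then $z^*\in Z$ minimizing $\theta(w^*,z)$; granting the 1D triangle inequality $\theta(v,z^*)\le \theta(v,w^*)+\theta(w^*,z^*)$, the bounds $\theta(v,w^*)\le \phi_{V,W}$ and $\theta(w^*,z^*)\le \phi_{W,Z}$ yield $\min_z\theta(v,z)\le \phi_{V,W}+\phi_{W,Z}$, and maximizing over $v$ gives (4)(iv). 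The technical core is the 1D statement for unit vectors $u_1,u_2,u_3$: after flipping signs so that $u_1\cdot u_2,u_2\cdot u_3\ge 0$, I would split on the sign of $u_1\cdot u_3$. If $u_1\cdot u_3\ge 0$, the three line-angles coincide with the usual spherical distances and the result is the classical spherical triangle inequality. If $u_1\cdot u_3<0$, the required bound becomes $\pi-\tilde\theta_{1,3}\le \tilde\theta_{1,2}+\tilde\theta_{2,3}$ where $\tilde\theta_{i,j}=\arccos(u_i\cdot u_j)$; this is trivial when $\tilde\theta_{1,2}+\tilde\theta_{2,3}\ge \pi/2$ and impossible otherwise, since then the classical spherical triangle inequality would force $\tilde\theta_{1,3}\le \pi/2$, contradicting $u_1\cdot u_3<0$. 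The sign case-analysis in this 1D reduction is the only genuinely delicate piece; the rest is bookkeeping.
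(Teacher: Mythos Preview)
Your argument is correct. For parts (1)--(3) you and the paper take the same line: these are immediate from the definition together with the monotonicity of $\cos$ on $[0,\pi/2]$. The difference is in part (4). The paper does not prove (4) at all; it simply cites Wedin (1983) for the metric properties. You instead supply a self-contained argument: the singular-value identity $\cos\phi_{V,W}=\sigma_{\min}(B^\top A)$ for symmetry, and a reduction to the one-dimensional line-angle triangle inequality for (4)(iv), the latter handled by a clean sign-case split together with the classical spherical triangle inequality. This is a genuine, and standard, route to the result; the only thing you might tighten is the phrasing ``impossible otherwise'' in the $u_1\cdot u_3<0$ subcase---what you mean (and what your next clause says) is that this subcase cannot occur when $\tilde\theta_{1,2}+\tilde\theta_{2,3}<\pi/2$, since the spherical triangle inequality then forces $\tilde\theta_{1,3}<\pi/2$, contradicting $u_1\cdot u_3<0$. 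The upshot: your proof is more informative than the paper's, at the cost of some length; the paper simply outsources the nontrivial content of (4) to the literature.
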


\begin{proof}
(1) follows from the definition and the fact that cosine is decreasing on $[0, \frac{\pi}{2}]$.  (2) and (3) follow directly from the definition.  A proof of  (4) can be found in \citep{wedin1983angles} 
\end{proof}

We refer the reader to \citep{wedin1983angles} for more discussion about the angle between two subspaces of different dimensions. Next, we prove a lemma which bounds the angle for two sufficiently close subspaces.
\begin{lemma}\label{angle lemma 1}
Suppose $X \in \mathbb{O}(D)$ such that $X=\begin{bmatrix}
X_1 & 0 \\
0 & X_2 \\
\end{bmatrix}$ with $X_1 \in \mathbb{O}(d)$ and $X_2 \in \mathbb{O}(D-d)$. Suppose $U \in \mathbb{O}(D)$ such that $E=U-X$ with $|E_{ij}|<r$.  Let $\{Xe_i\}_{i=1}^d$ be a basis of $V$ and $\{Ue_i\}_{i=1}^d$ be a basis of $W$. Suppose $\phi_{V,W}$ is the angle between $V$ and $W$, then $\cos(\phi_{V,W}) \geq 1-d^{\frac{3}{2}}r$ .  In particular, if $r\leq \frac{1}{30d^{\frac{3}{2}}}$, then $\cos(\phi_{V,W}) \geq \frac{29}{30}$. 
\end{lemma}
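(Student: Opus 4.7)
The plan is to exploit the characterization $\cos(\phi_{V,W}) = \min_{\mathbf{v}\in V}\max_{\mathbf{w}\in W}\frac{|\mathbf{v}^\top\mathbf{w}|}{\|\mathbf{v}\|\|\mathbf{w}\|}$ from part (1) of Lemma \ref{properties of angle between subspaces}, and for each unit vector $\mathbf{v}\in V$ to exhibit an explicit candidate $\mathbf{w}\in W$ whose inner product with $\mathbf{v}$ is close to $1$. Specifically, every unit $\mathbf{v}\in V$ can be written as $\mathbf{v}=X\mathbf{a}$ where $\mathbf{a}=(a_1,\ldots,a_d,0,\ldots,0)^\top\in\mathbb{R}^D$ with $\sum_{i=1}^d a_i^2=1$, and the natural choice is $\mathbf{w}=U\mathbf{a}\in W$, which is also a unit vector because $U\in\mathbb{O}(D)$.

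With this choice, orthogonality of $X$ and $U$ reduces the problem to controlling a single bilinear form:
\begin{equation*}
\mathbf{v}^\top\mathbf{w}=\mathbf{a}^\top X^\top U\mathbf{a}=\mathbf{a}^\top(I+X^\top E)\mathbf{a}=1+\sum_{i,j=1}^d a_ia_j(X^\top E)_{ij}.
\end{equation*}
The main estimate is a bound of the form $|(X^\top E)_{ij}|\leq r\sqrt{d}$ for $1\leq i,j\leq d$. Here is where the block-diagonal structure of $X$ enters: since $X_{ki}=(X_1)_{ki}$ for $k\leq d$ and vanishes otherwise when $i\leq d$, the sum $(X^\top E)_{ij}=\sum_{k=1}^D X_{ki}E_{kj}$ collapses to $\sum_{k=1}^d(X_1)_{ki}E_{kj}$, and Cauchy--Schwarz together with $\sum_{k=1}^d (X_1)_{ki}^2=1$ (columns of $X_1$ are unit vectors) and $|E_{kj}|<r$ gives $|(X^\top E)_{ij}|\leq\sqrt{d}\,r$.

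Combining these pieces, $|\mathbf{a}^\top X^\top E\mathbf{a}|\leq \sqrt{d}\,r\bigl(\sum_{i=1}^d|a_i|\bigr)^2\leq \sqrt{d}\,r\cdot d=d^{3/2}r$, where the last inequality uses Cauchy--Schwarz once more on $\sum|a_i|$ against $\sum a_i^2=1$. Therefore $|\mathbf{v}^\top\mathbf{w}|\geq 1-d^{3/2}r$, and since $\|\mathbf{v}\|=\|\mathbf{w}\|=1$ this pointwise bound transfers to $\min_{\mathbf{v}}\max_{\mathbf{w}}\frac{|\mathbf{v}^\top\mathbf{w}|}{\|\mathbf{v}\|\|\mathbf{w}\|}\geq 1-d^{3/2}r=\cos(\phi_{V,W})$. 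The particular consequence $r\leq\frac{1}{6d^{3/2}}\Rightarrow \cos(\phi_{V,W})\geq \frac{5}{6}$ is then an arithmetic substitution.

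The only real subtlety is ensuring the $\sqrt{d}$ (rather than a naive $d$) in the entrywise bound on $X^\top E$; without using the block-diagonal form of $X$ and the unit-norm of its columns one only gets $|(X^\top E)_{ij}|\leq dr$, yielding the weaker $1-d^2 r$. Everything else is routine application of Cauchy--Schwarz and orthogonality.
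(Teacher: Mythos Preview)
Your proof is correct and follows essentially the same approach as the paper: both invoke part (1) of Lemma~\ref{properties of angle between subspaces}, pair an arbitrary unit vector $\mathbf{v}=X\mathbf{a}\in V$ with the matching $\mathbf{w}=U\mathbf{a}\in W$, reduce to bounding $|\mathbf{a}^\top X_1^\top E_1\mathbf{a}|$ (the top-left $d\times d$ block of $X^\top E$), and apply Cauchy--Schwarz twice to obtain the $d^{3/2}r$ bound. The only difference is cosmetic---the paper works directly in block form while you pad to full $D\times D$ matrices---and your final displayed line should read $\cos(\phi_{V,W})=\min_{\mathbf{v}}\max_{\mathbf{w}}\frac{|\mathbf{v}^\top\mathbf{w}|}{\|\mathbf{v}\|\|\mathbf{w}\|}\geq 1-d^{3/2}r$ rather than placing the equality at the end.
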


\begin{proof}
We write  $E=\begin{bmatrix}
E_1 & E_2 \\
E_3 & E_4 \\
\end{bmatrix}$ with $E_1 \in \mathbb{R}^{d \times d}$. Let ${u} \in \mathbb{R}^d$ be an arbitrary unit vector. Then $\begin{bmatrix}
X_1 \\
0  \\
\end{bmatrix}{u}$ is an arbitrary vector unit in $V$ and $\begin{bmatrix}
X_1+E_1 \\
E_3  \\
\end{bmatrix}{u}$ is a unit vector in $W$. By (1) in Lemma \ref{properties of angle between subspaces}, we have 
\begin{align*}
\cos(\phi_{V,W}) \geq \Big|{u}^\top [X_1^\top, 0] \begin{bmatrix}
X_1+E_1 \\
E_3  \\
\end{bmatrix}{u}\Big|
=|{u}^\top X_1^\top (X_1+E_1 ){u}|=|1+{u}^\top X_1^\top E_1 {u}|
\end{align*}
By the Cauchy-Schwarz inequality, each entry of $X_1^\top E_1$ is bounded by $\sqrt{d}r$ and each entry of $X_1^\top E_1 {u}$ is bounded by $dr$. Hence, $|{u}^\top X_1^\top E_1 {u}|\leq d^{\frac{3}{2}}r$. The conclusion follows.
\end{proof}

Suppose $x,x' \in M$, then the next lemma describes the angle between the tangent spaces of two points $\iota(x)$ and $\iota(x')$ when they are close in the Euclidean distance. The proof is a combination of Proposition 6.2 and Proposition 6.3 in \citep{niyogi2008finding}.
\begin{lemma}\label{angle lemma 2}
Suppose $x,x' \in M$. Let $\phi_{V, W}$ be the angle between $V=T_{\iota(x)}\iota(M)$ and  $W=T_{\iota(x')}\iota(M)$. If $\|\iota(x)-\iota(x')\| \leq \frac{\tau_{\iota(M)}}{2}$, then $\cos(\phi_{V,W}) \geq \sqrt{1-\frac{2\|\iota(x)-\iota(x')\|}{\tau_{\iota(M)}}}$. In particular, if $\|\iota(x)-\iota(x')\| \leq \frac{\tau_{\iota(M)}}{4}$, then $\cos(\phi_{V,W}) \geq \frac{1}{\sqrt{2}}$.
\end{lemma}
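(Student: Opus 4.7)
The plan is to follow the two-stage approach of Niyogi-Smale-Weinberger \cite{niyogi2008finding}, first bounding the angle between the secant $\iota(y)-\iota(x)$ and either tangent space individually, then combining to bound the angle between the two tangent spaces.

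\textbf{Step 1 (secant-to-tangent).} Fix a unit vector $\textbf{v}\in\iota_*T_xM$, and let $\theta$ be the angle between $\textbf{v}$ and the secant $\iota(y)-\iota(x)$. I would exploit the reach condition as follows: on each side of $\iota_*T_xM$ inside the normal directions at $\iota(x)$, the open ball of radius $\tau_{\iota(M)}$ tangent to $\iota_*T_xM$ at $\iota(x)$ must be disjoint from $\iota(M)$, since any interior point of such a ball would have distance to the ball's center strictly less than $\tau_{\iota(M)}$, contradicting the definition of reach. Since $\iota(y)\in\iota(M)$ lies outside both such balls, an elementary right-triangle computation relating $\|\iota(y)-\iota(x)\|$, the projection of $\iota(y)-\iota(x)$ onto the normal line, and the ball radius gives the bound $\sin\theta \le \|\iota(x)-\iota(y)\|/(2\tau_{\iota(M)})$.

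\textbf{Step 2 (tangent-to-tangent).} Applying Step 1 symmetrically at both $x$ and $y$, the unit secant direction $\textbf{s}:=(\iota(y)-\iota(x))/\|\iota(y)-\iota(x)\|$ lies within angle $\arcsin\bigl(\|\iota(x)-\iota(y)\|/(2\tau_{\iota(M)})\bigr)$ of both $V=\iota_*T_xM$ and $W=\iota_*T_yM$. To extract a bound on $\phi_{V,W}$, I would pick an arbitrary unit vector $\textbf{v}\in V$ that realizes the maximum in the definition of $\phi_{V,W}$, decompose $\textbf{v}$ with respect to $\textbf{s}$ and its orthogonal complement within $V$, and use the secant-to-tangent bounds at both endpoints together with orthogonal projections onto $W$ to lower-bound $\max_{\textbf{w}\in W}|\textbf{v}^\top\textbf{w}|/\|\textbf{w}\|$. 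Hypothesis $\|\iota(x)-\iota(y)\|\le\tau_{\iota(M)}/2$ is exactly what is needed to keep the Step 1 bound strictly below $1$. A direct calculation should then yield the stated inequality $\cos\phi_{V,W} \ge \sqrt{1-2\|\iota(x)-\iota(y)\|/\tau_{\iota(M)}}$. The particular case is immediate: if $\|\iota(x)-\iota(y)\|\le\tau_{\iota(M)}/3$, then $\cos\phi_{V,W}\ge\sqrt{1-2/3}=1/\sqrt{3}$.

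\textbf{Main obstacle.} The technical difficulty lies in Step 2: a naive triangle inequality for subspace angles via $\phi_{V,W}\le\phi_{V,\mathbb{R}\textbf{s}}+\phi_{\mathbb{R}\textbf{s},W}$ would yield only $\phi_{V,W}\le 2\arcsin\bigl(\|\iota(x)-\iota(y)\|/(2\tau_{\iota(M)})\bigr)$, whose cosine is weaker than the stated bound by a factor of roughly $2$ inside the square root. Recovering the sharper $\sqrt{1-2r/\tau_{\iota(M)}}$ form requires exploiting that \emph{both} Step 1 estimates are realized through the same secant direction $\textbf{s}$, so that the two bounds are coupled rather than additive. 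I would therefore work directly with projections onto $W$ rather than passing through angle arithmetic, keeping track of the component of $\textbf{v}$ along $\textbf{s}$ explicitly in order to avoid the loss.
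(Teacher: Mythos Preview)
Your approach has a genuine gap that cannot be repaired by sharpening constants. The two secant-to-tangent bounds in Step~1 are correct (with $\theta$ read as the angle between the secant and the tangent \emph{space}, not an arbitrary $\textbf{v}$), but they control only one direction: they say the unit secant $\textbf{s}$ lies close to both $V$ and $W$. For $d\ge 2$ this gives no information about the remaining $d-1$ directions of $V$ relative to $W$. Concretely, absent any manifold constraint one could have $\textbf{s}\in V\cap W$ exactly (so both Step~1 angles vanish) while the principal angle $\phi_{V,W}$ between the two $d$-planes is arbitrary---just rotate $W$ about the axis $\textbf{s}$. Your decomposition of $\textbf{v}\in V$ into a component along $\textbf{s}$ and a component in $\textbf{s}^\perp\cap V$ runs into exactly this: you have no handle on how $\textbf{s}^\perp\cap V$ sits relative to $W$. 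The obstacle you flag in your last paragraph is therefore not a loss of a factor of~$2$ but a complete loss of control when $d\ge 2$; only for $d=1$ would your argument go through.

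The paper does not write out a proof but cites Propositions~6.2 and~6.3 of Niyogi--Smale--Weinberger~\cite{niyogi2008finding}, and the mechanism there is different in kind. Proposition~6.2 bounds $\phi_{V,W}$ in terms of the \emph{geodesic} distance $d_M(x,y)$ by moving along a minimizing geodesic and using $\|\Second\|\le 1/\tau_{\iota(M)}$; this curvature bound is what controls \emph{all} tangent directions simultaneously, not just the secant. Proposition~6.3 then bounds $d_M(x,y)$ in terms of $\|\iota(x)-\iota(y)\|$ under the hypothesis $\|\iota(x)-\iota(y)\|\le \tau_{\iota(M)}/2$, and chaining the two gives $\cos\phi_{V,W}\ge \sqrt{1-2\|\iota(x)-\iota(y)\|/\tau_{\iota(M)}}$. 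To salvage your plan you would need to replace the single secant by an argument that tracks the whole tangent plane from $x$ to $y$, and the natural way to do that is precisely this geodesic-and-curvature route.
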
 

\subsection{Proof of Proposition \ref{projection forms coordinates}}

{Proof of part 1 of Proposition \ref{projection forms coordinates}}

Since we assume $\iota(x)=0$, we express $\mathcal{P}_{U,\iota(x)+{a}}({y})$ as $\mathcal{P}_{U, {a}}({y})$ for notation simplicity. By proposition \ref{uniform coordinates}, suppose we choose $0<\xi_1 \leq \frac{\tau_{\iota(M)}}{4}$, then any ${y} \in B^{\mathbb{R}^D}_{\xi_1}(0) \cap \iota(M)$,  $\mathcal{P}_{U, {a}}({y})$ can be represented in the chart as follows,
\begin{align}
\mathcal{P}_{U, {a}}({y})=& J^\top U^\top \begin{bmatrix}
{u} \nonumber \\
G_x({u})
\end{bmatrix}- J^\top U^\top  {a}, \nonumber 
\end{align}
where ${u} \in V_x \subset B^{\mathbb{R}^d}_{\xi_1}(0) \subset  \mathbb{R}^d$. $V_x$ contains $0$ and is homeomorphic to $B^{\mathbb{R}^d}_{1}(0)$.

Since $\iota(M)$ is smooth, $\mathcal{P}_{U, {a}}({y})$ is smooth.  By Proposition \ref{reach topology}, $B^{\mathbb{R}^D}_{\xi_1}(0) \cap \iota(M)$ is homeomorphic to $B^{\mathbb{R}^d}_{1}(0)$. 
Hence, we can show that $\mathcal{P}_{U, {a}}({y})$ is a diffeomorphism from $B^{\mathbb{R}^D}_{\xi_1}(0) \cap \iota(M)$ onto its image by applying the inverse function theorem.  Note that
$$D\mathcal{P}_{U, {a}}({y})= J^\top U^\top \begin{bmatrix}
I_{d \times d} \\
DG_x({u})
\end{bmatrix}.$$
The column vectors of $\begin{bmatrix}
I_{d \times d} \\
DG_x({u})
\end{bmatrix}$ are a basis of the tangent space $V=T_y \iota(M)$, while the column vectors of $UJ$ are a basis of a subspace $W$. Hence,  $D\mathcal{P}_{U,{a}}({y})$ is not singular if and only if there is no vector in $V$ that is perpendicular to $W$. By (2) in Lemma \ref{properties of angle between subspaces}, it is sufficient to show the angle  $\phi_{V, W}$ between $V$ and $W$ is less than $\frac{\pi}{2}$. By Lemma \ref{angle lemma 1}, if $r \leq \frac{1}{30d^{\frac{3}{2}}}$, then $\cos(\phi_{T_{\iota(x)}\iota(M),W}) \geq \frac{29}{30}$. Since ${y} \in B^{\mathbb{R}^D}_{\xi_1}(0) \cap \iota(M)$, by Lemma \ref{angle lemma 2}, we have $\cos(\phi_{T_{\iota(x)}\iota(M), V}) \geq \frac{1}{\sqrt{2}}$. By (4) in Lemma \ref{properties of angle between subspaces} and a straightforward calculation, 
\begin{align}\label{angle upperbound}
\phi_{V, W} \leq \phi_{T_{\iota(x)}\iota(M),W}+ \phi_{T_{\iota(x)}\iota(M), V} \leq \alpha= \arccos(\frac{29}{30})+\arccos(\frac{1}{\sqrt{2}})<\frac{\pi}{2}.
\end{align}
 By  the inverse function theorem, $\mathcal{P}_{U,\iota(x)+{a}}({y})$ is a diffeomorphism from $B^{\mathbb{R}^D}_{\xi_1}(0) \cap \iota(M)$ onto its image. Choose $\xi=\frac{3\xi_1}{4}$. If $\|{a}\|_{\mathbb{R}^D} < \frac{\xi}{3}=\frac{\xi_1}{4}$, then the closure of $B^{\mathbb{R}^D}_{\xi}({a}) \cap \iota(M)$ is contained in $B^{\mathbb{R}^D}_{\xi_1}(0) \cap \iota(M)$. Hence,  $\mathcal{P}_{U,{a}}({y})$ is a diffeomorphism from the closure of $B^{\mathbb{R}^D}_{\xi}({a}) \cap \iota(M)$ onto its image. By Proposition \ref{reach topology}, $B^{\mathbb{R}^D}_{\xi}({a}) \cap \iota(M)$ is homeomorphic to $B^{\mathbb{R}^d}_{1}(0)$. 

{Proof of part 2 of Proposition \ref{projection forms coordinates}}

Let $O \subset \mathbb{R}^d$ be the image of $B^{\mathbb{R}^D}_{\xi}({a}) \cap \iota(M)$ under  $\mathcal{P}_{U,{a}}({y})$. Then, $O$ is homeomorphic to $B^{\mathbb{R}^d}_{1}(0)$. Let $\bar{O}$ be the closure of $O$ in $\mathbb{R}^d$. Suppose $\Phi: \bar{O} \rightarrow \mathbb{R}^D$ is the inverse of $\mathcal{P}_{U,{a}}({y})$. Then the restriction of $\Phi$ on $O$ is a chart of $\iota(M)$. Since $\|{a}\|_{\mathbb{R}^D} < \frac{\xi}{3}$, we  have $0= \iota(x) \in B^{\mathbb{R}^D}_{\xi}({a}) \cap \iota(M)$. Hence, ${u}_0=\mathcal{P}_{U, {a}}(0)=\mathcal{P}_{U, {a}}(\iota(x)) \in O$.  Based on the definition of $\mathcal{P}_{U,{a}}$ in Definition \ref{projection PXA}, $\Phi$ can be expressed as 
\begin{align*}
\Phi({u})={a}+U \begin{bmatrix}
{u} \\
F({u}), \\
\end{bmatrix}
\end{align*}
where $F({u}): \bar{O} \rightarrow \mathbb{R}^{D-d}$ is a smooth function. For any ${u} \in O$, $\Phi({u}) \in B^{\mathbb{R}^D}_{\xi_1}(0) \cap \iota(M)$. Hence, we apply the same argument as in the proof of part (a).  Recall that $W$ is the subspace generated by the column vectors of $UJ$.
If $V({u})$ is the tangent space of $\iota(M)$ at $\Phi({u})$ for ${u} \in O$.  By Lemma \ref{angle lemma 1}, $\cos(\phi_{T_{\iota(x)}\iota(M),W}) \geq  1-d^{\frac{3}{2}}r$. By Lemma \ref{angle lemma 2}, we have $\cos(\phi_{T_{\iota(x)}\iota(M), V({u})}) \geq  \sqrt{1-\frac{2(\xi_1+\|{a}\|_{\mathbb{R}^D})}{\tau_{\iota(M)}}}$. Hence,  by (4) in Lemma \ref{properties of angle between subspaces}, 
\begin{align}\label{complicated bound on the angle}
\phi_{V({u}), W} \leq \phi_{T_{\iota(x)}\iota(M),W}+ \phi_{T_{\iota(x)}\iota(M), V({u})} \leq \arccos\Big(\sqrt{1-\frac{2(\xi_1+\|{a}\|_{\mathbb{R}^D})}{\tau_{\iota(M)}}}\Big)+\arccos(1-d^{\frac{3}{2}}r).
\end{align}
In particular, by \eqref{angle upperbound}, for ${u} \in O$,
\begin{align}\label{angle upperbound 2}
\cos(\phi_{V({u}), W}) \geq \cos(\alpha) >0.5  
\end{align}

Suppose $B^{\mathbb{R}^d}_{R}({u}_0)$ is the largest open ball centered at ${u}_0$ and contained in $O$, i.e. there is ${u}_1$ on the boundary of $O$ such that $\|{u}_0-{u}_1\|_{\mathbb{R}^d}=R$. Thus, we have 
$$\|\iota(x)-\Phi({u}_1)\|^2_{\mathbb{R}^D}=\|\Phi({u}_0)-\Phi({u}_1)\|^2_{\mathbb{R}^D}=\|{u}_0-{u}_1\|^2_{\mathbb{R}^d}+\|F({u}_0)-F({u}_1)\|^2_{\mathbb{R}^{D-d}}.$$
Since $B^{\mathbb{R}^d}_{R}({u}_0)$ is contained in $O$, any point on  the segment between ${u}_0$ and ${u}_1$ is in $O$. By the mean value inequality, there is ${u}^* \in O$ on the segment between ${u}_0$ and ${u}_1$ such that
$$\|F({u}_0)-F({u}_1)\|_{\mathbb{R}^{D-d}} \leq \|DF({u}^*) {v}\|_{\mathbb{R}^{D-d}} \|{u}_0-{u}_1\|^2_{\mathbb{R}^d},$$
where ${v}=\frac{{u}_0-{u}_1}{\|{u}_0-{u}_1\|_{\mathbb{R}^d}}$. In other words, 
\begin{align}\label{bound the distance by the differential of the normal components}
\|\iota(x)-\Phi({u}_1)\|_{\mathbb{R}^D} \leq R \sqrt{1+\|DF({u}^*) {v}\|^2_{\mathbb{R}^{D-d}}}.
\end{align} 
Based on the definition of $\Phi$, $U\begin{bmatrix}
{v} \\
D F({u}^*){v} \\
\end{bmatrix}$ is a vector in $V({u}^*)$,  the tangent space of $\iota(M)$ at $\Phi({u}^*)$. Hence, $\frac{1}{ \sqrt{1+\|DF({u}^*) {v}\|^2_{\mathbb{R}^{D-d}}}} \begin{bmatrix}
{v} \\
D F({u}^*){v} \\
\end{bmatrix}$ is a unit vector in $U^\top V({u}^*)$. By (3) in Lemma \ref{properties of angle between subspaces}, $\phi_{U^\top V({u}^*), U^\top W}=\phi_{V({u}^*), W} $.  Observe that any unit vector in $U^\top W$ is in the form of  $\begin{bmatrix}
{w} \\
0\\
\end{bmatrix}$, where ${w}$ is a unit vector in $\mathbb{R}^d$. We choose the ${w}$ such that $\begin{bmatrix}
{w} \\
0\\
\end{bmatrix}^\top \frac{1}{ \sqrt{1+\|DF({u}^*) {v}\|^2_{\mathbb{R}^{D-d}}}} \begin{bmatrix}
{v} \\
D F({u}^*){v} \\
\end{bmatrix}$ attains the maximum, then we have
\begin{align}\cos(\phi_{V({u}^*), W})= & \cos(\phi_{U^\top V({u}^*), U^\top W}) \leq \begin{bmatrix}
{w} \\
0\\
\end{bmatrix}^\top \frac{1}{ \sqrt{1+\|DF({u}^*) {v}\|^2_{\mathbb{R}^{D-d}}}} \begin{bmatrix}
{v} \\
D F({u}^*){v}\end{bmatrix} \nonumber  \\
=& \frac{{w}^\top {v}}{ \sqrt{1+\|DF({u}^*) {v}\|^2_{\mathbb{R}^{D-d}}}} \leq \frac{1}{ \sqrt{1+\|DF({u}^*) {v}\|^2_{\mathbb{R}^{D-d}}}}. \nonumber 
\end{align}
By \eqref{bound the distance by the differential of the normal components}, we have $\|\iota(x)-\Phi({u}_1)\|_{\mathbb{R}^D} \leq \frac{R}{\cos(\phi_{V({u}^*), W})}$. Note that since ${u}_1$ is on the boundary of $O$, $\Phi({u}_1)$ is on the boundary of $B^{\mathbb{R}^D}_{\xi}({a}) \cap \iota(M)$. Hence, 
$$\xi =\|{a}-\Phi({u}_1)\|_{\mathbb{R}^D} =\|\iota(x)+{a}-\Phi({u}_1)\|_{\mathbb{R}^D} \leq \|\iota(x)-\Phi({u}_1)\|_{\mathbb{R}^D}+\|{a}\|_{\mathbb{R}^D} \leq  \frac{R}{\cos(\phi_{V({u}^*), W})}+\|{a}\|_{\mathbb{R}^D}.$$
We conclude that $R \geq (\xi-\|{a}\|_{\mathbb{R}^D}) \cos(\phi_{V({u}^*), W})$. By \eqref{complicated bound on the angle}, 
we have 
$$ \cos(\phi_{V({u}^*), W}) \geq (1-d^{\frac{3}{2}}r) \sqrt{1-\frac{2(\xi_1+\|{a}\|_{\mathbb{R}^D})}{\tau_{\iota(M)}}}- \sqrt{\frac{2(\xi_1+\|{a}\|_{\mathbb{R}^D})}{\tau_{\iota(M)}}} \sqrt{1-(1-d^{\frac{3}{2}}r)^2}.$$
The lower bound for $R$ follows by substituting $\xi_1=\frac{4\xi}{3}$ . 

To show that $0 \in O \subset \mathbb{R}^d$, we prove that $\|{u}_0\|_{\mathbb{R}^d} < R$.  First, by \eqref{angle upperbound 2},
$$R \geq (\xi-\|{a}\|_{\mathbb{R}^D}) \cos(\phi_{V({u}^*), W}) \geq \cos(\alpha) (\xi-\|{a}\|_{\mathbb{R}^D})>0.5(\xi-\|{a}\|_{\mathbb{R}^D}).$$ 
Moreover, $\|{u}_0\|_{\mathbb{R}^d} \leq \|{a}\|_{\mathbb{R}^D}$. Since $\|{a}\|_{\mathbb{R}^D} < \frac{\xi}{3}$, we have $\|{a}\|_{\mathbb{R}^D}<0.5(\xi-\|{a}\|_{\mathbb{R}^D})$. The conclusion follows.

\subsection{Proof of Theorem \ref{Construction of a chart main theorem}}
Based on Theorem \ref{local PCA  spectral behavior}, for all $x_i$, $i=1, \cdots ,n$, with probability greater than $1-\frac{1}{n^2}$, we have $\|\eta_i\| _{\mathbb{R}^D}\leq \epsilon^\beta$. When $\epsilon$ is small enough, we have $\epsilon^\beta <\epsilon<\frac{\tau_{\iota(M)}}{16}$. Hence, we have $\|\eta_i\| _{\mathbb{R}^D} \leq \epsilon^\beta < \delta<\frac{\tau_{\iota(M)}}{16}$.  If ${y}_i \in B^{\mathbb{R}^D}_{\delta}({y}_k)$, then by triangle inequality
$$\|\iota(x_i)- {y}_k\| _{\mathbb{R}^D}=\|{y}_i-\eta_i- {y}_k\| _{\mathbb{R}^D} \leq \|{y}_i- {y}_k\| _{\mathbb{R}^D}+\|\eta_i\| _{\mathbb{R}^D} < 3\delta.$$
Hence, $\iota(x_i) \in B^{\mathbb{R}^D}_{3\delta}({y}_k)$.
Moreover, 
\begin{align*}
U_{n,\epsilon}({y}_k)=\begin{bmatrix}
X_1 & 0 \\
0& X_2  \\
\end{bmatrix}+E. 
\end{align*}
By Theorem \ref{local PCA  spectral behavior}, $|E_{ij}|<C \epsilon^{2\min(\beta-1,1)}$, where $C$ is a constant depending on $d$, $D$, $P_m$, $C^2$ norm of $P$, the second fundamental form of $\iota(M)$ and its derivative and the Ricci curvature of $M$. Hence, when $\epsilon$ is small enough depending on $\beta$, $d$, $D$, $P_m$, $C^2$ norm of $P$, the second fundamental form of $\iota(M)$ and its derivative and the Ricci curvature of $M$,  $C \epsilon^{2\min(\beta-1,1)} \leq \frac{1}{30d^{\frac{3}{2}}}$. Note that $\mathcal{P}_{{y}_k}({y})=\mathcal{P}_{U_{n,\epsilon},\iota(x_k)+\eta_k}({y})$. The conditions in Proposition \ref{projection forms coordinates} are satisfied with $\xi=3\delta$, ${a}=\eta_k$  and $r=C \epsilon^{2\min(\beta-1,1)}$. (1) and (2) of Theorem \ref{Construction of a chart main theorem} follows. Since  $\|\eta_i\| _{\mathbb{R}^D} \leq \epsilon^\beta$,  we have $B=\frac{8/3 (3\delta)+2\|\eta_k\|_{\mathbb{R}^D})}{\tau_{\iota(M)}}  \leq \frac{8 \delta +2 \epsilon^\beta}{\tau_{\iota(M)}}$. (3) of Theorem \ref{Construction of a chart main theorem} follows. 

\section{Proof of Theorem \ref{GRMSE estimation}}\label{Appendix: proof of GRMSE estimation}
Since $\mathcal{Y}_{true} \subset \iota(M)$, $\texttt{dist}\{{y}_i, \iota(M)\} \leq \texttt{dist}({y}_i, \mathcal{Y}_{true})$ for all $i$. Hence, $G\{\mathcal{Y}, \iota(M)\} \leq G(\mathcal{Y}, \mathcal{Y}_{true})$. Let $N_r(x)$ be the number of samples in $B^{\mathbb{R}^D}_r\{\iota(x)\} \cap \mathcal{Y}_{true}$. Based on (3) in Theorem 2.4 in \citep{wu2020strong}, if $r \rightarrow 0$ as $m \rightarrow \infty$, then with probability greater than $1-\frac{1}{m^2}$,
$$\sup_{x \in M} \Big|\frac{N_r(x)}{m r^d}-\mathrm{q}(x)\Big| \leq C \sqrt{\frac{\log m}{m r^d}},$$
where $C$ is a constant depending on $d$, $D$, $C^{1}$ norm of $\mathrm{q}$, the curvature of $M$ and the second fundamental form of $\iota(M)$. Hence, if $\frac{\mathrm{q}_{\min}}{2} m r^d \geq 1$ and $\frac{\log m}{m r^d}\leq \frac{\mathrm{q}_{\min}^2}{4 C^2}$, then $N_r(x) \geq \frac{\mathrm{q}_m}{2} m r^d \geq 1$.  Note that $\frac{\mathrm{q}_{\min}}{2} m r^d \geq 1$ and $\frac{\log m}{m r^d}\leq \frac{\mathrm{q}_{\min}^2}{4 C^2}$ are satisfied when $r^d \geq \max(\frac{2}{\mathrm{q}_{\min} m}, \frac{4 C^2 \log m}{\mathrm{q}_{\min}^2 m})$. Hence, we choose $r^d = \max(\frac{2}{\mathrm{q}_{\min}}, (\frac{2C}{\mathrm{q}_{\min}})^2)\frac{\log m}{m}$.

Since $M$ is compact, for any ${y}_i$, let $\iota(x'_i)$ be the point that realizes $\texttt{dist}\{{y}_i, \iota(M)\}$. Then, with probability greater than $1-\frac{1}{m^2}$, there is a point $\iota(x_j) \in \mathcal{Y}_{true}$ such that $\iota(x_j) \in B^{\mathbb{R}^D}_r\{\iota(x'_i)\}$. Hence, 
\begin{align*}
\texttt{dist}({y}_i, \mathcal{Y}_{true}) -r \leq \|{y}_i-\iota(x_j) \|_{\mathbb{R}^D}-r & \leq  \|{y}_i-\iota(x_j) \|_{\mathbb{R}^D}-\|\iota(x'_i)-\iota(x_j) \|_{\mathbb{R}^D} \\
& \leq \|\iota(x'_i)-{y}_i\|_{\mathbb{R}^D}= \texttt{dist}\{{y}_i, \iota(M)\}.
\end{align*}
In conclusion, we have $0 \leq \texttt{dist}({y}_i, \mathcal{Y}_{true}) -  \texttt{dist}\{{y}_i, \iota(M)\}   \leq r$. Consequently,
$$0 \leq  \texttt{dist}({y}_i, \mathcal{Y}_{true}) ^2-  \texttt{dist}\{{y}_i, \iota(M)\}^2 \leq r (\texttt{dist}({y}_i, \mathcal{Y}_{true}) +  \texttt{dist}\{{y}_i, \iota(M))\} \leq  2 r\texttt{dist}\{{y}_i, \iota(M)\} +r^2.$$
By the definition of geometric root mean square error,
$$G(\mathcal{Y}, \mathcal{Y}_{true})^2-G\{\mathcal{Y}, \iota(M)\}^2 \leq 2r \frac{1}{n} \sum_{i=1}^n\texttt{dist}\{{y}_i, \iota(M)\} +r^2 \leq 2r G\{\mathcal{Y}, \iota(M)\}+r^2.$$
The conclusion follows.

\section{Brief review of the Diffusion map}\label{review of DM section}
We provide a briefly review of the Diffusion Map. Given samples $\{{y}_i\}_{i=1}^n \subset \mathbb{R}^D$, the Diffusion Map constructs a normalized graph Laplacian $L \in \mathbb{R}^{n \times n}$ by using the kernel $k({y},{y}')=\exp(-\|{y}-{y}'\|^2_{\mathbb{R}^D}/\epsilon_{DM}^2)$ as shown in the following steps.
\begin{enumerate}
\item
Let $W_{ij}=\frac{k({y}_i,{y}_j)}{q({y}_i) q({y}_j)} \in \mathbb{R}^{n \times n}$, $1 \leq i,j, \leq n$, where $q({y}_i)=\sum_{j=1}^n k({y}_i,{y}_j)$.
\item
Define an $n \times n$ diagonal matrix $\texttt{D}$ as $\texttt{D}_{ii}=\sum_{j=1}^{n} W_{ij}$, where $i=1,\ldots,n$. 
\item The normalized graph Laplacian $L$ is defined as $L=(\texttt{D}^{-1}W-I)/\epsilon^2_{DM} \in  \mathbb{R}^{n \times n}$.
\end{enumerate}
Suppose $(\mu_{j}, V_j)_{j=0}^{n-1}$ are the eigenpairs of $-L$ with $\mu_0 \leq \mu_1 \leq \cdots, \leq \mu_{n-1}$ and $V_i$ normalizing in $\ell^2$. Then $\mu_0=0$ and $V_0$ is a constant vector. The map $(V_1 , \cdots, V_{\ell})$ provides the coordinates of the data set $\{{y}_i\}_{j=1}^n$ in a low-dimensional space $\mathbb{R}^{\ell}$.

When $\{{y}_i=\iota(x_i)\}_{i=1}^n$ are samples from an isometrically embedded submanifold $\iota(M) \subset \mathbb{R}^D$,  then \citep{coifman2006diffusion} shows that $L$ approximates the Laplace-Beltrami operator $\Delta$ of $M$ pointwisely. In particular, when the boundary of $M$ is not empty, $L$ pointwisely approximates $\Delta$ with Neumann boundary condition. Moreover, when $M$ is a closed manifold, the spectral convergence rate of $-L$ to $-\Delta$ is discussed in \citep{DUNSON2021,cheng2022eigen,calder2022lipschitz}. Suppose the eigenvectors of $-L$ are normalized properly. 
\citep{DUNSON2021} show that the first $K$ eigenpairs of $-L$ approximate the corresponding eigenpairs of $-\Delta$ over $\{x_i\}_{i=1}^n$ whenever $\epsilon_{DM}$ is small enough depending on $K$ and $n$ is large enough depending on $\epsilon_{DM}$. Based on the results in spectral geometry, by choosing $\ell$ sufficiently large, the Diffusion Map $(V_1 , \cdots, V_{\ell})$ approximates the discretization of an embedding of $\iota(M)$ into $\mathbb{R}^{\ell}$.

\section{Estimate the dimension of the data set}\label{determine the dimension}
Suppose data $\{{y}_i\}_{i=1}^n \subset \mathbb{R}^D$ satisfy Assumption \ref{manifod with noise}.  In this section, we describe a method to estimate the dimension $d$ of the underlying manifold $M$ by using the Diffusion Map. Let $C_{n,\epsilon}({y}_k)$ be the local covariance matrix at ${y}_k$ constructed by  $\{{y}_i\}_{i=1}^n$ as defined in \eqref{Gaussian local covariance matrix}. Suppose $\lambda_{n,\epsilon,i}({y}_k)$ is the $i$th largest eigenvalue of $C_{n,\epsilon}({y}_k)$. Then, we define the mean of the $i$th eigenvalues of the local covariance matrices as
\begin{align}\label{average of eigenvalues}
\bar{\lambda}_{\epsilon,i}=\frac{1}{n}\sum_{k=1}^n \lambda_{n,\epsilon,i}({y}_k).
\end{align}

We demonstrate our method by using the following example. Consider the surface of the ellipsoid  in $R^3$ described by the equation 
$$\frac{x^2}{4}+\frac{y^2}{2.25}+z^2=1$$
We sample $800$ points  $\{(x'_i, y'_i, z'_i)\}_{i=1}^{800}$ uniformly on the surface. Let $R$ be an orthogonal matrix of $\mathbb{R}^3$. We rotate the surface by using $R$, i.e. we get $800$ points $\{(x''_i, y''_i, z''_i)\}_{i=1}^{800}$ where $$\begin{bmatrix} x''_i \\ y''_i  \\ z''_i \end{bmatrix}=R \begin{bmatrix}x'_i \\ y'_i  \\ z'_i \end{bmatrix}.$$ Hence, $\iota(x_i)=(0, \cdots, 0, x''_i, y''_i, z''_i, 0, \cdots, 0) \in \mathbb{R}^{30}$ ($x''_i$, $y''_i$,  and $z''_i$ are the $14$th, $15$th, and $16$th coordinates, respectively) is a point on a surface of the ellipsoid $\iota(M)$ in $\mathbb{R}^{30}$. Suppose $\{\eta_i\}_{i=1}^{800}$ are i.i.d samples from $\mathcal{N}(0, \sigma^2 I_{30 \times 30})$ where $\sigma=0.05$. Then  ${y}_i=\iota(x_i)+\eta_i$ is a noisy data point around the submanifold $\iota(M)$ in $\mathbb{R}^{30}$. We choose the bandwidths $\epsilon=0.3, 0.4, 0.5, 0.6$ and find $\{\bar{\lambda}_{\epsilon,i}\}_{i=1}^{30}$ by using $\{\iota(x_i)\}_{i=1}^{800}$. We show the plot of $\{\bar{\lambda}_{\epsilon,i}\}_{i=1}^{30}$ for different $\epsilon$ in Fig \ref{Eig0} . We can see that there are two large eigenvalues. Hence, for this example, when there is no noise, the eigenvalues of the local covariance matrices constructed from $\{\iota(x_i)\}_{i=1}^{800}$ are sufficient to determine the dimension of the underlying manifold. 
\begin{figure}[htb!]
\centering
\includegraphics[width=0.9 \columnwidth]{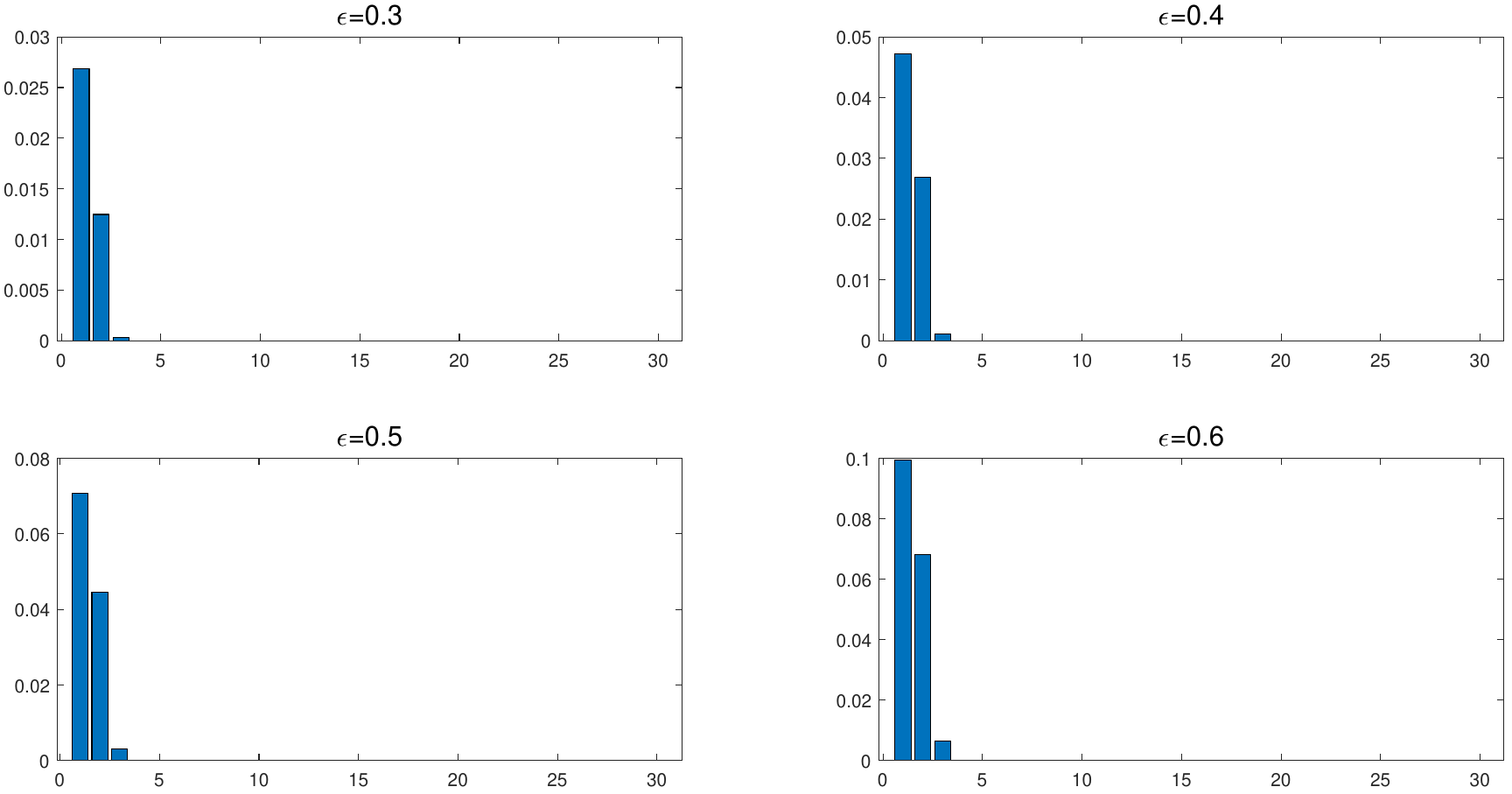}
\caption{ We choose the bandwidths $\epsilon=0.3, 0.4, 0.5, 0.6$ and find $\{\bar{\lambda}_{\epsilon,i}\}_{i=1}^{30}$ by using $\{\iota(x_i)\}_{i=1}^{800}$. In the above plots, the horizontal axis indicates the $i=1, \cdots, 30$ and the vertical axis indicates the value of corresponding $\bar{\lambda}_{\epsilon,i}$.}\label{Eig0}
\end{figure}

Next, we choose the bandwidths $\{\epsilon=0.1 \ell\}_{\ell=1}^{30}$ and find $\{\bar{\lambda}_{\epsilon,i}\}_{i=1}^{30}$ by using $\{{y}_i\}_{i=1}^{800}$.  We show the plot of $\{\bar{\lambda}_{\epsilon,i}\}_{i=1}^{30}$ for $\epsilon=0.5, 1, \cdots, 3$ in Fig \ref{Eig1} .  Due to the noise, we cannot determine the dimension of $\iota(M)$ correctly by using the eigenvalues of the local covariance matrices constructed from $\{{y}_i\}_{i=1}^{800}$. 

\begin{figure}[htb!]
\centering
\includegraphics[width=0.9 \columnwidth]{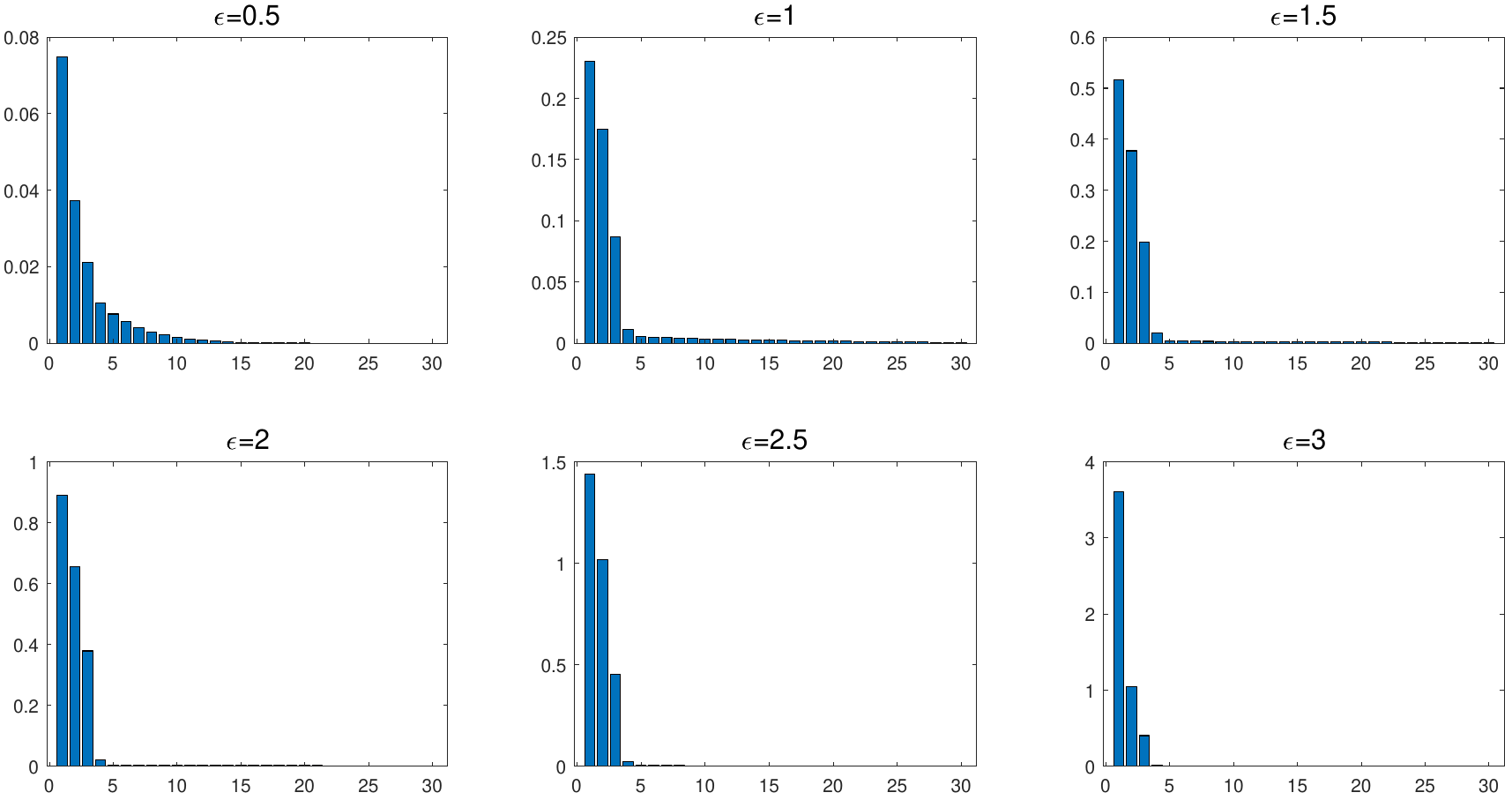}
\caption{For $\{\epsilon=0.1 \ell\}_{\ell=1}^{30}$, we find $\{\bar{\lambda}_{\epsilon,i}\}_{i=1}^{30}$ by using $\{{y}_i\}_{i=1}^{800}$. We plot $\{\bar{\lambda}_{\epsilon,i}\}_{i=1}^{30}$ for $\epsilon=0.5, 1, \cdots, 3$. In the above plots, the horizontal axis indicates $i=1, \cdots, 30$ and the vertical axis indicates the value of the corresponding $\bar{\lambda}_{\epsilon,i}$ .}\label{Eig1}
\end{figure}

The robustness of diffusion map to noise is discussed in \citep{el2016graph, shen2020scalability, dunson2020diffusion, ding2020impact}. Based on the description of the algorithm in Section \ref{review of DM section}, we  construct the normalized graph Laplacian $L \in \mathbb{R}^{n \times n}$ by using the kernel $k({y},{y}')=\exp(-\|{y}-{y}'\|^2_{\mathbb{R}^D}/\epsilon_{DM}^2)$ and the noisy data points $\{{y}_i\}_{i=1}^{800}$. We choose $\epsilon_{DM}=2$ and let $(\mu_{j}, V_j)_{j=0}^{n-1}$ be the eigenpairs of $-L$ with $\mu_0 \leq \mu_1 \leq \cdots, \leq \mu_{n-1}$ and $V_i$ normalized in $\ell^2$. 
We reconstruct the noisy data points $\{{y}_i\}_{i=1}^{800}$ in $\mathbb{R}^{2+j}$ by using $(V_1, \cdots, V_{2+j}) \in \mathbb{R}^{800 \times (2+j)}$ and denote these data points in $\mathbb{R}^{2+j}$ as $\mathcal{X}_j$ for $j=1, \cdots, 4$.  For each $ j=1,\cdots, 4$, the diffusion map $(V_1 , \cdots, V_{2+j})$ can be regarded as an approximation of a discretization of an embedding of $\iota(M)$ into $\mathbb{R}^{2+j}$ over the clean data point  $\{\iota(x_i)\}_{i=1}^{800}$ on $\iota(M)$. Hence, $\mathcal{X}_j$ are the samples around an embedded submanifold $N_j$ in $\mathbb{R}^{2+j}$ homeomorphic to $M$.  The dimension of $M$ is the same as the dimension of $N_j$.
For each $ j=1,\cdots, 4$,   we construct the local covariance matrices by using $\mathcal{X}_j$ with $\epsilon=0.3+0.1j$ and calculate  $\{\bar{\lambda}^j_{\epsilon,i}\}_{i=1}^{2+j}$. We plot $\{\bar{\lambda}^j_{\epsilon,i}\}_{i=1}^{2+j}$ for each $j$ in Figure \ref{Eig2}. We can see that in each plot there are $2$ large eigenvalues. Therefore, the dimension of the underlying manifold $N_j$ and the dimension of $M$ are $2$. 
\begin{figure}[htb!]
\centering
\includegraphics[width=0.9 \columnwidth]{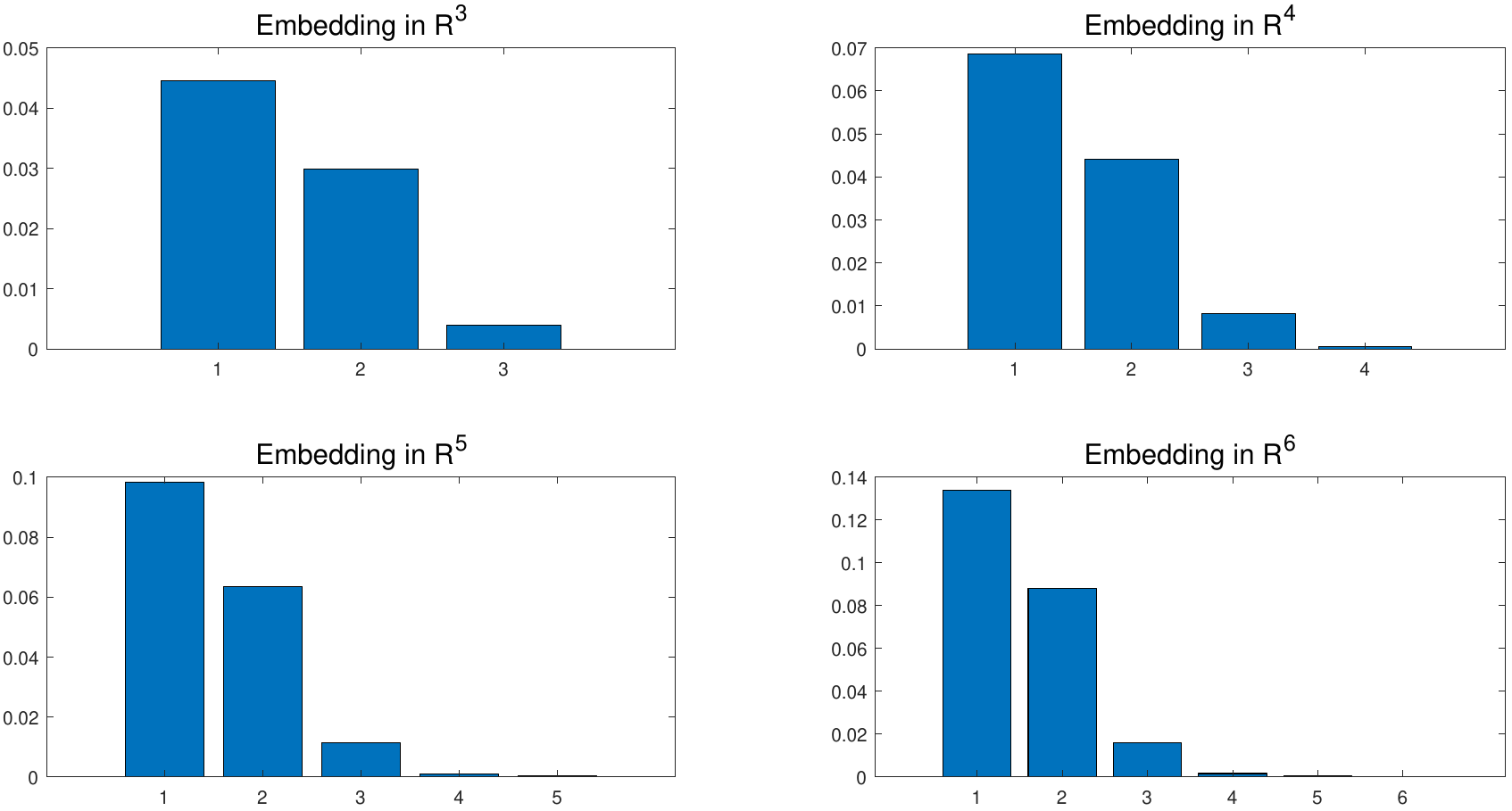}
\caption{$\mathcal{X}_1, \cdots, \mathcal{X}_4$ constructed by diffusion map are the samples around the embeddings of $M$ in $\mathbb{R}^3, \cdots, \mathbb{R}^6$, respectively. For each $ j=1,\cdots, 4$,  we construct the local covariance matrices  by 
using $\mathcal{X}_j$ with $\epsilon=0.3+0.1j$ and calculate  $\{\bar{\lambda}^j_{\epsilon,i}\}_{i=1}^{2+j}$. We plot $\{\bar{\lambda}^j_{\epsilon,i}\}_{i=1}^{2+j}$ for each $j$.  The top two plots correspond to $j=1,2$. The bottom two plots  correspond to $j=3,4$. In the above plots, the horizontal axis indicates $i=1, \cdots, 2+j$ and the vertical axis indicates the value of the corresponding $\bar{\lambda}^j_{\epsilon,i}$ .}\label{Eig2}
\end{figure}

We apply the above method to the examples in Section \ref{Numerics}.  For the Cassini Oval curve, using  the local covariance matrix and \eqref{average of eigenvalues} with the clean data $\{X(\theta_i), Y(\theta_i), Z(\theta_i)\}_{i=1}^{102}$ and $\epsilon=0.3$, we obtain $\bar{\lambda}_{\epsilon,1}=0.271$, $\bar{\lambda}_{\epsilon,2}=0.0078$, and $\bar{\lambda}_{\epsilon,3}=0.0001$. This result confirms that the clean data are sampled from a curve. In contrast, applying the local covariance matrix and \eqref{average of eigenvalues} to the noisy data $\mathcal{Y}$ with $\epsilon=0.4$ yields $\bar{\lambda}_{\epsilon,1}=0.2451$, $\bar{\lambda}_{\epsilon,2}=0.0473$, and $\bar{\lambda}_{\epsilon,3}=0.0122$. Here, $\bar{\lambda}_{\epsilon,1}$ is less dominant compared to the noiseless case due to the noise. To address this, we apply the diffusion map with $\epsilon_{DM}=0.2$ to reduce the dimension of $\mathcal{Y}$, obtaining $(V_1, V_2)$. Applying the local covariance matrix and  \eqref{average of eigenvalues} to the lower dimensional data corresponding to $(V_1, V_2)$ with $\epsilon=0.2$, we find $\bar{\lambda}_{\epsilon,1}=0.182$ and $\bar{\lambda}_{\epsilon,2}=0.0089$. Thus, our method successfully determines the dimension of the underlying manifold even in the presence of noise.

 For the torus case, using  the local covariance matrix and \eqref{average of eigenvalues} with the clean data $\{\iota(x_i)\}_{i=1}^{558}$ and $\epsilon=0.5$, we obtain $\bar{\lambda}_{\epsilon,1}=0.5898$, $\bar{\lambda}_{\epsilon,2}=0.2604$, and $\bar{\lambda}_{\epsilon,3}=0.0134$.  In contrast, applying the local covariance matrix and \eqref{average of eigenvalues} to the noisy data $\mathcal{Y}$ with $\epsilon=0.5$ yields $\bar{\lambda}_{\epsilon,1}=0.5125$, $\bar{\lambda}_{\epsilon,2}=0.2364$, and $\bar{\lambda}_{\epsilon,3}=0.068$. Here, the first two average eigenvalues are less dominant compared to the noiseless case due to the noise. We apply the diffusion map with $\epsilon_{DM}=0.3$, obtaining $(V_1, V_2, V_3, V_4)$. Applying the local covariance matrix and  \eqref{average of eigenvalues} to the  data corresponding to $(V_1, V_2, V_3, V_4)$ with $\epsilon=0.4$, we find $\bar{\lambda}_{\epsilon,1}=4.179$, $\bar{\lambda}_{\epsilon,2}=1.41$, $\bar{\lambda}_{\epsilon,3}=0.024$ and $\bar{\lambda}_{\epsilon,4}=0.0031$. 

The discussion is similar for the real projective space case in Section \ref{boundary and noise} in which we have a $3$ dimensional manifold. Using the local covariance matrix and \eqref{average of eigenvalues} with $1200$ clean samples on the manifold and $\epsilon=0.25$, we obtain $\bar{\lambda}_{\epsilon,1}=1.2\times 10^{-1}$, $\bar{\lambda}_{\epsilon,2}=5.97 \times 10^{-2}$,  $\bar{\lambda}_{\epsilon,3}=2.32 \times 10^{-2}$, $\bar{\lambda}_{\epsilon,4}=2.5\times 10^{-3}$, and $\bar{\lambda}_{\epsilon,5}=4 \times 10^{-4}$.  In contrast, applying the local covariance matrix and  \eqref{average of eigenvalues} to the noisy data $\mathcal{Y}$ with $\epsilon=0.3$ yields $\bar{\lambda}_{\epsilon,1}=1.8\times 10^{-1}$, $\bar{\lambda}_{\epsilon,2}=1.1 \times 10^{-1}$,  $\bar{\lambda}_{\epsilon,3}=5.78\times 10^{-2}$, $\bar{\lambda}_{\epsilon,4}=2.72\times 10^{-2}$, and $\bar{\lambda}_{\epsilon,5}=1.17 \times 10^{-2}$. The first three average eigenvalues are much less dominant compared to the noiseless case. We apply the diffusion map with $\epsilon_{DM}=0.25$ to reduce the dimension of $\mathcal{Y}$, resulting in $(V_1, \cdots, V_9)$. Using the local covariance matrix and  \eqref{average of eigenvalues} to the lower dimensional data corresponding to  $(V_1, \cdots, V_9)$ with $\epsilon=0.5$, we find $\bar{\lambda}_{\epsilon,1}=1.71\times 10^{-1}$, $\bar{\lambda}_{\epsilon,2}=5.17 \times 10^{-2}$,  $\bar{\lambda}_{\epsilon,3}=1.23 \times 10^{-2}$, $\bar{\lambda}_{\epsilon,4}=2.2\times 10^{-3}$, and $\bar{\lambda}_{\epsilon,5}=4.4 \times 10^{-4}$. These average eigenvalues closely match in order with those obtained from the clean samples, demonstrating that our proposed method significantly improves upon the results of directly applying local covariance matrix to the noisy data.

\section{An example for selection of $\epsilon$ and $\delta$ in \texttt{MrGap}}\label{bandwidth selection example}
In this section, we illustrate our method for selecting the parameters $\epsilon$ and $\delta$ in the \texttt{MrGap} algorithm. We consider $\iota(M)$ which is the unit circle in $\mathbb{R}^2$. 
We uniformly sample $100$ points on the circle, each with additive Gaussian noise with $\sigma=0.06$. Let $\mathcal{Y}=\{{y}_i\}_{i=1}^{100}$ be the noisy data around the circle. We also sample $\mathcal{Y}_{true}$ consisting of $10^5$ points on the circle. We use $\mathcal{Y}_{true}$ to approximate the geometric root mean square error from samples to $\iota(M)$ with $G(\mathcal{Y},\mathcal{Y}_{true})=0.0588$. Let $\epsilon=\{0.1, 0.2,\cdots, 1.8 \}$ and $\delta=\{0.1, 0.2,\cdots, 1.8 \}$. We consider all pairs $(\epsilon_i, \delta_j)$ with $\epsilon \leq \delta$. Therefore, there are $171$ distinct pairs. For each pair $(\epsilon_i, \delta_j)$, we maximize $L_{ij}=\sum_{k=1}^n \log  p_k ({Z}_k| A, \rho, \sigma)$ in Step 3 of Algorithm \ref{MrGap1} over $A, \rho, \sigma$.  We choose $(\epsilon_i, \delta_j)$ corresponding to the largest $L_{ij}$. When $\epsilon=1.4$ and $\delta=1.4$, $\frac{L_{ij}}{100}$ achieves maximum $338$ over all pairs of $(\epsilon_i, \delta_j)$.  The corresponding Gaussian process parameters are $A^{(0)}=0.9$, $\rho^{(0)}=1.2$ and $\sigma^{(0)}=\sqrt{0.008}$. For $\epsilon=1.4$ and $\delta=1.4$, suppose $\mathcal{X}_1$ is the denoised output after the first round of Algorithm \ref{MrGap1}. Then, $G(\mathcal{X}_1,\mathcal{Y}_{true})=0.0251$. We plot the $\frac{L_{ij}}{100}$ and the corresponding first round denoised outputs for different pairs of $(\epsilon_i, \delta_j)$ in Figure \ref{circle selection}.

\begin{figure}[htb!]
\centering
\includegraphics[width=0.9 \columnwidth]{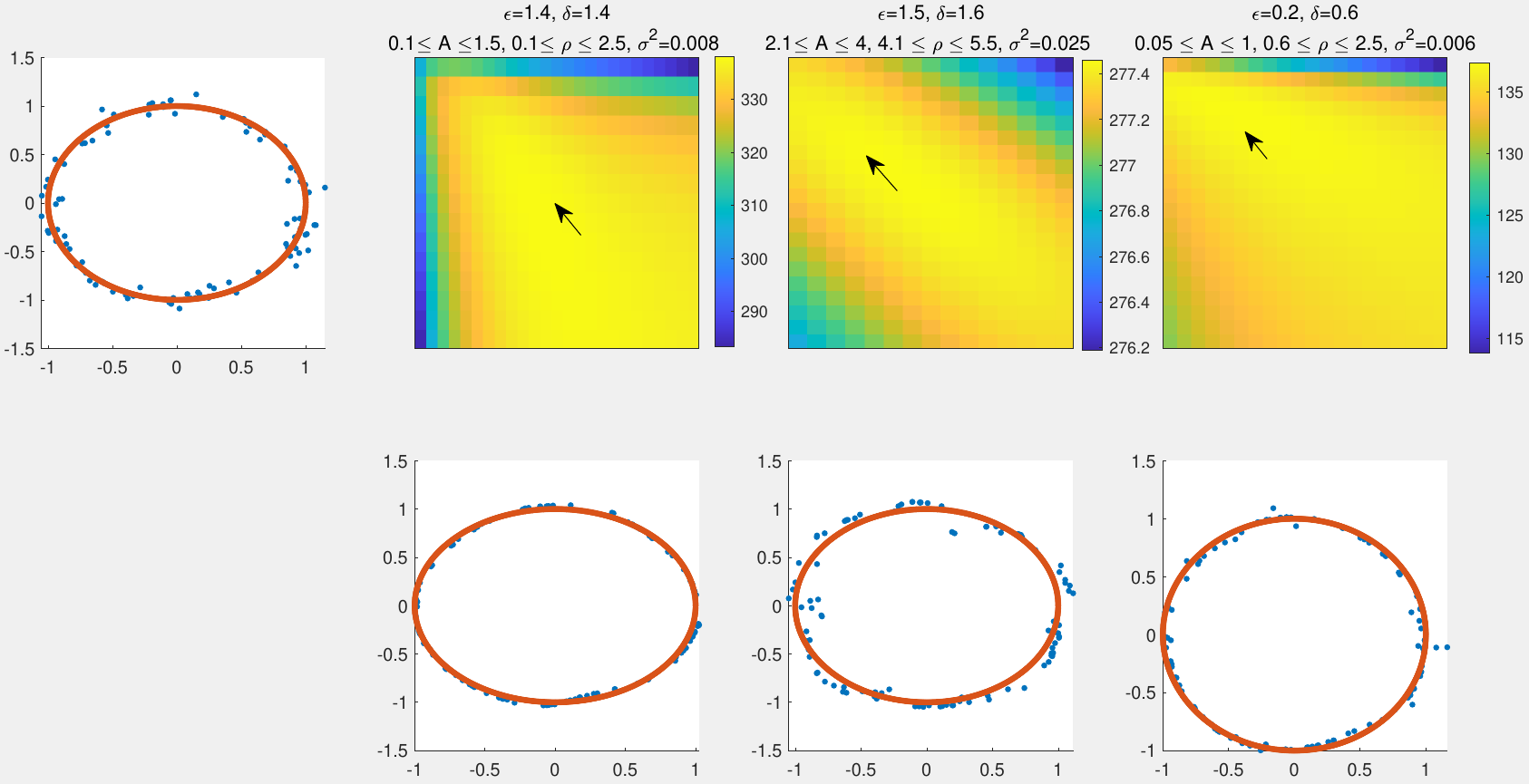}
\caption{Top row: The first plot shows the noisy data $\mathcal{Y}$ and the samples on the unit circle $\mathcal{Y}_{true}$. $G(\mathcal{Y},\mathcal{Y}_{true})=0.0588$.  For each pair $(\epsilon_i, \delta_j)$, suppose $\frac{L_{ij}}{100}$ achieves maximum with $A_{ij}$, $\rho_{ij}$ and $\sigma_{ij}$.  In the second to the fourth plot, we fix $\sigma_{ij}$ and we plot $\frac{L_{ij}}{100}$ over $A$ and $\rho$ in a neighborhood around $A_{ij}$ and $\rho_{ij}$. The horizontal axis is $\rho$ and the vertical axis is $A$. The arrows indicate the locations of the maximums. The maximums of $\frac{L_{ij}}{100}$ in the plots are $338$, $277.46$, and $137.3$ respectively. Bottom row: The plots of the first round denoised outputs correspond to $\epsilon_i$, $\delta_j$, $A_{ij}$, $\rho_{ij}$ and $\sigma_{ij}$ in the top row. The geometric root mean square error between the first round denoised outputs and $\mathcal{Y}_{true}$ are $0.0251$, $0.0769$, and $0.0366$ respectively. }\label{circle selection}
\end{figure}

\section{Exploration of the properties of \texttt{MrGap} on numerical examples}\label{numerical example curve}

\subsection{Discussion of the affine subspaces constructed in Step 1 of  Algorithm \ref{MrGap1} }\label{tangent space comparison}
Suppose that $\{\iota(x_i)\}_{i=1}^n$ are samples on $\iota(M)$ and $\{y_i\}_{i=1}^n$ are the corresponding noisy data points around $\iota(M)$. Recall from the description in Section \ref{denoise phase introduction} and Step 1 of Algorithm \ref{MrGap1} that we construct an affine subspace $\mathcal{H}_k = y_k + \mathcal{T}_k$ through $y_k$, where $\mathcal{T}_k$ is the subspace spanned by the first $d$ eigenvectors of $C_{n,\epsilon}(y_k)$ defined in \eqref{Gaussian local covariance matrix}. In this subsection, we present an example showing that Algorithm \ref{MrGap1} performs well even when the affine subspace $\mathcal{H}_k$ deviates significantly from the affine space $\iota(x_k) + T_{\iota(x_k)}\iota(M)$ tangent to $\iota(M)$ at $\iota(x_k)$. This example illustrates the conclusion of Theorem \ref{Construction of a chart main theorem}, namely, that Algorithm \ref{MrGap1} remains valid as long as $\mathcal{P}_{y_k}(y)$, defined in \eqref{regression PYK 0}, is a local diffeomorphism, regardless of the deviation between $\mathcal{H}_k$ and the corresponding affine space tangent to $\iota(M)$.

As described in Section \ref{Cassini Oval main article}, let $\iota(M)$ denote the Cassini Oval in $\mathbb{R}^3$.  Let $\{\iota(x_i)\}_{i=1}^{102}\subset \mathbb{R}^3$ and $\mathcal{Y}=\{y_i\}_{i=1}^{102}\subset \mathbb{R}^3$ be the points on $\iota(M)$ and the corresponding noisy data points respectively.  Following Section \ref{Cassini Oval main article}, we choose the scale parameters $\epsilon=0.3$ and $\delta=0.6$.
Let $\{\hat{y}^{(1)}_i\}_{i=1}^{102}$ be the outputs after the first iteration of Algorithm \ref{MrGap1}.
We focus on the point $\iota(x_{91})$. Define $\mathcal{H}_{91}= y_{91} + \mathcal{T}_{91}$ as the $1$ dimensional affine subspace through $y_{91}$, where $\mathcal{T}_{91}$ is the subspace generated by the first eigenvectors of $C_{102, 0.3}(y_{91})$ constructed from $\mathcal{Y}$.   Similarly, let $\hat{\mathcal{H}}_{91}=\hat{ y}^{(1)}_{91} + \hat{\mathcal{T}}_{91}$ be the $1$ dimensional affine subspace through $\hat{ y}^{(1)}_{91} $, where $ \hat{\mathcal{T}}_{91}$ is the subspace spanned by the first eigenvector of $C_{102, 0.3}(\hat{ y}^{(1)}_{91} )$ constructed from $\{\hat{y}^{(1)}_i\}_{i=1}^{102}$.  Let  $\iota(x_{91}) + T_{\iota(x_{91})}\iota(M)$ be the affine subspace tangent to $\iota(M)$ at $\iota(x_{91})$. The deviation between two affine subspaces can be quantified by the angle between them, which lies in $[0, \pi/2]$. The angle between $\mathcal{T}_{91}$ and $ T_{\iota(x_{91})}\iota(M)$ is $0.613\approx 35.6^{\circ}$. After the first iteration of Algorithm \ref{MrGap1}, the outputs are less noisy, and we indeed observe a smaller angle between $\hat{\mathcal{T}}_{91}$ and $T_{\iota(x_{91})}\iota(M)$, namely $0.233 \approx 13.3^{\circ}$. We plot $\mathcal{H}_{91}$ and $\hat{\mathcal{H}}_{91}$ , together with $\iota(x_{91}) + T_{\iota(x_{91})}\iota(M)$ in Figure \ref{affine spaces comparison}. It is worth noting that, due to the finite sample size, the affine subspace at $\iota(x_{91})$ generated from the clean samples on $\iota(M)$, that is, the subspace spanned by the first eigenvector of $C_{102, 0.3}(\iota(x_{91}))$ constructed from $\{\iota(x_i)\}_{i=1}^{102}$, still does not exactly coincide with $\iota(x_{91}) + T_{\iota(x_{91})}\iota(M)$. This result together with the results in Section \ref{Cassini Oval main article} demonstrate that \texttt{MrGap}
does not rely on the accurate estimation of the affine subspaces tangent to $\iota(M)$.

\begin{figure}[htb!]
\centering
\includegraphics[width=15cm,height=4cm]{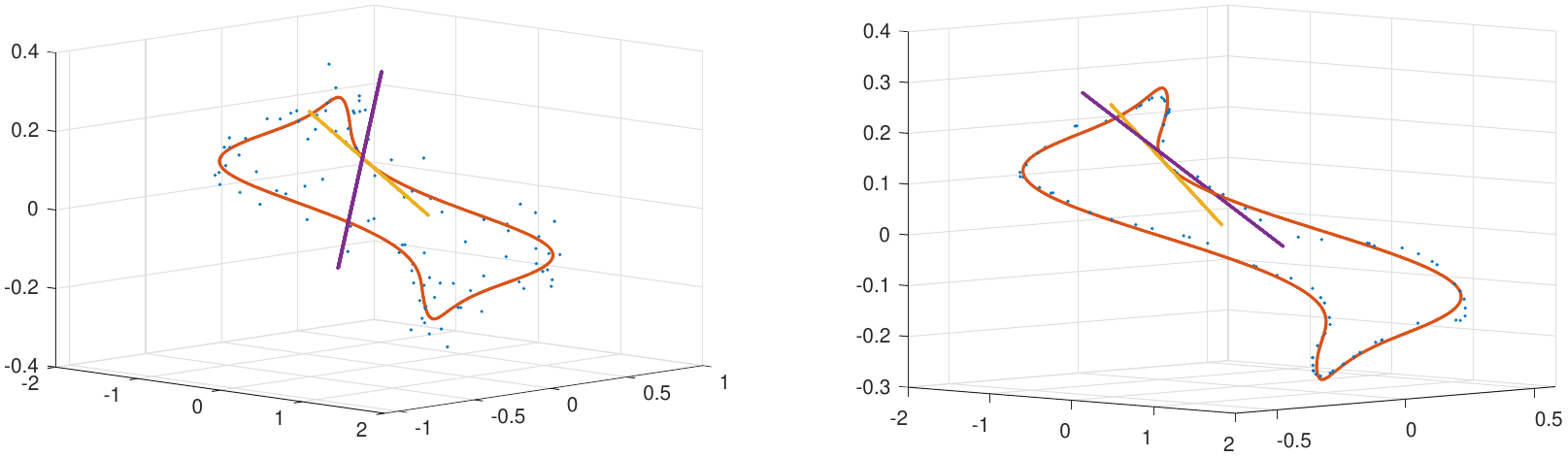}
\caption{ Left panel: The blue points represent $\mathcal{Y}=\{y_i\}_{i=1}^{102}$ and the red points represent $\iota(M)$. The yellow line and purple line represent  $\mathcal{H}_{91}$ and $\iota(x_{91}) + T_{\iota(x_{91})}\iota(M)$  respectively. The angle between them is $0.613\approx 35.6^{\circ}$. Right panel: The blue points represent $\{\hat{y}^{(1)}_i\}_{i=1}^{102}$  and the red points represent $\iota(M)$. The yellow line and purple line represent   $\hat{\mathcal{H}}_{91}$ and $\iota(x_{91}) + T_{\iota(x_{91})}\iota(M)$  respectively. The angle between them is $0.233 \approx 13.3^{\circ}$. }\label{affine spaces comparison}
\end{figure}

\subsection{Stability of Algorithm \ref{MrGap2} under different orders of interpolations}
Suppose $\{\hat{{y}}_i\}_{i=1}^{n}$ are the final denoised output from Algorithm \ref{MrGap1}.  Recall that Algorithm \ref{MrGap2} iteratively interpolates points around each $\hat{{y}}_i$ based on the index order $i$. In this subsection, we demonstrate through a numerical simulation that the output of Algorithm \ref{MrGap2} remains stable regardless of the interpolation order.

Let $\mathcal{Y}=\{{y}_i\}_{i=1}^{102}$ be the noisy data points around the Cassini Oval in Section \ref{Numerics}. Suppose  $\{{y}^{(1)}_i\}_{i=1}^{102}$ are the output after applying the first round of Algorithm \ref{MrGap1} to $\mathcal{Y}$. Next, we generate $100$ different permutations of $\{{y}^{(1)}_i\}_{i=1}^{102}$. We denote these permutations as $\{\mathcal{Y}^{(1)}_j\}_{j=1}^{100}$. For each permutation $\mathcal{Y}^{(1)}_j$, we apply Algorithm \ref{MrGap2} with $\epsilon=0.3$ and $\delta=0.6$.  We choose $K=20$ and we apply the same estimated covariance parameters in the last round of  Algorithm \ref{MrGap1} i.e.  $A^{(1)}=0.048$, $\rho^{(1)}=0.3$ and $\sigma^{(1)}=\sqrt{2 \times 10^{-5}}$. Therefore, for each permutation $\mathcal{Y}^{(1)}_j$, we construct $\tilde{\mathcal{X}}_j$ consisting of $2040$ interpolation points. Let  $\mathcal{Y}_{true}$ be the $10^5$ points sampled on the Cassini Oval described in Section \ref{Numerics}. We evaluate the performance of each group of interpolations through the geometric root mean square error between $\tilde{\mathcal{X}}_j$ and the Cassani Oval through $\mathcal{Y}_{true}$.  Let $\mathcal{E}_j=G(\tilde{\mathcal{X}}_j,\mathcal{Y}_{true})$ for $j=1, \cdots, 100$.    The mean of $\{\mathcal{E}_j\}_{j=1}^{100}$ is $0.0213$ with a small standard deviation $2.16 \times 10^{-4}$.

\subsection{Noisy data set around a submanifold with small reach}
We consider $\iota(M)$ which is a $2$-d Cassini oval with a small reach. For $\theta \in [0, 2\pi)$,
\begin{align}
& X(\theta)=\big\{\cos(2\theta)+(\cos(2\theta)^2+0.7)^{1/2}\big\}^{1/2}\cos(\theta), \nonumber \\
& Y(\theta)=\big\{1.07\cos(2\theta)+(\cos(2\theta)^2+0.2)^{1/2}\big\}^{1/2}\sin(\theta). \nonumber 
\end{align}
Let $\mathcal{Y}_{true}$ be the collection of $10^5$ points on $\iota(M)$. Suppose ${y}_i=\iota(x_i)+\eta_i$.  

In the first example, we show that to reconstruct $\iota(M)$ accurately, it is necessary that $\eta_i$ is small compared to the reach. Suppose $\mathcal{Y}=\{{y}_i\}_{i=1}^{102}$ in which $\eta_i \sim \mathcal{N}(0, {\sigma}^2 I_{3 \times 3}),$ with $\sigma=0.04$. We plot $\mathcal{Y}$ and $\mathcal{Y}_{true}$ in Figure \ref{small reach large noise}.  We choose $\epsilon=0.6$ and $\delta=0.6$ by maximizing the sum of the marginal likelihood functions in the first round of  Algorithm \ref{MrGap1}. We iterate  Steps 1-3 of Algorithm \ref{MrGap1} twice. The estimated covariance parameters in the last round  are  $A^{(1)}=0.1$, $\rho^{(1)}=0.5$ and $\sigma^{(1)}=\sqrt{0.012}$. We apply Algorithm \ref{MrGap2} with the covariance parameters and output $\mathcal{X}_2$ consisting of $612$ points. The denoised outputs $\mathcal{X}_1$ and the interpolations $\mathcal{X}_2$ are shown in Figure \ref{small reach large noise}. Given the large noise and small reach, $\mathcal{Y}$ is more likely to distribute around a figure 8 curve rather than a simple closed curve. Thus, \texttt{MrGap} improperly connects two regions on $\iota(M)$, resulting in the denoised points and interpolations falling along the figure 8 curve.
\begin{figure}[htb!]
\centering
\includegraphics[width=1 \columnwidth]{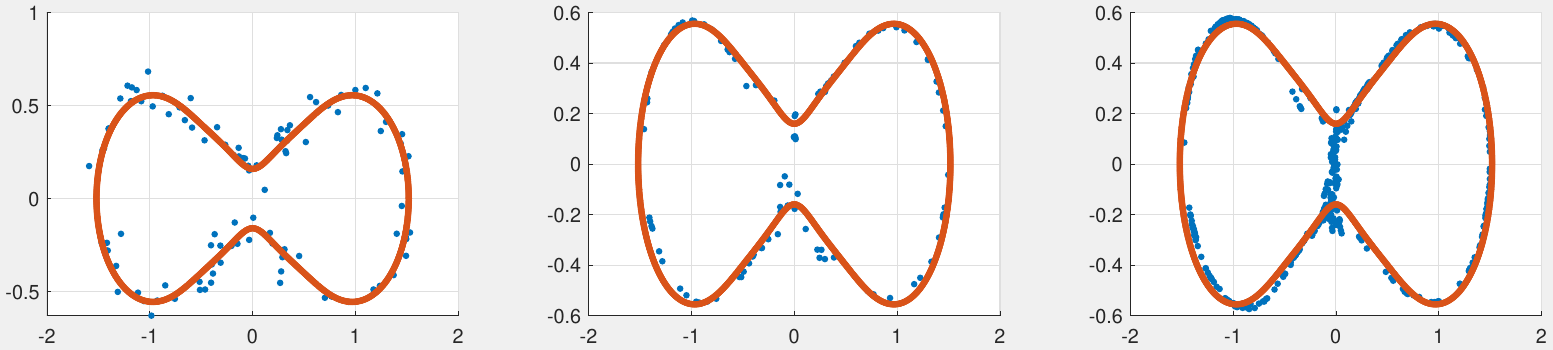}
\caption{Left: The plot of $\mathcal{Y}$(blue) and $\mathcal{Y}_{true}$(red). Middle: The plot of $\mathcal{X}_1$(blue) and $\mathcal{Y}_{true}$(red). Right: The plot of $\mathcal{X}_2$(blue) and $\mathcal{Y}_{true}$(red)}\label{small reach large noise}
\end{figure}

In the second example, we apply the same $\iota(M)$. We demonstrate that regions with large curvature cannot be adequately reconstructed when the surrounding data are too sparse. Suppose $\mathcal{Y}=\{{y}_i\}_{i=1}^{30}$ in which $\eta_i \sim \mathcal{N}(0, {\sigma}^2 I_{3 \times 3}),$ with $\sigma=0.04$. We plot $\mathcal{Y}$ and $\mathcal{Y}_{true}$ in Figure \ref{small reach sparse data}.  We choose $\epsilon=0.6$ and $\delta=0.6$ by maximizing the sum of the marginal likelihood functions in the first round of Algorithm \ref{MrGap1}. We iterate  Steps 1-3 of Algorithm \ref{MrGap1} twice. The estimated covariance parameters in the last round  are  $A^{(1)}=0.24$, $\rho^{(1)}=1$ and $\sigma^{(1)}=\sqrt{0.0005}$. We apply Algorithm \ref{MrGap2} with the covariance parameters and output $\mathcal{X}_2$ consisting of $600$ points. The denoised outputs $\mathcal{X}_1$ and the interpolations $\mathcal{X}_2$ are shown in Figure \ref{small reach sparse data}. The highlighted regions in the figure have large curvature. Due to the lack of samples in these areas, interpolations either fail to reconstruct the region or produce a flatter representation compared to the actual curve.

\begin{figure}[htb!]
\centering
\includegraphics[width=1 \columnwidth]{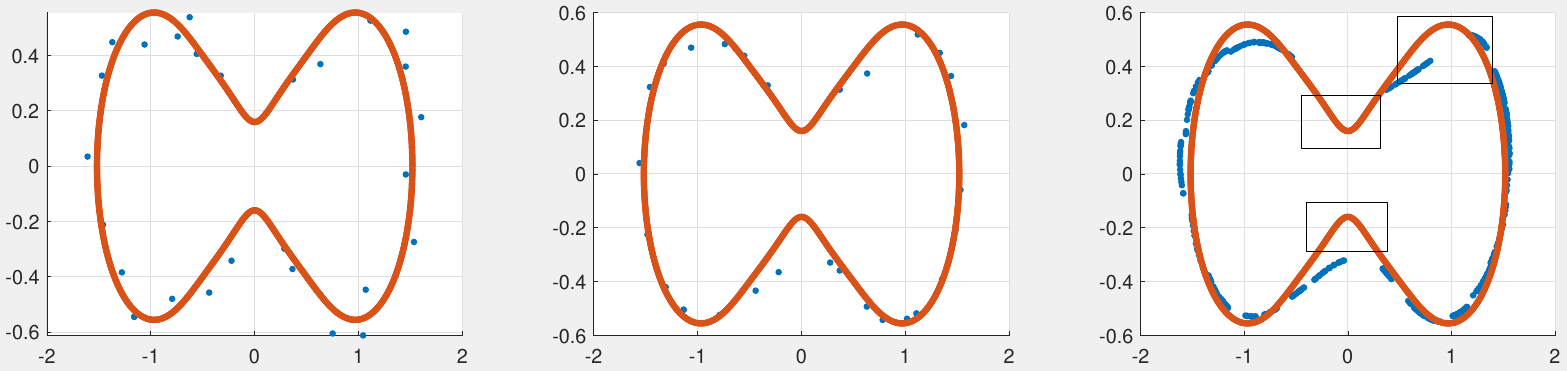}
\caption{Left: The plot of $\mathcal{Y}$(blue) and $\mathcal{Y}_{true}$(red). Middle: The plot of $\mathcal{X}_1$(blue) and $\mathcal{Y}_{true}$(red). Right: The plot of $\mathcal{X}_2$(blue) and $\mathcal{Y}_{true}$(red)}\label{small reach sparse data}
\end{figure}
  
\subsection{ \texttt{MrGap} on disconnected manifolds}\label{section:disconnected}
Recall from Assumption \ref{manifod with noise} that we assume the manifold $M$ is connected. However, the assumption of connectedness is made for simplicity in the statements of the main theorems and is not necessary for the functionality of \texttt{MrGap}. We now introduce the following assumption:
\begin{assumption}
Suppose $M=\cup_{i=1}^m M_i$ where each $M_i$ is a $d$-dimensional smooth, closed, and connected Riemannian manifold and the $m$ manifolds are mutually disjoint. $M$ is isometrically embedded in $\mathbb{R}^D$ though $\iota: M\to \mathbb{R}^D$. $P$ is a probability density function on $M$, which is smooth, bounded from below by $P_{m}>0$ and bounded from above by $P_{M}$. The observed data follow
$y_i=\iota(x_i)+\eta_i$, with $x_i \sim P$ and $\eta_i \sim \mathcal{N}(0, \sigma^2 I_{D \times D})$ independently for $i=1,\ldots,n$.  
\end{assumption}

Because the $m$ manifolds are disjoint, we have 
\begin{equation*}
\mathfrak{m}_0 =\min_{i \not= j}\min_{x\in M_i, x' \in M_j}\| \iota(x)-\iota(x')\|_{\mathbb{R}^D}> 0.
\end{equation*} 
Let $\tau_{\iota(M_i)}$ denote the reach of the embedded manifold $\iota(M_i)$. Suppose the relations between $\epsilon$, $\sigma$, $\delta$, and $n$ in Theorem \ref{Construction of a chart main theorem} hold. Further assume that $\epsilon<\mathfrak{m}_0 /3$ and $\delta < \min(\mathfrak{m}_0 /3, \tau_{\iota(M_1)}/16, \cdots, \tau_{\iota(M_m)}/16)$. Under these conditions, the conclusions of Theorem \ref{Construction of a chart main theorem} hold. In particular,  if $x_k \in M_i$, then with high probability, $B^{\mathbb{R}^D}_{3\delta}(y_k) \cap \iota(M_j) =\emptyset$ for $j\not=i$, and  $B^{\mathbb{R}^D}_{\epsilon}(y_k)$ and $B^{\mathbb{R}^D}_{\delta}(y_k) $ do not contain any $y_{k'}$  corresponding to $x_{k'} \in M_j$ for $j\not=i$. Consequently, $C_{n,\epsilon}(y_k)$ is  constructed only using $y_{k'}$ corresponding to $x_{k'}$ on $M_i$.  The map $\mathcal{P}_{y_k}(y)$ as defined in \eqref{regression PYK 0} is a diffeomorphism from $B^{\mathbb{R}^D}_{3\delta}(y_k) \cap \iota(M_i)$ onto its image $O_k \subset \mathbb{R}^d$, and $O_k$ is homeomorphic to $B^{\mathbb{R}^d}_{1}(0)$.  In other words, applying \texttt{MrGap} directly over $\{y_i\}_{i=1}^n$ to reconstruct $\iota(M)$ is equivalent to applying \texttt{MrGap} to reconstruct each $\iota(M_i)$ separately.  We illustrate the performance of \texttt{MrGap} on the following disconnected manifold without testing whether $M$ is connected.

Let $\iota(M)=S_1 \cup S_2 \subset \mathbb{R}^3$ be a pair of linked circles in $\mathbb{R}^3$, where $S_1$ is the unit circle in the XY plane centered at $(0,0,0)$ and $S_2$ is the unit circle in the YZ plane centered at $(0,1,0)$.  Let $\{\iota(x_i)\}_{i=1}^{100}$ and $\{\iota(x_i)\}_{i=101}^{200}$ be the uniform samples distributed on $S_1$ and $S_2$ respectively. Suppose $\eta_{i} \sim \mathcal{N}(0, {\sigma}^2 I_{3 \times 3}),$ with $\sigma=0.07$.  Suppose $y_i=\iota(x_i)+\eta_i$, $i=1, \cdots, 200$, and $\mathcal{Y}=\{y_i\}_{i=1}^{200}$. We uniformly sample $\mathcal{Y}_{true}$ consisting of $5 \times 10^5$ points on $\iota(M)$.  For any sample points $\mathcal{S}$,  $G\{\mathcal{S}, \iota(M)\}$ is approximated by $G(\mathcal{S},\mathcal{Y}_{true})$. Without denoising, we obtain $G(\mathcal{Y},\mathcal{Y}_{true})=0.098$. 

We choose the scale parameters $\epsilon=0.6$ and $\delta=0.6$  in \texttt{MrGap}. We iterate Steps 1-3 of Algorithm \ref{MrGap1} twice. The estimated covariance parameters  in the first round are  $A^{(0)}=0.11$, $\rho^{(0)}=1.2$ and $\sigma^{(0)}=\sqrt{0.007}$, while the values in the last round are  $A^{(1)}=0.22$, $\rho^{(1)}=1$ and $\sigma^{(1)}=\sqrt{1 \times 10^{-5}}$.  The denoised outputs are $\mathcal{X}_1=\{\hat{y}_i\}_{i=1}^{200}$, with $G(\mathcal{X}_1,\mathcal{Y}_{true})=0.0321$. In the interpolation phase, we construct $200$ charts and interpolate $K=10$ points in each chart. 
The outputs are $\mathcal{X}_2=\{\tilde{y}_i\}_{i=1}^{2000}$ with  $G(\mathcal{X}_2,\mathcal{Y}_{true})=0.0304$.   A comparison of $\mathcal{Y}$, $\mathcal{X}_1$, and $\mathcal{X}_2$ with $\mathcal{Y}_{true}$ is provided in Figure \ref{linkedcircles}.   

\begin{figure}
\begin{subfigure}
\centering
\includegraphics[width=15cm,height=4cm]{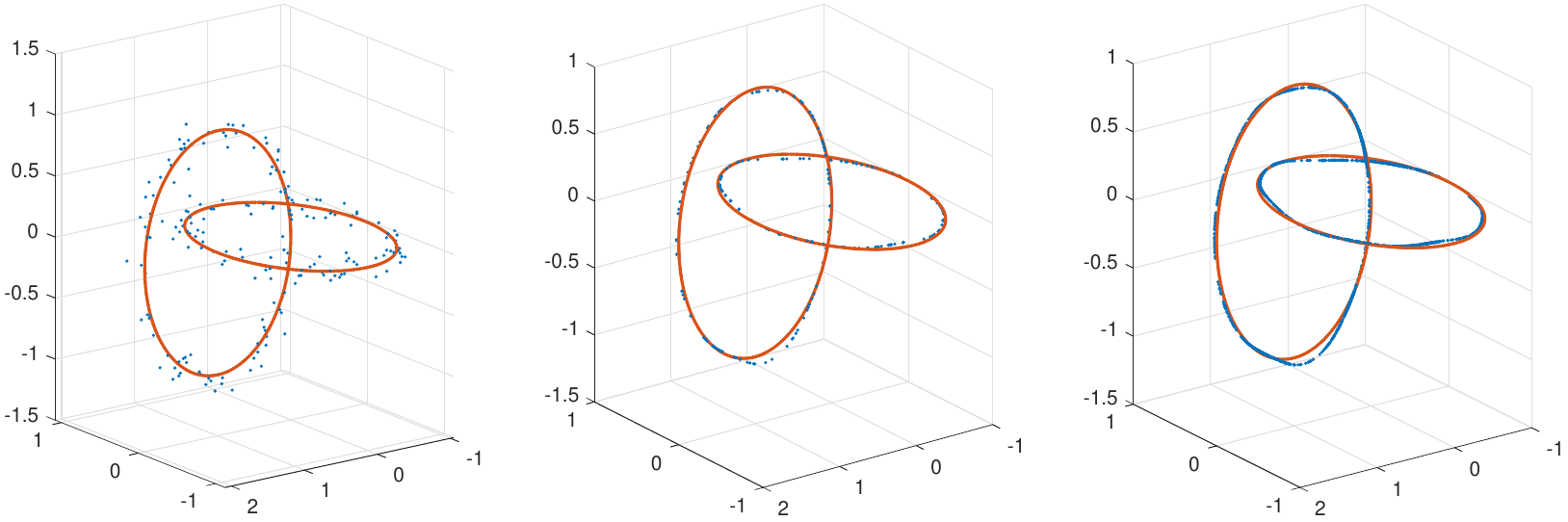}
\end{subfigure}
\caption{Red curves represent $\mathcal{Y}_{true}$ consisting of $5 \times 10^5$ points on $\iota(M)$. Left panel shows data $\mathcal{Y}$ (blue) and $\mathcal{Y}_{true}$ ($G(\mathcal{Y},\mathcal{Y}_{true})=0.098$). Middle panel shows denoised samples $\mathcal{X}_1$  (blue) by \texttt{MrGap}  and $\mathcal{Y}_{true}$ ($G(\mathcal{X}_1,\mathcal{Y}_{true})=0.0312$).  Right panel shows interpolations $\mathcal{X}_2$  (blue) by \texttt{MrGap} and $\mathcal{Y}_{true}$ ($G(\mathcal{X}_2,\mathcal{Y}_{true})=0.0304$). }\label{linkedcircles}
\end{figure}

Consider the more general case, when $M=\cup_{i=1}^m M_i$ where each $M_i$ is a $d_i$-dimensional smooth, closed, and connected Riemannian manifold and the $m$ manifolds are mutually disjoint. $M$ is isometrically embedded in $\mathbb{R}^D$ though $\iota: M\to \mathbb{R}^D$. Since each connected component may have a different dimension,  \texttt{MrGap} cannot be applied directly. Given the noisy data $\mathcal{Y}$ around $M$, we propose the following method:
\begin{enumerate}
\item
Apply spectral clustering \citep{ng2001spectral} or  topological data analysis methods \citep{niyogi2008finding, Fasy2014, robinson2017hypothesis, meng2022randomness} to separate $\mathcal{Y}$ into groups $\{\mathcal{Y}_i\}_i^m$.
\item
For each group $\mathcal{Y}_i$, apply the method in Section \ref{determine the dimension} to detect the dimension $d_i$ of the corresponding manifold $M_i$. 
\item Apply \texttt{MrGap} to each $\mathcal{Y}_i$ with the corresponding detected dimension $d_i$.
\end{enumerate}

We illustrate the above method by the following simulation. Let $\iota(M)=\iota(M_1) \cup \iota(M_2) \subset \mathbb{R}^3$, where $\iota(M_1)$ is the circle of radius $1.6$ in the XY plane centered at the origin and $\iota(M_2)$ is the sphere of radius $0.8$ centered at the origin.  Let $\{\iota(x_i)\}_{i=1}^{230}$ and $\{\iota(x_i)\}_{i=231}^{430}$ be the uniform samples distributed on $\iota(M_1)$ and $\iota(M_2)$ respectively. Suppose $\eta_{i} \sim \mathcal{N}(0, {\sigma}^2 I_{3 \times 3}),$ with $\sigma=0.08$.  Suppose $y_i=\iota(x_i)+\eta_i$, $i=1, \cdots, 430$, and $\mathcal{Y}=\{y_i\}_{i=1}^{430}$. We uniformly sample $\mathcal{Y}_{true}$ consisting of $4 \times 10^6$ points on $\iota(M)$ with $1.5 \times 10^6$ points on $\iota(M_1)$ and $2.5 \times 10^6$ points on $\iota(M_2)$.  For any sample points $\mathcal{S}$,  $G\{\mathcal{S}, \iota(M)\}$ is approximated by $G(\mathcal{S},\mathcal{Y}_{true})$. Without denoising, we obtain $G(\mathcal{Y},\mathcal{Y}_{true})=0.0994$. 

We apply the spectral clustering method to $\mathcal{Y}$. We construct the random walk graph Laplacian matrix from $\mathcal{Y}$ and find the eigenvector corresponding to the second smallest eigenvalue of the matrix. The eigenvector is plotted over $\mathcal{Y}$  in Figure \ref{sphereandcircles}. The values of the eigenvector’s entries partition $\mathcal{Y}$ into two groups $\mathcal{Y}_1$ and $\mathcal{Y}_2$. We apply the method in Section \ref{determine the dimension} to $\mathcal{Y}_1$ and $\mathcal{Y}_2$ respectively, and detect the dimensions $d_1=1$ and $d_2=2$.   

We apply  \texttt{MrGap} to $\mathcal{Y}_1$ and $\mathcal{Y}_2$ separately. For $\mathcal{Y}_1$, we choose the scale parameters $\epsilon=0.8$ and $\delta=0.8$. We iterate Steps 1-3 of Algorithm \ref{MrGap1} twice. The estimated covariance parameters  in the first round are  $A^{(0)}=0.4$, $\rho^{(0)}=4.2$ and $\sigma^{(0)}=\sqrt{0.005}$, while the values in the last round are  $A^{(1)}=0.21$, $\rho^{(1)}=0.8$ and $\sigma^{(1)}=\sqrt{1 \times 10^{-5}}$.  The denoised outputs are $\mathcal{X}^1_1=\{\hat{y}_i\}_{i=1}^{230}$. In the interpolation phase, we construct $230$ charts and interpolate $K=10$ points in each chart. The outputs are $\mathcal{X}^1_2=\{\tilde{y}_i\}_{i=1}^{2300}$. For $\mathcal{Y}_2$, we choose the scale parameters $\epsilon=0.9$ and $\delta=0.9$. We iterate Steps 1-3 of Algorithm \ref{MrGap1} twice. The estimated covariance parameters  in the first round are  $A^{(0)}=0.74$, $\rho^{(0)}=3.5$ and $\sigma^{(0)}=\sqrt{0.007}$, while the values in the last round are  $A^{(1)}=1.9$, $\rho^{(1)}=5$ and $\sigma^{(1)}=\sqrt{0.002}$.  The denoised outputs are $\mathcal{X}^2_1=\{\hat{y}_i\}_{i=231}^{430}$. In the interpolation phase, we construct $200$ charts and interpolate $K=10$ points in each chart. The outputs are $\mathcal{X}^2_2=\{\tilde{y}_i\}_{i=2301}^{4300}$. We have $G(\mathcal{X}^1_1\cup\mathcal{X}^2_1,\mathcal{Y}_{true})=0.0337$ and $G(\mathcal{X}^1_2 \cup\mathcal{X}^2_2,\mathcal{Y}_{true})=0.0347$
\begin{figure}
\begin{subfigure}
\centering
\includegraphics[width=15cm,height=4cm]{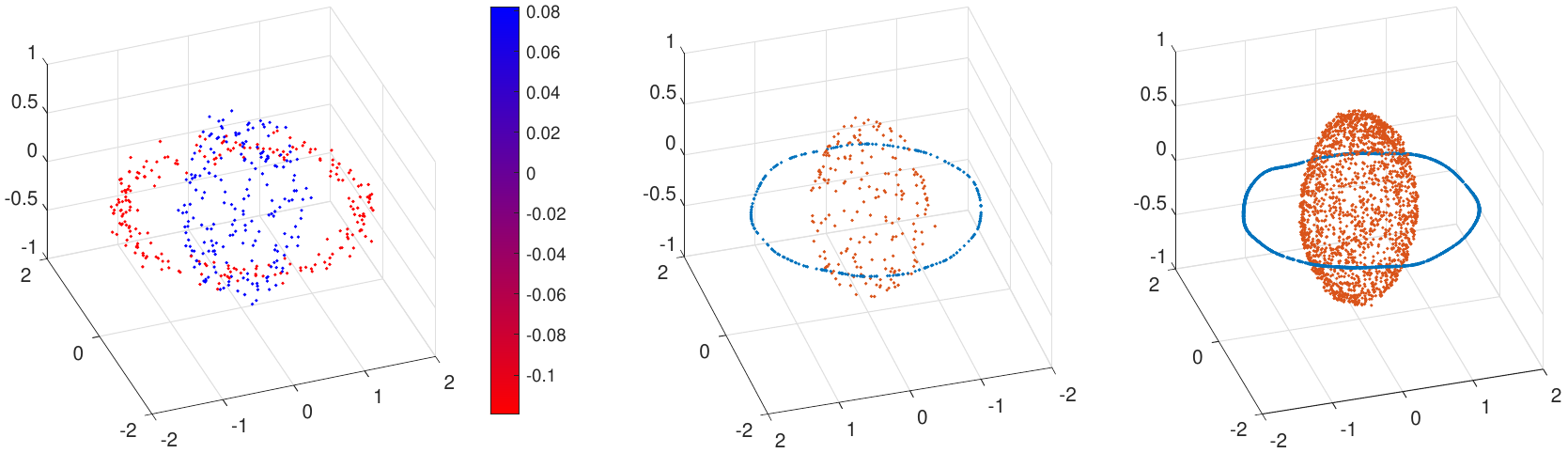}
\caption{Left panel shows the plot of the eigenvector corresponding to the second smallest eigenvalue of the random walk graph Laplacian matrix over $\mathcal{Y}$.  The values of the eigenvector’s entries  partition  $\mathcal{Y}$ into two groups $\mathcal{Y}_1$ and $\mathcal{Y}_2$. Middle panel shows denoised samples $\mathcal{X}^1_1$ (blue) and $\mathcal{X}^2_1$  (red) correspondingly to $\mathcal{Y}_1$ and $\mathcal{Y}_2$. ($G(\mathcal{X}^1_1\cup\mathcal{X}^2_1,\mathcal{Y}_{true})=0.0337$).  Right panel shows interpolations $\mathcal{X}^1_2$ (blue) and $\mathcal{X}^2_2$  (red).  ($G(\mathcal{X}^1_2\cup\mathcal{X}^2_2,\mathcal{Y}_{true})=0.0347$). }\label{sphereandcircles}
\end{subfigure}
\end{figure}

\section{Additional simulations}\label{boundary and noise}
\subsection{Performance of \texttt{MrGap} on a real projective space}
In this section, we evaluate the performance of \texttt{MrGap} on a simulated example in which 
$\iota(M)$ is an embedded $3$-real projective space, $RP(3)$, in $\mathbb{R}^{10}$. The space $RP(3)$ is $3$ dimensional closed connected manifold diffeomorphic to $SO(3)$. Letting $S^3$ denote the $3$ dimensional unit sphere: $S^3=\{[z_1, z_2, z_3, z_4]^\top| z_1^2+z_2^2+z_3^2+z_4^2=1\} \subset \mathbb{R}^4$, $\iota(M)$ is explicitly constructed through the following map: 
%Recall that $RP(3)$ is a $3$ dimensional closed connected manifold which can be topologically constructed by identifying the antipodal points $[z_1, z_2, z_3, z_4]^\top$ and $[-z_1, -z_2, -z_3, -z_4]^\top$ on $S^3$. 
\begin{align}
\Phi({z}):\quad \quad \quad S^3 \quad \quad \quad &\rightarrow \quad  \iota(M) \subset \mathbb{R}^{10} \\
{z}= [z_1, z_2, z_3, z_4]^\top \quad &\rightarrow \quad [z_1^2, z_2^2,z_3^2,z_4^2,z_1z_2, z_1z_3, z_1z_4, z_2z_3, z_2z_4, z_3z_4]^\top, \nonumber
\end{align}
with 
%Note that $\Phi({z}) =\Phi(-{z})$. Hence, 
$\Phi$ inducing a diffeomorphism $\tilde{\Phi} : RP(3) \rightarrow  \iota(M) \subset \mathbb{R}^{10}$. 
%\begin{remark}
%Let $SO(3) \subset \mathbb{R}^9$ be the group of all rotations about the origin of $\mathbb{R}^3$:
%$$SO(3)=\{A \in \mathbb{R}^{ 3 \times 3}| AA^\top=A^\top A=I_{3 \times 3}, \quad \det(A)=1\}.$$
%Then, SO(3) is a manifold diffeomorphic to $RP(3)$.
%\end{remark}
%Starting with uniform samples $\{{z}_i\}_{i=1}^n$ on $S^3$, 
%We almost surely have ${z}_i \not= - {z}_j$ , i.e. we will not generate antipodal points. 
%$\{\Phi({z}_i)\}_{i=1}^n$ are non-uniform samples on $\iota(M)$. 
We uniformly sample points $\{{z}_i\}_{i=1}^{1200}$ on $S^3$ to obtain non-uniform samples $\{\Phi({z}_i)\}_{i=1}^{1200}$ on $\iota(M)$. Suppose $\eta_{i} \sim \mathcal{N}(0, {\sigma}^2 I_{10 \times 10}),$ with $\sigma=0.04$, ${y}_i=\Phi({z}_i)+\eta_i$ for $i=1, \cdots, 1200$, and $\mathcal{Y}=\{{y}_i\}_{i=1}^{1200}$. 
%The root mean square error between $\mathcal{Y}$ and  $\{(X(\theta_i), Y(\theta_i), Z(\theta_i))^\top\}_{i=1}^{102}$ is $0.069$. 
We uniformly sample $\{{z}_i\}_{i=1}^{10^6}$ on $S^3$ to obtain  $\mathcal{Y}_{true}=\{\Phi({z}_i)\}_{i=1}^{10^6}$, which is used to approximate the geometric root mean square error. Without denoising, we obtain $G(\mathcal{Y},\mathcal{Y}_{true})=0.1074$.  
 
We apply the method in Section \ref{determine the dimension} of the Supplementary Material and estimate the dimension $d=3$. A detailed procedure of the estimation for this example is provided in that section.  We apply \texttt{MrGap} with $\epsilon=0.5$ and $\delta=0.5$.  We iterate Steps 1-3 of Algorithm \ref{MrGap1} twice. The estimated covariance parameters in the first round are  $A^{(0)}=0.05$, $\rho^{(0)}=1$ and $\sigma^{(0)}=\sqrt{0.0015}$, while the values in the second round are  
 $A^{(1)}=0.08$, $\rho^{(1)}=1.2$ and $\sigma^{(1)}=0.02$.  The denoised outputs are $\mathcal{X}_1=\{\hat{{y}}_i\}_{i=1}^{1200}$, with 
$G(\mathcal{X}_1,\mathcal{Y}_{true})=0.0532$.  We interpolate $K=10$ points around each of the $1200$ data points.
 The outputs are $\mathcal{X}_2=\{\tilde{{y}}_i\}_{i=1}^{12000}$ with  $G(\mathcal{X}_2,\mathcal{Y}_{true})=0.0593$. 

\subsection{\texttt{MrGap} on manifold with boundary}\label{subsection: performance on manifold with boundary}
Let $\iota(M)$ be a half torus  in $\mathbb{R}^3$ parametrized by $u\in [0, 2\pi)$ and $v \in (-\frac{\pi}{2}, \frac{\pi}{2})$
\begin{align}\label{half torus eq}
&X(u,v) =\{3+0.8\cos(u)\}\cos(v), \\
&Y(u,v) =\{3+0.8\cos(u)\}\sin(v),\nonumber \\
&Z(u,v)=0.8\sin(u).\nonumber 
\end{align}
We randomly sample $400$ points $\{\iota(x_i)\}_{i=1}^{400}$ based on the uniform density function on $\iota(M)$.  Let $\eta_{i} \sim \mathcal{N}(0, {\sigma}^2 I_{3 \times 3}),$ with $\sigma=0.12$, ${y}_i=\iota(x_i)+\eta_i$ for $i=1, \cdots, 400$, and $\mathcal{Y}=\{{y}_i\}_{i=1}^{400}$. 
We randomly sample $79585$ points based on the uniform density function on $ \iota(M)$ to form $\mathcal{Y}_{true}$.  We calculate the geometric root mean square error from $\mathcal{Y}$ to $\mathcal{Y}_{true}$ which is $G(\mathcal{Y},\mathcal{Y}_{true})=0.1159$.  We plot $\mathcal{Y}$ in Figure \ref{toruscomparison}. We apply \texttt{MrGap}  with $\epsilon=1$ and $\delta=1$. We iterate Steps 1-3 of Algorithm \ref{MrGap1} twice.  The estimated covariance parameters in the last round are $A^{(1)}=1.3$, $\rho^{(1)}=5$ and $\sigma^{(1)}=\sqrt{0.002}$. The final denoised outputs are $\mathcal{X}_1=\{\hat{{y}}_i\}_{i=1}^{400}$ with $G(\mathcal{X}_1,\mathcal{Y}_{true})=0.0646$.  We plot $\mathcal{X}_1$  in Figure \ref{toruscomparison}.  Note that $\eta_i$ can be decomposed into two components: $\eta_i^\top$ which is tangent to $\iota(M)$ at $\iota(x_i)$ and $\eta_i^\bot$ which is perpendicular to $\iota(M)$ at $\iota(x_i)$. Due to the boundary property of the manifold, when $\iota(x_i)$ is near the boundary of $\iota(M)$, the $\delta$ neighborhood around the corresponding noisy data ${y}_i$ may contain fewer data points than when $\iota(x_i)$ is away from the boundary. Since \texttt{MrGap} relies on a Gaussian process to reconstruct a chart associated with ${y}_i$, we may have fewer predictors and responses for the regression of the Gaussian process, which could reduce its effectiveness. Thus, the denoised data points may lie in an outward extension of $\iota(M)$ from the boundary, as we show in region 1 of Figure \ref{toruscomparison}. We choose $K=20$ in applying Algorithm \ref{MrGap2}.  The outputs are $\mathcal{X}_2=\{\tilde{{y}}_i\}_{i=1}^{8000}$ with $G(\mathcal{X}_2,\mathcal{Y}_{true})=0.0634$.  We plot $\mathcal{X}_2$  in Figure \ref{toruscomparison}.  
\begin{figure}[htb!]
\centering
\includegraphics[width=0.9 \columnwidth]{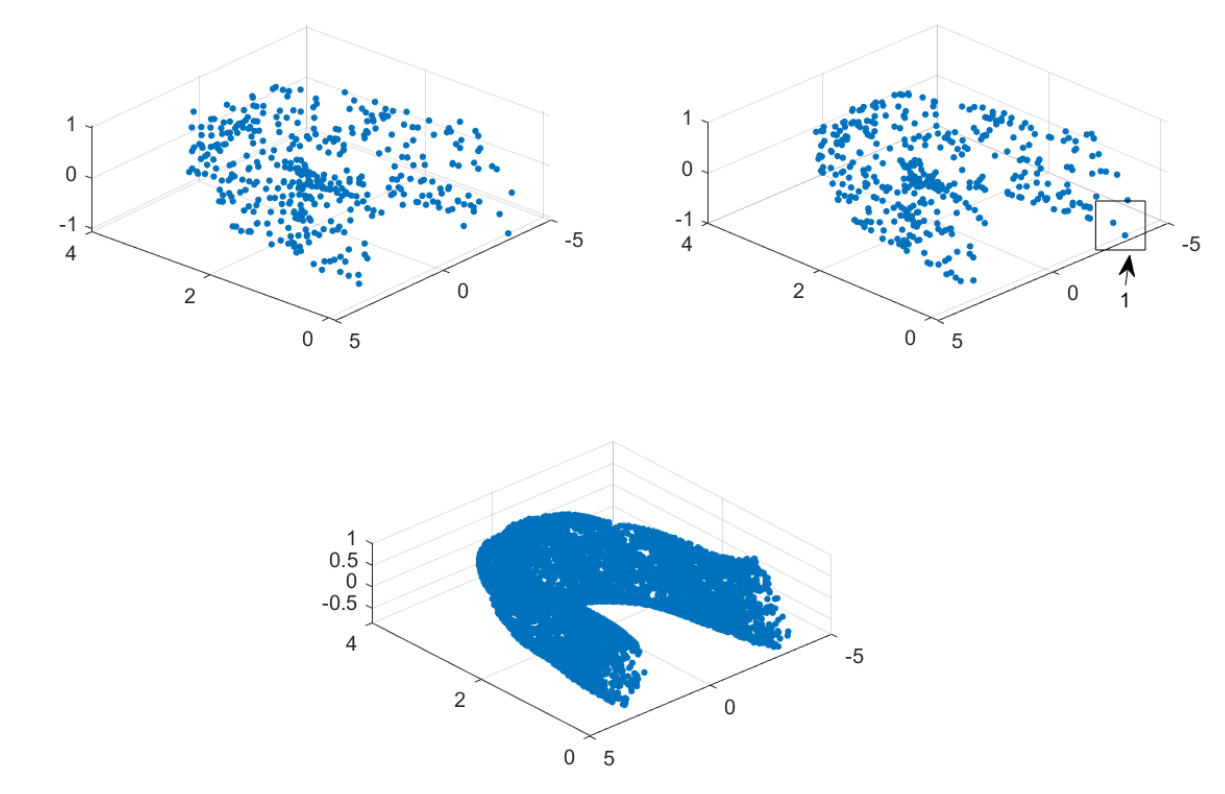}
\caption{$\mathcal{Y}$ contains $400$ noisy points around the half torus $\iota(M)$ and $\mathcal{Y}_{true}$ contains $79585$ points on the torus $\iota(M)$. $\mathcal{X}_1$ contains $400$ denoised points on $\iota(M)$ by the \texttt{MrGap} algorithm. $\mathcal{X}_2$ contains $8000$ interpolated points on $\iota(M)$ by the \texttt{MrGap} algorithm. We use $\mathcal{Y}_{true}$ to estimate the geometric root mean square error from samples to $\iota(M)$. Top left: Original data $\mathcal{Y}$ with $G(\mathcal{Y},\mathcal{Y}_{true})=0.1159$. Top right: Fits from \texttt{MrGap} $\mathcal{X}_1$ with $G(\mathcal{X}_1,\mathcal{Y}_{true})=0.0646$. Region 1 includes the denoised points on an extension $\iota(M)$ from the boundary. Bottom: Interpolated fits from \texttt{MrGap} $\mathcal{X}_2$ with $G(\mathcal{X}_2,\mathcal{Y}_{true})=0.0634$.}\label{toruscomparison}
\end{figure}
\subsection{\texttt{MrGap} on data with bounded noise}
The idea of the proof of the main result, Theorem \ref{Construction of a chart main theorem}, can be summarized as follows. Consider two conditions: (1) ${y}_i$ should not deviate significantly from $\iota(x_i)$ compared to the reach of $\iota(M)$, $\tau_{\iota(M)}$, and (2) there exists a $d$ dimensional affine subspace through ${y}_i$ with an orthonormal basis deviating from an orthonormal  basis of the tangent space at $\iota(x_i)$ by an order $1$ term depending on $d$. When these conditions are met, this affine subspace can be used to construct a chart for $\iota(M)$. When $\eta_i$ is a bounded error rather than Gaussian,  condition (1) can be satisfied by proposing an error bound depending on the reach, while the behavior of orthonormal eigenvectors of $C_{n,\epsilon}({y}_i)$ can be explored by a similar method as in Theorem \ref{local PCA  spectral behavior}. Hence, we expect the result of Theorem \ref{Construction of a chart main theorem} to hold with suitable relations between the error bound on $\eta_i$, $\epsilon$, $\delta$, $n$, and $\tau_{\iota(M)}$.  We illustrate the performance of \texttt{MrGap} when the noise is bounded in the following example.

Let $\iota(M)$ be a half torus in $\mathbb{R}^3$ described by \eqref{half torus eq}. Suppose $\mathcal{Y}_{true}$ consists of $79585$ points on $\iota(M)$. $\iota(M)$ is contained in a larger box $[-4.3,4.3] \times [-4.3,4.3] \times [-1.2, 1.2]$ in $\mathbb{R}^3$.  We sample $3000$ points $\mathcal{Y}_b$ non-uniformly from this box. $\mathcal{Y}$ consists of $319$ points $\{{y}_i \}_{i=1}^{319}$ such that if ${y}_i  \in \mathcal{Y}_b$ and $G({y}_i,\mathcal{Y}_{true})<0.25$, then ${y}_i \in \mathcal{Y}$.  Hence, $\mathcal{Y}$ are the points around $\iota(M)$ with bounded non-isotropic noise. We calculate the geometric root mean square error from $\mathcal{Y}$ to $\mathcal{Y}_{true}$ which is $G(\mathcal{Y},\mathcal{Y}_{true})=0.1201$. We apply \texttt{MrGap} with $\epsilon=1$ and $\delta=1$. We iterate  Steps 1-3 of Algorithm \ref{MrGap1} twice.  The estimated covariance parameters in the last round  are $A^{(1)}=0.6$, $\rho^{(1)}=3$ and $\sigma^{(1)}=\sqrt{0.002}$. The final denoised outputs are $\mathcal{X}_1=\{\hat{{y}}_i\}_{i=1}^{319}$ with $G(\mathcal{X}_1,\mathcal{Y}_{true})=0.0698$. At last, we choose $K=20$ in applying Algorithm \ref{MrGap2}.  The outputs are $\mathcal{X}_2=\{\tilde{{y}}_i\}_{i=1}^{6380}$ with  $G(\mathcal{X}_2,\mathcal{Y}_{true})=0.0524$. 

\bibliographystyle{plain}
\bibliography{bib}

\begin{thebibliography}{10}

\bibitem{aamari2019nonasymptotic}
Eddie Aamari and Cl{\'e}ment Levrard.
\newblock Nonasymptotic rates for manifold, tangent space and curvature
  estimation.
\newblock {\em The Annals of Statistics}, 47(1):177--204, 2019.

\bibitem{aizenbud2021non}
Yariv Aizenbud and Barak Sober.
\newblock Estimation of local geometric structure on manifolds from noisy data.
\newblock {\em Journal of Machine Learning Research}, 26(64):1--89, 2025.

\bibitem{belkin2003laplacian}
Mikhail Belkin and Partha Niyogi.
\newblock {Laplacian} eigenmaps for dimensionality reduction and data
  representation.
\newblock {\em Neural Computation}, 15(6):1373--1396, 2003.

\bibitem{boissonnat2019reach}
Jean-Daniel Boissonnat, Andre Lieutier, and Mathijs Wintraecken.
\newblock The reach, metric distortion, geodesic convexity and the variation of
  tangent spaces.
\newblock {\em Journal of Applied and Computational Topology}, 3(1):29--58,
  2019.

\bibitem{calder2022lipschitz}
Jeff Calder, Nicolas Garcia~Trillos, and Marta Lewicka.
\newblock Lipschitz regularity of graph {Laplacian}s on random data clouds.
\newblock {\em SIAM Journal on Mathematical Analysis}, 54(1):1169--1222, 2022.

\bibitem{cheng2022eigen}
Xiuyuan Cheng and Nan Wu.
\newblock {Eigen-convergence of {Gaussian} kernelized graph {Laplacian} by
  manifold heat interpolation}.
\newblock {\em Applied and Computational Harmonic Analysis}, 61:132--190, 2022.

\bibitem{coifman2006diffusion}
Ronald~R Coifman and St{\'e}phane Lafon.
\newblock Diffusion maps.
\newblock {\em Applied and Computational Harmonic Analysis}, 21(1):5--30, 2006.

\bibitem{davis1970rotation}
Chandler Davis and William~Morton Kahan.
\newblock The rotation of eigenvectors by a perturbation. iii.
\newblock {\em SIAM Journal on Numerical Analysis}, 7(1):1--46, 1970.

\bibitem{ding2020impact}
Xiucai Ding and Hau-Tieng Wu.
\newblock Impact of signal-to-noise ratio and bandwidth on graph {L}aplacian
  spectrum from high-dimensional noisy point cloud.
\newblock {\em IEEE Transactions on Information Theory}, 69(3):1899--1931,
  2022.

\bibitem{do2013riemannian}
Manfredo~P Do~Carmo.
\newblock {\em Riemannian geometry}.
\newblock Springer Science \& Business Media, 2013.

\bibitem{DUNSON2021}
David~B. Dunson, Hau-Tieng Wu, and Nan Wu.
\newblock Spectral convergence of graph {Laplacian} and heat kernel
  reconstruction in ${L}^\infty$ from random samples.
\newblock {\em Applied and Computational Harmonic Analysis}, 55:282--336, 2021.

\bibitem{dunson2020diffusion}
David~B. Dunson, Hau-Tieng Wu, and Nan Wu.
\newblock Graph based {Gaussian} processes on restricted domains.
\newblock {\em Journal of the Royal Statistical Society: Series B (Statistical
  Methodology)}, 84(2):414--439, 2022.

\bibitem{el2016graph}
Noureddine El~Karoui and Hau-Tieng Wu.
\newblock Graph connection {Laplacian} methods can be made robust to noise.
\newblock {\em The Annals of Statistics}, 44(1):346--372, 2016.

\bibitem{Fasy2014}
Brittany~Terese Fasy, Fabrizio Lecci, Alessandro Rinaldo, Larry Wasserman,
  Sivaraman Balakrishnan, and Aarti Singh.
\newblock {Confidence sets for persistence diagrams}.
\newblock {\em The Annals of Statistics}, 42(6):2301 -- 2339, 2014.

\bibitem{federer1959curvature}
Herbert Federer.
\newblock Curvature measures.
\newblock {\em Transactions of the American Mathematical Society},
  93(3):418--491, 1959.

\bibitem{pmlr-v75-fefferman18a}
Charles Fefferman, Sergei Ivanov, Yaroslav Kurylev, Matti Lassas, and Hariharan
  Narayanan.
\newblock Fitting a putative manifold to noisy data.
\newblock {\em Proceedings of the 31st Conference On Learning Theory},
  75:688--720, 2018.

\bibitem{fefferman2019fitting}
Charles Fefferman, Sergei Ivanov, Matti Lassas, and Hariharan Narayanan.
\newblock Fitting a manifold of large reach to noisy data.
\newblock {\em Journal of Topology and Analysis}, 17(02):315--396, 2025.

\bibitem{hastie1989principal}
Trevor Hastie and Werner Stuetzle.
\newblock Principal curves.
\newblock {\em Journal of the American Statistical Association},
  84(406):502--516, 1989.

\bibitem{kaslovsky2014non}
Daniel~N Kaslovsky and Fran{\c{c}}ois~G Meyer.
\newblock Non-asymptotic analysis of tangent space perturbation.
\newblock {\em Information and Inference: a Journal of the IMA}, 3(2):134--187,
  2014.

\bibitem{kirov2017multiple}
Slav Kirov and Dejan Slepcev.
\newblock Multiple penalized principal curves: analysis and computation.
\newblock {\em Journal of Mathematical Imaging and Vision}, 59(2):234--256,
  2017.

\bibitem{lee2012smooth}
John~M Lee.
\newblock {\em {Introduction to Smooth Manifolds}}.
\newblock Springer, 2012.

\bibitem{mao2016principal}
Qi~Mao, Li~Wang, Ivor~W Tsang, and Yijun Sun.
\newblock Principal graph and structure learning based on reversed graph
  embedding.
\newblock {\em IEEE Transactions on Pattern Analysis and Machine Intelligence},
  39(11):2227--2241, 2016.

\bibitem{meng2021principal}
Kun Meng and Ani Eloyan.
\newblock Principal manifold estimation via model complexity selection.
\newblock {\em Journal of the Royal Statistical Society: Series B (Statistical
  Methodology)}, 83(2):369--394, 2021.

\bibitem{meng2022randomness}
Kun Meng, Jinyu Wang, Lorin Crawford, and Ani Eloyan.
\newblock Randomness and statistical inference of shapes via the smooth {Euler}
  characteristic transform.
\newblock {\em Journal of the American Statistical Association},
  120(549):498–510, 2025.

\bibitem{munkres2018analysis}
James~R Munkres.
\newblock {\em Analysis on manifolds}.
\newblock CRC Press, 2018.

\bibitem{ng2001spectral}
Andrew Ng, Michael Jordan, and Yair Weiss.
\newblock On spectral clustering: Analysis and an algorithm.
\newblock {\em Advances in neural information processing systems}, 14, 2001.

\bibitem{niyogi2008finding}
Partha Niyogi, Stephen Smale, and Shmuel Weinberger.
\newblock Finding the homology of submanifolds with high confidence from random
  samples.
\newblock {\em Discrete \& Computational Geometry}, 39(1-3):419--441, 2008.

\bibitem{OvaskainenDunson2025}
Otso Ovaskainen, et~al, and David Dunson.
\newblock A digital twin for real-time biodiversity forecasting with citizen
  science.
\newblock {\em To appear in Nature Ecology and Evolution}, 2025.

\bibitem{ovaskainen2018animal}
Otso Ovaskainen, Ulisses Moliterno~de Camargo, and Panu Somervuo.
\newblock Animal sound identifier ({ASI}): software for automated
  identification of vocal animals.
\newblock {\em Ecology Letters}, 21(8):1244--1254, 2018.

\bibitem{puchkin2022structure}
Nikita Puchkin and Vladimir Spokoiny.
\newblock Structure-adaptive manifold estimation.
\newblock {\em Journal of Machine Learning Research}, 23(40):1--62, 2022.

\bibitem{robinson2017hypothesis}
Andrew Robinson and Katharine Turner.
\newblock Hypothesis testing for topological data analysis.
\newblock {\em Journal of Applied and Computational Topology}, 1:241--261,
  2017.

\bibitem{roweis2000nonlinear}
Sam~T Roweis and Lawrence~K Saul.
\newblock Nonlinear dimensionality reduction by locally linear embedding.
\newblock {\em Science}, 290(5500):2323--2326, 2000.

\bibitem{shen2020scalability}
Chao Shen and Hau-Tieng Wu.
\newblock Scalability and robustness of spectral embedding: landmark diffusion
  is all you need.
\newblock {\em Information and Inference: A Journal of the IMA},
  11(4):1527--1595, 2022.

\bibitem{singer2012vector}
Amit Singer and Hau-Tieng Wu.
\newblock Vector diffusion maps and the connection {Laplacian}.
\newblock {\em Communications on Pure and Applied Mathematics},
  65(8):1067--1144, 2012.

\bibitem{smola2001regularized}
Alexander Smola, Sebastian Mika, Bernhard Sch{\"o}lkopf, Robert Williamson,
  et~al.
\newblock Regularized principal manifolds.
\newblock {\em Journal of Machine Learning Research}, 1:179--209, 2001.

\bibitem{tenenbaum2000global}
Joshua~B Tenenbaum, Vin De~Silva, and John~C Langford.
\newblock A global geometric framework for nonlinear dimensionality reduction.
\newblock {\em Science}, 290(5500):2319--2323, 2000.

\bibitem{Tyagi2013}
H.~Tyagi, E.~Vural, and P.~Frossard.
\newblock {Tangent space estimation for smooth embeddings of Riemannian
  manifolds}.
\newblock {\em Information and Inference}, 2(1):69--114, 2013.

\bibitem{wedin1983angles}
Per~{\AA}ke Wedin.
\newblock On angles between subspaces of a finite dimensional inner product
  space.
\newblock In {\em Matrix Pencils}, pages 263--285. Springer, 1983.

\bibitem{wu2018think}
Hau-Tieng Wu and Nan Wu.
\newblock Think globally, fit locally under the manifold setup: {A}symptotic
  analysis of locally linear embedding.
\newblock {\em The Annals of Statistics}, 46(6B):3805--3837, 2018.

\bibitem{wu2020strong}
Hau-Tieng Wu and Nan Wu.
\newblock Strong uniform consistency with rates for kernel density estimators
  with general kernels on manifolds.
\newblock {\em Information and Inference: A Journal of the IMA},
  11(2):781--799, 2021.

\bibitem{wu2018locally}
Hau-Tieng Wu and Nan Wu.
\newblock When locally linear embedding hits boundary.
\newblock {\em Journal of Machine Learning Research}, 24(69):1--80, 2023.

\bibitem{yao2023manifold}
Zhigang Yao, Jiaji Su, Bingjie Li, and Shing-Tung Yau.
\newblock Manifold fitting.
\newblock {\em arXiv preprint arXiv:2304.07680}, 2023.

\bibitem{yu2015useful}
Yi~Yu, Tengyao Wang, and Richard~J Samworth.
\newblock A useful variant of the {Davis}--{Kahan} theorem for statisticians.
\newblock {\em Biometrika}, 102(2):315--323, 2015.

\end{thebibliography}
\end{document}